\pdfoutput=1
\documentclass[11pt,a4paper]{article}

\usepackage[utf8]{inputenc}
\usepackage[T1]{fontenc}
\usepackage{lmodern}
\usepackage[margin=1.05in]{geometry}
\usepackage{amsmath,amssymb,amsthm,mathtools}
\usepackage[numbers,sort&compress]{natbib}
\usepackage{graphicx}
\usepackage{booktabs,tabularx,array,longtable,multirow}
\usepackage{caption}
\usepackage{microtype}
\usepackage[hidelinks]{hyperref}
\hypersetup{pdftitle={The Axiomatic Trader: Latent Regularity, Information Budgets, and the Canonical Form of a Quantitative Investment System},
            pdfauthor={Jiayu Li}}

\allowdisplaybreaks

\theoremstyle{plain}
\newtheorem{theorem}{Theorem}[section]
\newtheorem{proposition}[theorem]{Proposition}
\newtheorem{lemma}[theorem]{Lemma}
\newtheorem{corollary}[theorem]{Corollary}
\theoremstyle{definition}
\newtheorem{definition}[theorem]{Definition}
\newtheorem{remark}[theorem]{Remark}
\newtheorem{example}[theorem]{Example}
\newtheorem{axiom}{Axiom}

\numberwithin{equation}{section}

\begin{document}

\begin{center}
{\fontsize{17pt}{21pt}\selectfont\bfseries The Axiomatic Trader: \\[2pt]
Latent Regularity, Information Budgets, and the \\[2pt]
Canonical Form of a Quantitative Investment System\par}

\vspace{1em}
Jiayu Li

\vspace{0.5em}
August 2026
\end{center}

\vspace{1.5em}
\begin{center}\textbf{Abstract}\end{center}
\begin{quotation}\noindent
Systematic trading rests on one article of faith: that regularities found in the past persist. This paper does three things. First, it states that faith as five axioms. Stripped of notation, each is a commonplace practitioners already accept: (A1) a decision may use only what was known when it was made; (A2) what looks like the market changing its rules is the market changing its state --- given the unobserved state it is in, the machinery is the same in every era; (A3) the future may replay stretches of the past, though not in the proportions history ran them; (A4) states persist for a while, and the dependence they carry eventually dies out; and (A5) whatever predictability exists is slight, even for a rule that knows the state. What turns these commonplaces into axioms is quantification, and the quantities are declared rather than estimated: the invariance defect $\varepsilon_0$ bounds how far the machinery may drift within a state the modeller treats as one; the recurrence bound $\Lambda$ caps how much more often a $b$-period stretch of history may re-run than history ran it; the coherence times $\ell_i$ say how long each coordinate of the state persists; the signal ceiling $\rho$ caps what any rule can forecast; and the invariance ratio $\kappa$ declares the fraction of that ceiling contingent on the state. These five declarations are the whole of the premises' empirical content. Second, it proves that the axioms force a five-stage canonical form for a quantitative investment system --- (S1) a declared representation, (S2) a capacity-bounded shrunk ensemble, (S3) contiguous purged block evaluation aggregated by $\mathrm{CVaR}_{1/\Lambda}$, (S4) a budgeted, deflated search, (S5) robust fractional Kelly sizing --- each stage necessary: a procedure omitting it does strictly worse under a law the axioms admit. Third, it tests the axioms where they are falsifiable --- each can be tested only at its declared constants --- on real market series: no axiom is so far overturned; what real series reject are particular declarations, the conservative $\kappa = 1$ and the exponential decay instance among them.
\end{quotation}

\vspace{0.5em}
\noindent\textbf{Keywords:} systematic trading, latent state, distributional robustness, conditional value-at-risk, backtest overfitting, effective sample size, minimum description length, Kelly criterion, invariance.

\clearpage
\tableofcontents
\clearpage

\section{Introduction}\label{introduction}
\subsection{The question}\label{the-question}
Strip a trading system of its engineering and what remains is:

\begin{quote}
\emph{Summarise a regularity from past data; apply it to new data; act on the result.}
\end{quote}

Every systematic strategy ever run is an instance of that sentence. The discipline around it is a body of accumulated practice --- purge the folds, weight by uniqueness, deflate the Sharpe ratio, size the bets, distrust the walk-forward --- each well argued, but arriving as a list justified by experience. A practitioner who has implemented them all is entitled to ask: \emph{is the list complete, and is it forced?}

This paper argues that the answer is largely yes. The claim is not that we can derive the alpha, but that the \emph{architecture} of a correct system --- what the feature map must do, how much capacity is admissible, what selection must optimise, how the backtest must be partitioned, how large a search is permissible, how much robustness is affordable, how big the bets may be --- is determined, up to constants, by a short list of numbers describing how badly behaved the unobserved world is allowed to be --- a recurrence bound $\Lambda$ and the scale $b$ it is declared at, an invariance defect $\varepsilon_0$ of the representation it is declared of, the coherence times of the state's coordinates, a signal ceiling $\rho$, and the fraction $\kappa$ of that ceiling which is contingent on the regime.

\subsection{The core faith and its precise form}\label{the-core-faith-and-its-precise-form}
That sentence presupposes the regularity persists, and the presupposition cannot be proved: it is Hume's problem, and \citet{russell1912problems} gave it the form every trader eventually meets in person. Russell's chicken is fed every morning for a hundred days and infers a law of nature; on the hundred and first, the farmer wrings its neck. \citet{taleb2007black} made the image famous in finance.

The usual reading --- induction is unreliable, so be humble --- is not actionable. Ours is different: \textbf{the chicken's inference was not wrong, its state space was}. Conditional on the latent state \emph{``ordinary day''}, feeding at dawn is a genuine and stable regularity, and the data supported it. What killed the chicken is that the state \emph{``day before Thanksgiving''} had positive probability in the future and probability \emph{zero} in its entire recorded past. The failure is not of induction but of \emph{coverage}.

That diagnosis is the seed of the paper. Write $Z_t$ for the latent state, $X_t$ for observable features, $Y_t$ for the forward return, and suppose the mechanism $Q(\mathrm{d}x,\mathrm{d}y \mid z)$ is time-invariant. Then a strategy fitted on the past fails in the future for exactly one reason: the \emph{mixture} over latent states moved. Bound that movement --- the future's latent mixture has density at most $\Lambda$ times the past's --- and the design of the system falls out of $\Lambda$ and four further declared constants.

\subsection{Five declared constants}\label{declared-constants}
\begin{itemize}
\item $\Lambda \ge 1$ at a declared scale $b$, the \textbf{recurrence bound}. How much more often can the deployment period re-run a $b$-period stretch of history than history ran it? The bound and the scale are one declaration in two parts: the same belief is a different number at a different $b$, and Proposition~\ref{prop:scale} is the exchange rate. $\Lambda = 1$ says the deployment window replays the sample's occupation exactly --- stronger than stationarity, which licenses no such thing over a finite window (Remark~\ref{rem:occupation}). $\Lambda = 4$ says ``a regime that occupied $10\%$ of my sample may occupy $40\%$ of the next decade''. $\Lambda = \infty$ is the chicken.

\item $\varepsilon_0 \ge 0$, the \textbf{invariance defect}. How far may the mechanism drift \emph{within} one of the states the representation declares? It is the other half of the same declaration: coarsening a representation lowers $\Lambda$ and raises $\varepsilon_0$, refining does the reverse, and the two together are what ``the same situation as before'' means. A defect costs $2 M \varepsilon_0$ on every bound (Proposition~\ref{prop:representation}), it is not estimable at the level of the state --- and its observable shadow is one a researcher's own sample can contradict outright (Remark~\ref{rem:lamlower}). The test has been run on the three coordinates Experiment 1 declares; read at its own level --- priced by Proposition~\ref{prop:defectlevel} as a coherence-time ratio, then measured --- it contradicts no declaration of a live market's state (Experiments 1 and 3), and it leaves a rule the axioms do not supply: a representation may not be chosen for the verdict it returns.

\item $\ell_i \ge 1$, the \textbf{coherence times}. How long does a regime persist? The slowest of them sets the block length on which regimes can be seen at all, and they are paid for by every statistic in proportion to how much of that statistic depends on the regime (Lemma~\ref{lem:effsample}); overlapping labels enter the same account at full weight. Persistence is declared twice over --- coherence times per coordinate, and a rate-free decoupling of the joint observable-and-latent process, every rate a bound consumes declared instead at the object consuming it (Axiom~\ref{ax:persistence}) --- and the first is not implied by any mixing rate, because a regime that is rare and long is invisible to a mixing rate and its duration is what the coherence time is (Example~\ref{ex:ladder}).

\item $\rho \in (0,1)$, the \textbf{signal ceiling}. The largest per-period Sharpe ratio any rule can attain against these features; equivalently $\sqrt{R^2}$ of the best conditional-mean forecast. The range the worked examples declare is fixed in \S\ref{sec:axioms}, after Lemma~\ref{lem:sharpe}.

\item $\kappa > 0$, the \textbf{invariance ratio}. What fraction of that ceiling the researcher says is contingent on the regime, $\rho_{\mathrm{blk}} = \kappa \rho$. It is a declaration of stage S1 and not a clause of any axiom (Axiom~\ref{ax:snr}), it is refutable from block-score autocorrelation where $\Lambda$ is not, and at the declared value it decides whether the procedure can take a position at all (Corollary~\ref{cor:invbudget}). Proposition~\ref{prop:kapceil} caps what may be declared.
\end{itemize}

Everything else is a consequence. One further quantity appears in the axioms, the volatility ratio $\chi_\sigma = \bar{\sigma}/\underline{\sigma}$ across regimes (Axiom~\ref{ax:snr}); it is estimable and governs only the price of reading regimes through noisy blocks (\S\ref{sec:price}), so we do not count it among the constants a researcher must declare. The following table is the punchline; every entry is proved below.

\begin{table}[htbp]
\centering\small
\begin{tabularx}{\linewidth}{>{\raggedright\arraybackslash}p{0.26\linewidth}>{\raggedright\arraybackslash}X>{\raggedright\arraybackslash}p{0.17\linewidth}}
\toprule
\textbf{Design question} & \textbf{Answer} & \textbf{Source} \\
\midrule
What do I optimise in selection? & $\mathrm{CVaR}_{1/\Lambda}$ of block risks & Thm.~\ref{thm:cvar} \\
How do I split the backtest? & contiguous blocks of length $\ge \ell^{*} \ge \ell$, purged & Cor.~\ref{cor:blocks}, Prop.~\ref{prop:embargo} \\
How complex may my model be? & $d_{\mathrm{eff}} \lesssim \rho^2 n/H$, $H$ the label overlap & Thm.~\ref{thm:capacity} \\
How many strategies may I test? & $\log N \lesssim \frac{1}{2} \rho^2 n_{\mathrm{eff}}$, $n_{\mathrm{eff}} = n/(H + \rho_{\mathrm{blk}}^2 (2\ell - 1) + 2\pi_\mu)$, $\pi_\mu$ the loss's own predictability & Lem.~\ref{lem:effsample}, Thm.~\ref{thm:search} \\
How much invariance do I demand? & penalty $\lambda(\Lambda)$ on cross-regime dispersion & Thm.~\ref{thm:invariance} \\
How coarse may my representation be? & as coarse as its defect allows: $2M\varepsilon_0$ on every bound, against a lower $\Lambda$ & Prop.~\ref{prop:representation} \\
What does that robustness cost? & a factor $c(\Lambda')^2$ of the search budget, $\Lambda'$ the level the criterion is run at & Prop.~\ref{prop:tailprice}, Rem.~\ref{rem:noiselevel} \\
How big may my bets be? & at most the fraction $1/(1 + \varkappa)$ of Kelly, $\varkappa$ the budget spent & Prop.~\ref{prop:kelly} \\
Can I deliver the $\Lambda$ I declare? & only if $\Lambda_1 \le \lambda^{-1}(\delta/\kappa)$ at state scale: at $\kappa = 1$ no open position asserts more than $2.63$ & Cor.~\ref{cor:invbudget} \\
Can I estimate $\Lambda$? & No. It is a premise. & Cor.~\ref{cor:notestimable} \\
\bottomrule
\end{tabularx}
\end{table}

\subsection{Contributions}\label{contributions}
\begin{enumerate}
\item \textbf{An axiom system and a representation principle} (\S\ref{sec:setting}--\S\ref{sec:axioms}). Read as an existential, invariance is a normalisation --- $Z_t = t$ always satisfies it --- which is why the axiom is stated of a \emph{declared} representation and carries a declared defect $\varepsilon_0$: the modeller's task is to draw the boundary between mechanism and state, and the future-risk bound depends on that choice only through $(\Lambda,\varepsilon_0,\ell)$.

\item \textbf{An impossibility theorem} (Thm.~\ref{thm:impossible}). Without a recurrence bound, every estimator insensitive to a single replaced period can be driven, with probability at least $1/|\mathcal{H}| - O(1/n)$, to the worst rule in its class; with a bound $\Lambda$ the adversary must hand a visible $1/\Lambda$ of the past to the opposing mechanism, and only the estimators that notice it can keep a guarantee. $\Lambda$ itself is a functional of the future path, not identifiable from the past (Cor.~\ref{cor:notestimable}).

\item \textbf{A derivation of the selection objective} (Thm.~\ref{thm:cvar}). Worst-case future risk equals the CVaR at level $1/\Lambda$ of regime-conditional risk, exactly; ranking by average backtest performance is correct if and only if one believes $\Lambda = 1$, and the two rankings can differ in sign.

\item \textbf{An information budget} unifying capacity and search (Thms.~\ref{thm:capacity}--\ref{thm:pacbayes}): everything the researcher did must fit in $\rho^2 n_{\mathrm{eff}}$ nats, with the effective sample derived rather than asserted (Lemma~\ref{lem:effsample}) --- the state's persistence is charged only in proportion to the regime dispersion of what is averaged, the label overlap at full weight --- and the budget proved as a PAC-Bayes bound under the axioms, with that charge inside the exponential inequality (Thm.~\ref{thm:pacbayes}). The account closes over the last free choice: selecting among a menu of declared representations is itself charged search, at $\log|\Psi|$ beside $\log N$ (Prop.~\ref{prop:repsearch}).

\item \textbf{The price of robustness} (\S\ref{sec:price}). One \emph{deployed-risk inequality} (Thm.~\ref{thm:deployed}) combines the CVaR identity, the search term and the mixing residual, one term per stage --- the residual itself a declaration rather than a rate of the process: at a declared decoupling of the criterion's own block scores it is $4MB\alpha_\psi$ at the block length in use (Prop.~\ref{prop:scorecoh}).

Its selection term carries a constant $c(\Lambda)$ whose asymptotics are the standard limit theory of the empirical expected shortfall \cite{chen2008nonparametric,zwingmann2016asymptotics}; ours is the reading of it as a divisor on the sample available to \emph{search} (Prop.~\ref{prop:tailprice}). Where $c$ sits inside $c \le \Lambda$ is a property of the block-risk law --- $\Lambda^{1/4}$ for smooth risks, $\Lambda$ when a rare regime dominates, and, on real block scores that carry the market, near the dear end (Experiments 1--3 on daily series, Experiment 7 on a long-only cross-section; \S\ref{sec:calibration}), the last because reading a regime through a noisy block is a convex-order penalty (Prop.~\ref{prop:eiv}).

A budget therefore caps the $\Lambda$ one can act on (Cor.~\ref{cor:lamcap}), and the cap is not the criterion's to dodge: a matching information lower bound puts the factor $\Lambda - 1$ beneath every selection rule, state-observing ones included (Prop.~\ref{prop:tailfloor}).

\item \textbf{A canonical form} (Thm.~\ref{thm:canonical}): five stages, each proved necessary on a law satisfying the axioms (Props.~\ref{prop:minimax}, \ref{prop:shuffle}, \ref{prop:fullkelly}). It fixes two rules practitioners set by taste --- the embargo (Prop.~\ref{prop:embargo}) and the invariance penalty (Thm.~\ref{thm:invariance}) --- bounds fractional Kelly by $f^\star \le 1/(1 + \varkappa)$ in the fraction $\varkappa$ of the budget spent, so that \emph{half} Kelly is a saturation ceiling rather than a constant (Cor.~\ref{cor:halfkelly}), and adds one rule the field's practice does not contain: deploy an average over training perturbations, not a single fit (Prop.~\ref{prop:ensemble}). That every good procedure decomposes into these five steps we do not claim (Rem.~\ref{rem:canonstatus}).
\end{enumerate}

This is not an empirical asset-pricing paper: we prove statements about procedures, not about markets, and not that the canonical form produces profit --- the axioms are premises about the world. Nor, \S\ref{sec:price} aside, is it a claim of originality for the tools: the CVaR duality is classical \cite{artzner1999coherent,follmer2016stochastic}, the density-ratio bound is Tan's marginal sensitivity model \cite{tan2006msm} moved from confounding to time, the mixing machinery is standard \cite{yu1994rates}. The contribution is the assembly: that these, applied to a latent-state model of markets, \emph{determine the architecture}.

\subsection{The shape of the paper}\label{sec:shape}
The paper answers three questions in order. \emph{What is assumed:} \S\ref{sec:setting} fixes the objects --- the observable pair, the declared latent state, the two occupation measures --- \S\ref{sec:axioms} states the five axioms with their declared constants and what is deliberately not an axiom, and \S\ref{why-the-axioms-are-necessary-an-impossibility-theorem} proves that nothing weaker will do: without a recurrence bound every estimator can be ambushed, and $\Lambda$ itself is a premise, not an estimate. \emph{What the axioms force:} \S\ref{sec:selection} derives the selection objective and its evaluation discipline (blocks, purge, embargo, the scale dictionary), \S\ref{sec:budget} the information budget on the derived effective sample, \S\ref{sec:price} the price of robustness --- one deployed-risk inequality, and the three costs of reading a tail through noisy blocks --- \S\ref{sec:sizing} the sizing rule, and \S\ref{sec:canonical} assembles the five necessary stages S1--S5 and reads the field's accumulated practice off them. \emph{What the data say:} \S\ref{sec:calibration} computes the budget on worked examples and reports where real block scores price the tail, \S\ref{sec:empirical} sets out what a sample can refute and the five predictions, \S\ref{sec:limits} the eight limitations with the reason each is not repaired here, and \S\ref{conclusion} closes with two conclusions. The proofs are in Appendix~\ref{app:proofs}, the experiments --- all on real market series --- in Appendix~\ref{app:experiments}, and the notation, with the symbols the paper uses twice, in Appendix~\ref{app:notation}.

\section{Related work}\label{related-work}
The work below is the work that tries to settle a question about systematic trading by proof; the practices the field arrived at by experience are read against the axioms in \S\ref{sec:canon} instead.

\textbf{Backtest overfitting.} \citet{bailey2014deflated}, \citet{bailey2017pbo} and \citet{harvey2016cross} quantify how easily backtests are overfitted --- the deflated Sharpe ratio, the probability of backtest overfitting, the $t > 3$ hurdle --- with \citet{white2000reality} and \citet{hansen2005spa} supplying the data-snooping machinery, and \citet{deprado2018advances} collects the corresponding practices. Theorem~\ref{thm:search} (i) recovers the scale of those corrections from the axioms and places search effort and model capacity on one budget line, and run beside them on the same candidates it agrees with them in scale and in every verdict (Experiment 7, panel F). That literature takes the set of trials as fixed; for an adaptive search, Theorem~\ref{thm:search} (ii) is the information-usage bound of \citet{russo2020information} --- selection bias at most the mutual information between the choice and the statistics, extended to generalisation error by \cite{xu2017information} --- with the paper's effective sample as its sub-Gaussian scale, so an adaptive search is charged the information it used in the same nats as the rest of the budget.

\textbf{Complexity in return prediction.} That machine learning helps in the cross-section only under heavy regularisation \citep{gu2020empirical,giglio2022factor}, the sceptical reading \S\ref{sec:budget} makes quantitative \citep{israel2020machines}, and the \emph{virtue of complexity} \citep{kelly2024virtue} do not conflict: the ceiling \eqref{eq:ceiling} binds on effective degrees of freedom, not on parameter count, and in a heavily shrunk ridge regression $d_{\mathrm{eff}}(\lambda)$ stays small as the feature count grows.

\textbf{Robustness and invariance.} Selecting a predictor for stability across environments rather than average fit runs through invariant causal prediction \cite{peters2016causal}, with distributionally robust optimisation supplying the machinery \cite{duchi2021dro}; Theorem~\ref{thm:cvar} says that in a latent-regime market this is not a preference for stability but the \emph{exact} worst-case future risk. Axiom~\ref{ax:recurrence} is the marginal sensitivity model of \citet{tan2006msm}, a descendant of Rosenbaum's $\Gamma$ \cite{rosenbaum2002observational}. The stance behind all five declared constants --- what cannot be estimated is declared, varied, and reported as a curve --- is the partial-identification programme of \citet{manski2003partial}, with $\Lambda$, $\varepsilon_0$ and $\kappa$ its sensitivity parameters and Remark~\ref{rem:lamlower}'s $\Lambda$-curve its report format. Choosing a density-ratio ball over a Wasserstein neighbourhood \cite{mohajerin2018wasserstein} is structural: under Axiom~\ref{ax:invariance} only the frequencies with which recognisable situations recur are unknown --- a re-weighting of an existing mixture, the one family whose radius is a statement about regimes; it is the $L^\infty$ member of $\varphi$-divergence DRO, the only one whose dual is $\mathrm{CVaR}$, which is why Theorem~\ref{thm:cvar} is an identity rather than a bound.

\textbf{Dependent data and the latent state.} \citet{yu1994rates} and \citet{mohri2009mixing} give generalisation bounds under mixing, and \citet{kuznetsov2015nonstationary} extend them via a discrepancy of which $\Lambda$ is a structured special case; adversarial online learning \cite{cesabianchi2006prediction} is the $\Lambda = \infty$ regime where only relative regret survives, as Theorem~\ref{thm:impossible} predicts. The concentration and PAC-Bayes literature under dependence \cite{paulin2015concentration,alquier2024userfriendly} is where Theorem~\ref{thm:pacbayes} lives, and \S\ref{sec:budget} says what it adds: the charge lands on the declared state's clock, not on the observable series' memory. On the economics, \citet{hamilton1989new} is the canonical regime-switching model, and the filtering tradition built on it estimates the state and then trusts the fitted transition law out of sample --- in the terms used here, a declaration of $\Lambda = 1$ of the fitted chain, which Corollary~\ref{cor:notestimable} says no fit can license. Proposition~\ref{prop:breadth} is the fundamental law of active management of \citet{grinold1989fundamental}, reached from an information constraint, Theorem~\ref{thm:cvar} identifies the prior set a trading researcher maximising the worst case \citep{gilboa1989maxmin} is implicitly using, and three empirical regularities enter as premises rather than as results: \citet{grossman1980impossibility} on why $\rho > 0$, \citet{mclean2016academic} on edge decay, the content of Definition~\ref{def:reflexivity}, and \citet{chen2022open} on the factor zoo, the visible consequence of a violated budget.

\section{The objects}\label{sec:setting}
This section fixes what the axioms of \S\ref{sec:axioms} are about, and asserts nothing: the observable pair, the latent state a modeller declares, and the two occupation measures that carry the whole argument.

Time is discrete, $t \in \mathbb{Z}$. On $(\Omega,\mathcal{F},\mathbb{P})$ we are given an \textbf{observable feature} $X_t \in \mathcal{X}$, a measurable function of the information available strictly before the decision at $t$; a \textbf{label} $Y_t \in \mathcal{Y} \subseteq \mathbb{R}$, the realised forward return over the holding period beginning at $t$; and a \textbf{latent state} $Z_t \in \mathcal{Z}$, never observed, on a standard Borel space. Write $\mathcal{F}_t^{\mathrm{obs}} = \sigma(X_s,Y_{s - H} : s \le t)$ for the information a trader actually has at $t$, where $H$ is the label horizon.

\begin{definition}[Strategy]\label{def:strategy}
A \emph{strategy} is a measurable map $h : \mathcal{X} \to \mathcal{A} \subseteq \mathbb{R}$, where $h(x)$ is the signed position taken when the observed feature is $x$. A \emph{loss} is a measurable $L : \mathcal{A} \times \mathcal{Y} \to \mathbb{R}$. Two families matter: the predictive loss $L(a,y)=(y - a)^2$ and the economic loss $L(a,y)= - \log(1 + a y)$ or its mean--variance approximation $- a y + \frac{\gamma}{2} a^2 y^2$. All results below hold for any $L$ bounded on the relevant range, and we write $M := \|L\|_\infty$; boundedness is a real restriction and we return to it in \S\ref{sec:limits}.
\end{definition}

\subsection{Past and future}\label{past-and-future}
Fix an estimation window $\{1,\dots,n\}$ (``the backtest'') and a deployment window $\{n + 1,\dots,n + m\}$ (``live''), and define the \textbf{occupation measures} \begin{equation}\label{eq:occupation}\bar{\pi}_n := \frac{1}{n} \sum_{t \le n} \delta_{Z_t},\qquad \bar{\pi}^{+} := \frac{1}{m} \sum_{t > n} \delta_{Z_t}, \end{equation} the fraction of each window that the latent path actually spent in each state. They are random --- functions of the path --- and every statement below is made conditionally on the path.

\begin{remark}[Occupation, not marginal]\label{rem:occupation}
One could define $\bar{\pi}_n$ and $\bar{\pi}^{+}$ as window averages of the marginal laws $\mathcal{L}(Z_t)$. That object makes the recurrence axiom of \S\ref{sec:axioms} empty: for a stationary latent chain every marginal is the same, so $\bar{\pi}^{+} = \bar{\pi}_n$ and $\Lambda = 1$ identically, whatever the realised decade looks like. What a trader lives through --- and what Remark~\ref{rem:elicit} asks about --- is the \emph{realised} frequency of each state in one window, which for a chain with persistence $\ell$ fluctuates around its marginal with only $m/\ell$ effective visits. So $\Lambda$ is a statement about the path, not its law, and a stationary world does \emph{not} license $\Lambda = 1$ over any finite deployment (Proposition~\ref{prop:occfloor} computes how far a window can drift).
\end{remark}

\section{The axioms of latent regularity}\label{sec:axioms}

Five axioms. Each is stated, and then read in a few sentences --- what it asserts of the world, what violates it, and what it sets up; the account of which theorem needs which is \S\ref{sec:indep}. Four of the five carry a constant the modeller declares --- a defect, a bound at a scale, coherence times, a ceiling --- and those constants are the whole of the freedom the rest of the paper has.

\subsection{The five axioms}\label{the-five-axioms}

\begin{axiom}[Adaptedness / no anticipation]\label{ax:adapted}
$X_t$ is $\mathcal{F}_t^{\mathrm{obs}}$-measurable and $Y_t$ is realised after the decision at $t$. Any estimator is a measurable function of $\{(X_s,Y_s)\}_{s\le n}$ and of auxiliary randomness independent of them, and of nothing else.
\end{axiom}

What is forbidden is information from after $t$, not randomisation --- Definition~\ref{def:procedure} carries the same pair, and Proposition~\ref{prop:ensemble} and Theorem~\ref{thm:impossible} (iii) both use the second half of it. This is the axiom that survivorship bias, restated indices, point-in-time errors and leaky feature engineering violate; every theorem below is vacuous without it, and one failure mode is usually missed.

\begin{remark}[Adaptedness is two-sided]\label{rem:stale}
Axiom~\ref{ax:adapted} forbids $X_t$ from containing information from after $t$. It does not require $X_t$ to contain the information from \emph{at} $t$ --- and a feature whose value at $t$ is a carried-forward or extrapolated stand-in for an observation that has not arrived satisfies the axiom while misrepresenting the state. No coverage or missing-value check detects it: the column is present, finite, and stale. In the language below it is an \emph{invariance defect} --- the model conditions on the wrong state --- charged at $2M\varepsilon(\psi)$ by Proposition~\ref{prop:representation} with no compensating reduction in $\Lambda$; operationally it calls for a freshness monitor distinct from the missing-data checks, because filling is precisely what hides it.
\end{remark}

\begin{axiom}[Invariant mechanism, to a declared defect]\label{ax:invariance}
The modeller declares a latent state --- a representation $Z_t \in \mathcal{Z}$ of whatever drives the market --- and a \textbf{defect} $\varepsilon_0 \ge 0$. There is a Markov kernel $Q$ from $\mathcal{Z}$ to $\mathcal{X} \times \mathcal{Y}$, not depending on $t$, such that, conditionally on the whole declared path $Z := \{Z_s\}_{s\in\mathbb{Z}}$,
\[ \sup_t \, \|\mathcal{L}((X_t,Y_t) \mid Z) - Q(\cdot \mid Z_t)\|_{\mathrm{TV}} \le \varepsilon_0. \]
\end{axiom}

This is the formal version of \emph{``past regularities persist''}: the machinery of the market is fixed, and what moves is the state it is in. (A persistent feature is a coordinate of $Z_t$ that happens to be observed, and the axiom asks of it only that its one-period law given the state be free of $t$, \S\ref{sec:budget}.) The observable law at $t$ is then the mixture
\begin{equation}\label{eq:mixture}
\mathcal{L}(X_t,Y_t) = \int_{\mathcal{Z}} Q(\cdot \mid z)\, \pi_t(\mathrm{d}z), \qquad \pi_t := \mathcal{L}(Z_t),
\end{equation}
and the \emph{regime-conditional risk}
\begin{equation}\label{eq:regimerisk}
r_h(z) := \int L(h(x),y)\, Q(\mathrm{d}x,\mathrm{d}y \mid z)
\end{equation}
is a fixed function of $z$ that does not depend on $t$: all time variation in the performance of a fixed strategy is variation in $\pi_t$, the single structural fact the paper exploits. In particular the conditional expectation given $Z$ of the average loss over each window is linear in the corresponding occupation measure:
\begin{equation}\label{eq:linear}
\begin{aligned}
\bar R_n(h) &:= \mathbb{E}\big[\tfrac{1}{n} \textstyle\sum_{t\le n} L(h(X_t),Y_t) \, \mid \, Z \big] = \int r_h\, \mathrm{d}\bar\pi_n, \\
R^{+}(h) &:= \mathbb{E}\big[\tfrac{1}{m} \textstyle\sum_{t>n} L(h(X_t),Y_t) \, \mid \, Z \big] = \int r_h\, \mathrm{d}\bar\pi^{+}.
\end{aligned}
\end{equation}
The entire problem is now visible: we can noisily learn about the first integral and must control the second, and the only thing between them is how far $\bar\pi^{+}$ may sit from $\bar\pi_n$.

That is the whole axiom: the \emph{one-period} law of the observable given the state, to within $\varepsilon_0$. Dependence across periods is Axiom~\ref{ax:persistence} (ii)'s, and conditional independence given $Z$ is deliberately \emph{not} asserted --- a feature built from past prices is a function of earlier labels, so that clause would force $Y_{t-1}$ to be $Z$-measurable, the empty representation of Remark~\ref{rem:vacuous}, and it would make a return's autocorrelation the regime's, hence non-negative, where real daily series carry a \emph{negative} one (Experiment 4) --- under the axiom as stated an edge the ceiling of Axiom~\ref{ax:snr} budgets for, not a violation; what stopping at one period costs, the floor under a backtest mean's variance, is Remark~\ref{rem:upsbox}'s box (\S\ref{sec:limits}, Experiment 4). Every theorem below is stated at $\varepsilon_0 = 0$; Proposition~\ref{prop:representation} is the exchange rate, a positive defect adding $2M\varepsilon_0$ to every bound on deployed risk and nothing else, and stage S1 declares both constants of the trade.

\begin{remark}[Why the representation is declared, and not chosen afterwards]\label{rem:vacuous}
Read as an existential --- \emph{there is some} state on which the mechanism is invariant --- the axiom would have no content at all: take $Z_t := (t,\omega)$, the date together with the entire realisation, so that given $Z$ every pair is a point mass and $Q(\cdot \mid (t,\omega)) = \delta_{(X_t(\omega),Y_t(\omega))}$ is trivially $t$-free at $\varepsilon_0 = 0$, and \emph{no} theorem could follow. The state is instead \emph{declared}, at stage S1, before the data are read, and it is the same declared object Axiom~\ref{ax:recurrence} is about --- so the two constants bound the same representation from opposite sides, and each can fail while the other holds. $\varepsilon_0$ bounds how far the mechanism drifts \emph{within} a declared state; $\Lambda$ bounds how far the occupation moves \emph{across} them. The trivial representation is the extreme case of the trade: $\varepsilon_0 = 0$ and $\Lambda = \infty$, no state ever recurs, and a model with a free parameter for every date explains everything and predicts nothing.
\end{remark}

\begin{proposition}[Representation principle]\label{prop:representation}
Let $\psi : \mathcal{Z} \to \mathcal{Z}'$ be a measurable coarsening of the latent space, and define the \emph{invariance defect}
\[ \varepsilon(\psi) := \sup_t \big\|\mathcal{L}\big((X_t,Y_t) \mid \psi(Z)\big) - Q_\psi(\cdot \mid \psi(Z_t))\big\|_{\mathrm{TV}}, \qquad Q_\psi(\cdot \mid z') := \mathcal{L}\big((X,Y) \mid \psi(Z) = z'\big) \]
under the time-averaged law. Then for any $h$ with $M < \infty$ and any future window, the deployed risk conditional on the coarsened path, $R_\psi^{+}(h) := \mathbb{E}[R^{+}(h) \mid \psi(Z)]$ --- that of \eqref{eq:linear} averaged over what $\psi$ forgets, and the object a modeller who declared $\psi$ can condition on --- satisfies
\begin{multline*} R_\psi^{+}(h) \le \big[\text{the bound of Theorem }\ref{thm:cvar}\text{ computed in the coarsened space }\mathcal{Z}'\text{ with constant }\Lambda(\psi)\big]\\ + 2M\, \varepsilon(\psi), \end{multline*}
where $\Lambda(\psi) \le \Lambda$ is the constant of Axiom~\ref{ax:recurrence} for the coarsened path.
\end{proposition}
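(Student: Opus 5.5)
The plan is to reduce the coarsened problem to the exact one by a one-line total-variation comparison, and then apply Theorem~\ref{thm:cvar} verbatim in $\mathcal{Z}'$.

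\textbf{Step 1: the coarsened regime risk.} Define, for $z' \in \mathcal{Z}'$,
\[ r^\psi_h(z') := \int L(h(x),y)\, Q_\psi(\mathrm{d}x,\mathrm{d}y \mid z'). \]
This is a fixed function on $\mathcal{Z}'$, bounded by $M$. Let $\bar\pi_n^\psi$ and $\bar\pi^{+,\psi}$ be the occupation measures of the coarsened path $\psi(Z_t)$; they are the push-forwards $\psi_\#\bar\pi_n$ and $\psi_\#\bar\pi^{+}$.

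\textbf{Step 2: the defect enters once, in the future window.} By the tower property,
\[ R^+_\psi(h) = \frac{1}{m}\sum_{t>n} \mathbb{E}\big[L(h(X_t),Y_t) \mid \psi(Z)\big]. \]
Each summand is an integral of $L(h(\cdot),\cdot)$ against $\mathcal{L}((X_t,Y_t)\mid\psi(Z))$. For a function with $\|f\|_\infty \le M$ and laws $P,P'$,
\[ \Big|\int f\,\mathrm{d}P - \int f\,\mathrm{d}P'\Big| \le 2M\|P-P'\|_{\mathrm{TV}} \]
(TV taken as the sup over events). Replacing each conditional law by $Q_\psi(\cdot\mid\psi(Z_t))$ therefore costs at most $2M\varepsilon(\psi)$ per period, uniformly in $t$. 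Averaging over the window gives
\[ R^+_\psi(h) \le \int r^\psi_h\,\mathrm{d}\bar\pi^{+,\psi} + 2M\varepsilon(\psi). \]

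\textbf{Step 3: invoke Theorem~\ref{thm:cvar} in $\mathcal{Z}'$.} The integral $\int r^\psi_h\,\mathrm{d}\bar\pi^{+,\psi}$ is exactly the linear functional that Theorem~\ref{thm:cvar} bounds, with $(\mathcal{Z},r_h,\bar\pi_n,\bar\pi^+)$ replaced by $(\mathcal{Z}',r^\psi_h,\bar\pi^\psi_n,\bar\pi^{+,\psi})$. That theorem's only inputs are a $t$-free bounded regime risk and the density-ratio bound $\mathrm{d}\bar\pi^{+,\psi}/\mathrm{d}\bar\pi_n^\psi \le \Lambda(\psi)$; the first is Step 1 and the second is the definition of $\Lambda(\psi)$. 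Hence the worst case over admissible future mixtures is $\mathrm{CVaR}_{1/\Lambda(\psi)}$ of $r^\psi_h$ under $\bar\pi_n^\psi$, which gives the displayed inequality.

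The paper's bound applies no defect on the backtest side. If the form of Theorem~\ref{thm:cvar} being invoked is stated in terms of the backtest-side quantity $\int r_h\,\mathrm{d}\bar\pi_n$, a second $2M\varepsilon(\psi)$ would appear there by the same argument. That would have to be absorbed by reading the bracket as computed with $r^\psi_h$ itself, which is how I would phrase it.

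\textbf{Step 4: $\Lambda(\psi) \le \Lambda$.} This is the data-processing property of the density-ratio bound. For $B' \subseteq \mathcal{Z}'$,
\[ \bar\pi^{+,\psi}(B') = \bar\pi^+(\psi^{-1}B') \le \Lambda\,\bar\pi_n(\psi^{-1}B') = \Lambda\,\bar\pi_n^\psi(B'). \]
If Axiom~\ref{ax:recurrence} is stated at a block scale $b$, the same push-forward argument applies blockwise.

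\textbf{Main obstacle.} The delicate point is Step 2's measure-theoretic bookkeeping. $\varepsilon(\psi)$ is defined with $Q_\psi$ taken under the time-averaged law, while the conditioning is on the entire coarsened path $\psi(Z)$ rather than on $\psi(Z_t)$. I must check that the supremum in the definition of $\varepsilon(\psi)$ is taken against this path-conditional law, so that the TV comparison holds $\psi(Z)$-almost surely. If instead it only controls an average, I would fall back to bounding the expectation and stating the inequality in conditional mean.
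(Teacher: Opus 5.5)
Your proposal is correct and follows the paper's proof essentially step for step. The steps are the same: the Jordan-decomposition bound $2M\|P-P'\|_{\mathrm{TV}}$ applied period by period to the path-conditional law; the tower property giving $\int r_h^\psi\,\mathrm{d}\bar\pi^{+,\psi}$ plus an error of at most $2M\varepsilon(\psi)$; linearity of push-forward, which carries the recurrence condition to the coarsened path at the same scale and with the same weights, so that $\Lambda(\psi)\le\Lambda$; and Theorem~\ref{thm:cvar} run in $\mathcal{Z}'$ on $r_h^\psi$. The obstacle you flag does not arise, because $\varepsilon(\psi)$ is defined against $\mathcal{L}((X_t,Y_t)\mid\psi(Z))$, conditioned on the whole coarsened path, and this is exactly what yields the almost-sure per-period bound. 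Your reading of the bracket as computed with $r_h^\psi$, with the backtest-side defect left to the block-score statement, is also the paper's.
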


The defect is the same functional Axiom~\ref{ax:invariance} declares a level for, applied to a coarsening, and it is a worst case over $t$, not a probability: a representation exact except in one decade is charged for that decade at every date. Coarsening \emph{lowers} $\Lambda$ (states recur more often when you stop distinguishing them; on a continuous $\mathcal{Z}$ it is what makes $\Lambda$ finite at all) and \emph{raises} $\varepsilon$; refining does the reverse, with the date itself as the extreme. \emph{Deciding what counts as ``the same situation as before'' is the entire modelling act}, and it has a cost function; every dispute about whether ``this time is different'' is a dispute about $\psi$.

\begin{example}[The chicken, formally]\label{ex:chicken}
Take $\mathcal{Z} = \{\mathsf{ordinary},\mathsf{slaughter}\}$, $X_t \equiv 1$, and $Y_t \mid Z_t = \mathsf{ordinary}$ concentrated on $+1$, $Y_t \mid Z_t = \mathsf{slaughter}$ on a very negative number. The chicken's estimator is correct: $\hat h$ maximises expected reward on all 100 observations, and $r_{\hat h}(\mathsf{ordinary})$ is genuinely optimal. The mechanism was invariant throughout. What failed was $\bar\pi_n(\mathsf{slaughter}) = 0$ while $\bar\pi^{+}(\mathsf{slaughter}) > 0$, i.e.~$\Lambda = \infty$. The chicken did not have a statistics problem. It had a coverage problem.
\end{example}

\begin{axiom}[{$\Lambda$-recurrence at the evaluation scale}]\label{ax:recurrence}
Fix an \textbf{evaluation scale} $b \ge 1$ and partition the estimation window into $B = \lfloor n/b \rfloor$ contiguous blocks, writing $\bar\pi_{n,1},\dots,\bar\pi_{n,B}$ for their occupation measures (\eqref{eq:occupation} applied to each). The deployment window's occupation is a mixture of them in which no block counts for more than $\Lambda$ times its share:
\[ \bar\pi^{+} = \sum_{j\le B} w_j\, \bar\pi_{n,j}, \qquad w_j \ge 0, \quad \sum_j w_j = 1, \quad \max_j w_j \le \Lambda/B, \quad \Lambda < \infty. \]
\end{axiom}

In words: no $b$-period stretch of the past is re-run by the future more than $\Lambda$ times as often as it occurred --- at $b=1$, state by state: there the axiom is exactly $\bar\pi^{+} \ll \bar\pi_n$ with $\big\|\mathrm{d}\bar\pi^{+}/\mathrm{d}\bar\pi_n\big\|_\infty \le \Lambda$, and the mixtures $\pi$ satisfying that bound form the \emph{state-scale ball}, the set the mixture form of Axiom~\ref{ax:snr} and Proposition~\ref{prop:kapceil} quantify over; we write $\Lambda$ for $\Lambda(b)$ when the scale is fixed. $\Lambda = \infty$ is Example~\ref{ex:chicken}, and the whole is Tan's marginal sensitivity model \cite{tan2006msm} with ``treated versus control'' replaced by ``future versus past'': a hypothesis on the realised latent path --- every theorem below holds on the event that it does, and a researcher may declare that event's probability too (Definition~\ref{def:confidence}, the pair $(\Lambda,\varepsilon)$, the form the history's own windows can answer) --- about frequencies of recognisable situations, which a domain expert can argue about.

Three consequences. \textbf{The scale is part of the representation}: declaring $b$ declares what counts as one repetition, and the assertion at a block \emph{implies} the assertion at the period, not conversely --- a longer block is a weaker promise and a stronger premise, priced by Proposition~\ref{prop:scale} and not sellable as caution. \textbf{The state must be atomic}: occupation measures of finite paths are atomic, so a finite $\Lambda$ forces the representation to coarsen a continuous state into finitely many recurring cells --- modelling is choosing a partition, and a scale to read it at. \textbf{And there must be more blocks than states}: at $b>1$ the mixture condition confines $\bar\pi^{+}$ to the convex hull of the $B$ block occupations, a set of affine dimension at most $B-1$ --- so for the declaration to leave every re-weighting of $K$ occupied cells admissible, as the state-scale ball at $b=1$ does, it needs $B \ge K$, that is $n \ge K\, b$, at every $\Lambda$.

\begin{remark}[Elicitation]\label{rem:elicit}
The scale makes the question concrete. Pick the worst $b$-period stretch your sample contains --- say the quarter of 2008Q4--2009Q1 --- and ask: over the next decade, what is the largest number of times I am willing to bet that a quarter resembling that one will recur, as a multiple of the rate at which it occurred? ``I would not be shocked by four times'' asserts $\Lambda(b) \ge 4$ at quarterly blocks --- and, asked stretch by stretch, it is a budget declared state by state, of which the axiom's single number is the uniform case. One warning, from measuring it: the regime a researcher can name is not always the one her strategy loses in (Remark~\ref{rem:lamlower}); naming the regime makes the question answerable, it does not make the answer aim.
\end{remark}

\begin{axiom}[{$\ell$-persistence, and decoupling}]\label{ax:persistence}
The declared latent state is $Z_t = (Z_t^1,\dots,Z_t^{K_z})$, with declared coherence times $1 \le \ell_1 \le \dots \le \ell_{K_z}$ and $\ell := \ell_{K_z}$, the coherence time of the slowest coordinate the modeller keeps. Two clauses, because two different things are asked of persistence and they are not of the same strength.

\textbf{(i) Coherence times.} For each coordinate $i$ and every bounded $f$, the integrated autocorrelation time of $f(Z_t^i)$ is at most $2\ell_i - 1$.

\textbf{(ii) Decoupling.} The joint \emph{observable-and-latent} process $\{(Z_t,X_t,Y_t)\}$ is absolutely regular: $\beta(k) \to 0$. \textbf{No rate is part of the axiom.} The \emph{contiguity scale} $\ell^* := \max(\ell,H,L_x)$ --- $H$ the label horizon, over which labels overlap and are dependent whatever $\beta$ does, $L_x$ the memory of the declared feature map, over which the feature carries earlier labels forward --- is the lag below which nothing is asked of $\beta$ at all. Every computation in this paper that needs a rate takes the \textbf{exponential instance} $\beta(k) \le \beta_0 e^{-k/\ell^*}$ for $k \ge \ell^*$, and a result that holds only there says so; every rate a deployed bound consumes is instead \emph{declared at the object that consumes it}, at the scale it is consumed at (Propositions~\ref{prop:embargo} (ii) and \ref{prop:scorecoh}).
\end{axiom}

Clause (i) is what the \emph{effective sample} charges (Lemma~\ref{lem:effsample}), each coordinate only in proportion to the regime dispersion the statistic owes to \emph{it} (Remark~\ref{rem:whichell}). What it adds to (ii) is a \emph{number}: (ii) gives every bounded $f$ a finite integrated autocorrelation time only through a factor $\|f\|_\infty^2/\mathrm{Var}(f)$ that an indicator of a rare set sends to infinity, so it fixes no \emph{common} time, and (i) asserts that the declared $\ell_i$ are the right constants uniformly in $f$ --- Example~\ref{ex:ladder} shows the gap is real, and settles that the factor is reducible to a logarithm and no further; the $\ell_i$ are the durations of the regimes a rule's risk depends on, not the rate at which the market mixes.

Clause (ii) is what every coupling argument below rests on --- the block scores of Corollary~\ref{cor:blocks}, the embargo of Proposition~\ref{prop:embargo}, term (iii) of Theorem~\ref{thm:deployed} --- stated on the \emph{joint} observable-and-latent process because that is what those arguments couple ($\ell^*$ carries the feature memory $L_x$ because a feature of memory $L_x$ carries the training era forward whatever the state does), and stated without a rate because a rate at this level is the one assertion of the system no sample reads: the $\beta$-coefficient of the latent joint process is worst-case over its whole $\sigma$-algebra, and the exponential instance, read through any declared coarsening, is the shape the data can contest, and do (Experiment 1, panel J; \S\ref{sec:empirical}).

So the clause keeps existence and the rates moved: what a quantitative bound consumes is declared at the object consuming it, at the scale it is consumed at, where the sample bounds it from below --- the regime risk's predictability for the embargo (Proposition~\ref{prop:embargo} (ii)), the block scores' own coherence pair $(\tau_\psi, \alpha_\psi)$ for the criterion (Proposition~\ref{prop:scorecoh}) --- and the instance survives as the computational reference, exact on the chain every simulation runs and a premise nowhere on real series.

The clauses have different standing still: (i) is robust --- an integrated autocorrelation time is finite far beyond exponential mixing --- fixes the constants and carries the budget; (ii) is what sends every such declarable coefficient to zero as its scale grows, so that a scale with an honest small declaration exists to be found: what it forbids is a process none of the declarations could ever be small for (\S\ref{sec:indep}).

\begin{axiom}[{$\rho$-weak signal}]\label{ax:snr}
For every latent state $z$ the history occupies, writing $\mu_z := \mathbb{E}[Y \mid \mathcal{F}^{\mathrm{obs}}, Z=z]$ for the conditional mean of the label given the whole observable past \emph{and} the state: $\mathrm{Var}(Y\mid Z=z) \ge \underline{\sigma}^2 > 0$ and $\mathbb{E}[\mu_z^2 \mid Z=z] \le \rho^2\, \mathrm{Var}(Y\mid Z=z)$ with $\rho \ll 1$; by Axiom~\ref{ax:invariance} neither side depends on $t$. Write $\sigma^2 := \mathrm{Var}_{\bar\pi_n}(Y)$ for the variance under the historical occupation, $\bar\sigma^2$ for the supremum of $\mathrm{Var}_\pi(Y)$ over the state-scale ball of Axiom~\ref{ax:recurrence} --- finite without a further assumption, since the ratio bound gives $\mathrm{Var}_\pi(Y) \le \Lambda\, \mathbb{E}_{\bar\pi_n}[Y^2]$ whatever is declared --- and $\chi_\sigma := \bar\sigma/\underline{\sigma} \ge 1$ for the \emph{volatility ratio}; $\chi_\sigma = 1$ is the homoskedastic case. Only the lower end is an assumption, and what it forbids is a regime whose conditional variance vanishes, without which $\chi_\sigma$ is infinite and Lemma~\ref{lem:sharpe} has no units to read $\rho$ in. We drop the subscript when the mixture is clear.
\end{axiom}

The quantifier runs over states and the conditioning is on the observable past \emph{and} the state: no executable rule has more edge than $\rho$ \emph{even if it knew the regime}. Three consequences are used below. \textbf{The mixture form:} for every mixture in the state-scale ball, whatever $\Lambda$, conditional Jensen and the law of total variance give $\mathbb{E}_\pi[\mu_\pi^2] \le \rho^2 \mathrm{Var}_\pi(Y)$ and $\mathrm{Var}_\pi(Y) \ge \underline{\sigma}^2$; the converse fails --- a bound over one ball says nothing about a state it weights lightly --- which is why Theorem~\ref{thm:impossible} and Lemma~\ref{lem:effsample}'s predictability bound need the axiom as stated. \textbf{The feature form,} $\mu_\pi(x) = \mathbb{E}_\pi[Y \mid X=x]$, follows by conditioning and is the one Theorem~\ref{thm:capacity} uses. \textbf{And the constant has two ends:} the historical occupation alone obeys the bound with some $\rho_1 \le \rho$ --- the end a track record measures, and the only one any budget computed on the history needs --- while the state-wise $\rho$ is the $\Lambda \to \infty$ end, the one declared, its extra strength spent in exactly two places: Lemma~\ref{lem:sharpe}'s mixture form and the bound on $\kappa$ below.

Two constants live around the axiom without being in it. The \emph{invariance ratio} $\kappa$: the effective sample and the dispersion penalty need a \emph{value} for $\rho_{\mathrm{blk}}$, which stage S1 declares and the data can afterwards refute (Appendix~\ref{app:experiments}); what a declared $\kappa$ buys is Corollary~\ref{cor:invbudget}, and what caps it is Proposition~\ref{prop:kapceil} at the end of this section. The \emph{volatility ratio} $\chi_\sigma$: estimable, from within-block standard deviations, hence not among the declared constants --- and requiring every regime's conditional variance to be the same would rule out the mechanism the experiments find to matter most on real data, block-score noise whose scale moves with the regime, which Proposition~\ref{prop:eiv} (iii) prices.

Axiom~\ref{ax:snr} is what makes finance different in kind, not merely in degree, from the domains where machine learning made its reputation, and it has a clean economic reading.

\begin{lemma}[Signal ceiling equals Sharpe ceiling]\label{lem:sharpe}
Let $h^\star(x) = \mu(x)/(\gamma\sigma^2)$ be the mean--variance-optimal timing rule, with $\mu(x) := \mathbb{E}[Y \mid X=x]$ and $\sigma^2 = \mathrm{Var}(Y)$, and suppose the conditional variance $\mathrm{Var}(Y \mid X=x)$ does not depend on $x$: the feature forecasts the mean, not the variance. Then, to first order in $\rho$, its per-period Sharpe ratio is $\mathrm{SR}(h^\star) = \sqrt{\mathbb{E}[\mu(X)^2]}/\sigma = \sqrt{R_{\max}^2} \le \rho$, independently of $\gamma$, where $R_{\max}^2 := \mathbb{E}[\mu(X)^2]/\mathrm{Var}(Y)$ is the population $R^2$ of the best conditional-mean forecast, with equality when the ceiling of Axiom~\ref{ax:snr} is attained; the remainder is at most $k_\mu \rho^3/(2(1-\rho^2)^{3/2})$, where $k_\mu := \mathbb{E}[\mu(X)^4]/(\mathbb{E}[\mu(X)^2])^2$ is the forecast's kurtosis ratio, $3$ for a centred Gaussian forecast. Under a mixture $\pi$ with $\chi_\sigma > 1$ the same holds with $\sigma_\pi$ in place of $\sigma$: the ceiling $\rho$ is on the Sharpe ratio measured in each regime's own volatility. We write $\rho$ for the attained value throughout; every budget below is therefore an upper bound.
\end{lemma}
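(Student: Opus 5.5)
The plan is to compute the mean and variance of the per-period P\&L $h^\star(X)Y$ directly, then expand the Sharpe ratio in $\rho$.

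\textbf{Step 1: exact moments.} Write $s^2 := \mathbb{E}[\mu(X)^2]$, and let $v := \mathrm{Var}(Y\mid X=x)$, constant in $x$ by hypothesis. The law of total variance gives $\sigma^2 = v + \mathrm{Var}(\mu(X))$. Conditioning on $X$, the mean is
\[ \mathbb{E}[h^\star Y] = \frac{\mathbb{E}[\mu^2]}{\gamma\sigma^2} = \frac{s^2}{\gamma\sigma^2}. \]
The second moment is
\[ \mathbb{E}[(h^\star Y)^2] = \frac{\mathbb{E}[\mu^2(v+\mu^2)]}{\gamma^2\sigma^4} = \frac{v s^2 + k_\mu s^4}{\gamma^2\sigma^4}. \]
The factor $\gamma\sigma^2$ cancels in the ratio, so
\[ \mathrm{SR}(h^\star) = \frac{s^2}{\sqrt{v s^2 + (k_\mu-1)s^4}} = \frac{s}{\sqrt{v + (k_\mu-1)s^2}}, \]
which is independent of $\gamma$.

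\textbf{Step 2: compare with $s/\sigma$.} Set $u := s^2/\sigma^2 = R^2_{\max}$. We have $\mathrm{Var}(\mu) \le s^2$, and so $v \ge \sigma^2(1-u)$. Similarly $v \le \sigma^2$, with $v = \sigma^2 - \mathrm{Var}(\mu) \ge \sigma^2 - s^2$. This gives
\[ \mathrm{SR} = \sqrt{u}\,\bigl(v/\sigma^2 + (k_\mu-1)u\bigr)^{-1/2}. \]
Here $v/\sigma^2 \in [1-u, 1]$. The leading term is $\sqrt{u}$. For the remainder, I would bound $|(1+\eta)^{-1/2} - 1|$ with $\eta \in [-u, (k_\mu-1)u]$, using the mean value theorem on $(1+\eta)^{-1/2}$. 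The derivative has magnitude at most $\tfrac12(1-u)^{-3/2}$ on $[-u,\infty)$, and $|\eta| \le k_\mu u$ (this uses $k_\mu \ge 1$ by Jensen). Hence
\[ |\mathrm{SR} - \sqrt u| \le \frac{k_\mu u^{3/2}}{2(1-u)^{3/2}}. \]

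\textbf{Step 3: the ceiling.} I need $u \le \rho^2$. Axiom~\ref{ax:snr} in its mixture form and its feature form (the consequences stated after the axiom) give $\mathbb{E}_\pi[\mu_\pi(X)^2] \le \rho^2\mathrm{Var}_\pi(Y)$, via conditional Jensen: $\mu(X)$ is a conditional expectation of $\mu_z$ over coarser information. Substituting $u \le \rho^2$ turns the remainder into $k_\mu\rho^3/(2(1-\rho^2)^{3/2})$, since the bound is monotone in $u$. Equality $\sqrt u = \rho$ holds exactly when the ceiling is attained. For a centred Gaussian forecast, $\mathbb{E}\mu^4 = 3 s^4$ gives $k_\mu = 3$.

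\textbf{Step 4: the mixture clause.} Under $\pi$ with $\chi_\sigma > 1$, I would rerun Steps 1--3 with all moments taken under $\pi$. The normalisation then reads $\sigma_\pi$, and the axiom's per-state bound is stated relative to $\mathrm{Var}(Y\mid Z=z)$. The constant-conditional-variance hypothesis must be imposed under $\pi$ as well.

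The main obstacle is that the stated remainder uses $(1-\rho^2)^{-3/2}$ rather than a one-sided expansion. The sign of $\eta$ can be either way: $\mathrm{Var}(\mu) < s^2$ when $\mu$ is not centred pulls it negative, while kurtosis pushes it positive. I would handle this by the uniform derivative bound on $[-u, k_\mu u]$ given above. If that proves too loose, I would instead write $\mathrm{SR}^{-2} = v/s^2 + k_\mu - 1$ and bound $v/s^2$ between $1/u - 1$ and $1/u$ directly.
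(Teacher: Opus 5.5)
Your proposal is correct and follows the paper's proof essentially line for line. Both use the same exact identity $\mathrm{SR}(h^\star) = \sqrt{u}\,(1+\eta)^{-1/2}$ with $\eta = -\mathrm{Var}(\mu)/\sigma^2 + (k_\mu-1)u$, the bound $|\eta| \le k_\mu u$, the mean value theorem, and $u \le \rho^2$ from the feature form of Axiom~\ref{ax:snr} via conditional Jensen. Your explicit one-sided range $\eta \ge -u$ is what actually justifies the factor $(1-\rho^2)^{-3/2}$, a step the paper leaves implicit behind its bound $|\delta| \le k_\mu s^2$.
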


So ``what out-of-sample $R^2$ should I expect?'' and ``what Sharpe is realistic?'' are the same question, related by a square root --- the conversion \citet{campbell2008predicting} use to argue that an out-of-sample $R^2$ too small to impress a statistician is economically large. A daily $R^2$ of $1\%$ is a daily Sharpe of $0.1$, annualised $0.1\sqrt{252} \approx 1.6$: a very good fund. A daily $R^2$ of $10\%$ would be an annualised Sharpe of $5$, which does not exist at capacity. This calibration, $\rho^2 \in [10^{-4},10^{-2}]$ per day, is used throughout.

\subsection{What is not an axiom}\label{sec:notaxiom}

\begin{definition}[Deployment, decay and cost]\label{def:reflexivity}
Deployment enlarges the state. Say that a market is \emph{$(\delta,c)$-reflexive} for $h$ if there is a kernel $Q^\dagger$ on $\mathcal{Z} \times [0,\infty)$ with $Q^\dagger(\cdot \mid z,0) = Q(\cdot \mid z)$ such that, when $C$ units of capital are allocated to $h$, the latent state during deployment is $(Z_t,C)$ and the conditional mean under it is $\mu^{+}(x) = \delta\, \mu(x) - c(C,h)$, with $\delta \in (0,1]$ a decay factor and $c \ge 0$ a convex cost (impact plus fees), $c(0,h) = 0$. Axiom~\ref{ax:invariance} is required to hold on the enlarged space: deployment changes the state, not the mechanism.
\end{definition}

This is \emph{not} an axiom, and the difference is the point of Proposition~\ref{prop:reflexive}: the axioms of this section describe the world the history sampled, and deployment is an intervention on it that the history did not sample --- Definition~\ref{def:reflexivity} is the parametric form a researcher must \emph{assume} in order to say anything at all about the deployed rule. It is the formal residue of \citet{grossman1980impossibility}: the edge exists because someone pays for it, and using it consumes it. \citet{mclean2016academic} measure $\delta$ directly, on published predictors out of sample and again after publication, and \citet{novymarx2016comparing} measure $c$; Experiment 7 measures both on a public library (Remark~\ref{rem:decaybudget}). Both are large enough that a strategy passing every test in this paper can still be worthless.

\begin{proposition}[Discovery creates a new latent state: a dichotomy]\label{prop:reflexive}
Let $C_t$ be the capital run in $h$ at time $t$, with $C_t = 0$ for $t \le n$ --- nobody was running $h$ at scale before it was discovered --- and $C_t = C > 0$ for $t > n$. Whatever the deployed mechanism, the latent state is $(Z_t,C_t)$, so $\bar\pi_n$ is supported on $\mathcal{Z} \times \{0\}$ and $\bar\pi^{+}$ on $\mathcal{Z} \times \{C\}$: $\bar\pi^{+}$ is singular with respect to $\bar\pi_n$ and Axiom~\ref{ax:recurrence} fails on the enlarged space for every finite $\Lambda$ and at every scale $b$. The deployed risk $\int r_h^\dagger(z,C)\, \bar\pi^{+}(\mathrm{d}z)$ is a functional of $Q^\dagger(\cdot \mid \cdot,C)$, a kernel the history never sampled. Hence exactly two positions are available. \emph{(i)} If nothing is assumed about $Q^\dagger$ beyond Axiom~\ref{ax:invariance} on the enlarged space, the construction of Theorem~\ref{thm:impossible} applies with $z^\dagger = (z,C)$ and no replacement of the past at all ($k=0$, so $\epsilon_n(0) = 0$): on an event of probability $p^\star$ the deployed rule is the worst in its class, and no estimator built from the history has a guarantee about it --- not a weak guarantee, none. \emph{(ii)} If the market is $(\delta,c)$-reflexive for $h$ in the sense of Definition~\ref{def:reflexivity}, the unvisited coordinate is a two-parameter shift of the visited one, the history's estimate of $r_h$ transfers to it as $\delta\hat\mu - c(C,h)$, and that shift is all that stage S5 of Theorem~\ref{thm:canonical} uses. There is no third position: a researcher who deploys without declaring $(\delta,c)$ has taken (i) and is entitled to nothing.
\end{proposition}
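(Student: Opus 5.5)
The proof has three parts: a support computation that disposes of Axiom~\ref{ax:recurrence}, a reduction of position (i) to Theorem~\ref{thm:impossible}, and a direct computation for position (ii). Exhaustiveness then follows because (i) and (ii) differ only in whether the kernel $Q^\dagger(\cdot \mid \cdot, C)$ on the unvisited coordinate is constrained.

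\textbf{Singularity.} I would start with the support computation, which is elementary. Since $C_t = 0$ for $t \le n$, every block occupation $\bar\pi_{n,j}$ is a sum of Dirac masses at points $(Z_t, 0)$. So every block occupation, and every convex combination of them, is supported on $\mathcal{Z} \times \{0\}$. Since $C_t = C > 0$ for $t > n$, $\bar\pi^{+}$ is supported on the disjoint set $\mathcal{Z} \times \{C\}$. Hence $\bar\pi^{+}$ cannot equal any mixture $\sum_j w_j \bar\pi_{n,j}$, whatever the weights. The mixture representation in Axiom~\ref{ax:recurrence} fails for every $b$ and every finite $\Lambda$; at $b = 1$ this is simply $\bar\pi^{+} \not\ll \bar\pi_n$.

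Next I would identify the deployed risk. By \eqref{eq:linear} applied on the enlarged space, where Axiom~\ref{ax:invariance} holds by hypothesis, $R^{+}(h) = \int r^\dagger_h(z, C)\, \bar\pi^{+}(\mathrm{d}z)$ with $r^\dagger_h(z, C) := \int L(h(x), y)\, Q^\dagger(\mathrm{d}x, \mathrm{d}y \mid z, C)$. The law of the estimation sample, conditional on the path, depends on $Q^\dagger$ only through $Q^\dagger(\cdot \mid \cdot, 0) = Q$. So two kernels that agree at capital $0$ induce identical laws for every estimator, by Axiom~\ref{ax:adapted}.

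\textbf{Position (i).} I would invoke the construction of Theorem~\ref{thm:impossible} with the unvisited state $z^\dagger = (z, C)$. In that proof the adversary has to replace $k$ past periods so that the history hides the opposing mechanism, and it pays $\epsilon_n(k)$ in detectability. Here the adversary's state already has zero historical occupation, so $k = 0$ suffices and $\epsilon_n(0) = 0$. For each $h' \in \mathcal{H}$ I would choose $Q^\dagger(\cdot \mid \cdot, C)$ to make $h'$ the worst rule, and randomise uniformly over these choices. The estimator's output is independent of the choice, because the sample law is unchanged. So with probability $p^\star$ (the theorem's $1/|\mathcal{H}|$-type quantity, with the $O(1/n)$ term now zero) the deployed rule is the worst in its class. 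It follows that no estimator has any guarantee.

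This is the step I expect to be the main obstacle. I have to check that the statement of Theorem~\ref{thm:impossible} really admits $k = 0$ and a state outside the past's support. If it does not, I would redo its two-point (or $|\mathcal{H}|$-point) Le Cam argument directly. That argument is easier here, since the total variation between the sample laws is exactly zero.

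\textbf{Position (ii).} Under $(\delta, c)$-reflexivity, $\mu^{+} = \delta \mu - c(C, h)$, so every quantity stage S5 consumes (a mean and its scale) is an affine image of the historical $\mu$. Plugging the estimate $\hat\mu$ gives the transfer $\delta\hat\mu - c(C, h)$.

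\textbf{No third position.} A third position would have to restrict $Q^\dagger(\cdot \mid \cdot, C)$ without being a declaration. Any restriction is an assumption beyond Axiom~\ref{ax:invariance}, and the data carry no information to select one. An undeclared restriction is therefore case (i).
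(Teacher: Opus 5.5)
Your three parts follow the paper's proof. Disjoint supports rule out Axiom~\ref{ax:recurrence} at every scale. Position (i) is the construction of Theorem~\ref{thm:impossible} run with $z^\dagger=(z,C)$ and the past untouched. Position (ii) is the transfer $\delta\hat\mu-c(C,h)$. Your worry about $k=0$ is settled the way your fallback suggests. The paper reuses the \emph{construction}, not the theorem's statement, whose $k$ ranges over $\{1,\dots,n\}$. It then notes that the one clause of that theorem which fails is (ii), the recurrence axiom, and that this failure is the content of the proposition.

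One step does not deliver what you claim: randomising the target does not give probability $p^\star$. If the target $T$ is drawn uniformly from $\mathcal{H}$ independently of the data, the estimator's output is independent of $T$. The event $\{\hat h_n=T\}$ then has probability exactly $1/|\mathcal{H}|$, not $p^\star$. Nor is $p^\star:=\max_{h}P^0(\hat h_n=h)$ a ``$1/|\mathcal{H}|$-type quantity'': it is at least $1/|\mathcal{H}|$ and can be close to one for any nearly deterministic estimator.

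The fix is the one Theorem~\ref{thm:impossible}'s construction uses: fix the target deterministically as the estimator's modal output $h^\star$ under $P^0$. The adversary may depend on the estimator, just not on the data. Build the single kernel $Q^\dagger(\cdot\mid z,C)$ with conditional mean $-\rho\sigma\,\mathrm{sgn}\,h^\star(x)$. You already observed that the sample law depends on $Q^\dagger$ only through $Q^\dagger(\cdot\mid\cdot,0)=Q$, so the past law is exactly $P^0$ and $P(\hat h_n=h^\star)=p^\star$ exactly. On that event the theorem's computation under its PnL-loss and common-position-scale hypotheses gives $R^+(\hat h_n)=\sup_{h}R^+(h)$; you inherit those hypotheses and should state them. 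Drop the randomisation and the step is the paper's; keep it and you prove only the weaker statement with $1/|\mathcal{H}|$ in place of $p^\star$.
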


The reading is constructive: \emph{re-coarsen the representation} by modelling the deployed strategy's returns net of an explicit crowding term, so that crowding becomes a covariate rather than an unvisited latent state --- evaluate at $\delta\hat\mu - c(C,h)$, at a $\delta$ declared from the record --- the published one is Remark~\ref{rem:decaybudget}'s --- \emph{before} comparing against alternatives. Because Definition~\ref{def:reflexivity} is a convention and not an axiom, dropping it costs the paper nothing except Proposition~\ref{prop:reflexive} (ii): every other result below is a statement about the world the history sampled, and stands whether or not deployment is modelled this way.

\begin{table}[htbp]
\centering\footnotesize
\begin{tabularx}{\linewidth}{>{\raggedright\arraybackslash}p{2.4cm}>{\raggedright\arraybackslash}X>{\raggedright\arraybackslash}X>{\raggedright\arraybackslash}X}
\toprule
\textbf{Constant} & \textbf{Meaning} & \textbf{Estimable?} & \textbf{Controls} \\
\midrule
$\Lambda(b)$ & recurrence bound at the declared scale & \textbf{No} (Cor.~\ref{cor:notestimable}) & selection objective, invariance penalty \\
$b$ & the evaluation scale the bound is declared at & Chosen, not estimated; $b \ge \ell^*$ & what counts as one repetition; converts by Prop.~\ref{prop:scale} \\
$\varepsilon_0$ & invariance defect of the declared representation (\ref{ax:invariance}) & \textbf{No} at the level of the state, as for $\Lambda$; its observable shadow is \emph{refutable} (Rem.~\ref{rem:lamlower}) & $2M\varepsilon_0$ on every deployed bound (Prop.~\ref{prop:representation}) \\
$\ell_i$, $\ell^*$ & coherence time per coordinate (A4 (i)); contiguity scale $\max(\ell,H,L_x)$, feature memory included (A4 (ii)) & Per coordinate of a declared state, yes --- declared first and refuted afterwards (\S\ref{sec:empirical}); which coordinates, no & block length, embargo length; the regime term of the effective sample (Lem.~\ref{lem:effsample}) \\
$\rho$ & signal ceiling, over every mixture \ref{ax:recurrence} admits ($\rho_1$ at the history alone) & Yes, but only from honest OOS, and only $\rho_1$ & capacity, search budget, bet size \\
$\kappa$ & invariance ratio $\rho_{\mathrm{blk}}/\rho$ --- declared, not an axiom & \textbf{Yes}, from block-score autocorrelation (Exp.~1) --- and it is \emph{refutable}, which $\Lambda$ is not & the regime term of Lem.~\ref{lem:effsample}, the dispersion penalty of Prop.~\ref{prop:kelly}, and, at the declared value, whether S5 can return a position (Cor.~\ref{cor:invbudget}) \\
$\chi_\sigma$ & volatility ratio across regimes & Yes, from within-block sd (Exp.~2) & the third cost of the tail (Prop.~\ref{prop:eiv} (iii)); the noise term of Lem.~\ref{lem:effsample} as an occupation average \\
$\tau_\mu$ & coherence, in periods, of the loss's own predictable component --- what the observable past says beyond the state (Lem.~\ref{lem:effsample}); declared, not an axiom & \textbf{Yes} (Exp.~4, panel A2; Exp.~5, S1): a measured $\upsilon$ below $H - 2\rho\tau_\mu$ contradicts it & the lower end of the box of Rem.~\ref{rem:upsbox}; every budget computed at $\pi_\mu = 0$ is overstated by what it measures \\
$\tau_\psi$, $\alpha_\psi$ & score coherence pair at the evaluation scale: the block scores' own coherence time, and their decoupling coefficient at one block of separation --- declared at S3, not an axiom & \textbf{Yes}, from below, at the $b$ in use (Exp.~1); the process's $\beta(b)$ is not & the fluctuation of the tail plug-in, and the mixing residual of Thm.~\ref{thm:deployed} (Prop.~\ref{prop:scorecoh}) \\
$\delta,c$ & decay and cost (Def.~\ref{def:reflexivity}, not an axiom) & Yes, ex post (Exp.~7) & deployability \\
\bottomrule
\end{tabularx}
\caption{Where the axioms bind. The first three rows are one declaration in three parts: a representation, a bound on how far its occupation may move across states, and the scale that bound is made at --- plus $\varepsilon_0$, the bound on how far the mechanism may drift within one.}
\end{table}

\subsection{Independence and minimality}\label{sec:indep}

\begin{remark}[Independence and minimality]\label{rem:independence}
A system calling itself axiomatic owes three things: which result each axiom carries, a model for each axiom that satisfies all the others and violates it, and an account of what was \emph{not} made an axiom. The first two are the table; the third is the paragraph after it.

{\footnotesize
\begin{tabularx}{\linewidth}{>{\raggedright\arraybackslash}p{3em}XXX}
\toprule
 & \textbf{carries} & \textbf{dropped, what fails} & \textbf{all others hold, this fails} \\
\midrule
\ref{ax:adapted} & every statement & in-sample risk can be made arbitrarily small with no implication for $R^{+}$; nothing survives & a feature that includes $Y_t$ \\
\ref{ax:invariance} & \eqref{eq:linear}, Thm.~\ref{thm:cvar}, Prop.~\ref{prop:eiv}, term (i) of Thm.~\ref{thm:deployed} & $r_h$ is undefined and no block is a sample of anything; and with no $\varepsilon_0$ there is nothing for Prop.~\ref{prop:representation} to charge & a declared $\psi$ whose cells merge regimes with different $Q$: $\Lambda(\psi)$ stays finite and \ref{ax:recurrence} holds, while $\varepsilon(\psi) > \varepsilon_0$ (Rem.~\ref{rem:vacuous}) \\
\ref{ax:recurrence} & Thm.~\ref{thm:cvar} and everything that prices a tail: Thm.~\ref{thm:invariance}, Props.~\ref{prop:tailprice}, \ref{prop:robustobj}, \ref{prop:tradeoff}, Cors.~\ref{cor:budgetrobust}, \ref{cor:lamcap}, the robust term of Prop.~\ref{prop:kelly} & Thm.~\ref{thm:impossible}: no guarantee at all; $\mathrm{CVaR}_{1/\Lambda} \to \operatorname{ess\,sup}$ & Example~\ref{ex:chicken} \\
\ref{ax:persistence} (i) & the regime term of $n_{\mathrm{eff}}$ (Lem.~\ref{lem:effsample}), hence Thm.~\ref{thm:search} and the limit in Thm.~\ref{thm:invariance} & $n_{\mathrm{eff}}$ is undefined and the budget is zero & the rare ladder of Example~\ref{ex:ladder}: exponentially $\beta$-mixing at a fixed $(\beta_0,\ell^*)$, while the indicator of its rare regime has coherence time $T$, for any $T$; on a reversible chain in the exponential instance the cell is empty (Example~\ref{ex:ladder}) \\
\ref{ax:persistence} (ii) & the existence half of every coupling: Cor.~\ref{cor:blocks}, Prop.~\ref{prop:embargo}, term (iii) of Thm.~\ref{thm:deployed} in the instance --- and that a scale with small declared coefficients (Prop.~\ref{prop:embargo} (ii), Prop.~\ref{prop:scorecoh}) exists at all & blocks are never a sample of anything at any scale: $\beta(b)$ stays large for every $b$ and every declarable $\alpha_\psi$, $\mathrm{pred}$ with it; (i) alone does not restore it & a feature with an unforgotten component: $X_t$ carries a coordinate drawn once and never refreshed, so $\beta(k)$ is bounded away from zero at every lag while every coordinate of $Z$ keeps a finite coherence time --- (i) holds, (ii) fails, and every score inherits the frozen draw \\
\ref{ax:snr} & Lemma~\ref{lem:sharpe}, the right-hand side of every budget: Thms.~\ref{thm:capacity}, \ref{thm:search}, Prop.~\ref{prop:embargo}, Cors.~\ref{cor:lamcap}, \ref{cor:halfkelly} & no ceiling: capacity and search are unbounded and the sizing identity loses $\varkappa$ & $Y_t = X_t$, $\rho = 1$ \\
\bottomrule
\end{tabularx}
}

Two cells deserve comment. \textbf{Axiom~\ref{ax:invariance}:} read as an existential it can always be satisfied; what fills the cell is that the representation is \emph{declared} and the declaration carries a level --- $\varepsilon_0$ is to the mechanism what $\Lambda$ is to the occupation, priced by Proposition~\ref{prop:representation}. The axiom is not a normalisation; the existential form nobody uses is. \textbf{Axiom~\ref{ax:persistence} (i):} the cell needs an exponentially $\beta$-mixing chain with \emph{no} common autocorrelation time over bounded $f$, and Example~\ref{ex:ladder} answers both ways --- on a reversible chain the cell is empty because it must be, (i) following from (ii)'s exponential instance; in general the rare ladder, a regime entered at rate $e^{-T/\ell^*}/T$ and then lasting $T$, mixes at a rate free of $T$ while its indicator's coherence time is $T$. The system has five axioms, and clause (i) is the one that says how long the regimes a rule loses in last.

Two things the table does not establish: that Axiom~\ref{ax:invariance} is of minimal strength (\eqref{eq:linear} is all the theorems use of it), and that the independence models are exhaustive. What it does record is a test --- \emph{an axiom whose deletion breaks nothing is not an axiom} --- together with its corollary, \emph{a strength no theorem uses is not one either}. Three objects this paper declines to make axioms fail the first test: the bound on latent leverage, a constant computed inside Theorem~\ref{thm:impossible} ($\Delta = 2\rho\bar\sigma$, no more than the signal ceiling itself); the reflexivity of Definition~\ref{def:reflexivity}, an assumption about an intervention rather than a premise of the inference, whose removal costs nothing but Proposition~\ref{prop:reflexive} (ii); and the invariance ratio $\kappa$, a stage-S1 declaration whose result holds for a pipeline priced at it rather than for every procedure the axioms admit (Corollary~\ref{cor:invbudget}). One clause fails the second and was \emph{relocated} rather than weakened: the conditional independence across periods of \S\ref{sec:setting}, moved to the axiom that is about dependence and to the process the arguments actually couple; the two rows above are what it split into. A second relocation followed, on a different ground --- not a strength no theorem uses, but a strength no sample reads and the data contest wherever it is read through a coarsening: the rate clause (ii) once carried, moved from the process to declarations at the two objects that consume it, the embargo's predictability and the criterion's score coherence (Propositions~\ref{prop:embargo} (ii), \ref{prop:scorecoh}), the exponential instance staying as the computational reference.
\end{remark}

\begin{proposition}[{The ceiling on a declared $\kappa$}]\label{prop:kapceil}
Under Axioms~\ref{ax:recurrence} and \ref{ax:snr}, and in the Gaussian range of Theorem~\ref{thm:invariance}, every rule satisfies $\lambda(\Lambda)\, \rho_{\mathrm{blk}} \le (1 + \sqrt{\Lambda}\, \chi_\sigma)\, \rho$. Hence a declaration $\rho_{\mathrm{blk}} = \kappa\rho$ is consistent with the signal ceiling only for
\[ \kappa \le (1 + \sqrt{\Lambda}\, \chi_\sigma)/\lambda(\Lambda), \]
which is $2.36$ at $\Lambda=4$ with $\chi_\sigma=1$. Its content is only that $\kappa$ is not a free parameter; how far above a library's measured ratio it sits is Experiment 1's reading (Appendix~\ref{app:experiments}).
\end{proposition}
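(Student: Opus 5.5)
The plan is to derive the inequality from the invariance penalty of Theorem~\ref{thm:invariance} (its Gaussian range supplies $\lambda(\Lambda)$), and then to bound the worst-case edge any rule can carry over the state-scale ball using the mixture form of Axiom~\ref{ax:snr}. In the Gaussian range, $\lambda(\Lambda)$ should be read as the factor by which the cross-regime dispersion of a rule's block score enters the robust criterion. In other words, $\mathrm{CVaR}_{1/\Lambda}$ of the block risks equals the mean risk plus $\lambda(\Lambda)$ times the dispersion. For a Gaussian block-risk law, $\lambda(\Lambda)=\phi(\Phi^{-1}(1-1/\Lambda))\,\Lambda$. I expect this identification to be available from Theorem~\ref{thm:invariance} and not something to reprove. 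The quantity $\lambda(\Lambda)\rho_{\mathrm{blk}}$ is then the signal-to-noise worth of the worst re-weighting a rule could exploit (or be hurt by) relative to its historical mean, measured in Sharpe units.

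The second step is to bound that same tail deviation directly from the axioms. Fix a rule $h$, and let $\pi$ be the worst mixture in the state-scale ball, so that $\mathrm{d}\pi/\mathrm{d}\bar\pi_n\le\Lambda$. The gap between the edge under $\pi$ and the edge under $\bar\pi_n$ is at most $\mathbb{E}_\pi[h\mu]+|\mathbb{E}_{\bar\pi_n}[h\mu]|$.
\begin{itemize}
\item The historical term is at most $\rho\,\sigma$ times the rule's scale, by Cauchy--Schwarz and Lemma~\ref{lem:sharpe}. This gives the summand $1\cdot\rho$.
\item For the $\pi$ term, I apply Cauchy--Schwarz under $\bar\pi_n$ with the density: $\mathbb{E}_\pi[h\mu]=\mathbb{E}_{\bar\pi_n}[(\mathrm{d}\pi/\mathrm{d}\bar\pi_n)h\mu]\le\sqrt{\Lambda}\,(\mathbb{E}_{\bar\pi_n}[h^2])^{1/2}(\mathbb{E}_\pi[\mu^2])^{1/2}$. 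Here I use $(\mathrm{d}\pi/\mathrm{d}\bar\pi_n)^2\le\Lambda\,\mathrm{d}\pi/\mathrm{d}\bar\pi_n$. The state-wise bound $\mathbb{E}[\mu_z^2\mid z]\le\rho^2\mathrm{Var}(Y\mid z)\le\rho^2\bar\sigma^2$ controls the last factor.
\end{itemize}
After normalising by the regime volatility $\underline\sigma$, as Lemma~\ref{lem:sharpe} prescribes, the second term becomes $\sqrt\Lambda\,\chi_\sigma\,\rho$. Adding the two terms gives $\lambda(\Lambda)\rho_{\mathrm{blk}}\le(1+\sqrt\Lambda\chi_\sigma)\rho$. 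Substituting the declaration $\rho_{\mathrm{blk}}=\kappa\rho$ and dividing by $\rho\lambda(\Lambda)$ gives the bound on $\kappa$. The number $2.36$ at $\Lambda=4$, $\chi_\sigma=1$ is then arithmetic: $3/\lambda(4)$ with $\lambda(4)=4\phi(0.674)\approx1.27$.

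The main obstacle is the first step: matching normalisations so that the left side really is a Sharpe-unit tail deviation bounded by the same mixture functional as the right. In particular, $\rho_{\mathrm{blk}}$ is defined through block-score dispersion, not through $\mathbb{E}[h\mu]$. I would first try to show, in the Gaussian range, that the $\mathrm{CVaR}_{1/\Lambda}$ excess over the mean equals $\lambda(\Lambda)$ times the block-level dispersion. I would then show that this excess is dominated by $\sup_\pi|\mathbb{E}_\pi-\mathbb{E}_{\bar\pi_n}|$ of the edge, because CVaR is itself the supremum over the ball by Theorem~\ref{thm:cvar}. Losses other than the linear edge part would be absorbed to first order in $\rho$, as in Lemma~\ref{lem:sharpe}.
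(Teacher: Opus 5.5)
Your proposal is correct and follows the paper's own proof. Cauchy--Schwarz with $\mathrm{d}\pi/\mathrm{d}\bar\pi_n\le\Lambda$ bounds the edge over the state-scale ball by $\sqrt{\Lambda}\,\chi_\sigma\,\rho$, normalising $\sigma\|h\|_{L^2(\bar\pi_n)}=1$ (not by $\underline\sigma$) and using $\bar\sigma/\sigma\le\chi_\sigma$; the historical edge is at most $\rho$; and Theorem~\ref{thm:cvar} read at $b=1$, whose constraint set is exactly that ball, identifies the worst case with $-\bar r_h+\lambda(\Lambda)\rho_{\mathrm{blk}}$ in the Gaussian range, which resolves the normalisation obstacle you flag. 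Two small repairs: make that identification at $b=1$ rather than on longer blocks, whose dispersion is smaller than the state-level $\rho_{\mathrm{blk}}$ and would give only a weaker inequality, and bound $\mathbb{E}_\pi[\mu^2]$ through its $\pi$-average $\rho^2\mathrm{Var}_\pi(Y)\le\rho^2\bar\sigma^2$, since $\mathrm{Var}(Y\mid z)\le\bar\sigma^2$ need not hold state by state.
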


\begin{example}[The rare ladder: decoupling does not fix a coherence time]\label{ex:ladder}
Fix $\ell^* \ge 1$ and an integer $T \ge \ell^*+2$, and let the declared state be the chain on $T+1$ states that leaves a home state at rate $e^{-T/\ell^*}/T$ into a ladder walked deterministically for $T$ steps and back. It is exponentially $\beta$-mixing, $\beta(k) \le 2e^{-k/\ell^*}$ for every $k \ge 1$, while the indicator of the ladder has integrated autocorrelation time at least $T - 2(T+1)e^{-T/\ell^*}$: the exponential instance holds at a fixed $(\beta_0,\ell^*)$ for every $T$, and clause (i) fails at every declared $\ell_i < T/2$. So no common coherence time is a function of $(\beta_0,\ell^*)$, and the two clauses are independent. What a rate does give is a time for each bounded $f$ separately: finite exactly when the decay is summable, in the exponential instance at most $3 + 2\ell^*(\max\{1,\log R_f\} + 4\beta_0)$ with $R_f := \|f\|_\infty^2/\mathrm{Var}(f)$ --- a logarithm the ladder shows cannot be removed --- and, on a reversible chain, the function-free $2\ell^*+1$, so that there (i) follows from the instance with $\ell_i = \ell^*+1$. The proofs are elementary computations on this explicit chain, and are omitted.
\end{example}

The witness is worth reading as a market. At $\ell^*=60$ trading days and $T=240$ the rare state arrives about once in fifty years and stays for one --- a fair description of 2008. The mixing coefficient weighs it by its stationary mass and does not see it; the coherence time of the question \emph{am I in it} is its duration, which is what Lemma~\ref{lem:effsample} charges a rule whose regime risk lives there. And, a market's regime sequence having a direction, the chains Axiom~\ref{ax:recurrence} asks for are not reversible in general: (i) is an axiom of this system, not a consequence of (ii) at any instance.

\section{Why the axioms are necessary: an impossibility theorem}\label{why-the-axioms-are-necessary-an-impossibility-theorem}

\begin{theorem}[No guarantee without recurrence, and which estimators recurrence protects]\label{thm:impossible}
Fix $n, m \ge 1$, a finite strategy class $\mathcal{H}$ with at least two strategies and a common position scale ($\mathbb{E}_{P^0}|h(X)|$ the same for every $h \in \mathcal{H}$; without it (iii) weakens to ``no better than the class supremum''), the PnL loss $L(a,y) = -ay$, constants $\ell \ge 1$ and $\rho > 0$, and \emph{any} estimator $\hat{h}_n$ mapping $\{(X_t,Y_t)\}_{t \le n}$ measurably into $\mathcal{H}$. Let $P^0$ be any law for the observable past that admits a latent representation satisfying Axioms \ref{ax:invariance}, \ref{ax:persistence} and \ref{ax:snr} with these constants, and let $p^\star := \max_{h \in \mathcal{H}} P^0(\hat{h}_n = h) \ge 1/|\mathcal{H}|$ be the probability of the estimator's modal output. For $k \in \{1, \dots, n\}$ call a law $P'$ for the observable past a \emph{$k$-replacement} of $P^0$ if it is obtained from $P^0$ by replacing, on $k$ contiguous periods, the conditional law of $(X_t, Y_t)$ given the latent path by one fixed kernel satisfying Axiom~\ref{ax:snr} with the stated constants, the other $n - k$ periods untouched, and write
\begin{equation}\label{eq:replacek}
\epsilon_n(k) \; := \; \sup_{P' \text{ a } k\text{-replacement of } P^0} \, d_{\mathrm{TV}} \big( \mathcal{L}_{P^0}(\hat{h}_n), \, \mathcal{L}_{P'}(\hat{h}_n) \big) \; \in [0, 1]
\end{equation}
for the estimator's \emph{replace-$k$ sensitivity}: how far the law of its output moves when a fraction $k/n$ of its sample is handed to another mechanism. Then for every $\Lambda \ge 1$, with $k := \lceil n/\Lambda \rceil$, there is a joint law $P_\Lambda$ for $\{(X_t,Y_t,Z_t)\}_{t \le n+m}$ such that

\begin{enumerate}
\item[(i)] the observable past under $P_\Lambda$ is a $k$-replacement of $P^0$: it coincides with $P^0$ on $n-k$ of the $n$ periods, and the law of $\hat{h}_n$ moves by at most $\epsilon_n(k)$;
\item[(ii)] $P_\Lambda$ satisfies Axioms \ref{ax:adapted}, \ref{ax:invariance}, \ref{ax:persistence} and \ref{ax:snr} with the stated constants, and Axiom~\ref{ax:recurrence} with constant $n/k \le \Lambda$ at the scale $b=1$, hence at every scale $b$ dividing $k$; the construction moves the conditional mean by at most $\Delta = 2\rho\bar{\sigma}$ in root mean square over the feature, no more latent leverage than Axiom~\ref{ax:snr}'s own ceiling;
\item[(iii)] on an event of $P_\Lambda$-probability at least $p^\star - \epsilon_n(k)$, $R^{+}(\hat{h}_n) = \sup_{h \in \mathcal{H}} R^{+}(h)$.
\end{enumerate}

That is: an estimator that does not react to the replacement of a $1/\Lambda$ fraction of its sample is driven, with probability bounded below by its own determinism, to the worst rule in its class. The constants are uniform in the estimator. Axiom~\ref{ax:recurrence} forces the adversarial state to have occupied a visible $\lceil n/\Lambda \rceil$ periods of the past --- the replacement cannot be made smaller than that, and at $\Lambda \ge n$ it is a single period --- so a guarantee at finite $\Lambda$ can exist only for the estimators whose $\epsilon_n(\lceil n/\Lambda \rceil)$ is large: those that notice one adversarial visit of that length.
\end{theorem}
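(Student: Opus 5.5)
The construction is adversarial and built around the estimator's modal output. Let $h^\star\in\mathcal{H}$ attain $p^\star = P^0(\hat h_n = h^\star)$. Start from the latent representation of $P^0$ that satisfies Axioms~\ref{ax:invariance}, \ref{ax:persistence} and \ref{ax:snr}, with kernel $Q$ on $\mathcal{Z}$. Enlarge the state space by one fresh state $z^\dagger$ and define its kernel $Q^\dagger := Q(\cdot\mid z^\dagger)$ so that $h^\star$ becomes the worst rule in $\mathcal{H}$ under PnL loss. Concretely, keep the feature marginal and the noise of some occupied state $z_0$, and replace its conditional mean $\mu_{z_0}(x)$ by a mean $\mu^\dagger(x)$ chosen so that $\mathbb{E}[h^\star(X)\mu^\dagger(X)] < \mathbb{E}[h(X)\mu^\dagger(X)]$ for all $h\ne h^\star$ (ties at worst). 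The natural choice is $\mu^\dagger = -c\,(h^\star - \bar h)$, centred and scaled, with $c$ set so that $\mathbb{E}[\mu^{\dagger 2}]\le \rho^2\,\mathrm{Var}(Y\mid z^\dagger)$. The common position scale then makes $h^\star$ the strict maximiser of loss $-\mathbb{E}[h\mu^\dagger]$, at least weakly. The root-mean-square displacement $\|\mu^\dagger-\mu_{z_0}\|_{L^2}\le \|\mu^\dagger\|+\|\mu_{z_0}\|\le 2\rho\bar\sigma$ is the claimed $\Delta$.

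Next, place the latent path. Under $P_\Lambda$, set $Z_t = z^\dagger$ on one contiguous run of $k=\lceil n/\Lambda\rceil$ periods inside $\{1,\dots,n\}$, and on the whole deployment window $\{n+1,\dots,n+m\}$; elsewhere the path follows $P^0$'s representation. Then $\bar\pi^{+}=\delta_{z^\dagger}$ and $\bar\pi_n(z^\dagger)=k/n$. The density ratio is therefore $n/k\le\Lambda$ at $b=1$. At any $b$ dividing $k$, align the run with the blocks: the $k/b$ blocks it covers have occupation $\delta_{z^\dagger}$, so the weights $w_j=b/k$ on them give $\max_j w_j = (n/k)/B$. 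This proves the recurrence clause of (ii). The observable past is by construction a $k$-replacement of $P^0$ with the fixed kernel $Q^\dagger$, which is (i), and the definition \eqref{eq:replacek} gives $d_{\mathrm{TV}}(\mathcal{L}_{P^0}(\hat h_n),\mathcal{L}_{P_\Lambda}(\hat h_n))\le\epsilon_n(k)$.

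For (iii), \eqref{eq:linear} gives $R^{+}(h)=r_h(z^\dagger)=-\mathbb{E}_{Q^\dagger}[h(X)\mu^\dagger(X)]$, which is maximised at $h^\star$. Hence $\{\hat h_n=h^\star\}$ is contained in the target event, and $P_\Lambda(\hat h_n=h^\star)\ge p^\star-\epsilon_n(k)$. Two further facts follow directly. Axiom~\ref{ax:adapted} holds because features remain functions of the observable past. Axiom~\ref{ax:invariance} holds because the kernel on the enlarged space is time-free: $Q$ on old states, $Q^\dagger$ on $z^\dagger$.

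The main obstacle is Axiom~\ref{ax:persistence} with the stated constants, especially clause (i), along with keeping the joint process consistent with $P^0$ off the run. Inserting a $z^\dagger$ run of length $k$, which may be far longer than $\ell$, threatens the coherence-time bound $2\ell-1$ for the indicator of $z^\dagger$. I would try to handle this by letting $z^\dagger$ be a new coordinate of $Z$ with its own declared coherence time, $\ell$ being the slowest kept. Alternatively, I would make the run a deterministic, time-inhomogeneous feature of the path, since everything is conditional on $Z$, so that the autocorrelation clause concerns stationary fluctuations rather than a fixed schedule. Clause (ii) is easier, since a bounded-length deterministic insertion preserves $\beta(k)\to0$. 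I would check carefully that Axiom~\ref{ax:snr}'s requirement that $\mathrm{Var}(Y\mid Z=z)$ be at least $\underline\sigma^2$ survives, which it does because the noise of $z_0$ is copied.
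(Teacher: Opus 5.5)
\textbf{Gap: the adversarial mean does not make $h^\star$ the worst rule.} Your architecture is the paper's. You build a fresh state to oppose the modal output $h^\star$. You place one contiguous visit of length $k=\lceil n/\Lambda\rceil$ without reference to the data, put the whole deployment window in that state, get density ratio $n/k$, and use the data-processing step through $\hat h_n$ to turn $\epsilon_n(k)$ into $p^\star-\epsilon_n(k)$. The gap is in the one step that makes (iii) true.

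In $z^\dagger$ your deployed loss is $R^{+}(h)=c\,\mathbb{E}[h(X)(h^\star(X)-\bar h)]$. This ranks rules by an $L^2$ inner product with $h^\star$, so it puts $h^\star$ on top only when the rules have equal second moments (equal variances, once you centre). The theorem's common position scale fixes only the $L^1$ norm $\mathbb{E}|h(X)|$, and the claim fails:
\begin{itemize}
\item Centred version: take $h^\star=1$ with probability $0.9$ and $-1$ with probability $0.1$, and $h=-10\cdot\mathbf{1}\{h^\star=-1\}$. Both have $\mathbb{E}|h|=1$, but $\mathrm{Cov}(h,h^\star)=1.8>0.36=\mathrm{Var}(h^\star)$. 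So with $\bar h=\mathbb{E}h^\star$, the rule $h$ loses strictly more than $h^\star$ in deployment.
\item Uncentred version: $h^\star\in\{1.8,0.2\}$ equiprobable and $h=2\cdot\mathbf{1}\{h^\star=1.8\}$ give $\mathbb{E}[hh^\star]=1.8>1.64=\mathbb{E}[h^{\star2}]$.
\end{itemize}
The $L^1$ normalisation calls for its dual element instead. The paper takes $\mu^\dagger(x)=-\rho\sigma\,\mathrm{sgn}\,h^\star(x)$, so that
\[
R^{+}(h)=\rho\sigma\,\mathbb{E}[h(X)\,\mathrm{sgn}\,h^\star(X)]\le\rho\sigma\,\mathbb{E}|h(X)|=\rho\sigma\,\mathbb{E}|h^\star(X)|=R^{+}(h^\star).
\]
Since $\mathbb{E}[\mu^{\dagger2}]\le\rho^2\sigma^2$, Axiom~\ref{ax:snr} holds with no constant $c$ to tune.

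\textbf{Two smaller points.}
\begin{itemize}
\item Your first remedy for Axiom~\ref{ax:persistence}~(i), declaring $z^\dagger$ a new coordinate with its own coherence time, changes the constant. The run has length $k$, which can far exceed $\ell$, and the theorem asks for the stated $\ell$. The paper takes your second route. The insertion is deterministic and fixed before any data are drawn, and given the path the observations are conditionally independent draws. So the joint process inherits the $\beta$-coefficients of the $\mathcal{Z}_0$ representation.
\item Do not copy the noise of an occupied state $z_0$. Draw the $z^\dagger$ periods independently of the past, with feature law $P^0_X$ and fresh noise of variance $\sigma^2=\mathrm{Var}_{P^0}(Y)$. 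Axiom~\ref{ax:snr} conditions on the observable past as well as the state. Borrowed noise may carry a past-predictable component that adds to $\mu^\dagger$ and pushes the state's conditional mean past the ceiling.
\end{itemize}
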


The hypothesis on $P^0$ is lighter than it looks: any $\beta$-mixing law has a representation satisfying Axioms~\ref{ax:invariance} and \ref{ax:persistence} --- $Z_t := (X_t,Y_t)$, as in Remark~\ref{rem:vacuous} --- but that one fails Axiom~\ref{ax:snr}, a state that is an observation carrying no noise, and the construction needs $P^0$'s states to be weak one by one, which is what the axiom asserts.

The two selection statistics of \S\ref{sec:selection} sit on opposite sides of (iii), and this is the theorem's point. A ranking by the sample mean moves each candidate's statistic by at most $2M/\Lambda$ when $k$ periods are replaced --- its share of the visit; whenever the margin of its modal choice exceeds that share, $\epsilon_n(k)$ is small and the ambush succeeds. A ranking by the block $\mathrm{CVaR}_{1/\Lambda}$ with block length $b \le k$ is the opposite case: the adversarial visit covers all but at most one of the blocks that statistic averages, so it moves by the whole adversarial margin less $O(\Lambda/B)$ --- it is the estimator built to notice a $1/\Lambda$ fraction, for exactly the reason Theorem~\ref{thm:cvar} gives. We leave this as a reading of (iii) rather than a fourth item, because turning it into a bound needs a margin condition on $\mathcal{H}$ that this theorem deliberately does not assume.

Two features of the construction (Appendix~\ref{app:proofs}) are worth noting. The adversary may not look at the data: a latent path that jumped at $t = n + 1$ to oppose whatever the estimator \emph{actually} returned would violate Axioms~\ref{ax:persistence} and \ref{ax:invariance}, and the factor $p^\star$ in (iii) is the price of repairing both --- the right price, since an estimator that spreads its output over a large class is harder to ambush. And the visit cannot be hidden: $\bar{\pi}^{+} \ll \bar{\pi}_n$ requires the adversarial state to have been occupied at least once, so the construction bottoms out at $k = 1$, and what shrinks as $\Lambda$ grows is not the disturbance but the length of the visit an estimator must notice to be protected --- an estimator with replace-one sensitivity $O(1/n)$, which is every smooth functional of the sample, is ambushed with probability $p^\star - O(1/n)$. Read against robust statistics, (iii) is a breakdown argument run in reverse: $\epsilon_n(k)$ is a finite-sample sensitivity to contamination in the sense of \citet{huber1964robust}, evaluated at the fraction $1/\Lambda$ rather than infinitesimally, and the breakdown point of \citet{donoho1983breakdown} is the largest such fraction an estimator withstands. Robust estimation prizes the estimators that do not move under contamination; here those are exactly the unprotected ones, because under Axiom~\ref{ax:invariance} the replaced stretch is not corruption to be resisted but the visit of a state the future may re-run --- the one warning the estimator was going to get.

\begin{corollary}[{$\Lambda$ is not estimable}]\label{cor:notestimable}
Under Axioms \ref{ax:adapted}--\ref{ax:persistence}, $\Lambda_{n,m} := \|\mathrm{d}\bar{\pi}^{+}/\mathrm{d}\bar{\pi}_n\|_\infty$ is a functional of the future path, and the observable past does not determine it. For the redraw chain of Proposition~\ref{prop:occfloor} with a state of stationary mass $p \in (0, 1)$: (a) conditionally on the past, $\Lambda_{n,m}$ has a non-degenerate law for every $n$, with a limit as $n \to \infty$ at fixed $m$ that is also non-degenerate, so no past-measurable $\hat{\Lambda}_n$ converges to $\Lambda_{n,m}$ in probability; (b) at every finite $n$, $\Lambda_{n,m} = \infty$ with positive probability, on the event that the state is visited in the window and not in the history. The observable past therefore bounds $\Lambda$ from below (Proposition~\ref{prop:occfloor}) and never from above.
\end{corollary}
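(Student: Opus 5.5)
The plan is to work throughout with the enlarged past $\mathcal{G}_n := \sigma(Z_s, X_s, Y_s : s \le n)$, which contains $\mathcal{F}^{\mathrm{obs}}_n$. Any past-measurable $\hat\Lambda_n$ is then $\mathcal{G}_n$-measurable. So it suffices to show that, conditionally on $\mathcal{G}_n$, the quantity $\Lambda_{n,m}$ keeps two values a fixed distance apart, each with conditional probability bounded below uniformly in $n$. Non-identifiability from the observable past follows a fortiori, and conditioning on the coarser $\sigma$-field is never needed.

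\textbf{Step 1: reduce to the latent future.} I take the redraw chain of Proposition~\ref{prop:occfloor} in the form: at each step, with probability $1/\ell$ the state is redrawn from the stationary law, which puts mass $p$ on the distinguished state $z^\circ$; otherwise it stays. The history occupation $\bar\pi_n$ is $\mathcal{G}_n$-measurable, so $\Lambda_{n,m} = \max_z \bar\pi^{+}(z)/\bar\pi_n(z)$ depends on the future only through $\bar\pi^{+}$, the occupation of $Z_{n+1},\dots,Z_{n+m}$. This makes $\Lambda_{n,m}$ a functional of the future path. By the Markov property and Axiom~\ref{ax:invariance}, the conditional law of $\bar\pi^{+}$ given $\mathcal{G}_n$ depends only on $Z_n$.

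\textbf{Step 2: two separated outcomes.} From either value of $Z_n$ I exhibit two future paths.
\begin{itemize}
\item[(E1)] No redraw in the window, so $\bar\pi^{+} = \delta_{Z_n}$. This has probability $(1-1/\ell)^m$.
\item[(E2)] A redraw at $n+1$ to the other state, and none afterwards. This has probability at least $(1/\ell)\min(p,1-p)(1-1/\ell)^{m-1}$.
\end{itemize}
Under (E1) and (E2), $\Lambda_{n,m}$ equals $1/\bar\pi_n(Z_n)$ and $1/\bar\pi_n(z')$ respectively, where $z'$ is the other state. These differ unless $\bar\pi_n(z^\circ) = 1/2$. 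On that exceptional event I use the partial path instead: remain $m-1$ periods and then switch. Both outcomes have probability bounded below independently of $n$ and of the past.

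It follows that
\[
\mathbb{P}\big(|\hat\Lambda_n - \Lambda_{n,m}| \ge c/2 \,\big|\, \mathcal{G}_n\big) \ge q > 0,
\]
where $c$ is the gap between the two values. By the ergodic theorem $\bar\pi_n \to (p, 1-p)$, so $c$ stays bounded away from zero as $n\to\infty$ (taking $p\neq 1/2$, or using the partial path otherwise). This rules out convergence in probability.

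For the limiting statement in (a), the fixed-$m$ limit law is the law of the $m$-window occupation started from stationarity, divided by $(p, 1-p)$. It carries both atoms and is therefore non-degenerate.

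\textbf{Step 3: part (b).} Take the event that there is no redraw into $z^\circ$ during $1,\dots,n$ starting from $Z_1 \ne z^\circ$, followed by a redraw to $z^\circ$ at $n+1$. It has probability at least
\[
(1-p)\,(1-p/\ell)^{\,n-1}\,(p/\ell) > 0.
\]
On this event $\bar\pi_n(z^\circ) = 0 < \bar\pi^{+}(z^\circ)$, so $\Lambda_{n,m} = \infty$. This is Example~\ref{ex:chicken} realised by the chain itself.

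\textbf{Step 4: the one-sided conclusion.} The lower bound is Proposition~\ref{prop:occfloor}. The absence of an upper bound is (b): every past that has positive probability of omitting a state is compatible with $\Lambda = \infty$.

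The main obstacle is the uniformity in Step 2. The gap $c$ and the weight $q$ must not degenerate as $n$ grows or on atypical pasts, in particular where $\bar\pi_n$ is near-symmetric or where one cell has nearly vanished. I expect to handle this with the two-path construction together with a separate treatment of the symmetric case. The rest is the Markov property.
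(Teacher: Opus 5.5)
Your argument is sound. Part (b) is the paper's own: the paper intersects $\{\bar\pi_n(k)=0\}$ with a visit anywhere in the window, you force the visit at $n+1$, and the conclusion is the same.

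Part (a) takes a different route. The paper bounds $\Lambda_{n,m}$ below by $O^+/O_n$, with $O^+ := \bar\pi^+(k)$, and appeals to the non-degenerate conditional variance of $O^+$ from Proposition~\ref{prop:occfloor}. You instead compute $\Lambda_{n,m}$ exactly on two explicit future events. Their conditional probabilities are bounded below uniformly in $n$, and their values are a fixed distance apart. You also condition on the latent past, so an observable-measurable $\hat\Lambda_n$ is constant on each conditioning atom and non-identifiability from the observable past follows a fortiori. The paper's route is shorter and ties the failure to the occupation fluctuation Proposition~\ref{prop:occfloor} computes. Yours gives explicit constants $c$ and $q$. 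More to the point, it proves what is claimed, whereas inferring a non-degenerate $\Lambda_{n,m}$ from a non-degenerate lower bound is not valid in general.

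Your symmetric-case repair shows exactly where that matters. The partial path ``remain $m-1$ periods, then switch'' needs $m\ge 2$. At $m=1$ with $\nu$ uniform on $K$ states (two states at $p=1/2$ included), claim (a) is false. There $\Lambda_{n,1}=1/\bar\pi_n(Z_{n+1})\to K$ almost surely, so $\hat\Lambda_n\equiv K$ converges to it in probability. Meanwhile the paper's lower bound $O^+/O_n\in\{0,1/O_n\}$ stays non-degenerate. State $m\ge 2$ or $\nu$ non-uniform as a hypothesis.

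Three smaller repairs:
\begin{itemize}
\item ``The other state'' presumes two states. With $K$ states, pick $z'$ with $\nu(z')\ne\nu(Z_n)$, and use the partial path only when $\nu$ is uniform.
\item Your bounds $(1-1/\ell)^m$ and $(1-1/\ell)^{m-1}$ vanish at $\ell=1$. Replace ``no redraw'' by ``every redraw lands on the required state'', which keeps them positive.
\item The claim that the conditional law of $\bar\pi^+$ given $\mathcal{G}_n$ depends only on $Z_n$ does not follow from Axiom~\ref{ax:invariance}, which constrains one-period laws only. It needs the observations to be generated from the latent path without anticipating it. The paper's proof assumes the same thing at the same point (``the past influences the future only through the current sojourn'').
\end{itemize}
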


Corollary~\ref{cor:notestimable} is the honest statement of what a quantitative researcher does for a living. The statistics can be automated; the choice of $\Lambda$ cannot, because it is a claim about states of the world that have not been sampled (a finance-shaped cousin of \cite{wolpert1996lack}) --- that a currency peg can break, that a clearing house can change margin rules, that a central bank can buy corporate bonds. This is the only place domain judgement enters the system. A researcher who declines to choose $\Lambda$ has not avoided the choice; she has chosen $\Lambda = 1$.

\section{The selection objective is derived, not chosen}\label{sec:selection}
For a random variable $W$ on a probability space $(\Omega,P)$ and $\alpha \in (0,1]$, write $\mathrm{CVaR}_\alpha(W) := \alpha^{-1} \int_{1-\alpha}^1 F_W^{-1}(u) \, \mathrm{d}u$ for the mean of the worst $\alpha$-fraction of $W$ (upper tail, since $W$ will be a loss). The following is classical \cite{artzner1999coherent,rockafellar2000cvar,follmer2016stochastic}.

\begin{lemma}[Robust representation]\label{lem:robust}
For $\alpha \in (0,1]$, $\mathrm{CVaR}_\alpha(W) = \sup\{\mathbb{E}_Q[W] : Q \ll P, \, \mathrm{d}Q/\mathrm{d}P \le 1/\alpha\}$, and the supremum is attained. When $P$ is uniform on $B$ atoms the constraint set is $\{w \in \Delta_{B-1} : \max_j w_j \le 1/(\alpha B)\}$ and the supremum is the mean of the worst $\lceil \alpha B \rceil$ atoms, up to the fractional weight on the boundary atom.
\end{lemma}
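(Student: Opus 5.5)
The plan is to prove the identity by exhibiting an optimiser for the linear program on the right-hand side and then matching its value to the quantile integral that defines $\mathrm{CVaR}_\alpha$. I will pass from $W$ to its quantile function by the standard coupling. Take $U$ uniform on $(0,1)$ on a suitable extension of the space with $W = F_W^{-1}(U)$ almost surely. Such a $U$ always exists; where $F_W$ has atoms, the auxiliary uniform randomisation does the work.

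The upper bound comes first. For any density $D = \mathrm{d}Q/\mathrm{d}P$ with $0 \le D \le 1/\alpha$ and $\mathbb{E}[D] = 1$, I will compare $\mathbb{E}[WD]$ with $\mathbb{E}[W D^\star]$, where
\[ D^\star := \alpha^{-1}\mathbf{1}\{U > 1-\alpha\}. \]
Let $q := F_W^{-1}(1-\alpha)$. Then $D^\star - D \ge 0$ wherever $W > q$ and $D^\star - D \le 0$ wherever $W < q$. This gives
\[ (W - q)(D^\star - D) \ge 0 \]
pointwise. Taking expectations and using $\mathbb{E}[D^\star - D] = 0$ yields $\mathbb{E}[WD] \le \mathbb{E}[WD^\star]$. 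This is the Neyman--Pearson, or bathtub, argument.

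The lower bound and attainment follow from the same density. $\mathbb{E}[WD^\star] = \alpha^{-1}\int_{1-\alpha}^1 F_W^{-1}(u)\,\mathrm{d}u$, so the supremum is attained at $D^\star$.

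The main obstacle is that $D^\star$ is defined through $U$, which need not be measurable with respect to the original $\sigma$-algebra of $(\Omega,P)$ when $W$ has an atom at $q$. To fix this I will replace $D^\star$ by its conditional expectation given $W$. This keeps the bound $0 \le D \le 1/\alpha$ and the unit mean, and it leaves $\mathbb{E}[W D]$ unchanged. The result is a $\sigma(W)$-measurable density: it equals $1/\alpha$ on $\{W > q\}$, $0$ on $\{W < q\}$, and a constant in $[0, 1/\alpha]$ on $\{W = q\}$ chosen to make the total mass one. This density lives on the original space.

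The discrete case then reads off directly. When $P$ is uniform on $B$ atoms, a density $D$ corresponds to weights $w_j = D_j / B$, so the constraint $D \le 1/\alpha$ becomes
\[ \max_j w_j \le \frac{1}{\alpha B}, \qquad w \in \Delta_{B-1}. \]
The optimal $D$ puts the maximal weight $1/(\alpha B)$ on the largest atoms in decreasing order of $W$, and the remaining mass $1 - \lfloor \alpha B\rfloor/(\alpha B)$ on the next atom. That is the average of the worst $\lceil \alpha B\rceil$ atoms, with the fractional weight on the boundary atom, as the statement asserts.
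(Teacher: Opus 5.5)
Your proof is correct: the paper does not prove the lemma itself but cites \citet[Theorem~4.52]{follmer2016stochastic} and \citet{rockafellar2000cvar}. Your bathtub comparison against $D^\star$, with the conditional expectation given $W$ supplying the constant density on the atom $\{W=q\}$, is essentially the standard argument behind the first citation, whose maximising density is exactly your projected $D^\star$ in the loss orientation. The only tacit assumption is $W\in L^1(P)$, which you need to split $\mathbb{E}[(W-q)(D^\star-D)]\ge 0$ into $\mathbb{E}[WD^\star]\ge\mathbb{E}[WD]$; it holds trivially in every use the paper makes of the lemma, where there are finitely many atoms and the losses are bounded.
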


\begin{theorem}[{$\Lambda$-robustness identity}]\label{thm:cvar}
Under Axioms \ref{ax:invariance} and \ref{ax:recurrence} at the declared scale $b$, conditionally on the latent path and for every $h$ with $r_h \in L^1(\bar{\pi}_n)$, write $R_j(h) := \int r_h \, \mathrm{d}\bar{\pi}_{n,j}$ for the regime risk averaged over the $j$-th historical block and let the $\mathrm{CVaR}$ below be computed under the empirical law of $R_1(h), \dots, R_B(h)$. Then
\begin{equation}\label{eq:cvar}
\boxed{R^{+}(h) \le \mathrm{CVaR}_{1/\Lambda}\big(R(h)\big) = \max_{w \, : \, \max_j w_j \le \Lambda/B} \sum_{j \le B} w_j \, R_j(h)}
\end{equation}
where $R^{+}$ is the conditional expected deployment loss of \eqref{eq:linear}. The bound is tight, attained by an admissible future occupation. Moreover $\Lambda \mapsto \mathrm{CVaR}_{1/\Lambda}$ is nondecreasing, equals $\bar{R}_n(h)$ at $\Lambda = 1$, and increases to $\max_j R_j(h)$ as $\Lambda \to \infty$. At $b = 1$ the blocks are the periods, the empirical law is that of $r_h(Z)$ under $\bar{\pi}_n$ and the last limit is $\operatorname{ess\,sup} \, r_h$: that is the form the identity had before the scale was made explicit, and Proposition~\ref{prop:scale} relates the two.
\end{theorem}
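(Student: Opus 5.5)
The plan is to reduce the statement to the finite-dimensional form of Lemma~\ref{lem:robust}, using the linearity \eqref{eq:linear} that Axiom~\ref{ax:invariance} supplies at $\varepsilon_0 = 0$. Everything is conditional on the latent path, so the block occupations $\bar\pi_{n,j}$ and $\bar\pi^{+}$ are fixed measures and $r_h$ is a fixed function in $L^1(\bar\pi_n)$.

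The first step is integrability and linearity. Since $\bar\pi_n = B^{-1}\sum_j \bar\pi_{n,j}$ up to the at most $b-1$ leftover periods when $b \nmid n$, each $R_j(h) = \int r_h\,\mathrm{d}\bar\pi_{n,j}$ is finite. By \eqref{eq:linear} we have $R^{+}(h) = \int r_h\,\mathrm{d}\bar\pi^{+}$. Axiom~\ref{ax:recurrence} writes $\bar\pi^{+} = \sum_j w_j \bar\pi_{n,j}$ with $w$ in the capped simplex $W_\Lambda := \{w \in \Delta_{B-1} : \max_j w_j \le \Lambda/B\}$. Integrating term by term then gives $R^{+}(h) = \sum_j w_j R_j(h) \le \max_{w \in W_\Lambda} \sum_j w_j R_j(h)$.

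The second step invokes Lemma~\ref{lem:robust} with $P$ uniform on the $B$ atoms $R_1(h),\dots,R_B(h)$ and $\alpha = 1/\Lambda$. The constraint $\mathrm{d}Q/\mathrm{d}P \le \Lambda$ is exactly $w_j \le \Lambda/B$, so the maximum equals $\mathrm{CVaR}_{1/\Lambda}$ of the empirical block law, which is the boxed identity \eqref{eq:cvar}. The maximum over the compact set $W_\Lambda$ is attained by a greedy weight vector: mass $\Lambda/B$ goes to each of the worst blocks in turn, and the remainder goes to the boundary block. For tightness, take the maximiser $w^\star$ and define the future occupation $\bar\pi^{+} := \sum_j w^\star_j \bar\pi_{n,j}$. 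This is admissible by construction of Axiom~\ref{ax:recurrence}, and by the linearity step it attains the bound. Tightness is only over the admissible set the axiom describes, so realisability by an actual path is not needed.

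The third step covers the monotonicity and the limits. The sets $W_\Lambda$ increase with $\Lambda$, so the maximum is nondecreasing in $\Lambda$. At $\Lambda = 1$ the set $W_1$ is the single point $w_j = 1/B$, giving the block average $\bar R_n(h)$. One caveat: this equals $\bar R_n(h)$ only when the blocks tile the window, and otherwise it differs by $O(bM/n)$. I would state the identity with $n = Bb$, or read $\bar R_n$ as the block average. For $\Lambda \ge B$ the cap no longer binds, so $W_\Lambda$ is the full simplex and the value is $\max_j R_j(h)$. The convergence is therefore eventually exact.

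At $b = 1$ each block is the single period $t$, so $\bar\pi_{n,j} = \delta_{Z_t}$, and the empirical law of the $R_j$ is the law of $r_h(Z)$ under $\bar\pi_n$. The limit is then the maximum over visited states, which is $\operatorname{ess\,sup}_{\bar\pi_n} r_h$ because $\bar\pi_n$ is atomic. The capped-simplex condition coincides with $\|\mathrm{d}\bar\pi^{+}/\mathrm{d}\bar\pi_n\|_\infty \le \Lambda$ once periods in the same state are pooled. I expect the main obstacle to be this pooling, and the associated bookkeeping of the remainder block when $b \nmid n$, rather than any analytic difficulty. The argument itself is just linearity combined with the classical duality.
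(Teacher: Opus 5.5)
Your argument for the inequality, the identity with $\mathrm{CVaR}_{1/\Lambda}$, the monotonicity and the two endpoint cases is the paper's own. Its ingredients are the linearity \eqref{eq:linear}, the mixture form of Axiom~\ref{ax:recurrence}, and Lemma~\ref{lem:robust} with $P$ uniform on the $B$ block risks. The rest follows the same steps: nested constraint sets, the uniform weight at $\Lambda=1$ (leftover periods dropped when $b\nmid n$, the paper's convention), and the full simplex at $\Lambda\ge B$. Your pooling argument at $b=1$ is correct, and it spells out a step the paper only asserts.

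The step that falls short of the statement is tightness. You set $\bar\pi^{+}:=\sum_j w^\star_j\bar\pi_{n,j}$ and declare realisability unnecessary, but $\bar\pi^{+}$ is not a free measure. By \eqref{eq:occupation} it is the occupation of an $m$-period latent path, so its atoms lie in $m^{-1}\mathbb{N}$. The maximiser $w^\star$, with entries $\Lambda/B$ and a fractional boundary weight, generally produces no such measure. What you have shown is that the linear programme's maximum is attained on the axiom's weight set. You have not shown that some future the axiom admits attains the bound.

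The paper closes this by constructing the deployment path. It re-runs each of the worst blocks $\Lambda m/(Bb)$ times, each re-run contributing $b$ periods whose occupation is exactly $\bar\pi_{n,j}$. The realisable weight vectors are therefore those with entries in $(b/m)\mathbb{N}$. The bound is attained when the required re-run counts are integers, and otherwise only approached as $m$ grows. That integrality proviso is part of what ``attained by an admissible future occupation'' can honestly mean, and your version drops it.
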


Theorem~\ref{thm:cvar} is the centre of the paper --- a one-line consequence of a classical duality, whose interest is that it closes a gap in practice: researchers choose a model-selection criterion by taste, and the theorem says there is nothing to choose --- given a belief about $\Lambda$, which one must hold by Corollary~\ref{cor:notestimable}, the criterion is determined.

\begin{corollary}[The average backtest score is a corner case]\label{cor:corner}
Ranking candidates by average backtest performance is the optimal ranking if and only if the researcher believes $\Lambda = 1$.
\end{corollary}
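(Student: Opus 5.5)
The plan is to make precise what ``optimal ranking'' means, and then read both directions off Theorem~\ref{thm:cvar}. Fix the declared scale $b$ and the latent path, and call a ranking of candidates \emph{optimal under a belief $\Lambda$} if it orders every pair $h,h'$ as the worst-case deployed risk $\sup R^{+}$ orders them. The supremum is taken over all future occupations admissible under Axiom~\ref{ax:recurrence} with constant $\Lambda$. By the tightness clause of Theorem~\ref{thm:cvar}, this worst case equals $\mathrm{CVaR}_{1/\Lambda}(R(h))$ exactly, and not merely an upper bound. So the corollary reduces to a statement about criteria: the map $h \mapsto \mathrm{CVaR}_{1/\Lambda}(R(h))$ induces the same order as $h \mapsto \bar R_n(h)$ on every candidate set and every block-risk profile if and only if $\Lambda = 1$.

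\emph{Sufficiency.} At $\Lambda = 1$ the constraint $\max_j w_j \le 1/B$ together with $\sum_j w_j = 1$ forces $w_j = 1/B$. The maximum in \eqref{eq:cvar} is therefore the block average $\frac1B\sum_j R_j(h)$, which Theorem~\ref{thm:cvar} identifies with $\bar R_n(h)$; I will note here that this uses $n = Bb$, or else the trailing partial block is absorbed in the usual way. The two rankings then coincide identically.

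\emph{Necessity.} For any $\Lambda > 1$ I will exhibit two candidates whose block-risk vectors have their means and CVaRs in opposite order. This is the ``can differ in sign'' claim of the contributions list. Take $B$ blocks and let $k := \lceil B/\Lambda \rceil$, assuming $B$ large enough that $k < B$. Let $h$ have constant block risk $R_j(h) = a$. Let $h'$ have $R_j(h') = a + c$ on $k$ blocks and $a - \eta$ on the other $B - k$, where $c > 0$ and $\eta$ is chosen with $\eta(B-k) > ck$. Then $\bar R_n(h') < \bar R_n(h)$, so the average ranks $h'$ first. By Lemma~\ref{lem:robust}, $\mathrm{CVaR}_{1/\Lambda}(R(h'))$ is the mean of the worst $\lceil B/\Lambda\rceil$ atoms, up to the boundary weight, and this is at least $a + c - O(\text{boundary})$, which exceeds $a = \mathrm{CVaR}_{1/\Lambda}(R(h))$ once $\eta$ is small relative to $c$ at the boundary atom. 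So CVaR ranks $h$ first. Both conditions can be met together: choose $\eta$ just above $ck/(B-k)$ and make $c$ large enough to dominate the fractional boundary term. The last step is to realise these profiles as regime risks $r_h$ under Axiom~\ref{ax:invariance}. For that I take two latent cells occupied in the prescribed blocks and a kernel $Q$ giving the stated conditional losses, with values chosen inside $[-M,M]$ and scaled to respect Axiom~\ref{ax:snr}.

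The step I expect to be the main obstacle is not the algebra but the interpretation of ``if and only if''. Necessity has to be read as ``for every $\Lambda>1$ there exists a law the axioms admit on which the rankings disagree'', not ``they disagree on every law''. The boundary atom in Lemma~\ref{lem:robust} also needs care when $B/\Lambda$ is not an integer. I would handle it by choosing $B$ a multiple of $\Lambda$ when $\Lambda$ is rational, and by a limiting argument in $B$ otherwise.
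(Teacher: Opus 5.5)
Your proof is correct and follows the paper's route. Sufficiency is the $\Lambda=1$ clause of Theorem~\ref{thm:cvar}: only uniform weights are admissible, so the worst case is $\bar R_n$. Necessity is a ranking reversal of the kind in Example~\ref{ex:reversal}, which you extend from $\Lambda=4$ to every $\Lambda>1$ at block scale.

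Your worry about the boundary atom is unnecessary. With exactly $k=\lceil B/\Lambda\rceil$ blocks at $a+c$ you have $k\Lambda/B\ge 1$, so the whole mass fits on those blocks. Hence $\mathrm{CVaR}_{1/\Lambda}(R(h'))=a+c$ exactly, for any $c>0$ and $\eta>ck/(B-k)$. No rationality assumption or limiting argument in $B$ is needed.
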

The two rankings do not merely differ in conservatism. They can differ in sign.

\begin{example}[Ranking reversal]\label{ex:reversal}
Two regimes, $\bar{\pi}_n = (0.9, \, 0.1)$. Strategy $A$ earns $1$ in both. Strategy $B$ earns $1.5$ in the common regime and $-4$ in the rare one. Backtest means: $\bar{R}_n(A) = 1$, $\bar{R}_n(B) = 0.95$ --- close, and a little luck reverses it in $B$'s favour. At $\Lambda = 4$, however, $R^{+}(A) = 1$ while $R^{+}(B) = 0.6(1.5) + 0.4(-4) = -0.7$: the mean has selected a strategy that loses money, and $\mathrm{CVaR}_{1/4}$ selects $A$.
\end{example}

\subsection{From block risks to block scores}\label{from-latent-states-to-time-blocks}
Theorem~\ref{thm:cvar} is stated in terms of the block risks $R_j(h)$, which are the objects Axiom~\ref{ax:recurrence} is about but are still not observed: what a backtest reports is the block's realised average loss, the risk plus estimation noise. Axiom~\ref{ax:persistence} and Axiom~\ref{ax:invariance} are what bound the difference.

\begin{corollary}[The block score is the block risk plus noise]\label{cor:blocks}
Partition $\{1, \dots, n\}$ into $B = \lfloor n/b \rfloor$ contiguous blocks of length $b \ge \ell^*$, let $\bar{\pi}_{n,j}$ be the occupation measure of block $j$ and $\hat{r}_j(h)$ the average loss of $h$ on it. By Axiom~\ref{ax:invariance}, conditionally on the path,
\begin{equation}\label{eq:blockscore}
\hat{r}_j(h) = \int r_h \, \mathrm{d}\bar{\pi}_{n,j} + \xi_j(h),
\end{equation}
where, once the first $H$ periods of each block are purged (Proposition~\ref{prop:embargo}), the $\xi_j$ have conditional mean zero and conditional variance $H/b$ times the block average of $\sigma^2(Z_t)$, up to the predictability term of Lemma~\ref{lem:effsample} (applied at $n = b$), hence of order $b^{-1/2}$; and they are independent across blocks up to the coupling of (a) below, which is Axiom~\ref{ax:persistence} (ii) and no longer Axiom~\ref{ax:invariance}. Two separate statements carry the scores to the block risks Theorem~\ref{thm:cvar} bounds. (a) \emph{Coupling.} Within each parity class of blocks the block-averaged regime risks $\{\int r_h \, \mathrm{d}\bar{\pi}_{n,j}\}$ can be coupled with independent copies, each distributed as the block-averaged regime risk of a stationary block, so that every copy agrees with its original except on an event of probability at most $\lceil B/2 \rceil \, \beta(b)$; any statistic of the $B$ scores bounded by $M$ is therefore within $2MB\beta(b)$ of its value on an independent sample. (b) \emph{Noise.} The $1/\Lambda$-tail mean of the scores is not that of the block risks: by Proposition~\ref{prop:eiv} it exceeds it, by $\lambda(\Lambda)(\sqrt{s^2 + \sigma_\xi^2} - s)$ when both are Gaussian, and the excess does not vanish as $B$ grows --- \S\ref{sec:price} shows it is not negligible at realistic $b$. Hence
\[ \hat{\mathrm{CVaR}}_{1/\Lambda}(h) := \frac{1}{\lceil B/\Lambda \rceil} \sum_{j \in \text{worst } \lceil B/\Lambda \rceil} \hat{r}_j(h) \]
is the natural plug-in for the right-hand side of \eqref{eq:cvar}. Random $k$-fold cross-validation, which shuffles observations across folds, estimates instead the $\mathrm{CVaR}$ of a distribution whose dispersion has been destroyed --- by the factor $v(b, \ell)$ of \eqref{eq:vcontig}, which is $2\ell - 1$ for long blocks (Proposition~\ref{prop:shuffle}) --- and is consistent for \eqref{eq:cvar} only when $\Lambda = 1$.
\end{corollary}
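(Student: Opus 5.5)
The proof runs in three parts, matching the three claims of the corollary: the decomposition \eqref{eq:blockscore} with its mean and variance, the coupling (a), and the statement about the plug-in and shuffled cross-validation. The noise claim (b) is only cited from Proposition~\ref{prop:eiv}.

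\textbf{Decomposition.} First I define $\xi_j(h) := \hat r_j(h) - \int r_h\,\mathrm{d}\bar\pi_{n,j}$. Conditioning on the whole path $Z$, Axiom~\ref{ax:invariance} at $\varepsilon_0 = 0$ gives $\mathbb{E}[L(h(X_t),Y_t)\mid Z] = r_h(Z_t)$ for every $t$. Averaging over the block gives the integral against $\bar\pi_{n,j}$ by linearity, exactly as in \eqref{eq:linear}, so $\mathbb{E}[\xi_j\mid Z]=0$. This holds whether or not the block is purged.

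The variance is the substantive step. I write $\mathrm{Var}(\xi_j\mid Z)$ as $b^{-2}$ times the sum of conditional covariances of the losses over the retained periods. Labels overlap over $H$ periods, so each period correlates with at most $2H-1$ neighbours. Without claiming conditional independence, which the paper deliberately does not assert, the overlap gives the factor $H$ times the block average of $\sigma^2(Z_t)$. Cross-period dependence beyond the overlap comes from the loss's predictable component. I would not re-derive that term: I invoke Lemma~\ref{lem:effsample} applied with $n=b$, whose noise term is exactly this occupation average plus the $\pi_\mu$ correction. This yields order $H/b$, hence a standard deviation of order $b^{-1/2}$.

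The role of purging the first $H$ periods is to cut the label overlap with the preceding block. With it removed, the only remaining cross-block dependence is through the joint process, which is what (a) handles and is charged to Axiom~\ref{ax:persistence} (ii), not to Axiom~\ref{ax:invariance}.

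\textbf{Coupling (a).} I apply Berbee's coupling lemma in the standard blocking form of \citet{yu1994rates}. Consider blocks of one parity, so consecutive blocks in the class are separated by a full block of length $b \ge \ell^*$. The joint process $(Z_t,X_t,Y_t)$ is absolutely regular by Axiom~\ref{ax:persistence} (ii). Berbee's lemma then builds, sequentially, independent copies of each block that equal the original except on an event of probability at most $\beta(b)$ per block.

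A union bound over the at most $\lceil B/2\rceil$ blocks in the class gives the stated probability. The block-averaged regime risks are measurable functions of the block's latent segment, so they inherit the coupling. By stationarity, each copy has the law of a stationary block.

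For any statistic of the $B$ scores bounded by $M$, the two parity classes together fail on probability at most $B\beta(b)$. On that failure event the statistic can move by at most $2M$, which gives the bound $2MB\beta(b)$.

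\textbf{Plug-in and shuffled cross-validation.} Since each score is the block risk plus noise, the empirical $\mathrm{CVaR}_{1/\Lambda}$ of the scores is the plug-in for Lemma~\ref{lem:robust}'s uniform-atom form of \eqref{eq:cvar}. The boundary atom is rounded to $\lceil B/\Lambda\rceil$.

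For shuffled $k$-fold cross-validation, each fold mixes periods from the whole sample. Each fold risk therefore concentrates on $\bar R_n(h)$, and its dispersion is divided by the factor $v(b,\ell)$ of Proposition~\ref{prop:shuffle}, which I cite for $v \to 2\ell-1$. Its tail mean thus tends to the mean. By Theorem~\ref{thm:cvar}, the mean equals $\mathrm{CVaR}_{1/\Lambda}$ only at $\Lambda=1$ in general, which gives the consistency statement.

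\textbf{Main obstacle.} I expect the hardest step to be the variance claim in the decomposition. It must keep dependence across periods without asserting conditional independence given $Z$, and that forces the argument to rely entirely on Lemma~\ref{lem:effsample} applied at $n=b$. What needs checking is that the lemma's hypotheses hold for a single purged block.
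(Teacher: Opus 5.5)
Your proposal is correct and follows essentially the same route as the paper. The decomposition is Axiom~\ref{ax:invariance} averaged over the block, and the conditional variance is the noise term of Lemma~\ref{lem:effsample} at $n=b$ once the purge removes the straddling labels. Part (a) is Berbee's lemma in its blocking form, applied to each parity class with a union bound over the two failure events, and (b) and the shuffled-fold claim are cited from Propositions~\ref{prop:eiv} and \ref{prop:shuffle}.

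The one step the paper handles more carefully is your ``by stationarity''. The Berbee copies carry the marginal law of block $j$ on the actual path, not automatically that of a stationary block. The paper notes that the two differ only through the initial state, a total-variation discrepancy that vanishes with the lag under Axiom~\ref{ax:persistence} (ii) and is absorbed into the same coupling.
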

This derives, rather than recommends, block-structured evaluation: purged $k$-fold, combinatorial purged cross-validation and walk-forward are all attempts to construct blocks whose scores stand in for the $R_j(h)$, and Axiom~\ref{ax:recurrence} is what gives the resulting tail mean a deployment interpretation --- never stronger than the scale it was declared at (Proposition~\ref{prop:scale}).

\begin{proposition}[Embargo length]\label{prop:embargo}
(i) Let training and test blocks be separated by a gap of $g$ observations. The bias of the test-block score caused by dependence on the training data is at most $2M\beta(g)$ --- vanishing in $g$ by Axiom~\ref{ax:persistence} (ii), quantitative only at an instance --- so that requiring it to be at most a fraction $\kappa_{\mathrm{emb}}$ of the attainable signal $\rho^2\sigma^2$ --- the subscript keeps the tolerance apart from the invariance ratio $\kappa$ --- asks for the first lag at which $\beta$ falls to $\kappa_{\mathrm{emb}}\rho^2\sigma^2/(2M)$; in the exponential instance, $\beta(g) \le \beta_0 e^{-g/\ell^*}$, that is, at $\ell^* = \ell$,
\[
\boxed{g \ge \underbrace{H}_{\text{purge}} + \underbrace{\ell \, \log\!\left(\frac{2M\beta_0}{\kappa_{\mathrm{emb}} \, \rho^2\sigma^2}\right)}_{\text{embargo}}}
\]

(ii) The dependence of the loss alone. Let the test block have length $b$, begin $g \ge H$ periods after the training era ends at time $n$, and have its first $H$ periods purged; write $\mathcal{F}_n$ for the training $\sigma$-algebra, $\bar{r}_h$ for the stationary mean of the regime risk, and, for $k \ge 1$,
\[ \mathrm{pred}_h(k) := \|\mathbb{E}[r_h(Z_k) \mid \mathcal{F}_0] - \bar{r}_h\|_2 \, / \, \mathrm{sd}_\nu(r_h) \]
for the \textbf{predictability} of the regime risk at lag $k$: the largest correlation any function of the observable history to time $0$ has with $r_h(Z_k)$, so that $\rho_{\mathrm{blk}}\sigma \, \mathrm{pred}_h(k)$ is the most of the regime risk $k$ periods on that the history still sees. Then, for any stationary state, the conditional bias of the test-block score is $b^{-1}\sum_{i=1}^b (\mathbb{E}[r_h(Z_{n+g+i}) \mid \mathcal{F}_n] - \bar{r}_h)$, whose root mean square over training histories is at most
\[ \rho_{\mathrm{blk}}\sigma \; \cdot \; b^{-1}\sum_{i=1}^b \mathrm{pred}_h(g+i) \]
in the units of Lemma~\ref{lem:effsample}, and requiring it to be a fraction $\kappa_{\mathrm{emb}}$ of the attainable edge $\rho\sigma$ of Lemma~\ref{lem:sharpe} asks of the declared predictability that its mean over the block's lags be at most $\kappa_{\mathrm{emb}}\rho / \rho_{\mathrm{blk}}$. The coefficient is bounded on both sides: from below by the correlation of $r_h(Z_k)$ with any predictor built from the history --- for a rule whose regime risk is a function $f$ of an observed coordinate, the lag-$k$ autocorrelation of $f$ is such a bound, and a number a sample reads --- and from above, through Axiom~\ref{ax:persistence} (ii), by $\mathrm{pred}_h(k)^2 \le 4R \, \beta(k)$ with $R := \|r_h\|_\infty^2 / \mathrm{Var}_\nu(r_h)$, which is the route of part (i) again. On any stationary reversible chain with relaxation time $\ell$ --- the redraw chain of Proposition~\ref{prop:occfloor} the extremal case, on which the proof computes the bias exactly --- $\mathrm{pred}_h(k) \le \rho_\ell^k$ with $\rho_\ell := 1 - 1/\ell$, the block mean of $\rho_\ell^{g+i}$ is $\rho_\ell^{g+1}\, \ell(1-\rho_\ell^b)/b$, and the requirement reads
\[ g \; \ge \; H + \ell \, \log\!\left(\frac{\rho_{\mathrm{blk}} \, \ell (1-\rho_\ell^b)}{\kappa_{\mathrm{emb}} \, \rho \, b}\right), \]
which at $\rho_{\mathrm{blk}} = \rho$ and $b = \ell$ is $H + \ell\log(0.63/\kappa_{\mathrm{emb}})$. The mechanism noise contributes to either side only through the same $\beta(g)$, by Axiom~\ref{ax:persistence} (ii), and the purge is over $\max(H, L_x)$ periods rather than $H$ because the feature carries the training era forward too.
\end{proposition}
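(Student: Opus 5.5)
Part (i) follows from the standard coupling characterisation of absolute regularity; part (ii) from a direct computation of the conditional bias followed by two bounds on the predictability coefficient.

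\emph{Part (i).} Write the test-block score as a bounded functional $F$ of the test observations, with $|F| \le M$. Its bias from dependence on training is $\mathbb{E}[F \mid \mathcal{F}_n]$ minus its value under the product of the marginals. By Berbee's coupling lemma, applied to the joint process of Axiom~\ref{ax:persistence} (ii), the test segment separated by $g$ periods can be replaced by a copy independent of $\mathcal{F}_n$, agreeing with it off an event of probability $\beta(g)$. A function bounded by $M$ then moves by at most $2M\beta(g)$ in expectation.
\begin{itemize}
\item Setting $2M\beta(g) \le \kappa_{\mathrm{emb}}\rho^2\sigma^2$ gives the first-lag criterion.
\item Substituting the exponential instance $\beta(g) \le \beta_0 e^{-g/\ell^*}$ and solving gives $g \ge \ell\log(2M\beta_0/(\kappa_{\mathrm{emb}}\rho^2\sigma^2))$ at $\ell^* = \ell$.
\item The $H$ is added on top because the purge must come before the coupling. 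For lags below $\ell^* \ge H$ nothing is asked of $\beta$, since overlapping labels are dependent whatever $\beta$ does. The same reasoning, applied with the feature memory in place of the label horizon, gives the purge over $\max(H, L_x)$.
\end{itemize}

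\emph{Part (ii), the bias formula and its size.} Conditionally on $\mathcal{F}_n$, decompose the per-period loss as $r_h(Z_t)$ plus mechanism noise, using Axiom~\ref{ax:invariance} at $\varepsilon_0 = 0$. The noise terms at lags beyond the purge contribute only through the same $\beta(g)$, by Berbee again. The remaining bias is $b^{-1}\sum_i(\mathbb{E}[r_h(Z_{n+g+i}) \mid \mathcal{F}_n] - \bar r_h)$. Stationarity shifts time so that $n$ becomes $0$. Minkowski's inequality bounds the $L^2$ norm of the average by the average of the $L^2$ norms, and each norm equals $\mathrm{sd}_\nu(r_h)\,\mathrm{pred}_h(g+i)$. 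Identifying $\mathrm{sd}_\nu(r_h)$ with $\rho_{\mathrm{blk}}\sigma$ in the units of Lemma~\ref{lem:effsample} gives the stated bound. Dividing by $\rho\sigma$ turns the $\kappa_{\mathrm{emb}}$ requirement into the stated mean-predictability condition.

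\emph{Part (ii), the two-sided bound on $\mathrm{pred}_h$.}
\begin{itemize}
\item \emph{Lower bound.} Conditional expectation is the $L^2$ projection onto $\mathcal{F}_0$, so its norm dominates the covariance with any unit-norm $\mathcal{F}_0$-measurable predictor. Taking the predictor to be $f$ of the observed coordinate at time $0$ gives the lag-$k$ autocorrelation of $f$.
\item \emph{Upper bound.} Write $\|\mathbb{E}[g(Z_k)\mid\mathcal{F}_0]\|_2^2$ as $\mathbb{E}[g(Z_k)\,\mathbb{E}[g(Z_k)\mid\mathcal{F}_0]]$, where $g$ is $r_h$ centred. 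This is a covariance between an $\mathcal{F}_0$-measurable function and a function of $Z_k$. The covariance inequality for $\beta \ge \alpha$-mixing then gives $4\|g\|_\infty^2\beta(k)$ after normalisation. Normalising by the variance produces $R$.
\end{itemize}

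\emph{Part (ii), the reversible chain.} On a reversible chain with relaxation time $\ell$, the spectral gap $1/\ell$ gives $\|P^k g\|_2 \le \rho_\ell^k\|g\|_2$ for the latent chain. Since $\mathbb{E}[r_h(Z_k)\mid\mathcal{F}_0] = \mathbb{E}[P^k g(Z_0)\mid\mathcal{F}_0]$ and conditional expectation contracts $L^2$, the same bound holds for $\mathrm{pred}_h(k)$. The redraw chain, in which $P^k = \rho_\ell^k I + (1-\rho_\ell^k)\Pi$, attains the bound exactly when the state is observed. The block mean is then a geometric sum. Taking logarithms with $-\log\rho_\ell \ge 1/\ell$ gives the displayed $g$. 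Setting $b = \ell$ uses $1-(1-1/\ell)^\ell \approx 1-e^{-1} \approx 0.63$.

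\emph{Main obstacle.} I expect the hardest step to be keeping the conditioning on the observable filtration $\mathcal{F}_n$ rather than on the latent path in the reversible case. That step relies on conditional expectation contracting $L^2$, and there the Markov property has to be invoked on the joint process without assuming conditional independence, which the paper deliberately does not assert.
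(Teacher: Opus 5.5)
Your proposal is correct and follows the paper's proof step for step: Berbee's coupling for (i); then, for (ii), the purged decomposition with the noise charged to the same $\beta(g)$, stationarity and Minkowski, the Cauchy--Schwarz lower bound, the covariance-inequality upper bound, and the contraction $\|P^k(r_h-\bar r_h)\|_{L^2(\nu)}\le\rho_\ell^k\|r_h-\bar r_h\|_{L^2(\nu)}$ carried through $\mathbb{E}[\cdot\mid\mathcal{F}_0]$. Two small remarks. First, the step you flag as the obstacle is one the paper takes as the content of ``on a chain'': the identity $\mathbb{E}[r_h(Z_{n+k})\mid\mathcal{F}_n]=\mathbb{E}[(P^k r_h)(Z_n)\mid\mathcal{F}_n]$, which says the latent chain is Markov for the joint filtration and asks only that, given the latent past, the observable past say nothing further about $Z_{n+k}$; it does not require conditional independence of the observations across periods. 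Second, to land on the stated $R$ rather than $\|r_h-\bar r_h\|_\infty^2/\mathrm{Var}_\nu(r_h)$, apply the covariance inequality to the uncentred pair, $\mathrm{Var}(\mathbb{E}[r_h(Z_k)\mid\mathcal{F}_0])=\mathrm{Cov}(\mathbb{E}[r_h(Z_k)\mid\mathcal{F}_0],\,r_h(Z_k))$, as the paper does.
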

What the proposition establishes against the $1\%$-of-sample convention is the scaling: the embargo follows $\ell$ and $\rho_{\mathrm{blk}}/\rho$, not $n$. Part (i)'s worst-case coupling gives $60\log(2000) \approx 456$ days at $\ell = 60$, $\rho^2 = 10^{-2}$, $\kappa_{\mathrm{emb}} = 0.1$; part (ii), which charges only what the training data know about the regime risk --- of size $\rho_{\mathrm{blk}}\sigma$, not $M$ --- gives $\ell\log(0.63/\kappa_{\mathrm{emb}}) \approx 111$ at $b = \ell$ and at most $\ell\log(1/\kappa_{\mathrm{emb}}) \approx 138$ at any shorter block, plus the purge: two to three times the convention, not ten. Off the chain the logarithm goes and the object stays, a declaration the sample bounds from below, which the experiments read on real declared coordinates (Experiment 1, panel J; Appendix~\ref{app:experiments}).

\subsection{Invariance is priced, not preferred}\label{sec:invariance}
Practitioners prefer a strategy that works ``consistently across regimes'' to one with the same average and higher dispersion, and call it prudence. Theorem~\ref{thm:cvar} shows it is arithmetic, with an exchange rate.

\begin{theorem}[The invariance penalty]\label{thm:invariance}
Let the block-risk distribution of $h$ under $\bar{\pi}_n$ have mean $\bar{r}_h$ and standard deviation $\mathrm{sd}_{\mathrm{blk}}(r_h)$. Then
\[ \mathrm{CVaR}_{1/\Lambda}(r_h) = \bar{r}_h + \lambda(\Lambda) \, \mathrm{sd}_{\mathrm{blk}}(r_h) + o\big(\mathrm{sd}_{\mathrm{blk}}(r_h)\big), \qquad \lambda(\Lambda) = \Lambda\phi\!\big(\Phi^{-1}(1-1/\Lambda)\big), \]
exactly when the block-risk law is Gaussian, and in general in the limit $b/\ell \to \infty$, in which the block-averaged risk of \eqref{eq:blockscore} is a mean of $b/\ell$ effectively independent terms and the Berry--Esseen error is $O((\ell/b)^{1/2})$ relative to $\mathrm{sd}_{\mathrm{blk}}$. Away from that limit the coefficient depends on the shape of the block-risk law, not only on $\Lambda$; whether a given series' block scores are in the Gaussian range at an admissible $b$ is an empirical question, left to the experiments. Hence the model-selection score is
\begin{equation}\label{eq:penalty}
\mathcal{S}_\Lambda(h) = \bar{r}_h + \lambda(\Lambda) \, \mathrm{sd}_{\mathrm{blk}}(r_h),
\end{equation}
a mean--dispersion criterion whose trade-off coefficient is a function of the recurrence bound alone.
\end{theorem}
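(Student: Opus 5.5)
Theorem~\ref{thm:cvar} already identifies the worst-case future risk with $\mathrm{CVaR}_{1/\Lambda}$ of the block-risk law. The plan is therefore to evaluate that tail functional, first exactly when the block-risk law is Gaussian, then asymptotically when it is approximately Gaussian. Write $W := R(h)$ for a block risk drawn from the empirical law of $R_1(h),\dots,R_B(h)$, with mean $\bar r_h$ and standard deviation $s := \mathrm{sd}_{\mathrm{blk}}(r_h)$. Because $\mathrm{CVaR}_\alpha$ is translation equivariant and positively homogeneous (immediate from Lemma~\ref{lem:robust}, since the constraint set does not depend on $W$), we have $\mathrm{CVaR}_\alpha(W) = \bar r_h + s\,\mathrm{CVaR}_\alpha(U)$ with $U := (W-\bar r_h)/s$ standardised. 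Everything then reduces to computing $\mathrm{CVaR}_{1/\Lambda}(U)$.

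\emph{Gaussian case.} Take $U \sim N(0,1)$, $\alpha = 1/\Lambda$ and $q := \Phi^{-1}(1-\alpha)$. Starting from the quantile definition,
\[ \mathrm{CVaR}_\alpha(U) = \alpha^{-1}\int_{1-\alpha}^1 \Phi^{-1}(u)\,\mathrm{d}u = \alpha^{-1}\int_q^\infty x\phi(x)\,\mathrm{d}x = \alpha^{-1}\phi(q), \]
using $\phi' = -x\phi$. This equals $\Lambda\phi(\Phi^{-1}(1-1/\Lambda)) = \lambda(\Lambda)$, and the $o(s)$ term vanishes identically. There is one interpretive point to settle here. With finitely many blocks the empirical law is atomic, so "Gaussian" has to mean the population block-risk law, or equivalently the $B\to\infty$ limit. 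In that limit the boundary-atom correction of Lemma~\ref{lem:robust} is $O(\Lambda/B)$, and I would state it as part of the error term.

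\emph{General case, $b/\ell\to\infty$.} By Axiom~\ref{ax:persistence}~(i), a block average of $r_h(Z_t)$ over $b$ periods has integrated autocorrelation time at most $2\ell-1$. Hence it behaves like a mean of $b/\ell$ effectively independent terms. Using the absolute regularity of clause~(ii), I would apply a Bernstein big-block/small-block coupling, the same one Corollary~\ref{cor:blocks}~(a) uses, to get a Berry--Esseen bound for the standardised block risk:
\[ \sup_x |F_U(x)-\Phi(x)| = O\big((\ell/b)^{1/2}\big). \]
The next step transfers this Kolmogorov-distance bound to the tail mean. Write
\[ \mathrm{CVaR}_\alpha(U) = \alpha^{-1}\,\mathbb{E}[U\,\mathbf 1\{U>q_U\}] + (\text{boundary term}). \]
Then split $\mathbb{E}[U\mathbf 1\{U>x\}] = x\,\bar F_U(x) + \int_x^\infty \bar F_U(t)\,\mathrm{d}t$. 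The first piece is controlled directly by the Kolmogorov bound. For the integral, truncate at a level $T$: the tail beyond $T$ is handled by uniform integrability, which follows from $M<\infty$ and the unit variance. On $[x,T]$, integrate the Kolmogorov bound. The quantile shift $q_U - q$ is controlled because $\phi(q)>0$ at fixed $\Lambda$. Together these give
\[ \mathrm{CVaR}_{1/\Lambda}(U) = \lambda(\Lambda) + O_\Lambda\big((\ell/b)^{1/2}\big). \]
Multiplying by $s$ yields the claimed expansion with error $O((\ell/b)^{1/2})$ relative to $s$, which is $o(s)$.

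\emph{Main obstacle.} The hard step is the Berry--Esseen bound itself under the paper's axioms. Clause~(ii) carries no rate, so a quantitative $(\ell/b)^{1/2}$ is available only in the exponential instance, where the rate $\beta(k)\le\beta_0 e^{-k/\ell^*}$ is assumed. I would first try the standard blocking argument there, with small blocks of length $\ell^*\log b$; this loses a logarithm. Without a rate I would state only the qualitative $o(s)$ via a CLT for $\beta$-mixing arrays. Example~\ref{ex:ladder} warns that clause~(i) does not control the third moment uniformly. The constant therefore also depends on $\|r_h\|_\infty/s$, and I would say so explicitly.

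Finally, \eqref{eq:penalty} follows by substituting the expansion into the right-hand side of \eqref{eq:cvar}. The resulting trade-off coefficient depends on $\Lambda$ alone.
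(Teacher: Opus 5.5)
Your reduction to the standardised variable $U$ is exactly the paper's first step, and so is the Gaussian evaluation $\mathrm{CVaR}_{1/\Lambda}(U)=\Lambda\phi(q)$. The qualitative conclusion also survives: convergence in law plus uniform integrability of $U$ gives $\mathrm{CVaR}_{1/\Lambda}(U)\to\lambda(\Lambda)$, hence the $o(\mathrm{sd}_{\mathrm{blk}})$ expansion.

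\textbf{The gap is the rate.} Your transfer from the uniform (Kolmogorov) Berry--Esseen distance $d_K$ to the tail mean cannot deliver the $O_\Lambda((\ell/b)^{1/2})$ you assert.
\begin{itemize}
\item Integrating $d_K$ over $[q,T]$ costs $(T+|q|)\,d_K$.
\item Uniform integrability gives no rate at all for the remainder $\int_T^\infty \bar F_U$.
\item Unit variance only gives $\int_T^\infty \bar F_U(t)\,\mathrm{d}t\le 1/T$. The best choice $T\asymp d_K^{-1/2}$ then yields an error $O((\ell/b)^{1/4})$.
\item A bounded third moment improves this only to $d_K^{2/3}$. Even an exponential tail bound leaves $d_K\log(1/d_K)$.
\end{itemize}
The bound $M<\infty$ does not help at this scale either. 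After standardising by $s$, which is of order $\mathrm{sd}_\nu(r_h)(\ell/b)^{1/2}$, it only gives $|U|\le 2M/s$. Truncating there makes $T d_K$ of order $M/\mathrm{sd}_\nu(r_h)$, which is not small. The paper's proof flags exactly this: the Kolmogorov distance does not control a tail mean.

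\textbf{The fix is to work in $W_1$ throughout, which is the paper's route.} In Rockafellar--Uryasev form, $\mathbb{E}(U-\eta)^+=\int_\eta^\infty \bar F_U(t)\,\mathrm{d}t$. Hence $|\mathbb{E}(U-\eta)^+-\mathbb{E}(G-\eta)^+|\le\int|F_U-\Phi|=W_1(F_U,\Phi)$ uniformly in $\eta$. Taking the minimum over $\eta$ gives $|\mathrm{CVaR}_\alpha(U)-\mathrm{CVaR}_\alpha(G)|\le\alpha^{-1}W_1(F_U,\Phi)$, with no quantile-shift argument needed.

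What is then required is the central limit theorem in $W_1$ at rate $C(\ell/b)^{1/2}$. The paper takes this, in the exponential instance, from Bernstein blocking with the coupling of Corollary~\ref{cor:blocks}~(a), followed by Stein's method on the independent blocks. There $C$ depends on $\beta_0$ and on the standardised third absolute moment of $r_h$.

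If you want to keep your decomposition, you need a non-uniform Berry--Esseen bound decaying like $(1+|x|)^{-3}$. That makes $\int|F_U-\Phi|$ of the same order as the uniform bound and closes the gap; the uniform bound alone does not.
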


\begin{table}[htbp]
\centering\small
\begin{tabular}{lcccccccc}
\toprule
$\Lambda$ & 1 & 1.5 & 2 & 3 & 4 & 5 & 10 & 20 \\
\midrule
tail level $1/\Lambda$ & 1.000 & 0.667 & 0.500 & 0.333 & 0.250 & 0.200 & 0.100 & 0.050 \\
penalty $\lambda(\Lambda)$ & 0.000 & 0.545 & 0.798 & 1.091 & 1.271 & 1.400 & 1.755 & 2.063 \\
\bottomrule
\end{tabular}
\caption{The exchange rate between average backtest performance and cross-regime stability, as a function of the asserted recurrence bound.}
\end{table}

The folk heuristic is thereby calibrated --- ranking by ``mean fold score minus one standard deviation'' is correct for $\Lambda \approx 3$, one and a half asserts $\Lambda \approx 6$, a statement about the world rather than about temperament --- and the invariance literature is recovered: penalising cross-environment dispersion is risk extrapolation \cite{krueger2021rex}, the maximum over environments is group DRO \cite{sagawa2020groupdro} ($\Lambda = B$), and anchor regression \cite{rothenhausler2021anchor} interpolates with a coefficient playing the role of $\lambda(\Lambda)$.

\subsection{The same belief at another scale}\label{sec:scale}
Axiom~\ref{ax:recurrence} is declared at a scale, so two researchers who both say ``$\Lambda = 4$'' have made the same assertion only if they read it at the same $b$. The dictionary is Theorem~\ref{thm:invariance} applied twice.

\begin{proposition}[The scale dictionary]\label{prop:scale}
Let the block-averaged regime risk at scale $b$ have standard deviation $\mathrm{sd}_{\mathrm{blk}}(b) = \rho_{\mathrm{blk}}\sigma\sqrt{v(b,\ell)/b}$, with $v$ the kernel of \eqref{eq:vcontig} (Proposition~\ref{prop:shuffle} (i)). In the Gaussian range of Theorem~\ref{thm:invariance}, asserting $\Lambda$ at scale $b$ imposes the penalty $\lambda(\Lambda)\,\mathrm{sd}_{\mathrm{blk}}(b)$ on the selection score, so the assertions $(\Lambda, b)$ and $(\Lambda', b')$ impose the same penalty --- are the same belief --- exactly when
\[ \lambda(\Lambda)\sqrt{v(b,\ell)/b} = \lambda(\Lambda')\sqrt{v(b',\ell)/b'}. \]
Write $\vartheta := \lambda(\Lambda)\sqrt{v(b,\ell)/b}$ for the \emph{scale-free protection} of the assertion, in units of $\rho_{\mathrm{blk}}\sigma$, and $\Lambda_1 := \lambda^{-1}(\vartheta)$ for its \emph{state-scale reading}, the $b' = 1$ member, at which $v(1,\ell) = 1$. Then $v(b,\ell) \le b$ for every $b$ and every $\ell$, so $\Lambda_1 \le \Lambda$ with equality only at $b = 1$, and $\Lambda_1$ is decreasing in $b$: at a fixed nominal $\Lambda$, a longer block delivers less protection --- while, by \S\ref{sec:axioms}, admitting fewer futures. It is the weaker promise and the stronger premise. Two consequences. \emph{(i)} Stage S3 requires $b \ge \ell^*$, and $\sqrt{v(\ell,\ell)/\ell} \le \sqrt{2/e} = 0.858$ for every $\ell \ge 3$ --- the value is $0.851$ at $\ell = 5$ and rises to the limit $\sqrt{2/e}$, and only the two degenerate cases $\ell = 1$, $2$ sit above it, at $1$ and $0.866$ --- so no assertion made at an admissible scale, on a state whose coherence time is three periods or more, reads at more than $\lambda^{-1}(0.858 \, \lambda(\Lambda))$ at state scale. \emph{(ii)} The block length is therefore not a free implementation choice. Lengthening it to make the criterion affordable --- which is what Corollary~\ref{cor:nonempty}'s floor asks for --- weakens the assertion the criterion implements, by exactly this factor.
\end{proposition}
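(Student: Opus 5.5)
The plan is to reduce everything to properties of the kernel $v(b,\ell)$ of \eqref{eq:vcontig}. I take it in its Fejér form for the redraw chain of Proposition~\ref{prop:occfloor}, which has autocorrelation $\rho_\ell^{|k|}$ with $\rho_\ell = 1-1/\ell$:
\[ v(b,\ell) = \sum_{|k|<b}\Bigl(1-\tfrac{|k|}{b}\Bigr)\rho_\ell^{|k|} = \frac{1}{b}\sum_{i,j=1}^{b}\rho_\ell^{|i-j|}. \]
The equivalence statement itself is immediate from Theorem~\ref{thm:invariance}. In its Gaussian range the selection score at scale $b$ carries the penalty $\lambda(\Lambda)\,\mathrm{sd}_{\mathrm{blk}}(b) = \rho_{\mathrm{blk}}\sigma\,\lambda(\Lambda)\sqrt{v(b,\ell)/b}$. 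Two assertions impose the same penalty exactly when the displayed equality holds, since the common factor $\rho_{\mathrm{blk}}\sigma$ cancels. The state-scale reading $\Lambda_1 = \lambda^{-1}(\vartheta)$ is well defined because $\lambda(\Lambda) = \Lambda\phi(\Phi^{-1}(1-1/\Lambda))$ is continuous and strictly increasing from $0$ at $\Lambda = 1$. This is the CVaR of a standard Gaussian at level $1/\Lambda$, which is monotone by the nondecreasing claim of Theorem~\ref{thm:cvar}.

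\textbf{Step 1: the inequality $v(b,\ell)\le b$.} Every correlation satisfies $\rho_\ell^{|i-j|}\le 1$, so the double sum is at most $b^2$ and $v(b,\ell) \le b$. For $\ell>1$ we have $\rho_\ell<1$, so the inequality is strict as soon as an off-diagonal pair exists, that is, for $b\ge 2$. Combined with the monotonicity of $\lambda$, this gives $\Lambda_1\le\Lambda$, with equality only at $b=1$.

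\textbf{Step 2: $\Lambda_1$ is decreasing in $b$.} It suffices to show that $a(b) := v(b,\ell)/b = b^{-2}S(b)$ is decreasing, where $S(b) = \sum_{i,j\le b}\rho_\ell^{|i-j|}$. Passing from $b$ to $b+1$ adds the border $1 + 2\sum_{k=1}^{b}\rho_\ell^{k}$, which has $2b+1$ entries. The average of these new entries is at most the average of the old square, because the kernel is nonincreasing in distance and the border contains the largest distances. Averaging the old average with a smaller one yields a smaller average, so $a(b+1) \le a(b)$.

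\textbf{Step 3: consequence (i).} I will use the closed form
\[ v(b,\ell) = \frac{1+\rho_\ell}{1-\rho_\ell} - \frac{2\rho_\ell(1-\rho_\ell^{b})}{b(1-\rho_\ell)^2}. \]
At $b=\ell$ this gives
\[ g(\ell) := \frac{v(\ell,\ell)}{\ell} = 2 - \frac{1}{\ell} - 2\Bigl(1-\frac1\ell\Bigr)\Bigl(1-\bigl(1-\tfrac1\ell\bigr)^{\ell}\Bigr). \]
Evaluating directly gives $g(1)=1$, $g(2)=0.75$ and $g(5)\approx 0.724$, whose square roots are $1$, $0.866$ and $0.851$. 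The limit is $2-2(1-e^{-1}) = 2/e$. The remaining claim is that $g$ is increasing for $\ell\ge 3$, so that its supremum there is the limit $2/e$ and $\sqrt{g} \le \sqrt{2/e}$.

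This monotonicity is the main obstacle. I would set $x = 1/\ell \in (0,1/3]$ and write
\[ g = 2 - x - 2(1-x)\bigl(1-e^{\log(1-x)/x}\bigr), \]
then differentiate in $x$ and show the derivative is negative. For this I would use the standard bounds $-1-x/2-x^2/2 \le \log(1-x)/x \le -1-x/2$ together with the fact that $(1-x)^{1/x}$ is decreasing. The first try is a Taylor expansion: $g \approx 2/e - (1/e)\,x + O(x^2)$ with an explicit remainder bound on $(0,1/3]$, finishing with a direct check of the few smallest integers if the remainder is too loose there. Once $\sqrt{v(\ell,\ell)/\ell}\le\sqrt{2/e}$ holds, Step 2 extends it to every admissible $b\ge\ell^*\ge\ell$. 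Monotonicity of $\lambda^{-1}$ then gives the bound $\Lambda_1 \le \lambda^{-1}(0.858\,\lambda(\Lambda))$.

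\textbf{Step 4: consequence (ii).} This is a restatement. Raising $b$ to meet the floor of Corollary~\ref{cor:nonempty} lowers $\vartheta$ by the factor $\sqrt{v(b,\ell)/b}$ computed in Steps 2--3.
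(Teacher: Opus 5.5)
Your Steps 1 and 4, and the structure of Step 2, follow the paper's proof. Step 3, however, rests on a lemma that is false: $g(\ell) = v(\ell,\ell)/\ell$ is not increasing on $\ell \ge 3$. The closed form simplifies to $g(\ell) = 1/\ell + 2(1-1/\ell)^{\ell+1}$, a decreasing term plus one increasing to $2/e$. Its values are $g(3) = 59/81 \approx 0.7284$, $g(4) = 371/512 \approx 0.7246$, $g(5) \approx 0.7243$ and $g(6) \approx 0.7248$. So $g$ falls to its minimum at $\ell = 5$ and only then rises to $2/e$, which is all the statement's ``rises to the limit'' asserts. In $x = 1/\ell$ the derivative changes sign near $x \approx 0.21$ and is positive on all of $[1/4, 1/3]$, so your plan to show it negative on $(0, 1/3]$ cannot succeed. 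Your expansion is also off. It is $g = 2/e + (1 - 3/e)x + \frac{7}{12e}x^2 + O(x^3)$, with linear coefficient about $-0.104$ rather than $-1/e$, and the comparatively large positive quadratic term is what bends $g$ back up.

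Monotonicity is not needed, only $g \le 2/e$, and that is how the paper argues. The bound $(1-x)^{1/x} \le e^{-1-x/2}$ gives $2/e - g(x) \ge F(x) := \frac{2}{e}\bigl(1 - (1-x)e^{-x/2}\bigr) - x$. This $F$ is concave with $F(0) = 0$ and $F(1/5) \approx 0.003 > 0$, hence non-negative on $[0, 1/5]$, i.e.\ for $\ell \ge 5$. The cases $\ell = 3, 4$ must be checked by direct evaluation, because $F(1/4) < 0$ and $F(1/3) < 0$, so the bound alone does not reach them.

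A smaller point concerns Step 2. Your reduction to ``new border average $\le$ old average'' is correct and is exactly the paper's inequality $b^2 S(b+1) \le (b+1)^2 S(b)$. But ``the border contains the largest distances'' does not establish it, since the border also holds distance $0$ once and every distance $1, \dots, b$ twice. What is needed is that the border's distance law first-order dominates the old square's: $(2k+1)/(2b+1) \le \bigl(b(2k+1) - k(k+1)\bigr)/b^2$ for $0 \le k < b$. This rearranges to $(2k+1)\,b(b+1) \ge (2b+1)\,k(k+1)$, which is the paper's Abel partial sums $C_k \ge 0$. It holds because $x \mapsto x(x+1)/(2x+1)$ is increasing; as the paper remarks, it has to be computed rather than inferred from the shape of the kernel. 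With that line added and Step 3 repaired as above, your proof coincides with the paper's.
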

At $\ell = 60$ a nominal $\Lambda = 4$ reads, at state scale, as $2.99$ at the shortest admissible block $b = \ell$, and as $1.92$ at $b = 252$: the shortest admissible block already costs a quarter of the assertion, a calendar year costs half, and the still longer blocks Corollary~\ref{cor:nonempty}'s floor asks for on real series cost more again (\S\ref{sec:calibration}).

The dictionary runs both ways --- to deliver $\lambda(\Lambda)$ at state scale from a criterion computed at $b$, run it at the level $1/\Lambda'$ with $\lambda(\Lambda') = \lambda(\Lambda)/\sqrt{v(b,\ell)/b}$, a buy-back that stops existing past a few multiples of $\ell$ --- and the two levers pull against each other: Corollary~\ref{cor:nonempty} wants a long block so that the dispersion penalty is affordable, this proposition says a long block is a weaker promise, and \S\ref{sec:nonempty} is where they meet.

\subsection{The tail on a declared representation}\label{sec:lattice}
Corollary~\ref{cor:blocks} reads the tail on blocks and Proposition~\ref{prop:scale} prices the block \emph{length}. Neither uses the representation that stage S1 of \S\ref{sec:canonical} asked the researcher to declare. Taking the tail on that instead is Theorem~\ref{thm:cvar} read literally, on the coordinates that were named.

\begin{proposition}[The tail on a declared representation]\label{prop:lattice}
Let $\psi$ be a representation declared at S1: a map, measurable with respect to the observable past, assigning each period $t$ to one of $K$ cells, with sample occupancies $\hat{w}_k = n_k/n$. Write $\hat{r}_h(k)$ for the average loss of $h$ over the $n_k$ periods in cell $k$, and
\[ \hat{\mathrm{CVaR}}^\psi_{1/\Lambda}(h) := \max\{ \sum_k q_k \hat{r}_h(k) \; : \; 0 \le q_k \le \Lambda\hat{w}_k, \; \sum_k q_k = 1 \} \]
for Theorem~\ref{thm:cvar}'s programme on the $K$-point lattice $\psi$ induces, in place of the $B$-point lattice the blocks induce. Four statements.

(i) \emph{The programme is a sort.} Its solution fills the cells in decreasing order of $\hat{r}_h(k)$, each to its ceiling $\Lambda\hat{w}_k$, until the mass is used; the value is the $1/\Lambda$-tail mean of the $\hat{w}$-weighted cell law. Nothing in Lemma~\ref{lem:robust} or Theorem~\ref{thm:cvar} required the reference measure to be the empirical one.

(ii) \emph{Noise.} Under the conditions of Corollary~\ref{cor:blocks}, $\hat{r}_h(k) = \int r_h \, \mathrm{d}\hat{\pi}_k + \xi_k$ with conditional mean zero and conditional variance at most $H\sigma^2/n_k$ --- equal to it when the cell is one interval, and as low as $\sigma^2/n_k$ when its periods are $H$ or more apart: the overlap charge of Lemma~\ref{lem:effsample} is paid only by labels that share noise, and a cell that scatters its periods shares less than a block of the same size. The noise the criterion contributes to the dispersion of its own lattice is therefore at most $\sum_k \hat{w}_k H\sigma^2/n_k = KH\sigma^2/n$ at equal occupancies, against $H\sigma^2/b$ for the block criterion --- a ratio of at most $K/B$, and this is the whole of the case for taking the tail on $\psi$.

(iii) \emph{Coarsening, and why it does not order the two.} Both $\sigma(\psi)$ and the block partition are coarsenings of the state, and $\mathbb{E}[r_h(Z) \mid \mathcal{G}] \preceq_{\mathrm{cx}} r_h(Z)$ for every sub-$\sigma$-field $\mathcal{G}$; $\mathrm{CVaR}_{1/\Lambda}$ is law-invariant and convex, hence monotone for the convex order, so \emph{both} criteria lie below Theorem~\ref{thm:cvar}'s value in population, and neither lies below the other unless one partition refines the other. In the Gaussian range of Theorem~\ref{thm:invariance} the shortfall of a criterion below the ideal one is $\lambda(\Lambda)(\mathrm{sd}(r_h(Z)) - \mathrm{sd}_{\mathrm{ret}})$, with $\mathrm{sd}_{\mathrm{ret}}$ the standard deviation of its own cell or block risks: what a representation retains of the state's dispersion is what it is able to charge for.

(iv) \emph{A side condition.} If $1/\Lambda \le \min_k \hat{w}_k$ the ceiling never binds and the fill of (i) puts the whole mass on the worst cell, so $\hat{\mathrm{CVaR}}^\psi_{1/\Lambda} = \max_k \hat{r}_h(k)$ for every such $\Lambda$: a $K$-cell partition with equal occupancies cannot express a level narrower than $1/K$, and a declaration of $\Lambda > K$ is read by the criterion as $\Lambda = K$. This is the lattice analogue of Remark~\ref{rem:noiselevel}'s $B \ge k\Lambda'$.
\end{proposition}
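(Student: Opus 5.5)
The plan is to prove the four parts separately. Each rests on results the paper has already stated: Lemma~\ref{lem:robust}, the noise decomposition of Corollary~\ref{cor:blocks} together with Lemma~\ref{lem:effsample}, and convex-order monotonicity of law-invariant convex risk measures.

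For (i), the programme is a fractional knapsack: maximise a linear objective over $\{0\le q_k\le \Lambda\hat w_k,\ \sum_k q_k=1\}$. I would argue by exchange. Take any feasible $q$ in which some $k$ with larger $\hat r_h(k)$ is below its ceiling while some $k'$ with smaller $\hat r_h(k')$ carries positive mass. Moving $\min(\Lambda\hat w_k-q_k,\,q_{k'})$ of mass from $k'$ to $k$ does not decrease the objective. So the greedy fill by decreasing $\hat r_h$ is optimal; the fill terminates because $\sum_k \Lambda\hat w_k=\Lambda\ge 1$. To identify the value, set $P=\sum_k\hat w_k\delta_{\hat r_h(k)}$ and change variables $q_k=\hat w_k\,(\mathrm dQ/\mathrm dP)(k)$. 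The constraint $q_k\le\Lambda\hat w_k$ becomes $\mathrm dQ/\mathrm dP\le\Lambda=1/\alpha$, so the programme is exactly the supremum in Lemma~\ref{lem:robust} at $\alpha=1/\Lambda$. It therefore equals $\mathrm{CVaR}_{1/\Lambda}$ of the $\hat w$-weighted cell law. That lemma is stated for a general $P$, which is the remark that the reference measure need not be empirical.

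For (ii), apply \eqref{eq:blockscore} period by period inside cell $k$. Axiom~\ref{ax:invariance} gives $\mathbb E[L(h(X_t),Y_t)\mid Z]=r_h(Z_t)$, so averaging over the $n_k$ periods yields $\int r_h\,\mathrm d\hat\pi_k+\xi_k$ with conditional mean zero. This needs $\psi$ to be observable-past measurable, so that the cell assignment does not select on the noise. For the variance, write $\mathrm{Var}(\xi_k)=n_k^{-2}\sum_{s,t}\mathrm{Cov}(\xi_s,\xi_t)$. Labels whose holding periods overlap, i.e.\ $|s-t|<H$, have covariance at most $\sigma^2$ by Cauchy--Schwarz. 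Other pairs contribute nothing beyond the predictability term already carried in Corollary~\ref{cor:blocks}. Each $t$ has at most $2H-1$ overlapping partners, and the overlap-weighted count reduces to $H$ via the triangular kernel exactly as in Lemma~\ref{lem:effsample}. This gives $H\sigma^2/n_k$, with equality for a single interval, and $\sigma^2/n_k$ when periods are $\ge H$ apart. Summing $\hat w_k H\sigma^2/n_k$ with $\hat w_k=n_k/n$ gives $KH\sigma^2/n$. Dividing by the block criterion's $H\sigma^2/b=BH\sigma^2/n$ gives the ratio $K/B$.

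For (iii), conditional Jensen gives $\mathbb E[\phi(\mathbb E[r_h\mid\mathcal G])]\le\mathbb E[\phi(r_h)]$ for convex $\phi$, which is the convex-order claim. For $\mathrm{CVaR}$ I use its Rockafellar--Uryasev form $\min_c\{c+\Lambda\,\mathbb E(W-c)_+\}$. Each term is monotone in convex order, so the minimum is as well, and both criteria lie below the ideal value. Non-comparability needs an example. I would take two crossing partitions of a four-point state, each retaining a different half of the dispersion of $r_h$; then each criterion wins for some $r_h$. The Gaussian shortfall formula follows by subtracting two applications of Theorem~\ref{thm:invariance} with equal means.

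For (iv): if $1/\Lambda\le\min_k\hat w_k$ then $\Lambda\hat w_{k}\ge1$ for every $k$, so the greedy fill of (i) saturates on the first cell. The value is $\max_k\hat r_h(k)$, and at equal occupancies this happens for every $\Lambda\ge K$.

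I expect the main obstacle to be (ii)'s variance bound for cells that are unions of scattered intervals. There, the overlap count and the residual cross-interval covariances must be controlled without the block structure Lemma~\ref{lem:effsample} assumed. I would first try bounding each pair's covariance by $\sigma^2\,\mathbf 1\{|s-t|<H\}$ plus the predictability term, and then count overlapping pairs directly.
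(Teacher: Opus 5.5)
Your proposal is correct and follows the paper's proof part by part. There are two small differences. The paper obtains the greedy fill in (i) from LP duality (the Rockafellar--Uryasev form is the dual) and complementary slackness, not from your exchange argument. And your crossing-partition example in (iii) actually proves the non-comparability that the paper only asserts from the partiality of the convex order.

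The obstacle you anticipate in (ii) is not one, but your fallback would not resolve it. The overlap covariances $\sigma^2(1-|s-t|/H)^+$ are nonnegative, so restricting each row sum to the periods of any cell, scattered or not, keeps it at most the full triangular sum $H$. That is the paper's argument, and equality for an interval cell holds only up to the edge term $(H^2-1)/(3n_k)$. By contrast, bounding each overlapping pair by $\sigma^2\mathbf{1}\{|s-t|<H\}$ via Cauchy--Schwarz gives only $(2H-1)\sigma^2/n_k$, not the claimed $H\sigma^2/n_k$.
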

The proposition settles the trade, not the winner: (ii) is the whole case for the lattice, (iii) what it pays, and which side wins is a property of the series and of the coordinate, not of the theory --- Experiment 1, panel G, measures both halves on real data, and \S\ref{sec:limits-axioms} records the question, and the choice of $K$ with it, as open.

\subsection{The declaration at confidence}\label{sec:confidence}
Axiom~\ref{ax:recurrence} is a hypothesis on the realised latent path; its event is never given a probability --- Corollary~\ref{cor:notestimable} says why it cannot be estimated --- but it has one under the law Axiom~\ref{ax:persistence} is about, and a researcher may declare that too. This section records what the declaration buys, and the one respect in which it is more answerable than $\Lambda$ alone.

\begin{definition}[A declaration at confidence]\label{def:confidence}
Write $E_\Lambda$ for the event of Axiom~\ref{ax:recurrence} at the declared scale $b$: that $\bar{\pi}^{+}$ is a mixture of the $B$ block occupations with $\max_j w_j \le \Lambda/B$. A representation is \emph{declared at $(\Lambda,\epsilon)$}, or \emph{at $\Lambda$ with confidence $1-\epsilon$}, if $\mathbb{P}(E_\Lambda) \ge 1-\epsilon$ under the law of the latent path. Axiom~\ref{ax:recurrence} as stated is the case $\epsilon = 0$, and a researcher who declares no $\epsilon$ has declared that. Proposition~\ref{prop:occfloor} computes, for one chain, the smallest $\Lambda$ at which a given $\epsilon$ is attainable in a stationary world.
\end{definition}

\begin{corollary}[Theorems~\ref{thm:cvar} and \ref{thm:deployed} at confidence, and what the pair can be checked against]\label{cor:confidence}
Assume Axioms~\ref{ax:adapted}, \ref{ax:invariance}, \ref{ax:persistence} and \ref{ax:snr}, and a declaration at $(\Lambda,\epsilon)$ in place of Axiom~\ref{ax:recurrence}.

(i) With probability at least $1-\epsilon$, simultaneously for every $h$, $R^{+}(h) \le \mathrm{CVaR}_{1/\Lambda}(R(h))$; and for a loss bounded by $M$,
\[ \mathbb{E}[R^{+}(h)] \; \le \; \mathbb{E}[\mathrm{CVaR}_{1/\Lambda}(R(h))] + 2M\epsilon, \]
the price Proposition~\ref{prop:representation} charges a defect $\epsilon_0$, now charged to the confidence.

(ii) The inequality of Theorem~\ref{thm:deployed} holds with probability at least $1-\delta-\epsilon$, its three terms unchanged.

(iii) (The pair is answerable; the ratio is not.) Under Axiom~\ref{ax:persistence} the future window has the law of a historical window of the same length. Let $q_m(\Lambda)$ be the probability that a stationary window of $m$ periods occupies every declared cell at most $\Lambda$ times its stationary share. Then the fraction of the history's disjoint windows of $m$ periods whose occupation of some cell exceeds $\Lambda$ times its occupation over the rest of the history --- the state-scale reading, which the declaration at $b$ implies --- converges, as $n/m \to \infty$, to $1-q_m(\Lambda)$, and $1-q_m(\Lambda) \le 1-\mathbb{P}(E_\Lambda)$ in the same limit. A declaration $(\Lambda,\epsilon)$ with $\epsilon$ below that fraction is therefore contradicted by the researcher's own sample, at resolution $m/n$. What stays beyond the sample is Corollary~\ref{cor:notestimable}'s object, the realised ratio on the one window that will matter, and any future whose law is not the history's.
\end{corollary}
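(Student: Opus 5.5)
The plan is to treat the three parts separately, in order of difficulty. Each reduces to an already-stated result applied on the event $E_\Lambda$, plus one ergodic argument for (iii).

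\emph{(i).} On $E_\Lambda$ the hypothesis of Theorem~\ref{thm:cvar} holds pathwise: $\bar\pi^{+}=\sum_j w_j\bar\pi_{n,j}$ with $\max_j w_j\le\Lambda/B$. Conditionally on the path, \eqref{eq:linear} and Lemma~\ref{lem:robust} then give $R^{+}(h)=\sum_j w_jR_j(h)\le\mathrm{CVaR}_{1/\Lambda}(R(h))$. The event does not depend on $h$, so the bound is simultaneous in $h$ and holds with probability at least $1-\epsilon$.

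For the expectation, I split $\mathbb{E}[R^{+}(h)]=\mathbb{E}[R^{+}\mathbf{1}_{E_\Lambda}]+\mathbb{E}[R^{+}\mathbf{1}_{E_\Lambda^c}]$ and bound the first term by $\mathbb{E}[\mathrm{CVaR}\,\mathbf{1}_{E_\Lambda}]$. To put the indicator back, I write $\mathbb{E}[\mathrm{CVaR}\,\mathbf{1}_{E_\Lambda}]=\mathbb{E}[\mathrm{CVaR}]-\mathbb{E}[\mathrm{CVaR}\,\mathbf{1}_{E_\Lambda^c}]$. Since $R^{+}\le M$ and $\mathrm{CVaR}\ge -M$, the two complement terms together contribute at most $2M\,\mathbb{P}(E_\Lambda^c)\le 2M\epsilon$, which is exactly the constant claimed.

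\emph{(ii).} Theorem~\ref{thm:deployed} is a $1-\delta$ statement under the axioms. Its only use of Axiom~\ref{ax:recurrence} is the CVaR identity in term (i). I would therefore intersect its good event with $E_\Lambda$ and apply a union bound to get probability at least $1-\delta-\epsilon$. The terms are unchanged because the search and mixing terms are proved without the recurrence axiom; this has to be checked against the proof of Theorem~\ref{thm:deployed}, and that check is the only real content of this step.

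\emph{(iii).} Stationarity gives the equality in law of the future window with any historical window of length $m$. The fraction of disjoint historical windows whose cell occupation exceeds $\Lambda$ times the stationary share is an average of indicators over windows. Under Axiom~\ref{ax:persistence}~(ii) these windows are asymptotically decoupled, so by a blocking/ergodic argument the average converges to $1-q_m(\Lambda)$. The main obstacle is that the comparison is against the occupation over \emph{the rest of the history}, not the stationary share. I would handle this by noting that the rest-of-history occupation differs from the whole-history occupation by $O(m/n)$, and that the whole-history occupation converges almost surely to the stationary law by the ergodic theorem. A boundary term remains: windows sitting exactly at ratio $\Lambda$. I would remove it by continuity in $\Lambda$, or by taking limits along continuity points of the law of the ratio.

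Finally, $E_\Lambda$ implies the state-scale reading: at $b=1$ the mixture condition is a density-ratio bound, and any block mixture with weights at most $\Lambda/B$ is a fortiori state-scale bounded by $\Lambda$. Hence the state-scale exceedance event contains $E_\Lambda^c$ in the limit, which gives $1-q_m(\Lambda)\le 1-\mathbb{P}(E_\Lambda)$ and the contradiction statement.
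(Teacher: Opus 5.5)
This is essentially the paper's own argument: Theorem~\ref{thm:cvar} applied path by path on the $h$-free event $E_\Lambda$ with the complement charged $2M\epsilon$, a union bound of $E_\Lambda$ with Theorem~\ref{thm:deployed}'s $1-\delta$ event, and, for (iii), ergodicity of the window-exceedance indicators together with convergence of both the rest-of-history and the whole-history occupations to $\nu$. Your treatment of windows sitting exactly at $\Lambda$ times a cell's share is more careful than the paper's, which passes over that case. Of your two fixes, only restricting to $\Lambda$ for which no $\Lambda\nu(k)$ lies on the grid $\{j/m\}$ works: at fixed $m$, $q_m$ is a step function of $\Lambda$, so continuity in $\Lambda$ is not available.

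One slip to fix in the last paragraph. That $E_\Lambda$ implies the state-scale bound means $E_\Lambda$ is contained in the non-exceedance event, so $E_\Lambda^c$ contains the exceedance event, not the reverse as you wrote. The inclusion as you stated it would give $1-q_m(\Lambda)\ge 1-\mathbb{P}(E_\Lambda)$, the opposite of the claimed inequality.
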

Nothing in the proofs changes --- the declaration attaches a probability to their conditioning event, and a confidence $\epsilon$ costs the same $2M\epsilon$ as a defect (Proposition~\ref{prop:representation}) --- and the pair is the form a sample can answer: (iii) is Remark~\ref{rem:lamlower}'s comparison made $n/m$ times and read as a frequency, run on real declared coordinates in Experiment 1 (panel I) at the sample's resolution of $m/n$, with Proposition~\ref{prop:occfloor}'s floor (at $z_{1-\epsilon/K}$ over $K$ cells) the instrument below that resolution.

\section{The information budget}\label{sec:budget}
Axioms \ref{ax:persistence} and \ref{ax:snr} interact to produce a hard ceiling on how much the researcher is allowed to \emph{do}. Before stating it we have to say what sample the ceiling is measured against.

\subsection{The effective sample}\label{sec:effsample}
Every statistic a backtest produces is an average over $n$ periods, and the question is how many of them count. Two things make consecutive periods redundant: a label of horizon $H$ shares its mechanism noise with the $H - 1$ labels on either side, at the scale of $\sigma$; and the latent state persists, so the \emph{regime-dependent} part of what is averaged is sampled once per sojourn, at the scale of $\rho\sigma$, because Axiom~\ref{ax:snr} caps what the state can contribute to a conditional mean.

\begin{lemma}[The effective sample of a backtest mean]\label{lem:effsample}
Let the latent chain be the redraw chain of Proposition~\ref{prop:occfloor} with coherence time $\ell$ and stationary law $\nu$, let $h$ have regime risk $r_h$ with dispersion $\rho_{\mathrm{blk}}^2 := \mathrm{Var}_\nu(r_h) / \sigma^2$, and let the label's mechanism noise have the equal-weight overlap of an $H$-period return. Write $\xi_t := L(h(X_t), Y_t) - r_h(Z_t)$ for that noise and
\[ \pi_\mu \;:=\; \sum_{k \ge H} \mathrm{Cov}(\xi_0, \xi_k) / \sigma^2 \]
for the \emph{predictability term}: what the observable past says about the loss that the state does not. Axiom~\ref{ax:invariance} does not set it to zero (\S\ref{sec:setting}), and Axiom~\ref{ax:snr} bounds it, $|\pi_\mu| \le \rho\,\tau_\mu$ with $\tau_\mu$ the coherence time of that predictable component, since no conditional mean given the observable past and the state exceeds $\rho\sigma$ in mean square. Then the backtest mean $\hat R_n(h) = n^{-1}\sum_{t\le n} L(h(X_t), Y_t)$ has
\begin{equation}\label{eq:vcontig}
\mathrm{Var}\big(\hat R_n(h)\big) \;=\; \frac{\sigma^2}{n}\big(H + \rho_{\mathrm{blk}}^2\,v(n,\ell) + 2\pi_\mu\big), \qquad v(b,\ell) := \sum_{|j|<b}\left(1-\frac{|j|}{b}\right)\rho^{|j|} = (2\ell-1) - \frac{2\rho(1-\rho^b)\ell^2}{b},
\end{equation}
$\rho = 1 - 1/\ell$, exactly in the regime term and up to edge terms of relative order $H/n$ and $\tau_\mu/n$ in the other two, with $v(n,\ell) \to 2\ell-1$ as $n/\ell \to \infty$. Three extensions:
\begin{enumerate}
\renewcommand{\labelenumi}{(\roman{enumi})}
\item For any stationary reversible chain whose transition operator has absolute spectral gap $1/\ell$ --- $\ell$ its relaxation time, which for the redraw chain is the $\ell$ above --- the equality becomes the inequality $\le$, for every $r_h$ and with $v(n,\ell)$ unchanged; the redraw chain is the extremal case.
\item For a general chain under Axiom~\ref{ax:persistence}, $2\ell-1$ is replaced by the integrated autocorrelation time of $r_h(Z)$, which clause (i) of that axiom declares and a rate for its clause (ii) bounds only by $3 + 2\ell^*\big(\max\{1,\log(\|r_h\|_\infty^2/\mathrm{Var}_\nu(r_h))\} + 4\beta_0\big)$ in the exponential instance, a logarithm that is sharp, and by nothing at all at a decay that is not summable (Example~\ref{ex:ladder}).
\item When the declared state has several coordinates the regime term is a \emph{sum} over them, $\sum_i \rho_{\mathrm{blk},i}^2(2\ell_i-1)$ with $\rho_{\mathrm{blk},i}^2 := \mathrm{Var}_\nu(\mathbb{E}[r_h\mid Z^i])/\sigma^2$ the dispersion the risk owes to coordinate $i$: exactly, for a risk additive across independent coordinates; in general the Hoeffding decomposition of $r_h$ contributes one term per subset $S$ of coordinates, and on independent redraw coordinates the term for $S$ decays at the \emph{product} of its members' autocorrelations, $1-1/\ell_S = \prod_{i\in S}(1-1/\ell_i)$ --- faster than any member, since the coincidence of two regimes ends when either does --- so the single-$\ell$ form with $\ell=\ell_{K_z}$ is an upper bound and the sum over subsets, each at its own $\ell_S$, is the honest charge (Axiom~\ref{ax:persistence}, Remark~\ref{rem:whichell}).
\end{enumerate}
We write
\begin{equation}\label{eq:neff}
\upsilon \;:=\; H + \rho_{\mathrm{blk}}^2\,(2\ell-1) + 2\pi_\mu, \qquad n_{\mathrm{eff}} \;:=\; \frac{n}{\upsilon},
\end{equation}
and call $n_{\mathrm{eff}}$ the effective sample: the standard error of a backtest mean in Sharpe units is $1/\sqrt{n_{\mathrm{eff}}}$. When the mechanism noise is heteroskedastic across regimes ($\chi_\sigma > 1$ in Axiom~\ref{ax:snr}), $\sigma^2$ in the noise term is the occupation average $\mathbb{E}_\nu[\sigma^2(Z)]$ and nothing else changes; read in units of the unconditional $\mathrm{Var}(Y)$ instead, that term is $H(1-\rho_{\mathrm{blk}}^2)$ rather than $H$, which matters only where $v(n,\ell) < H$.
\end{lemma}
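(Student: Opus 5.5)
The plan is to decompose the loss $L_t := L(h(X_t),Y_t)$ as $L_t = r_h(Z_t) + \xi_t$, with $\xi_t$ the mechanism noise of the statement. Then
\[
n^2\,\mathrm{Var}(\hat R_n) = \sum_{s,t}\big[\mathrm{Cov}(r_h(Z_s),r_h(Z_t)) + \mathrm{Cov}(\xi_s,\xi_t) + 2\,\mathrm{Cov}(r_h(Z_s),\xi_t)\big].
\]
The cross terms vanish. By Axiom~\ref{ax:invariance}, with $\varepsilon_0=0$, we have $\mathbb{E}[\xi_t\mid Z]=0$, and $r_h(Z_s)$ is $Z$-measurable, so $\mathrm{Cov}(r_h(Z_s),\xi_t)=\mathbb{E}[r_h(Z_s)\,\mathbb{E}[\xi_t\mid Z]]=0$. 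This is the one place the one-period invariance of the axiom is used. It is also why no conditional independence is needed: $\xi$ may be autocorrelated, but it is orthogonal to every function of the whole latent path.

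\emph{Regime term.} For the redraw chain of Proposition~\ref{prop:occfloor}, the state is kept with probability $\rho=1-1/\ell$ and otherwise redrawn from $\nu$. Hence for every bounded $f$ we get $\mathrm{Cov}(f(Z_0),f(Z_j))=\rho^{|j|}\mathrm{Var}_\nu(f)$, which I would show by conditioning on whether any redraw occurred in $j$ steps. Summing over the $n-|j|$ pairs at each lag gives exactly $\mathrm{Var}_\nu(r_h)\,n\sum_{|j|<n}(1-|j|/n)\rho^{|j|}=\sigma^2\rho_{\mathrm{blk}}^2\,n\,v(n,\ell)$. The closed form of $v$ comes from the geometric sums $\sum_{j<b}\rho^j$ and $\sum_{j<b}j\rho^j$. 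Simplifying them with $1/(1-\rho)=\ell$ yields $(2\ell-1)-2\rho(1-\rho^b)\ell^2/b$, and the limit $v\to2\ell-1$ is immediate.

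\emph{Noise terms.} For lags $|k|<H$ I would use the equal-weight overlap: an $H$-period return shares $H-|k|$ one-period increments with its lag-$k$ neighbour. The lag-$k$ covariance of the unpredictable part is therefore $(1-|k|/H)$ times the lag-zero variance. With the per-period increment variance normalised so that the sum over $|k|<H$ equals $H\sigma^2$, this gives the $H$ term up to edge losses of order $H/n$. Lags $|k|\ge H$ carry no shared increment, so their covariance is the predictable component. By definition their sum over $k\ge H$ is $\pi_\mu\sigma^2$, counted twice for the two signs of $k$, with edge error of order $\tau_\mu/n$ from the triangular weights. For the bound $|\pi_\mu|\le\rho\tau_\mu$, I would write $\mathrm{Cov}(\xi_0,\xi_k)=\mathrm{Cov}(\xi_0,\mathbb{E}[\xi_k\mid\mathcal F^{\mathrm{obs}}_{k-H},Z])$. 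Then Cauchy--Schwarz, together with the state-wise ceiling of Axiom~\ref{ax:snr}, bounds each term by $\rho\sigma^2$ times a normalised correlation, and summing that correlation defines $\tau_\mu$. This step is the main obstacle. Making the normalisation of $\tau_\mu$ come out with the stated constant, rather than a factor like $\sqrt{H}$ off, requires choosing the definition of $\tau_\mu$ to match, and I would fix it that way.

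\emph{Extensions.} For (i), use the spectral theorem for the self-adjoint transition operator $P$ on $L^2_0(\nu)$. Its spectrum lies in $[-\rho,\rho]$, and the weight $\sum_{|j|<n}(1-|j|/n)\lambda^{|j|}$ is a Fejér kernel. Since that kernel is nonnegative and increasing in $\lambda$ on $[-1,1)$, the weight is at most its value at $\lambda=\rho$; integrating against the spectral measure of $r_h$ gives the inequality, with equality when the measure sits at $\rho$, as for the redraw chain. For (ii), the regime sum is by definition the integrated autocorrelation time, with $v(n,\ell)\le$ that time for nonnegative correlations. The explicit bound is quoted from Example~\ref{ex:ladder} via the covariance inequality $|\mathrm{Cov}|\le4\|f\|_\infty^2\beta(k)$: split the lags at $\ell^*\log R$, bound the small lags by $1$ and sum the tail geometrically. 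For (iii), apply the Hoeffding/ANOVA decomposition across independent coordinates. The components for different subsets are orthogonal at all lags, and under the product of independent redraw chains a component on $S$ has autocorrelation $\prod_{i\in S}(1-1/\ell_i)^{|j|}$. Each subset then contributes its own $v$, bounded by the one for $\ell_{K_z}$.

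Heteroskedasticity changes only the lag-zero noise variance to $\mathbb{E}_\nu\sigma^2(Z)$. The identity $\mathrm{Var}(Y)=\mathbb{E}\sigma^2(Z)+\mathrm{Var}(r)$ converts the units.
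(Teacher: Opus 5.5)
Your route is the paper's throughout. You split $L_t=r_h(Z_t)+\xi_t$ and kill the cross terms with $\mathbb{E}[\xi_t\mid Z]=0$. You use the redraw covariance $\rho^{|j|}\mathrm{Var}_\nu(r_h)$ and the triangular overlap kernel. Then come the spectral theorem for (i), the $\beta$-covariance inequality for (ii), and the Hoeffding decomposition with product autocorrelations for (iii). In (ii), splitting the lags at $\ell^*\max\{1,\log(\|r_h\|_\infty^2/\mathrm{Var}_\nu(r_h))\}$ supplies the logarithmic bound the paper states but omits.

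One step fails as written, though. The identity $\mathrm{Cov}(\xi_0,\xi_k)=\mathrm{Cov}\big(\xi_0,\mathbb{E}[\xi_k\mid\mathcal{F}^{\mathrm{obs}}_{k-H},Z]\big)$ requires $\xi_0$ to be $\mathcal{F}^{\mathrm{obs}}_{k-H}\vee\sigma(Z)$-measurable. But $\xi_0$ depends on $Y_0$, which enters $\mathcal{F}^{\mathrm{obs}}_t=\sigma(X_s,Y_{s-H}:s\le t)$ only at $t=H$. So the identity is available only for $k\ge 2H$. For $H\le k<2H$ your argument does not deliver the $\rho\sigma^2$ bound on the summand of $\pi_\mu$. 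At $H=1$ that summand is the lag-one covariance, exactly the negative return autocorrelation from which Remark~\ref{rem:upsbox} reads the lower end of its box.

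The repair, which is what the paper does, is to condition at the later label's own decision time. For $k\ge H$ the variable $\xi_0$ is $\mathcal{F}^{\mathrm{obs}}_k\vee\sigma(Z)$-measurable, so $\mathrm{Cov}(\xi_0,\xi_k)=\mathbb{E}[\xi_0\,p_k]$ with $p_k:=\mathbb{E}[\xi_k\mid\mathcal{F}^{\mathrm{obs}}_k,Z]$. This is also the $\sigma$-algebra on which the state-wise form of Axiom~\ref{ax:snr} is stated, and on which $h(X_k)$ can be pulled out of the conditional mean. Together these give $\|p_k\|_2\le\rho\sigma$.

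With that fixed, the stray $\sqrt{H}$ you worry about does not arise. In the normalisation you adopt, the noise has lag-zero variance $\sigma^2$, so Cauchy--Schwarz gives $|\mathrm{Cov}(\xi_0,\xi_k)|\le\sigma\cdot\rho\sigma$ at every lag $k\ge H$. The paper then simply sets $\tau_\mu:=\sum_{k\ge H}|\mathrm{Cov}(\xi_0,\xi_k)|/(\rho\sigma^2)$. Thus $|\pi_\mu|\le\rho\tau_\mu$ is definitional, and the content is that each lag contributes at most one period.

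A smaller point in (i): you assert without proof that $v_\lambda(n):=\sum_{|j|<n}(1-|j|/n)\lambda^{|j|}$ is increasing on $[-1,1)$. The bound needs less. The paper uses only $v_\lambda(n)\le v_{|\lambda|}(n)\le v_{1-1/\ell}(n)=v(n,\ell)$ for $|\lambda|\le 1-1/\ell$. The first inequality holds because $v_{|\lambda|}(n)-v_\lambda(n)$ is a sum of odd powers of $|\lambda|$ with nonnegative coefficients. The second holds because every coefficient is nonnegative.
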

The persistence of the state enters \emph{multiplied by $\rho_{\mathrm{blk}}^2$} --- a candidate whose edge does not depend on the regime pays nothing for $\ell$, one whose edge swings by $\rho\sigma$ across regimes pays $\rho^2(2\ell-1)$, comparable to the noise term rather than sixty times it --- while the overlap enters additively at full weight, and neither effect is $\ell^* = \max(\ell, H)$: the block count $n/\ell^*$ is the right sample for a statistic of \emph{blocks} (\S\ref{sec:selection}), not for one that averages periods. What $\upsilon$ comes to on real series is Experiment 4's measurement, read against the box Remark~\ref{rem:upsbox} puts $\upsilon$ in.

\subsection{Capacity}\label{capacity}
\begin{theorem}[Capacity ceiling]\label{thm:capacity}
Let $\mathcal{H}$ have effective dimension $d$ (pseudo- or Rademacher dimension; for ridge regression with penalty $\lambda$, the effective degrees of freedom $d_{\mathrm{eff}}(\lambda) = \sum_i \nu_i/(\nu_i+\lambda)$ over the eigenvalues $\nu_i$ of the feature second-moment matrix). Under Axioms~\ref{ax:invariance}, \ref{ax:persistence} and \ref{ax:snr}, empirical risk minimisation over $\mathcal{H}$ satisfies, for the squared loss in the well-specified case and features persistent at the label scale,
\[ \mathbb{E}\big[\bar R_n(\hat h)\big] - \inf_{h\in\mathcal{H}} \bar R_n(h) \;\lesssim\; \frac{\sigma^2 d}{n_{\mathrm{fit}}}, \qquad n_{\mathrm{fit}} = \frac{n}{H}, \]
up to a regime-sampling term of relative size $O(\rho^2)$, and this rate is attained by the least-squares estimator in a $d$-dimensional linear class. We state an upper bound; Proposition~\ref{prop:minimax} supplies the matching lower bound, in the model with $n_{\mathrm{fit}}$ independent observations and on the full sample of $n$ overlapping labels when the features persist at the label scale, which makes the budget a ceiling no estimator can beat \emph{in those models}. Since the \emph{entire} attainable risk reduction is $\mathbb{E}[\mu(X)^2] \le \rho^2\sigma^2$ by Axiom~\ref{ax:snr}, a strategy class can improve on $h\equiv 0$ only if
\begin{equation}\label{eq:ceiling}
\boxed{d \;\lesssim\; \rho^2\, n_{\mathrm{fit}} \;=\; \frac{\rho^2 n}{H}}
\end{equation}
\end{theorem}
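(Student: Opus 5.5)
The plan has three steps: reduce the excess risk to a least-squares estimation error, bound that error by a variance computation on overlapping labels with Lemma~\ref{lem:effsample} at the fitting scale, and compare the result with the total reducible risk from Axiom~\ref{ax:snr}.

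\textbf{Step 1: reduction.} For the squared loss in the well-specified case, write $\mu(x) = \mathbb{E}[Y\mid X=x]$ under the historical occupation $\bar\pi_n$. By \eqref{eq:linear} and the Pythagorean identity,
\[ \bar R_n(h) - \inf_{\mathcal{H}} \bar R_n = \mathbb{E}_{\bar\pi_n}\big[(h(X)-\mu(X))^2\big]. \]
The left side of the theorem is therefore the mean-square estimation error of ERM.

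\textbf{Step 2: the linear class.} I will prove the bound first for a $d$-dimensional linear class and then transfer it. Let $\hat\Sigma$ be the feature second-moment matrix and write the residual as $\xi_t = Y_t - \mu(X_t)$. The error decomposes as
\[ \mathbb{E}\big[\|\hat\Sigma^{-1/2} n^{-1}\textstyle\sum_t X_t\xi_t\|^2\big]. \]
I will expand this as a double sum over covariances $\mathrm{Cov}(X_s\xi_s, X_t\xi_t)$ and split the lags into three parts.
\begin{enumerate}
\item Lags $|s-t|<H$: the $H$-period overlap contributes $H\sigma^2$ per coordinate. Since the features persist at the label scale, $X_s \approx X_t$ across the overlap, and the sum gives $\sigma^2 dH/n$.
\item Regime persistence: the charge is $\rho_{\mathrm{blk}}^2(2\ell-1)$ as in Lemma~\ref{lem:effsample}. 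The regime-dependent part of $\xi$ has variance at most $\rho^2\sigma^2$ by Axiom~\ref{ax:snr}, so this term is at most $O(\rho^2)$ relative to the main one.
\item Longer lags: Axiom~\ref{ax:persistence}(ii) kills these beyond $\ell^*$, leaving edge terms only.
\end{enumerate}
Together these give the rate $\sigma^2 d/n_{\mathrm{fit}}$ with $n_{\mathrm{fit}} = n/H$, plus the regime-sampling term of relative size $O(\rho^2)$. For a general class of pseudo- or Rademacher dimension $d$, I will replace the exact linear-algebra step by a localized complexity bound (the fixed point of the local Rademacher complexity) on the blocked, $\beta$-coupled sample of Corollary~\ref{cor:blocks}. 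For ridge, $d$ becomes $d_{\mathrm{eff}}(\lambda)$ through the trace of $\hat\Sigma(\hat\Sigma+\lambda)^{-1}$. For attainment, least squares achieves the trace $d$ exactly, so the matching statement is equality in the linear case. The lower bound is deferred to Proposition~\ref{prop:minimax}, as the statement says.

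\textbf{Step 3: the ceiling.} At $h\equiv 0$ the excess risk is $\mathbb{E}[\mu^2] \le \rho^2\sigma^2$, by the feature form of Axiom~\ref{ax:snr}. A fitted class improves on the zero rule only when its estimation error falls below this total reducible risk, i.e. $\sigma^2 d/n_{\mathrm{fit}} \lesssim \rho^2\sigma^2$. Rearranging gives \eqref{eq:ceiling}.

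\textbf{Main obstacle.} The hard part is the dependence accounting in Step 2 for a general class rather than a linear one. Overlap correlates the noise across $H$ lags, and a localized complexity argument under $\beta$-mixing normally charges the full block length $\ell^*$, not $H$. To get $n/H$ instead of $n/\ell^*$, I will try blocking at scale $H$ for the mechanism noise. I will then treat the slower regime component separately as a bias-like term, bounding it in mean square by $\rho_{\mathrm{blk}}^2(2\ell-1)\sigma^2/n$, which is $O(\rho^2)$ relative to the main term. If that separation fails, I will state the general-class result only up to a logarithmic loss and claim the exact constant for the linear class.
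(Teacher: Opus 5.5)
Your Steps 1 and 3 match the paper, but Step 2 mishandles the regime component, which is exactly where the theorem has content.

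In item 2 you charge the regime-dependent part of $X_t\xi_t$ at Lemma~\ref{lem:effsample}'s persistence factor $\rho_{\mathrm{blk}}^2(2\ell-1)$, and then call it $O(\rho^2)$ relative to the overlap term. Per coordinate that ratio is $\rho^2(2\ell-1)/H$, not $\rho^2$. At the paper's own calibration ($\rho^2=10^{-2}$, $\ell=60$, $H=1$) it is about $1.2$. Carried through, your accounting replaces $H$ by something of the form $H+\rho^2(2\ell-1)$, which is the $\upsilon$ of Lemma~\ref{lem:effsample}. You would therefore prove the bound with $n_{\mathrm{eff}}$ in place of $n_{\mathrm{fit}}$, which is precisely the $\ell$-dependence the theorem says is absent. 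Your ``main obstacle'' plan, bounding the regime part by $\rho_{\mathrm{blk}}^2(2\ell-1)\sigma^2/n$, has the same defect.

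The factor $2\ell-1$ measures how far the path's occupation-weighted loading $\bar\theta_n$ sits from the stationary one, that is, how a backtest mean fluctuates across latent paths. That is not estimation error for $\bar R_n(\hat h)-\inf\bar R_n$, which is defined conditionally on the path. Your lag split also contradicts your own Step 1. There $\mu$ is the regression function under $\bar\pi_n$, so the regime deviations $a_t:=\theta_{Z_t}-\bar\theta_n$ satisfy $\sum_t a_t=0$ on every path. With serially independent features, the regime cross-covariances at every lag $s\neq t$ are then zero conditionally on the path, and unconditionally they sum to exactly $-\sum_t\mathbb{E}[a_ta_t^\top]$. Either way the persistence you charge in item 2 does not survive; it is your truncation of long lags in item 3 that manufactures it.

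The missing step is to condition on the latent path throughout, as the paper does. Write $Y_t=\theta_{Z_t}^\top X_t+\xi_t$, with $\xi_t$ the mechanism noise measured against the state's own loading rather than against $\mu$. Then $\hat\theta-\bar\theta_n=(X^\top X)^{-1}X^\top(\xi+\delta)$ with $\delta_t=a_t^\top X_t$.

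\emph{Noise part.} Given $Z$ and the design, $\mathrm{Cov}(\xi)$ is the banded Toeplitz matrix of the $H$-overlap, with operator norm at most $\sigma^2H$. The noise part is therefore at most $\sigma^2H\,\mathrm{tr}(X^\top X)^{-1}\approx\sigma^2dH/n$. No mixing rate is needed, which matters because your item 3 leans on the rate-free clause~(ii) of Axiom~\ref{ax:persistence}, and that clause controls no sum quantitatively.

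\emph{Regime part.} Given the path, $\sum_tX_tX_t^\top a_t$ has mean $\sum_ta_t=0$, and its variance comes from the features alone: $(d+1)\sum_t\|a_t\|^2$ for Gaussian features. The state-wise form of Axiom~\ref{ax:snr} gives $\|\theta_z\|\le\rho\bar\sigma$, hence $\|a_t\|\le2\rho\bar\sigma$. The term is then $O(\rho^2\chi_\sigma^2/H)$ relative to the noise term, multiplied by $\ell_x$ for features persistent over $\ell_x$ periods. That is why the statement needs features persistent at the label scale, and why $\ell$ never enters.

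For the general class, note that the paper proves the bound only for the linear class, and itself remarks that $\beta$-blocking at $\ell^*$ yields only $n/\ell^*$. A logarithmic loss cannot absorb an $\ell^*/H$ factor, so your fallback does not close that case either.
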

The persistence of the state does not appear, and that is the content of the theorem: the excess risk moves by a relative $O(\rho^2)$, and a feature that itself persists is an \emph{observed coordinate of the state} --- its memory enters $\ell^*$, its persistence enters $\rho_{\mathrm{blk}}$, nothing else changes. An account that means to stay conservative charges the capacity term $\max(H,\hat\upsilon)$; whether real series sit below $H$ is part of what the experiments measure (Experiment 4).

\begin{remark}[Constants and rates]\label{rem:constants}
Equation \eqref{eq:ceiling} fixes the \emph{scaling}, not the constant, which depends on the objective and the feature spectrum; the constant is a computation once both are fixed, and the optimum it fixes is interior --- a \emph{budget utilisation} $\varkappa := d/(\rho^2 n_{\mathrm{fit}})$ strictly below one. Corollary~\ref{cor:halfkelly} is stated in terms of $\varkappa$ for that reason; nothing else depends on the constant. Equation \eqref{eq:ceiling} also uses the fast rate $\sigma^2 d/n_{\mathrm{fit}}$, valid for well-specified squared loss; in the agnostic regime the rate is $\sigma^2\sqrt{d/n_{\mathrm{fit}}}$ and the ceiling becomes $d \lesssim \rho^4 n_{\mathrm{fit}}$ --- \emph{quartic} in the signal, hence dramatically tighter. We use the fast rate throughout, which makes every number below an \emph{optimistic} upper bound.
\end{remark}
\subsection{Search}\label{search}
The same budget is consumed by looking.

\begin{theorem}[Search budget]\label{thm:search}
(i) Fixed list. Suppose $N$ candidates are evaluated on $n$ periods, of which at most one has true per-period Sharpe ratio $\rho$ and the rest have none. Model the in-sample Sharpe ratios as $\rho_j + \epsilon_j$ with $(\epsilon_j)$ jointly Gaussian of variance $1/n_{\mathrm{eff}}$ --- the variance Lemma~\ref{lem:effsample} gives a backtest mean --- and arbitrary correlation. Then the expected in-sample maximum under the null is at most $\sqrt{2\log N/n_{\mathrm{eff}}}$ whatever the correlation, and --- in the independent case, where that maximum is attained to within a constant; correlated nulls only lower it --- the genuine candidate can win with probability bounded away from zero only if
\begin{equation}\label{eq:search}
\boxed{\log N \;\lesssim\; \frac{1}{2}\,\rho^2\, n_{\mathrm{eff}}}
\end{equation}

(ii) Adaptive search. Drop the requirement that the candidates were listed in advance. Let the candidates be indexed by a countable set, let the in-sample statistics be $\phi_i = \rho_i + \epsilon_i$ with $\rho_i = \mathbb{E}\phi_i$ and each $\epsilon_i$ centred sub-Gaussian of variance proxy $\sigma^2$ and arbitrary dependence across $i$ (so that for a biased statistic, such as the tail plug-in, the bias of Theorem~\ref{thm:deployed} and Proposition~\ref{prop:eiv} is charged separately, not here), and let $T$ be \emph{any} selection rule --- a measurable function of $\phi$ and of independent randomisation, including one that chooses each candidate in the light of the evaluations before it. Then
\[ \|\mathbb{E}[\phi_T - \rho_T]\| \le \sigma\sqrt{2\,I(T;\phi)}, \]
where $I(T;\phi)$ is the mutual information, in nats, between the choice and the statistics it was chosen from. A rule that returns one of $N$ candidates listed in advance has $I(T;\phi) \le \log N$, and with $\sigma^2 = 1/n_{\mathrm{eff}}$ the bound is that of (i), constant included; for the block-$\mathrm{CVaR}$ plug-in of Theorem~\ref{thm:deployed}, on the coupled blocks of Corollary~\ref{cor:blocks} (a), $\sigma = \Lambda M/\sqrt{B}$. The search condition therefore reads $I(T;\phi) \lesssim \frac{1}{2}\rho^2 n_{\mathrm{eff}}$, of which \eqref{eq:search} is the case of a fixed list.
\end{theorem}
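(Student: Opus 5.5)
Part (ii) is the general statement and part (i) then follows from it plus a lower-bound check, so I would prove (ii) first. The route is the Donsker--Varadhan decoupling argument of \citet{russo2020information}. Fix a realisation of the randomisation and write $\bar\phi_i := \phi_i - \rho_i = \epsilon_i$.

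For each index $i$ and each $\lambda \in \mathbb{R}$, the sub-Gaussian hypothesis gives $\log \mathbb{E}\, e^{\lambda \epsilon_i} \le \lambda^2\sigma^2/2$. Now compare two laws on the pair $(T,\epsilon_T)$:
\begin{itemize}
\item the joint law of $(T,\phi)$;
\item the product law, in which $T$ is drawn from its marginal independently of $\phi$.
\end{itemize}
Under the product law, $\mathbb{E}[\epsilon_{T'}] = 0$ for the independent copy $T'$, because each $\epsilon_i$ is centred. The cumulant generating function of $\epsilon_{T'}$ is a mixture over $i$ of the individual ones. Since $x \mapsto \log$ of a mixture of exponentials is bounded by the worst component, it is at most $\lambda^2\sigma^2/2$.

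The Donsker--Varadhan variational formula, applied with the joint law in place of the product, then gives
\[
\lambda\, \mathbb{E}[\epsilon_T] \le D\big(P_{T,\phi}\,\|\,P_T\otimes P_\phi\big) + \lambda^2\sigma^2/2 .
\]
The divergence on the right is $I(T;\phi)$. Optimising over $\lambda$ of either sign yields $|\mathbb{E}[\phi_T-\rho_T]| \le \sigma\sqrt{2I(T;\phi)}$.

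Adaptivity costs nothing here, because the argument never uses how $T$ was produced, only its joint law with $\phi$. The one point needing care is countability of the index set: the mixture bound needs $\epsilon_i$ to be defined on a common space and measurable in $(i,\omega)$, which countability gives. The remaining claims of (ii) are quick:
\begin{itemize}
\item $I(T;\phi) \le H(T) \le \log N$ for a fixed list of $N$ candidates.
\item $\sigma^2 = 1/n_{\mathrm{eff}}$ is Lemma~\ref{lem:effsample}'s variance in Sharpe units.
\item For the block plug-in, the statistic is an $M$-bounded function of $B$ coupled independent block scores, and changing one score moves it by at most $\Lambda M/B$. McDiarmid then gives variance proxy $B(\Lambda M/B)^2 = \Lambda^2M^2/B$, and Corollary~\ref{cor:blocks}~(a) transfers this from the coupled copies to the originals.
\end{itemize}

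For (i), the upper half is (ii) with $T=\arg\max_j \epsilon_j$ over the null candidates. This gives $\mathbb{E}\max_j \epsilon_j \le \sqrt{2\log N/n_{\mathrm{eff}}}$ whatever the correlation; alternatively the classical Jensen--exponential bound on a maximum of Gaussians gives the same thing directly.

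For the necessity direction I would take independent nulls. The standard Gaussian extreme-value lower bound gives $\max_j \epsilon_j \ge (1-o(1))\sqrt{2\log N/n_{\mathrm{eff}}}$ with probability tending to one. The genuine candidate's statistic is $\rho + \epsilon_1$ with $\epsilon_1$ of order $n_{\mathrm{eff}}^{-1/2}$. So it wins with non-vanishing probability only if $\rho\sqrt{n_{\mathrm{eff}}} \gtrsim \sqrt{2\log N}$, which is the boxed condition \eqref{eq:search}. The remark that correlated nulls only lower the maximum is Slepian's inequality against the independent case at equal variances.

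The main obstacle is making ``bounded away from zero'' precise in (i). The extreme-value concentration must be uniform in $N$, and the lower-order $\log\log N$ correction has to be absorbed into the $\lesssim$. I would handle this with the Borell--TIS inequality: the maximum concentrates at scale $n_{\mathrm{eff}}^{-1/2}$ around a mean of at least $c\sqrt{2\log N/n_{\mathrm{eff}}}$, which fixes the implied constant.
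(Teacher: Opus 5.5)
Your proof is correct. For (ii) it uses the same Donsker--Varadhan mechanism as the paper (both trace it to \citet{russo2020information}), but through a different decomposition.

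The paper applies the inequality index by index, with $Q$ the law of $\phi_i$ given $T=i$ and $P$ its marginal. It then moves $\sum_i \mathbb{P}(T=i)$ inside the root by Jensen, and bounds $\sum_i \mathbb{P}(T=i)\,D(P_{\phi_i\mid T=i}\,\|\,P_{\phi_i})$ by $I(T;\phi)$ through data processing. You apply the inequality once, to $P_{T,\phi}$ against $P_T\otimes P_\phi$ with test function $\lambda\epsilon_t$. That is the decoupling form of \cite{xu2017information}, and it needs neither Jensen nor data processing.

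What the paper's route buys is heterogeneous proxies. Run per index, Cauchy--Schwarz gives $\sqrt{\sum_i\mathbb{P}(T=i)\sigma_i^2}\,\sqrt{2I(T;\phi)}$, the form the proof of Proposition~\ref{prop:repsearch} uses. Your product-law cumulant $\log\sum_i\mathbb{P}(T=i)e^{\lambda^2\sigma_i^2/2}$ is at least $\tfrac12\lambda^2\sum_i\mathbb{P}(T=i)\sigma_i^2$, so the one-shot route never beats the per-index one and closes cleanly only at $\max_i\sigma_i$. At the common $\sigma$ of the statement the two coincide.

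In (i), deriving the maximal inequality as (ii) at $T=\arg\max_j\epsilon_j$, with $I(T;\phi)\le H(T)\le\log N$, is a neat reuse. Your Borell--TIS treatment of necessity is also more careful than the paper's, which argues only at the level of typical sizes.

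Four small repairs.
\begin{enumerate}
\item Drop the opening ``fix a realisation of the randomisation''. Conditionally on $U=u$ your divergence is $I(T;\phi\mid U=u)$, and averaging gives $I(T;\phi\mid U)=I(T,U;\phi)\ge I(T;\phi)$. That is a weaker bound, and it discards exactly what randomised selection is for. The Donsker--Varadhan step on the law of $(T,\phi)$, with $U$ integrated out, needs no conditioning.
\item For the block plug-in, the bounded difference is $2\Lambda M/B$, since the scores range over $[-M,M]$, and McDiarmid's variance proxy is $\tfrac14\sum_j c_j^2$. Your $c_j=\Lambda M/B$ is half the true value and your proxy $\sum_j c_j^2$ is four times McDiarmid's, so the two slips cancel. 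With the correct difference and your constant you would get $\sigma=2\Lambda M/\sqrt{B}$; the stated $\sigma=\Lambda M/\sqrt{B}$ needs the sharp $\tfrac14$.
\item The claim is made on the coupled blocks, so nothing needs transferring to the originals. A coupling in total variation would not carry a moment-generating-function bound across exactly anyway; its failure is what term (iii) of Theorem~\ref{thm:deployed} charges.
\item Slepian's inequality gives ``correlated nulls only lower it'' only for nonnegative correlations. At $N=2$ with correlation $-1$ the expected maximum is $\sqrt{2/(\pi n_{\mathrm{eff}})}$, against $1/\sqrt{\pi n_{\mathrm{eff}}}$ for independent nulls. The statement leaves this restriction implicit, and the upper bound, valid at any correlation, does not need it.
\end{enumerate}
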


Part (ii) is the form in which the rest of the paper should be read, because the search a researcher actually runs is adaptive and ``the number of trials'' is not a quantity that exists: what replaces it is the information the search used, not observable any more than $N$ was --- what is observable is the left-hand side of (ii), the bias of the selection actually run, which Proposition~\ref{prop:audit} measures. One conservatism in (i): a null candidate has no regime term, so the null maximum is $\sqrt{2H\log N/n}$ rather than $\sqrt{2\upsilon\log N/n}$, and the deflation of S4 inherits the slack.
\subsection{One budget}\label{one-budget}
Theorems~\ref{thm:capacity} and \ref{thm:search} are the same statement, and the statement is a theorem. A model with $d$ effective parameters and a search over $N$ configurations together specify a strategy requiring roughly $d + 2\log N$ nats to describe given the data --- or $d + 2I(T;\phi)$ when the search was adaptive, Theorem~\ref{thm:search} (ii) --- and what a PAC-Bayes bound \cite{mcallester1999pacbayes,catoni2007pacbayes} charges against the sample is exactly a description length: the divergence of the researcher's final choice, read as a law $\pi$ on the class, from a prior $\pi_0$ written down before the data. Theorem~\ref{thm:pacbayes} proves that bound under the axioms, with Lemma~\ref{lem:effsample}'s charge --- persistence paid in proportion to regime dispersion --- inside the exponential inequality rather than in a variance, and says where the variance stops being the whole story.

\begin{center}
\fbox{\parbox{0.92\linewidth}{\centering
\textbf{The information budget.}
\[ \underbrace{d_{\mathrm{eff}}}_{\text{model capacity}} \; + \; \underbrace{2\log N}_{\text{search effort}} \; + \; \underbrace{2\,\mathrm{KL}(\text{posterior} \| \text{prior})}_{\text{everything else you did}} \quad \lesssim \quad \rho^2\,n_{\mathrm{eff}} \; \text{nats.} \]
}}
\end{center}
\begin{theorem}[The budget is a theorem: a PAC-Bayes bound under the axioms]\label{thm:pacbayes}
Let the declared state follow the redraw chain of Lemma~\ref{lem:effsample}, with coherence time $\ell \ge 2$, stationary law $\nu$ and $\rho_\ell := 1 - 1/\ell$.

(i) The regime cumulant. Let the chain start from $\nu$, let $f$ be $\nu$-centred with $\mathrm{Var}_\nu(f) = v$ and $|f| \le m$, and let $\lambda \ge 0$ satisfy $(\ell-1)(e^{\lambda m}-1) < 1$. Then
\[ \mathbb{E}_\nu \exp\Big(\lambda\sum_{t\le n} f(Z_t)\Big) \;\le\; 2e^{\lambda m}\,e^{n\omega(\lambda)}, \]
where $\omega(\lambda)\in[0,\lambda m]$ is the root of $\mathbb{E}_\nu\big[(e^{\lambda f - \omega}-1)/(1-\rho_\ell e^{\lambda f-\omega})\big] = 0$. It satisfies $\omega(\lambda) \le (\lambda^2 v + \omega(\lambda)^2)\,C_\ell(\lambda m)$ with $C_\ell(a) := a^{-2}\big[e^a - 1 - a + (\ell-1)(e^a-1)^2/(1-(\ell-1)(e^a-1))\big]$, $C_\ell(0+) = \ell - 1/2$; and when $\lambda m \ell \le 1/4$ and $v \le m^2/4$,
\begin{equation}\label{eq:regimecumulant}
\omega(\lambda) \;\le\; \frac{(2\ell-1)\,\lambda^2 v}{2\,(1-4\lambda m\ell/3)}.
\end{equation}

(ii) The bound. Let Axiom~\ref{ax:invariance} hold at defect zero. For a candidate $h$ write $r_h$ for its regime risk, $R(h) := \mathbb{E}_\nu r_h$ for its stationary risk, $\hat R_n(h)$ for its backtest mean and $\xi_t(h) := L(h(X_t),Y_t) - r_h(Z_t)$ for the mechanism noise, and suppose that for every candidate in the class (a) $r_h$ takes values in an interval of length $m$ --- so $|r_h - R(h)| \le m$ and $\mathrm{Var}_\nu(r_h) \le m^2/4$ --- and $\mathrm{Var}_\nu(r_h) \le \rho_{\mathrm{blk}}^2\sigma^2$: for the PnL of a rule with $|h|\le 1$, Axiom~\ref{ax:snr} gives $m = 2\rho\bar\sigma$, and the second is the declaration of $\kappa$; (b) conditionally on the path, the noise is centred, $s_\xi^2$-sub-Gaussian and $H$-dependent: $(\xi_s)_{s\le t}$ and $(\xi_s)_{s\ge t+H}$ are independent for every $t$, as the equal-weight overlap of Lemma~\ref{lem:effsample} is. Fix a prior $\pi_0$ on the class that does not depend on the sample, $\delta\in(0,1)$ and $\lambda>0$ with $\lambda m\ell \le 1/4$. Then with probability at least $1-\delta$, simultaneously for every law $\pi$ on the class --- every randomised choice of a candidate, however it was arrived at from the sample ---
\begin{equation}\label{eq:pacbayes}
\mathbb{E}_\pi[R(h)] \;\le\; \mathbb{E}_\pi[\hat R_n(h)] + \frac{\mathrm{KL}(\pi\|\pi_0) + \log(2/\delta) + \lambda m}{\lambda n} + \frac{\lambda}{2}\,V_\lambda,
\end{equation}
where $V_\lambda := Hs_\xi^2(1+H/n) + \rho_{\mathrm{blk}}^2\sigma^2(2\ell-1)/(1-4\lambda m\ell/3)$; and the same holds with $2\omega(\lambda)/\lambda^2$ in place of the second term of $V_\lambda$ for any $\lambda$ with $(\ell-1)(e^{\lambda m}-1)<1$, $\omega$ the largest cumulant of $-(r_h - R(h))$ over the class.

(iii) Blocks. On the coupled independent blocks of Corollary~\ref{cor:blocks} (a), the block-$\mathrm{CVaR}$ plug-in of Theorem~\ref{thm:deployed} is a function of $B$ block scores with bounded differences $2\Lambda M/B$, and the same change of measure gives, with probability $1-\delta$ and for every $\pi$ with $\mathrm{KL}(\pi\|\pi_0) \le \bar K$ fixed in advance, $\mathbb{E}_\pi \mathrm{CVaR}_{1/\Lambda}(r_h^{(b)}) \le \mathbb{E}_\pi \hat{\mathrm{CVaR}}_{1/\Lambda}(h) + \Lambda M\sqrt{2(\bar K + \log(1/\delta))/B} + \text{bias}$: the fluctuation term of \eqref{eq:deployed} with $\log 2N$ replaced by a divergence.
\end{theorem}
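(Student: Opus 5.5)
The argument runs in three stages matching (i)--(iii), with (i) the engine and (ii), (iii) standard change-of-measure arguments on top of it.

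\emph{Part (i).} For the redraw chain the transition kernel is $P = \rho_\ell I + (1-\rho_\ell)\nu$: stay with probability $\rho_\ell$, otherwise redraw from $\nu$. I would compute the moment generating function by a renewal decomposition over sojourns. The path splits at redraw times into sojourns whose lengths are geometric with parameter $1/\ell$, and each sojourn carries a fresh draw $z \sim \nu$, contributing $e^{\lambda f(z) k}$ for length $k$. The generating function $\sum_n s^n \mathbb{E}_\nu e^{\lambda S_n}$ is then a geometric series in the per-sojourn transform $\phi(s) = \mathbb{E}_\nu[(1-\rho_\ell)s e^{\lambda f}/(1-\rho_\ell s e^{\lambda f})]$. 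Its dominant singularity sits at $s = e^{-\omega}$, where $\phi = 1$, and rearranging gives exactly the defining equation of $\omega$. Equivalently, $e^{\omega}$ is the Perron root of the tilted operator $D_{e^{\lambda f}}P$. The condition $(\ell-1)(e^{\lambda m}-1)<1$ ensures $\rho_\ell e^{\lambda f - \omega} < 1$, so the transform converges. For finite $n$, I would bound $\mathbb{E}_\nu e^{\lambda S_n}$ by $e^{n\omega}$ times the ratio of the right to the $\nu$-weighted Perron eigenvector, $u(z) \propto 1/(1-\rho_\ell e^{\lambda f(z)-\omega})$. Since $\omega \le \lambda m$, that ratio is at most $2e^{\lambda m}$ in the stated range, which accounts for the prefactor.

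The bound on $\omega$ comes from expanding the root equation. Write $x = \lambda f - \omega$. The equation reads $\mathbb{E}[(e^x - 1)(1 + \rho_\ell e^x/(1-\rho_\ell e^x))] = 0$. Use $\mathbb{E}f = 0$ to isolate the linear term $-\omega$. Bound the remainder $e^x-1-x$ together with the geometric correction $(\ell-1)(e^x-1)^2/(1-(\ell-1)(e^x-1))$ by $x^2 C_\ell(\lambda m)$, using convexity and $|x| \le \lambda m$. This gives $\omega \le (\lambda^2 v + \omega^2)C_\ell$. Then \eqref{eq:regimecumulant} follows by bounding $C_\ell(a) \le (\ell - 1/2)/(1-4a\ell/3)$ for $a\ell \le 1/4$, a Bernstein-type series comparison, and absorbing $\omega^2$ using $\omega \le \lambda m$ and $v \le m^2/4$.

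\emph{Part (ii).} Decompose $R(h) - \hat R_n(h) = -n^{-1}\sum_t (r_h(Z_t) - R(h)) - n^{-1}\sum_t \xi_t(h)$. I would apply the Donsker--Varadhan change of measure to the joint exponent $\lambda n (R - \hat R_n) - \lambda^2 n V_\lambda/2$ integrated against $\pi_0$. This needs, for each fixed $h$, $\mathbb{E}\exp(\lambda n (R-\hat R_n)) \le 2e^{\lambda m} e^{\lambda^2 n V_\lambda/2}$. Condition on the path first. Because the noise is $H$-dependent and $s_\xi^2$-sub-Gaussian, grouping the indices into $H$ residue classes and applying H\"older gives a conditional cumulant at most $\lambda^2 H s_\xi^2 n(1+H/n)/2$, with the edge correction from the incomplete class. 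What remains is the regime part, $\mathbb{E}_\nu e^{-\lambda\sum_t (r_h - R)}$, bounded by part (i) with $f = -(r_h - R(h))$ and $v \le \rho_{\mathrm{blk}}^2\sigma^2$. Markov's inequality and Donsker--Varadhan then give \eqref{eq:pacbayes}, with the $\lambda m$ and $\log 2$ coming from the prefactor. The $\omega$-version is the same step without invoking \eqref{eq:regimecumulant}.

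\emph{Part (iii).} On the coupled independent blocks of Corollary~\ref{cor:blocks} (a), the plug-in is a weighted sum with weights at most $\Lambda/B$ of scores bounded by $M$. Replacing one block therefore changes it by at most $2\Lambda M/B$. McDiarmid's exponential moment bound, combined with the same change of measure, gives a fluctuation of $\lambda\Lambda^2M^2/(2B) + (\bar K + \log(1/\delta))/\lambda$. Optimising $\lambda$ is legitimate only because $\bar K$ is fixed in advance, or after a union over a grid. The bias of Proposition~\ref{prop:eiv} is carried separately.

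The main obstacle is the finite-$n$ prefactor and the absorption of $\omega^2$ in (i): checking that the eigenvector ratio stays below $2e^{\lambda m}$, and that the stated constants close under $\lambda m \ell \le 1/4$. If the eigenvector argument is awkward, I would instead sum the renewal series exactly and bound the tail of the partial fractions.
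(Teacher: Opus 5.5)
Your outline follows the paper's proof: a sojourn/renewal decomposition for the regime cumulant, a Bennett-type bound on the root, residue classes mod $H$ for the noise, Donsker--Varadhan with Markov's inequality, and McDiarmid at a $\lambda$ balanced for $\bar K$ in (iii). However, the step you flag as the main obstacle fails as stated.

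With $D=\mathrm{diag}(e^{\lambda f})$ and $P$ the redraw kernel, the right Perron vector of $PD$ is $u(z)=1/(1-\rho_\ell e^{\lambda f(z)-\omega})\ge 1$, and the root equation says exactly that $\nu^\top D u=\ell e^{\omega}$. The positivity comparison $\mathbf 1\le u/\min u$ therefore gives $\mathbb{E}_\nu e^{\lambda S_n}\le e^{n\omega}\,\ell/\min u$, with $\ell/\min u=1+(\ell-1)\bigl(1-e^{\lambda f_{\min}-\omega}\bigr)$. Knowing only $\omega\le\lambda m$, this is bounded by $1+(\ell-1)(1-e^{-2\lambda m})$. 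That is at most $3/2$ when $\lambda m\ell\le 1/4$, so your route does give the first display of (ii). It is not at most $2e^{\lambda m}$ on the whole range $(\ell-1)(e^{\lambda m}-1)<1$ that (i) and the $\omega$-version of (ii) assert. Take $f=\pm m$ with probability $1/2$ each, $\ell$ large, and $(\ell-1)(e^{\lambda m}-1)\uparrow 1$. Then the root is $\omega\approx 0.62\,\lambda m$ and the ratio tends to $(3+\sqrt5)/2\approx 2.62$ (about $2.59$ at $\ell=100$, $\lambda m=0.01$), against $2e^{\lambda m}\approx 2.02$.

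The missing idea is to tilt the renewal equation itself rather than compare with the eigenvector.
- Write $s_k:=\mathbb{E}_\nu e^{\lambda S_{k+1}}$ and $p_i:=\rho_\ell^i\,\mathbb{E}_\nu e^{(i+1)\lambda f}$. Conditioning on the first redraw gives $s_k=p_k+(1-\rho_\ell)\sum_{j<k}p_{k-1-j}s_j$.
- Divide by $e^{k\omega}$. The root equation says precisely that $\tilde q_i:=(1-\rho_\ell)p_{i-1}e^{-i\omega}$ is a probability law on $\{1,2,\dots\}$. Hence $s_ke^{-k\omega}=\sum_{j\le k}\tilde u_j\pi_{k-j}$, with $\pi_i:=p_ie^{-i\omega}$ and $\tilde u$ a genuine renewal sequence.
- $\tilde q$ is a mixture of geometric laws, so its hazard is non-increasing and $\tilde u_j\le\tilde q_1$ for every $j\ge 1$.
- Using $\sum_i\pi_i=\ell e^{\omega}$ and $\pi_k\le\mathbb{E}_\nu e^{\lambda f}$, this gives $e^{-(n-1)\omega}\mathbb{E}_\nu e^{\lambda S_n}\le 2\,\mathbb{E}_\nu e^{\lambda f}\le 2e^{\lambda m}$.

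Your partial-fraction fallback is not needed.

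Three smaller points.
- In the Bennett step, $x=\lambda f-\omega$ ranges over $[-2\lambda m,\lambda m]$, not $|x|\le\lambda m$. Convexity alone does not give the remainder bound $g(x)\le x^2C_\ell(\lambda m)$, where $g(x):=e^x-1-x+(\ell-1)(e^x-1)^2/(1-(\ell-1)(e^x-1))$. What you need is that $g(x)/x^2$ is non-decreasing on $(-\infty,\lambda m]$. This follows from $\ell g(u)=\mathbb{E}[e^{uG}-1-uG]=u^2\,\mathbb{E}[G^2k(uG)]$, with $G$ geometric of mean $\ell$ and $k(y)=(e^y-1-y)/y^2$ increasing.
- If you first replace $C_\ell(a)$ by the target coefficient $(\ell-1/2)/(1-4a\ell/3)$, no slack is left to absorb $\omega^2$. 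Use instead the sharper $C_\ell(a)\le(\ell-1/2)/(1-(\ell-1/2)a)$, which follows from $e^a-1\le a/(1-a/2)$. Then take the smaller root of the quadratic, which lands below \eqref{eq:regimecumulant} for $\lambda m\ell\le 1/4$.
- The bias in (iii) is the plug-in's downward Jensen bias, bounded in Theorem~\ref{thm:deployed}. It is not the errors-in-variables gap of Proposition~\ref{prop:eiv}, which compares the score law with the risk law rather than the plug-in with the score law.
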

\textbf{The constant:} at the balancing $\lambda$ the selected candidate's stationary risk exceeds its backtest by at most $\sigma\sqrt{2\,(\log N + \log(2/\delta))/n_{\mathrm{eff}}}$ --- Theorem~\ref{thm:search} (i) with its Gaussian model replaced by the axioms and its constant kept --- so the bound resolves a genuine candidate only where $2[\mathrm{KL}(\pi\|\pi_0) + \log(2/\delta)] \lesssim \rho^2 n_{\mathrm{eff}}$: the box, with search and $\mathrm{KL}$ read as one divergence, kept in nats at the rate $\sqrt{\mathrm{KL}/n_{\mathrm{eff}}}$ (the capacity term is the one entry not charged at its MDL code length $\frac{1}{2} d\log n_{\mathrm{eff}}$ \cite{rissanen1978modeling}, the fast-rate discount a search does not get).

\textbf{What the variance does not see:} the Bernstein denominator of \eqref{eq:regimecumulant} is the sojourn $m\ell$, and where the balancing $\lambda$ sits outside \eqref{eq:regimecumulant} the bound must be run with the root itself --- the computable root of the renewal equation; the worked example's own numbers sit at that crossover.

\textbf{Where the theorem sits:} exponential inequalities for Markov chains are a mature literature \cite{lezaud1998chernoff,leon2004optimal,paulin2015concentration,adamczak2008tail}, and any of them delivers (i)'s scaling on the redraw chain; what (i) adds is the exact object --- the cumulant as the computable root of a renewal equation, the variance constant $2\ell-1$ itself, the Bernstein scale identified as the sojourn. Likewise (ii) joins the PAC-Bayes bounds for dependent data \cite{ralaivola2010chromatic,seldin2012martingales,alquier2012model,alquier2024userfriendly} and differs in where the dependence is charged: those deflate the whole sample by the observable series' memory, while (ii) splits the loss at the declared state and charges each part its own clock --- $H$ for the mechanism noise, $\rho_{\mathrm{blk}}^2(2\ell-1)$ for the regime part. On the worksheet's numbers that is the difference between the twenty-three nats the budget keeps and the $\rho^2 n/\ell \approx 0.8$ a mixing-rate charge would leave.

\textbf{It is small where it matters.} A single instrument, daily bars and daily labels, twenty years ($n = 5000$), $\ell = 60$, a generous daily $R^2$ of $1\%$ and $\rho_{\mathrm{blk}} = \rho$ give $\upsilon = 2.2$ and $\rho^2 n_{\mathrm{eff}} \approx 23$ nats: about twenty parameters, or $e^{11}$ configurations tested on the mean; monthly labels give $2.3$ nats --- two parameters, three configurations. A researcher who has tried five hundred variants of a moving-average crossover has spent $2\log 500 \approx 12$ nats --- half her budget on the mean, twice it on the quarter-tail once \S\ref{sec:price}'s $c(\Lambda)^2$ divides the sample --- before fitting anything; the worksheet of \S\ref{sec:calibration} carries the arithmetic at measured values, and where the budget locates the scarcity is not the periods available to fit but the periods available to \emph{see a regime}. Three ways to buy budget --- raise $n$ (with the caveat of \S\ref{sec:crosssection}), lower $\upsilon$ (shorter labels cut $H$; signals whose validity does not depend on the slow regime cut $\rho_{\mathrm{blk}}$, the only way $\ell$ is ever paid for), raise $\rho$ (worth four times the same effort on model class, the ceiling being quadratic) --- and no fourth: a better optimiser, a deeper network, more features spend budget, they do not create it.

\subsection{Cross-section versus timing}\label{sec:crosssection}
The caveat in (a) is worth stating separately, because here the theory delivers a result that is folk wisdom without a derivation.

\begin{proposition}[Breadth buys down the noise term, not the common regime term]\label{prop:breadth}
Let there be $N_a$ instruments with $\mu_{i,t} = \alpha_t + \beta_{i,t}$, where $\alpha_t$ is a common (timing) component driven by the latent state and $\beta_{i,t}$ is cross-sectional with $\sum_i \beta_{i,t} = 0$ and instrument-specific noise. Let $\Sigma_e$ be the cross-sectional covariance of the instrument-specific noise, $\sigma^2$ its average diagonal entry, and define the \emph{effective breadth} $N_a^{\mathrm{eff}} := N_a^2\sigma^2/(\mathbf{1}^\top \Sigma_e \mathbf{1})$, the number of independent instruments whose equal-weighted average would have the noise variance of the actual one; when the residual correlations average $\bar r$ this is $N_a/(1+(N_a-1)\bar r) \le 1/\bar r$. Then, for the equal-weighted cross-sectional statistic,
\[ n_{\mathrm{eff}}(\alpha) = \frac{n}{H + \rho_{\mathrm{blk}}^2(2\ell-1)}, \qquad n_{\mathrm{eff}}(\beta) \asymp \frac{n}{H/N_a^{\mathrm{eff}} + \rho_{\mathrm{blk},\beta}^2(2\ell-1)}, \]
where $\rho_{\mathrm{blk},\beta}$ is the regime dispersion, in Sharpe units, of the cross-sectional average $N_a^{-1}\sum_i \beta_i(Z)$, which for a signal whose validity does not depend on the macro regime is near zero. The capacity ceiling for a cross-sectional model therefore exceeds that for a timing model by the factor
\[ \frac{n_{\mathrm{eff}}(\beta)}{n_{\mathrm{eff}}(\alpha)} = N_a^{\mathrm{eff}} \cdot \frac{1 + \rho_{\mathrm{blk}}^2(2\ell-1)/H}{1 + N_a^{\mathrm{eff}}\rho_{\mathrm{blk},\beta}^2(2\ell-1)/H}, \]
which is $N_a^{\mathrm{eff}}$ when both regime terms are negligible against $H$, \emph{more} than $N_a^{\mathrm{eff}}$ when the timing signal's regime dispersion exceeds $N_a^{\mathrm{eff}}$ times the cross-sectional average's, and less otherwise: the breadth is paid out on the noise term and on the regime term alike, and a timing signal that carries a large regime term loses to the cross-section by more than the count of its instruments.
\end{proposition}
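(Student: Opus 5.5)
The plan is to reduce Proposition~\ref{prop:breadth} to Lemma~\ref{lem:effsample}, applied to two scalar series: the timing statistic and the equal-weighted cross-sectional statistic. Each is written as a regime part plus mechanism noise, and the task is to compute the variance of each backtest mean in the form $\frac{\sigma^2}{n}\upsilon$.

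\textbf{Timing statistic.} Its loss depends on the common component $\alpha_t = \alpha(Z_t)$. Its regime dispersion is $\rho_{\mathrm{blk}}^2$ by definition, and its noise carries the equal-weight overlap of an $H$-period label. Lemma~\ref{lem:effsample}, with $\pi_\mu = 0$ as the statement implicitly assumes, then gives $n_{\mathrm{eff}}(\alpha) = n/(H + \rho_{\mathrm{blk}}^2(2\ell-1))$ directly, in the limit $v(n,\ell) \to 2\ell - 1$.

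\textbf{Cross-sectional statistic.} The averaged loss decomposes into two parts.
\begin{itemize}
\item The regime part is $N_a^{-1}\sum_i \beta_i(Z_t)$. Its variance in Sharpe units is $\rho_{\mathrm{blk},\beta}^2$, and by Lemma~\ref{lem:effsample} (regime term only) its autocovariance sum is $\rho_{\mathrm{blk},\beta}^2(2\ell-1)$.
\item The noise part is the average of the instrument-specific noises $e_{i,t}$. Its contemporaneous variance is $\mathbf{1}^\top\Sigma_e\mathbf{1}/N_a^2 = \sigma^2/N_a^{\mathrm{eff}}$ by the definition of effective breadth. The overlap structure is the same for every instrument, so the $H$-period overlap multiplies this by $H$.
\end{itemize}

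The cross term between the regime part and the noise is zero, because the noise is conditionally centred given the path. Summing the two pieces gives $\mathrm{Var} \approx \frac{\sigma^2}{n}(H/N_a^{\mathrm{eff}} + \rho_{\mathrm{blk},\beta}^2(2\ell-1))$, which is the stated $\asymp$. The symbol $\asymp$ absorbs the edge terms and any mismatch between the per-instrument $\sigma^2$ and the timing series' $\sigma^2$. I would state the normalisation that makes them equal: both are measured in the units of the average diagonal entry.

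\textbf{The ratio and its three readings.} The capacity ceiling is $d \lesssim \rho^2 n_{\mathrm{eff}}$, taking Theorem~\ref{thm:capacity} in its conservative $\max(H,\hat\upsilon)$ form as the remark after it allows. The capacity ratio is therefore the ratio of effective samples. Dividing, and factoring $H$ out of numerator and denominator, gives
\[
N_a^{\mathrm{eff}} \cdot \frac{1 + \rho_{\mathrm{blk}}^2(2\ell-1)/H}{1 + N_a^{\mathrm{eff}}\rho_{\mathrm{blk},\beta}^2(2\ell-1)/H}.
\]
The three readings follow by comparing $\rho_{\mathrm{blk}}^2$ with $N_a^{\mathrm{eff}}\rho_{\mathrm{blk},\beta}^2$: the second factor is above one, equal to one, or below one accordingly. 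The bound $N_a^{\mathrm{eff}} \le 1/\bar r$ follows because $\mathbf{1}^\top\Sigma_e\mathbf{1} = \sigma^2(N_a + N_a(N_a-1)\bar r)$.

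\textbf{Main obstacle.} The hard step is justifying that the cross-sectional noise average inherits the overlap factor $H$ while remaining uncorrelated with the $\beta$-regime part. This needs the $H$-dependence assumption of Theorem~\ref{thm:pacbayes}(b) applied to the averaged noise, and the observation that the zero-sum constraint $\sum_i\beta_{i,t}=0$ concerns means, not noise. I would handle this by conditioning on the latent path and using Axiom~\ref{ax:invariance} to make the instrument noises conditionally centred. Heteroskedasticity across regimes would then enter only through $\mathbb{E}_\nu[\sigma^2(Z)]$, as in the last clause of Lemma~\ref{lem:effsample}.
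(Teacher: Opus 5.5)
Your proposal is correct and follows the paper's own proof. You apply Lemma~\ref{lem:effsample} to the equal-weighted residual, read the noise term as $H/N_a^{\mathrm{eff}}$ from the definition of effective breadth with the $H$-overlap common to all instruments, keep the regime term at $\rho_{\mathrm{blk},\beta}^2(2\ell-1)$ because every instrument shares the one latent path, and divide; the paper passes straight from the ratio of effective samples to the ``ratio of ceilings'', so your explicit appeal to the conservative $\max(H,\hat\upsilon)$ charge after Theorem~\ref{thm:capacity} states a convention the paper's proof leaves implicit.
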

Adding instruments does nothing for the \emph{coherence} of the regime term, because every instrument sees the \emph{same} latent path; what it does for the regime \emph{dispersion} depends on how much of it the instruments share, and the next statement says exactly how much breadth can and cannot buy.

\begin{corollary}[What breadth cannot buy]\label{cor:breadthfloor}
In the setting of Proposition~\ref{prop:breadth} let the cross-sectional statistic be $x_t = \sum_i w_i(Y_{i,t} - \alpha_t)$ with fixed weights, $\mathbf{1}^\top w = 1$, and write $\Sigma_\beta$ for the covariance across latent states of the regime risks $\beta_i(Z)$. Then
\[ n_{\mathrm{eff}}(\beta; w) \asymp \frac{n}{w^\top A\, w}, \qquad A := \frac{H\,\Sigma_e + (2\ell-1)\,\Sigma_\beta}{\sigma^2}, \]
so the largest effective sample any fixed weighting delivers is $n\,\mathbf{1}^\top A^{-1}\mathbf{1}$, at $w \propto A^{-1}\mathbf{1}$, and equal weights deliver it if and only if $\mathbf{1}$ is an eigenvector of $A$. Writing $v_c := \min_{\mathbf{1}^\top w=1} w^\top \Sigma_\beta w$ for the \textbf{common regime variance} of the library --- the part of its regime dispersion that no weighting summing to one removes ---
\[ n_{\mathrm{eff}}(\beta; w) \le \frac{n\,\sigma^2}{(2\ell-1)\,v_c} \qquad \text{for every such } w, \]
whatever $N_a$: breadth buys down the noise term and the regime dispersion above $v_c$ alike, and neither the coherence $2\ell-1$ nor the common part at all.
\end{corollary}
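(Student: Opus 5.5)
Start by computing the variance of the weighted statistic $x_t=\sum_i w_i(Y_{i,t}-\alpha_t)$ in the form of Lemma~\ref{lem:effsample}, applied coordinatewise and then summed. Decompose each instrument's contribution as $\beta_i(Z_t)+e_{i,t}$, with $e_{i,t}$ the instrument-specific mechanism noise. Under Axiom~\ref{ax:invariance}, conditionally on the path, the noise is centred and independent of the regime part, so the cross terms between the two vanish and the variance of $n^{-1}\sum_t x_t$ splits into two pieces:
\[ \mathrm{Var}\Big(\frac1n\sum_t x_t\Big) \;=\; \frac1n\Big( H\, w^\top\Sigma_e w + v(n,\ell)\, w^\top\Sigma_\beta w\Big) + \text{edge terms}. \]
The noise piece comes from the equal-weight $H$-overlap of the labels, exactly as in the noise term of \eqref{eq:vcontig}, now with the cross-sectional covariance $\Sigma_e$ in place of $\sigma^2$. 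The regime piece comes from the scalar regime process $w^\top\beta(Z_t)$, whose variance is $w^\top\Sigma_\beta w$. On the redraw chain its autocorrelation is $\rho_\ell^{|j|}$ regardless of $w$, since every instrument sees the same latent path; this is the point of the remark after Proposition~\ref{prop:breadth}. Letting $n/\ell\to\infty$ turns $v(n,\ell)$ into $2\ell-1$. Normalising by $\sigma^2$ as in \eqref{eq:neff} then gives $n_{\mathrm{eff}}(\beta;w)\asymp n/(w^\top A w)$. For general chains I would replace the equality by the $\le$ of Lemma~\ref{lem:effsample} (i)--(ii), which is all the upper bound needs. I would also set the predictability term $\pi_\mu$ aside as a separate, same-order contribution.

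The optimisation is then a standard constrained quadratic problem: minimise $w^\top A w$ subject to $\mathbf 1^\top w=1$. Here $A$ is positive definite because $\Sigma_e$ is, assuming non-degenerate noise. The Lagrangian gives $w^\star=A^{-1}\mathbf 1/(\mathbf 1^\top A^{-1}\mathbf 1)$ with minimum value $1/(\mathbf 1^\top A^{-1}\mathbf 1)$. The largest effective sample is therefore $n\,\mathbf 1^\top A^{-1}\mathbf 1$. Equal weights $w=\mathbf 1/N_a$ are optimal if and only if $\mathbf 1/N_a\propto A^{-1}\mathbf 1$, that is, if and only if $A\mathbf 1\propto\mathbf 1$, which says $\mathbf 1$ is an eigenvector of $A$.

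For the floor, I would drop the noise term, which is nonnegative since $\Sigma_e\succeq0$. This gives $w^\top A w\ge(2\ell-1)\,w^\top\Sigma_\beta w/\sigma^2\ge(2\ell-1)v_c/\sigma^2$ for every $w$ with $\mathbf 1^\top w=1$, by the definition of $v_c$. Inverting yields the stated bound, which does not involve $N_a$. The reading follows from splitting $w^\top\Sigma_\beta w=v_c+(w^\top\Sigma_\beta w-v_c)$: breadth can shrink the noise term and the excess above $v_c$, but it never touches $v_c$ or the factor $2\ell-1$.

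The main obstacle is the first step: justifying that the regime and noise pieces decouple with no cross-covariance across lags, and that the regime autocorrelation is common to all $w$. On the redraw chain both are immediate. On a general chain satisfying Axiom~\ref{ax:persistence} (i), the integrated autocorrelation time of $w^\top\beta(Z)$ depends on $w$. I would handle this by stating the result with $2\ell-1$ as the declared uniform bound from clause (i), which keeps the floor valid as an inequality. The ``$\asymp$'' then absorbs the edge terms and $\pi_\mu$.
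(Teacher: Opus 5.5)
Your proof is correct in the corollary's own setting, the redraw chain of Proposition~\ref{prop:breadth}, and it follows the paper's proof. Lemma~\ref{lem:effsample} at $n\gg\ell$ gives the quadratic form $H\,w^\top\Sigma_e w+(2\ell-1)\,w^\top\Sigma_\beta w$. The optimum is the Lagrange condition $Aw\propto\mathbf{1}$, and the floor comes from dropping the non-negative noise form.

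One correction to your remarks on general chains. The floor and the maximum $n\,\mathbf{1}^\top A^{-1}\mathbf{1}$ are \emph{upper} bounds on $n_{\mathrm{eff}}$, so they need a \emph{lower} bound on the regime variance. Lemma~\ref{lem:effsample} (i)--(ii) and Axiom~\ref{ax:persistence} (i) instead bound that variance from \emph{above}, so they protect only the lower side of the $\asymp$. On a chain where $w^\top\beta(Z)$ decorrelates in fewer than $2\ell-1$ periods, $n_{\mathrm{eff}}(\beta;w)$ can exceed $n\sigma^2/((2\ell-1)v_c)$. Off the redraw chain, a floor of this kind would need the $w$-dependent integrated autocorrelation time inside the minimum over $w$, rather than $2\ell-1$ outside it.
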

The floor is the part no library escapes: one whose signals all lose in the same regime has a $v_c$ that no amount of breadth, and no weighting, takes below. Everything in the two statements is measurable, and the measurement is the one the cross-sectional library exists to make (Experiment 7; \S\ref{sec:calibration}, Appendix~\ref{app:experiments}).

\subsection{The representation is inside the account}\label{sec:repsearch}
One declaration has so far stood outside it. Stage S1 declares the representation; Remark~\ref{rem:vacuous} is why it cannot be recovered from the axiom it makes true, and the only rule offered since is a prohibition --- it may not be chosen for the verdict the invariance test returns (\S\ref{sec:empirical}). The budget's own machinery supplies the middle course between one declaration made by taste and a free choice made vacuous: declare several, and pay for the choice.

\begin{proposition}[Representation selection is charged search]\label{prop:repsearch}
Let $\Psi$ be a finite menu of representations, each declared with its constants $(\Lambda(\psi), \epsilon(\psi))$ before the sample is read and each satisfying the axioms at those constants. For a rule $h$ and $\psi\in\Psi$ write $C_\psi(h)$ for the certificate of Proposition~\ref{prop:representation}: the bound of Theorem~\ref{thm:cvar} computed in $\psi$'s coarsened space at $\Lambda(\psi)$, plus $2M\epsilon(\psi)$.

(i) One yardstick. $\mathbb{E}[R^+(h) \mid \psi(Z)] \le C_\psi(h)$ path by path (Proposition~\ref{prop:representation}), so $\mathbb{E}\,R^+(h) \le \mathbb{E}\,C_\psi(h)$ for every $\psi\in\Psi$ at once: in expectation over the path, certificates issued by different declarations bound the same deployed risk, and the smallest of them is still a bound.

(ii) The charge. Let each certificate be estimated by a statistic $\hat C_\psi(h)$ with $\mathbb{E}\hat C_\psi(h) = \mathbb{E} C_\psi(h)$ --- a biased estimator, such as the tail plug-in on noisy scores (Proposition~\ref{prop:eiv}), is charged for its bias separately, as in Theorem~\ref{thm:search} (ii) --- and let the estimation gaps $C_\psi(h) - \hat C_\psi(h)$ and the certificate residuals $R^+(h) - C_\psi(h)$ be sub-Gaussian with proxies $\sigma_\psi^2$ and $\sigma_\psi'^2$: for the block plug-in $\sigma_\psi = \Lambda(\psi)M/\sqrt{B}$ by the bounded-difference step of Theorem~\ref{thm:search} (ii), and $\sigma'_\psi \le 2M + M\epsilon(\psi)$ always for a bounded loss, the defect widening the residual's own range. Let the researcher deploy the pair $(\hat\psi, \hat h)$ minimising $\hat C$ over the menu and a pre-listed set of $N$ candidates. Then, with $\sigma_\Psi := \max_{\psi\in\Psi}\sigma_\psi$ and $\sigma'_\Psi := \max_{\psi\in\Psi}\sigma'_\psi$,
\begin{equation}\label{eq:repsearch}
\mathbb{E}\,R^+(\hat h) \;\le\; \min_{\psi\in\Psi,\,h} \mathbb{E}\,C_\psi(h) \;+\; (\sigma_\Psi + \sigma'_\Psi)\,\sqrt{2(\log|\Psi| + \log N)},
\end{equation}
each maximum at its selection-weighted form $\sqrt{\mathbb{E}\,\sigma_{\hat\psi}^2}$ in the proof, so the menu enters the search account of \eqref{eq:search} at $\log|\Psi|$ beside $\log N$ --- paid twice over: once for reading the certificates, once for what the reading reveals about the path the certificates average over.
\end{proposition}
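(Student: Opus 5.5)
Part (i) follows from Proposition~\ref{prop:representation} applied to each declared $\psi$ separately. Since $C_\psi(h)$ is a function of the coarsened path $\psi(Z)$ and $\mathbb{E}[R^+(h)\mid\psi(Z)] \le C_\psi(h)$ path by path, taking expectations and using the tower property gives $\mathbb{E}R^+(h) \le \mathbb{E}C_\psi(h)$. The left side is the same number for every $\psi$, so the inequality holds for all $\psi\in\Psi$ simultaneously and therefore for the minimum over the menu. Nothing in this step uses the selection; it only records that all certificates are upper bounds on one common target.

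For (ii) I decompose the deployed risk of the selected pair into three pieces:
\[ R^+(\hat h) = \big[R^+(\hat h) - C_{\hat\psi}(\hat h)\big] + \big[C_{\hat\psi}(\hat h) - \hat C_{\hat\psi}(\hat h)\big] + \hat C_{\hat\psi}(\hat h). \]
\begin{itemize}
\item The third piece is at most $\hat C_\psi(h)$ for every fixed pair, by the definition of the minimiser. Its expectation is therefore at most $\min_{\psi,h}\mathbb{E}\hat C_\psi(h) = \min_{\psi,h}\mathbb{E}C_\psi(h)$, using unbiasedness; the minimum of expectations dominates the expectation of the minimum, and that is the direction we need.
\item The second piece is the estimation gap evaluated at a data-selected index. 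I bound its expectation by Theorem~\ref{thm:search}~(ii), whose index set is now $\Psi\times\{1,\dots,N\}$. The selection $T=(\hat\psi,\hat h)$ takes at most $|\Psi|N$ values, so $I(T;\phi)\le \log|\Psi|+\log N$. The sub-Gaussian proxy varies with the index, so I use the selection-weighted form of the Donsker--Varadhan step: $\mathbb{E}[\mathrm{gap}_T] \le \sqrt{2\,\mathbb{E}\sigma_T^2 \, I(T;\phi)} \le \sigma_\Psi\sqrt{2(\log|\Psi|+\log N)}$.
\item The first piece is the certificate residual at the selected index. It has the same structure: a centred-or-negative-mean sub-Gaussian family, since $\mathbb{E}[R^+ - C_\psi]\le 0$ by (i), indexed by pairs and selected by $T$. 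The same information bound therefore gives $\sigma'_\Psi\sqrt{2(\log|\Psi|+\log N)}$.
\end{itemize}
Adding the three bounds yields \eqref{eq:repsearch}.

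The proxies are checked as follows. For $\sigma_\psi = \Lambda(\psi)M/\sqrt B$, McDiarmid's inequality applies with the bounded differences $2\Lambda M/B$ from Theorem~\ref{thm:pacbayes}~(iii). For $\sigma'_\psi$, the residual lies in an interval of length at most $2M + 2M\epsilon(\psi)$, because $|R^+|\le M$ and $C_\psi$ exceeds the robust bound by $2M\epsilon$. Hoeffding's lemma then gives a proxy of half that range, at most $M(1+\epsilon) \le 2M + M\epsilon$.

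The main obstacle is the first piece. The residual must be handled with the mutual information between $T$ and the joint vector of gaps and residuals, not just the statistics $\phi$. Because $T$ is a function of $\hat C$ only, and $I(T;\cdot)\le H(T)\le\log(|\Psi|N)$ regardless of what $T$ is paired with, the cardinality bound still holds. I will state the information-usage lemma for the concatenated family, which is exactly why the menu is "paid twice". I also need centring: the lemma bounds $\mathbb{E}[\epsilon_T]$ for centred $\epsilon$, so I apply it to $R^+ - C_\psi - \mathbb{E}[R^+ - C_\psi]$ and then drop the nonpositive mean using (i).
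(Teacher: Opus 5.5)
Your proposal is correct and follows the paper's proof step for step: you use the same split of $R^+(\hat h)$ into residual, estimation gap and $\hat C_{\hat\psi}(\hat h)$, and you bound the last term by the pathwise minimiser property plus unbiasedness. For the first two you use the per-index Donsker--Varadhan step with Cauchy--Schwarz and $I(T;\cdot)\le\log(|\Psi|N)$, dropping the residual's nonpositive mean via (i), exactly as the paper does. One slip in your side check of $\sigma'_\psi$: both $R^+(h)$ and the CVaR part of $C_\psi(h)$ range over $[-M,M]$, and $2M\epsilon(\psi)$ is only a constant shift, so the residual's range is $4M$, not $2M+2M\epsilon(\psi)$; Hoeffding then gives $\sigma'_\psi\le 2M$, which is still within the stated $2M+M\epsilon(\psi)$, but your intermediate bound $M(1+\epsilon(\psi))$ is not justified.
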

What \eqref{eq:repsearch} resolves is the selection \emph{within} a declared menu --- Proposition~\ref{prop:lattice}'s unordered pair is passed not by resolving the order but by paying $\log 2$ for the right to try both --- and what stays unresolved is the menu itself, which no charge buys (\S\ref{sec:limits}). What the charge cannot buy is a declaration: $C_\psi$ carries the declared $\epsilon(\psi)$, a member can win the contest with an optimistically small defect and no term of \eqref{eq:repsearch} notices --- that audit belongs to the split-sample test of \S\ref{sec:empirical}, run member by member, which can strike a declaration and never certify one. This is the exact content of the rule that a representation may not be chosen for the verdict it returns: the verdict refutes and prices nothing; the certificate prices, and is paid for, at $\log|\Psi|$.

\section{The price of robustness}\label{sec:price}
Sections \ref{sec:selection} and \ref{sec:budget} pull in different directions: the first says the tail is the right criterion, the second that every comparison is paid for out of one budget, and neither says what a comparison on the tail costs. This section assembles the pieces into one inequality and computes what the tail costs --- and finds that this, not fitting, is where the budget binds.

\subsection{The deployed-risk bound}\label{the-deployed-risk-bound}
\begin{theorem}[Deployed risk]\label{thm:deployed}
Assume Axioms \ref{ax:adapted}--\ref{ax:snr}. Fix the block length $b \ge \ell^*$ at which Axiom~\ref{ax:recurrence} was declared (necessary but not sufficient, Remark~\ref{rem:blocklen}), let $B = \lfloor n/b \rfloor$, and let $\mathcal{H}_N$ be a candidate set of size $N$ that does not depend on the sample (for a sample-dependent search the selection term holds in expectation with $\log 2N + \log(1/\delta)$ replaced by $I(T;\phi)$, Theorem~\ref{thm:search} (ii)). Let $\hat h \in \operatorname{arg\,min}_{h\in\mathcal{H}_N} \hat{\mathrm{CVaR}}_{1/\Lambda}(h)$. Write $r_h^{(b)}$ for a block score of $h$ --- the block-averaged regime risk plus the conditionally independent noise of \eqref{eq:blockscore} --- and $\mathrm{CVaR}_{1/\Lambda}(r_h^{(b)})$ for the tail mean of its law. Then, conditionally on the path,
\begin{equation}\label{eq:deployed}
\boxed{R^+(\hat h) \le \underbrace{\min_{h\in\mathcal{H}_N} \mathrm{CVaR}_{1/\Lambda}(r_h^{(b)})}_{\text{(i) the best the class can do, read through blocks}} + \underbrace{2\,\Delta_N}_{\text{(ii) selection error}} + \underbrace{4MB\beta(b)}_{\text{(iii) mixing residual}}}
\end{equation}
where $\Delta_N := \sup_{h\in\mathcal{H}_N}\|\hat{\mathrm{CVaR}}_{1/\Lambda}(h) - \mathrm{CVaR}_{1/\Lambda}(r_h^{(b)})\|$. Term (i) is an upper bound on the class optimum at this scale, $\min_h \mathrm{CVaR}_{1/\Lambda}(R(h))$: $\mathrm{CVaR}$ is convex and law-invariant, so adding conditionally mean-zero noise can only raise it, and the excess --- the price of reading a block through its own noisy score --- is governed by $b$ and does not shrink with $B$ (\S\ref{sec:limits}). Nothing in the chain reads a state-scale assertion off a block statistic: the axiom is declared at $b$ and the criterion is computed at $b$. What that alignment costs is Proposition~\ref{prop:scale} --- the assertion is a weaker one than the same number at $b = 1$ --- and it is charged there rather than as a term here. Moreover, with probability at least $1-\delta$,
\[ \Delta_N \le \underbrace{\Lambda M \sqrt{\frac{2\big(\log 2N + \log(1/\delta)\big)}{B}}}_{\text{fluctuation}} \;+\; \underbrace{\sup_{h\in\mathcal{H}_N} \big\|\mathbb{E}\hat{\mathrm{CVaR}}_{1/\Lambda}(h) - \mathrm{CVaR}_{1/\Lambda}(r_h^{(b)})\big\|}_{\text{bias}}, \qquad \text{bias} \le C_0\,\frac{\Lambda M}{\sqrt{B}}, \]
with $C_0$ an absolute constant. The first term is the deviation of each plug-in from its expectation; the second is its bias, downward by Jensen and of the same order, so it changes no rate below. If the block risks are square-integrable the leading term is $\Delta_N \approx c(\Lambda)\,\mathrm{sd}_{\mathrm{blk}}(r)\sqrt{2\log N/B}$ with $c(\Lambda)$ the constant of Proposition~\ref{prop:tailprice}.
\end{theorem}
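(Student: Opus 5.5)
The plan is the standard three-step chain of empirical risk minimisation, with the identity of Theorem~\ref{thm:cvar} supplying the link between deployed risk and the tail criterion. \textbf{Step 1 (deployment to population tail).} By Theorem~\ref{thm:cvar} at the declared scale $b$, conditionally on the path, $R^+(\hat h) \le \mathrm{CVaR}_{1/\Lambda}(R(\hat h))$, computed on the empirical law of the block risks $R_j(\hat h)$. Because $\hat h$ is data-dependent, I apply the bound pointwise for every $h\in\mathcal{H}_N$ and then evaluate it at $h=\hat h$. Next I replace the block risks by the block scores $r_h^{(b)}$. $\mathrm{CVaR}$ is law-invariant and convex, hence monotone in the convex order, and $R_j \preceq_{\mathrm{cx}} R_j + \xi_j$ because the noise of \eqref{eq:blockscore} is conditionally mean zero. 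This gives $\mathrm{CVaR}_{1/\Lambda}(R(h)) \le \mathrm{CVaR}_{1/\Lambda}(r_h^{(b)})$. The same step also proves the remark that term (i) upper-bounds the class optimum.

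\textbf{Step 2 (coupling).} The block scores are dependent, so I invoke Corollary~\ref{cor:blocks}(a) separately on each parity class. On each class the blocks coincide with independent copies outside an event of probability at most $\lceil B/2\rceil\beta(b)$. The plug-in $\hat{\mathrm{CVaR}}_{1/\Lambda}(h)$ is bounded by $M$, so replacing the real scores by the coupled independent sample moves it by at most $2MB\beta(b)$. I pay this once when bounding the selected candidate from above and once when bounding the comparator from below, which gives the $4MB\beta(b)$ of term (iii).

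\textbf{Step 3 (selection).} On the independent sample, the usual sandwich applies. For any $h$,
\[ \mathrm{CVaR}(r_{\hat h}^{(b)}) \le \hat{\mathrm{CVaR}}(\hat h)+\Delta_N \le \hat{\mathrm{CVaR}}(h)+\Delta_N \le \mathrm{CVaR}(r_h^{(b)})+2\Delta_N. \]
Minimising over $h$ gives \eqref{eq:deployed}.

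\textbf{Bounding $\Delta_N$.} I split it into fluctuation and bias. For the fluctuation, Lemma~\ref{lem:robust} writes the plug-in as the mean of the worst $\lceil B/\Lambda\rceil$ of the independent coupled scores, each in $[-M,M]$. Changing one score therefore changes the plug-in by at most $2M/\lceil B/\Lambda\rceil \le 2\Lambda M/B$. McDiarmid then gives a sub-Gaussian deviation with proxy $\Lambda^2M^2/B$. A two-sided union bound over $N$ candidates yields the $\sqrt{2(\log 2N+\log(1/\delta))/B}$ factor.

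For the bias, the empirical tail mean is biased downward by Jensen, since it is a sup of linear functionals. I bound this bias by the expected sup-norm distance between the empirical and population quantile integrals. Via the Rockafellar--Uryasev form $\min_q\{q+\Lambda\,\mathbb{E}(W-q)_+\}$, this is at most $\Lambda\,\mathbb{E}\sup_q|\hat{\mathbb{E}}(W-q)_+-\mathbb{E}(W-q)_+|$. That is a Rademacher or DKW-type quantity of order $M/\sqrt{B}$, giving $C_0\Lambda M/\sqrt B$.

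\textbf{The square-integrable refinement.} Here I replace the crude bounded-difference constant by the asymptotic variance of the empirical expected shortfall, which is the constant $c(\Lambda)$ of Proposition~\ref{prop:tailprice}. I then apply the Gaussian maximal inequality over $N$ candidates.

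\textbf{Main obstacle.} The delicate point is Step 2's interaction with Step 3. The selection $\hat h$ is made on the real, dependent scores, so the coupling must hold simultaneously for all $N$ candidates. I expect this to work because the coupling event is a single event on the path, not one event per candidate, so no extra union bound is needed. The other point to check is that the parity-class coupling does not lose a factor in the bounded-difference constant, since the plug-in is not separable across the two classes. I would handle this by coupling both classes to independent copies, accepting the doubled failure probability, which is already absorbed in the factor $4$.
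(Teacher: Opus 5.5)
\textbf{The gap is in the second half of your Step~1.} Theorem~\ref{thm:cvar} gives $R^+(\hat h)\le G(\hat h)$ with $G(h):=\mathrm{CVaR}_{1/\Lambda}(R_1(h),\dots,R_B(h))$. That is the tail mean of the \emph{empirical} law of the block risks on the realised path. The object in term (i), $\mathrm{CVaR}_{1/\Lambda}(r_h^{(b)})$, is the tail mean of the law of a \emph{stationary} block score. That is the law your Step~2 couples to, and the one your Step~1 heading calls the population tail.

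Convex-order monotonicity compares the laws of $X$ and $X+\xi$ when $\mathbb{E}[\xi\mid X]=0$. It says nothing about a path functional against a population value, and $G(h)\le\mathrm{CVaR}_{1/\Lambda}(r_h^{(b)})$ is false pathwise. Take a path on which at least a fraction $1/\Lambda$ of the blocks lie entirely in an adverse regime whose stationary block mass is below $1/\Lambda$. There $G(h)$ is the adverse block risk itself, strictly above the stationary tail mean once the score noise is small against the regime gap. More generally, $G(h)$ fluctuates on both sides of the population block-risk tail at order $\Lambda M/\sqrt{B}$.

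Your Step~3 sandwich spends $\Delta_N$ twice, both times on plug-in against population, and nothing pays for that fluctuation. Repairing your route needs a third uniform deviation term, $\sup_h|G(h)-\mathrm{CVaR}_{1/\Lambda}(\text{law of a stationary block risk})|$, which \eqref{eq:deployed} does not contain. You could instead read $r_h^{(b)}$ as the score of a uniformly drawn block given the path. That makes the convex-order step valid, but then term (i) is a path-dependent mixture, and Step~2's coupling to stationary copies targets the wrong law.

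\textbf{The missing idea is never to pass through the population tail at $\hat h$.} The plug-in is a convex function of the score vector: it is the partial minimum over $\eta$ of the jointly convex Rockafellar--Uryasev objective, equivalently the supremum of linear functionals you invoke in your bias paragraph. Since $\mathbb{E}[\hat r_j(h)\mid Z]=R_j(h)$ by \eqref{eq:blockscore}, Jensen conditionally on the path gives $\mathbb{E}[\hat{\mathrm{CVaR}}_{1/\Lambda}(h)\mid Z]\ge G(h)$ for each fixed $h$. Set $D:=\sup_{h\in\mathcal{H}_N}\big(\mathbb{E}[\hat{\mathrm{CVaR}}_{1/\Lambda}(h)\mid Z]-\hat{\mathrm{CVaR}}_{1/\Lambda}(h)\big)^+$. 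The paper's chain is then
\[ R^+(\hat h)\le G(\hat h)\le \hat{\mathrm{CVaR}}_{1/\Lambda}(\hat h)+D\le \hat{\mathrm{CVaR}}_{1/\Lambda}(h)+D\le \mathrm{CVaR}_{1/\Lambda}(r_h^{(b)})+\Delta_N+D. \]
$D$ is bounded by McDiarmid given the path, with differences $2\Lambda M/B$, and carries no bias term; it is what the second copy of $\Delta_N$ stands for. The $4MB\beta(b)$ is two coupling costs: one for $D$, and one inside $\Delta_N$ for comparing the path's scores with an independent stationary sample. It is not one cost per side of your sandwich.

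Your fluctuation and bias bounds on $\Delta_N$ are otherwise the paper's, but your stated reason for the sign of the bias is backwards. Convexity in the scores pushes the conditional mean of the plug-in \emph{up}; that is exactly the step above. The downward bias against the population value comes from the minimum over $\eta$ being concave in the empirical measure: $\mathbb{E}\min_\eta\hat f\le\min_\eta\mathbb{E}\hat f$.
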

\begin{remark}[The block length carries a logarithm]\label{rem:blocklen}
Term (iii) is $4MB\beta(b)$, which the exponential instance bounds by $4M(n/b)\beta_0 e^{-b/\ell}$, so holding it below $\epsilon$ there forces $b \gtrsim \ell\log\!\big(4Mn\beta_0/(\epsilon b)\big)$ --- the logarithm Proposition~\ref{prop:embargo} derives for the embargo gap, for the same reason; at any rate the block must be long enough that $(n/b)\beta(b) \le \epsilon/(4M)$ --- a condition the experiments can read against a real series' measured decays (Experiment 1, panel J), and one Proposition~\ref{prop:scorecoh} (ii) removes: at a declared score decoupling the residual is $4MB\alpha_\psi$ at the $b$ in use, no logarithm attached.

The bare $b \ge \ell^*$ does \emph{not} suffice: at $n = 12{,}600$, $\ell^* = 40$, $\beta_0 = 1$ and $b = \ell^*$ term (iii) is $\approx 464M$, so \eqref{eq:deployed} is vacuous at the shortest admissible block. We regard this as a defect of the bound rather than of the procedure --- $\beta$-mixing coupling is worst-case over the whole $\sigma$-algebra --- but two things follow. Stage S3 of \S\ref{sec:canonical} should read ``blocks of length $\ell^*\log(\cdot)$'', and the affordability arithmetic below, which takes $B$ at face value, is optimistic in $B$ by that logarithm.

What the longer block costs is dispersion, $\sqrt{v(b,\ell)/b}$ (Proposition~\ref{prop:shuffle}) --- and, by Proposition~\ref{prop:scale}, the strength of the assertion the criterion is implementing: at $\ell = 60$ the move from $b = 60$ to $b = 300$ takes a declared $\Lambda = 4$ from a state-scale reading of $3.0$ to one of $1.8$. Stage S3's block length is therefore not free in two directions at once, and \S\ref{sec:nonempty} is where the two meet.
\end{remark}
Inequality \eqref{eq:deployed} is the statement the paper is organised around, and its three terms are three of the five stages of \S\ref{sec:canonical}: (i) is lowered by a representation with small $\Lambda$ (S1) and a class rich enough to contain a good rule (S2); (ii) is the price of \emph{having looked}, growing with $N$ and with the tail level (S4); (iii) is the price of contiguity, forcing blocks of length $\ge \ell^*$ and a purge-plus-embargo gap (S3), and is what random $k$-fold cross-validation cannot control. Nothing else appears: that is the sense in which the architecture is forced.

The theorem also says where robustness is charged. Increasing $\Lambda$ \emph{raises} (i) pointwise and raises (ii); what it buys is that (i) becomes a bound on a future the researcher actually believes possible. Raising $\Lambda$ beyond what one believes is a pure loss: ``more robustness is safer'' is wrong here in both directions.

\subsection{What the tail costs}\label{what-the-tail-costs}
The remaining question is the size of the constant in term (ii). The asymptotic law is the standard limit theory of the empirical expected shortfall \cite{rockafellar2000cvar,chen2008nonparametric,zwingmann2016asymptotics}, and we claim no novelty for it; what we have not found stated is its \emph{reading} --- that $\varsigma_\Lambda/s$ is the divisor on the effective sample available to \emph{search}, that the constant is far below its Lipschitz bound for smooth laws, and that it is not monotone in $\Lambda$.

\begin{proposition}[The price of the tail]\label{prop:tailprice}
Let $r_1,\dots,r_B$ be the block risks, i.i.d.~under $\bar\pi_n$ with variance $s^2$ --- losses, so that $\mathrm{CVaR}$ is the upper tail as in \S\ref{sec:selection} --- and let $u$ be their $(1-1/\Lambda)$-quantile. Then
\[ \sqrt{B}\,\bigg(\hat{\mathrm{CVaR}}_{1/\Lambda} - \mathrm{CVaR}_{1/\Lambda}\bigg) \;\Rightarrow\; N\big(0,\varsigma_\Lambda^2\big), \qquad \varsigma_\Lambda^2 = \Lambda^2\,\mathrm{Var}\big(\max(r,u)\big), \]
so writing $c(\Lambda):= \varsigma_\Lambda/s$ for the inflation of the ranking statistic's standard error relative to the block mean, the effective sample available to the \emph{selection} stage is $n_{\mathrm{eff}}/c(\Lambda)^2$ and \eqref{eq:search} becomes
\begin{equation}\label{eq:searchrobust}
\log N \;\lesssim\; \frac{\rho^2\,n_{\mathrm{eff}}}{2\,c(\Lambda)^2}.
\end{equation}
Furthermore $c(1)=1$ and $c(\Lambda)\le \Lambda$ for every $\Lambda$ and every block-risk law, because $x\mapsto \max(x,u)$ is $1$-Lipschitz.
\end{proposition}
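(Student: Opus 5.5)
The route is the Rockafellar--Uryasev variational form of Lemma~\ref{lem:robust}: $\mathrm{CVaR}_{1/\Lambda}(r) = \min_{q}\{q + \Lambda\,\mathbb{E}(r-q)_+\}$, with the minimum attained at the $(1-1/\Lambda)$-quantile $u$. Applied to the empirical law of $r_1,\dots,r_B$, it gives $\hat{\mathrm{CVaR}}_{1/\Lambda} = \min_q\{q + \Lambda B^{-1}\sum_j (r_j-q)_+\}$, minimised at the empirical quantile $\hat u_B$. I would write
\[ \hat{\mathrm{CVaR}}_{1/\Lambda} - \mathrm{CVaR}_{1/\Lambda} = \Big[\Lambda B^{-1}\textstyle\sum_j (r_j-u)_+ - \Lambda\,\mathbb{E}(r-u)_+\Big] + \Big[G_B(\hat u_B) - G_B(u)\Big], \]
where $G_B(q) := q + \Lambda B^{-1}\sum_j (r_j - q)_+$. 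The first bracket is a centred i.i.d.\ average, so the ordinary CLT gives it limiting variance $\Lambda^2\mathrm{Var}((r-u)_+)/B$. Since $(r-u)_+ = \max(r,u) - u$, this variance equals $\Lambda^2\mathrm{Var}(\max(r,u))$, which is the claimed $\varsigma_\Lambda^2$. The finite second moment $s^2$ is exactly what this step needs.

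The main obstacle is the second bracket, which I must show is $o_P(B^{-1/2})$. $G_B$ is convex and piecewise linear, and its right derivative at $q$ is $1 - \Lambda\hat F_B^{c}(q)$, which vanishes to order $1/B$ at $\hat u_B$. The difference $G_B(u) - G_B(\hat u_B)$ is therefore bounded by $|\hat u_B - u|$ times the derivative oscillation $\Lambda\sup_{q\text{ between }u,\hat u_B}|\hat F_B(q) - \hat F_B(u)|$. When the law has a positive density at $u$, Bahadur's representation gives $|\hat u_B - u| = O_P(B^{-1/2})$ and oscillation $O_P(B^{-1/2})$, so the product is $O_P(B^{-1}) = o_P(B^{-1/2})$. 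In the atomic case, where $u$ is not unique or sits on an atom, the objective is flat on the set of quantiles. The value is then insensitive to which point of that set is chosen, and the same argument applies with $u$ any minimiser. This is the standard expected-shortfall limit theory cited before the statement, so I would invoke it here rather than redo it.

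The remaining claims are short. The search corollary~\eqref{eq:searchrobust} follows by rerunning Theorem~\ref{thm:search}~(i) on the ranking statistic. Its standard error in Sharpe units is $c(\Lambda)/\sqrt{n_{\mathrm{eff}}}$ rather than $1/\sqrt{n_{\mathrm{eff}}}$, so the effective sample is divided by $c(\Lambda)^2$ and \eqref{eq:search} becomes \eqref{eq:searchrobust}. At $\Lambda=1$, $u$ is the essential infimum, so $\max(r,u)=r$ almost surely and $c(1)=1$. For the bound $c(\Lambda) \le \Lambda$: $x\mapsto\max(x,u)$ is $1$-Lipschitz, so $\mathrm{Var}(\max(r,u)) \le \tfrac12\mathbb{E}(r-r')^2 = s^2$ with $r'$ an independent copy. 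Hence $\varsigma_\Lambda^2 \le \Lambda^2 s^2$, for every law.
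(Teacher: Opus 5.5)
Your route is the paper's: the Rockafellar--Uryasev form, the plug-in frozen at the population quantile $u$, the CLT for $\Lambda B^{-1}\sum_j (r_j-u)_+$, a negligible cost for re-optimising over $q$ (the paper's one-line ``envelope theorem''), the Lipschitz variance bound, and the substitution of $c(\Lambda)/\sqrt{n_{\mathrm{eff}}}$ into \eqref{eq:search}. Your arguments for $c(1)=1$ and $c(\Lambda)\le\Lambda$ are correct.

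\textbf{The gap is your sentence on the atomic case.} Suppose the $(1-1/\Lambda)$-quantile is not unique, so that $F\equiv 1-1/\Lambda$ on a stretch of positive length with left end $u_-$ and right end $u_+$. The population objective is flat there, but $G_B$ is not. Write $\hat S_B(q):=B^{-1}\#\{j:r_j>q\}$. The slope of $G_B$ across the stretch is $1-\Lambda\hat S_B(u_-)$, a non-degenerate fluctuation of order $B^{-1/2}$. Depending on its sign, $\hat u_B$ lands near $u_-$ or near $u_+$, so $|\hat u_B-u|$ does not shrink. Your second bracket is then $O_P(B^{-1/2})$, not $o_P(B^{-1/2})$, and the limit is not Gaussian.

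A concrete case: take $r\in\{0,1\}$ with $\mathbb{P}(r=1)=1/\Lambda$ exactly, and $B/\Lambda$ an integer. With $K$ the number of ones, $\hat{\mathrm{CVaR}}_{1/\Lambda}=\min(1,\Lambda K/B)$ and $\mathrm{CVaR}_{1/\Lambda}=1$. Hence $\sqrt{B}\,(\hat{\mathrm{CVaR}}_{1/\Lambda}-\mathrm{CVaR}_{1/\Lambda})\Rightarrow\min(0,\sqrt{\Lambda-1}\,Z)$ with $Z$ standard normal, a law with an atom of mass one half at zero. Meanwhile $\Lambda^2\mathrm{Var}(\max(r,u))=(1-u)^2(\Lambda-1)$ depends on which quantile $u\in[0,1]$ you pick, and no choice matches. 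This is the boundary $p=1/\Lambda$ of the paper's rare-regime dichotomy, exactly where the text says the tail price peaks, so it is not an idle case.

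The repair is a hypothesis, not a better argument: assume the quantile is unique. The paper's envelope step tacitly needs this too. Under uniqueness you need neither a density nor Bahadur's representation. By convexity, on the segment between $u$ and $\hat u_B$ the slope of $G_B$ is bounded in absolute value as follows:
\begin{itemize}
\item by $\Lambda|\hat S_B(u)-\mathbb{P}(r>u)|$ when $\hat u_B>u$;
\item by $\Lambda|B^{-1}\#\{j:r_j\ge u\}-\mathbb{P}(r\ge u)|$ when $\hat u_B<u$.
\end{itemize}
Both bounds use $\Lambda\mathbb{P}(r>u)\le 1\le\Lambda\mathbb{P}(r\ge u)$, and both are $O_P(B^{-1/2})$. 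Uniqueness gives $\hat u_B\to u$ in probability, so the product is $o_P(B^{-1/2})$. An atom at a unique $u$ is covered by the same bound, which never used continuity.

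Separately, the step to \eqref{eq:searchrobust} rescales the ranking statistic's noise while holding the margin $\rho$ fixed. That is exact only when the candidates' block-risk laws differ by a location shift. The paper's proof attaches precisely this proviso (Corollary~\ref{cor:locfam}), and yours should too.
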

The step from $\varsigma_\Lambda/\sqrt{B}$ to ``$n_{\mathrm{eff}}/c(\Lambda)^2$'' is Lemma~\ref{lem:effsample} at block length $b$. Two cross-references: the limit does not need the block independence term (iii) pays for --- under a summable score dependence it holds with $\varsigma_\Lambda^2\tau_\psi$ in its place, $\tau_\psi$ the block scores' \emph{own} coherence time across blocks, the first half of the pair Proposition~\ref{prop:scorecoh} declares, which Experiment 1 measures --- and $s^2$ is the variance of the observed block \emph{scores}, not of the latent risks, the subject of Proposition~\ref{prop:eiv}.

\begin{remark}[Orientation]\label{rem:orientation}
Losses and rewards give the same $c(\Lambda)$: for the mirrored reward $-r$, $\max(r,u) = -\min(-r,-u)$ and the variance is unchanged. The prose works in losses to match \S\ref{sec:selection}; the experiments and Proposition~\ref{prop:robustobj} work in rewards, computing $\Lambda^2\mathrm{Var}(\min(r,q))/s^2$ with $q$ the $1/\Lambda$-quantile. Check which convention is in force before reading a quantile level.
\end{remark}
\begin{corollary}[The search condition beyond location families]\label{cor:locfam}
In the setting of Theorem~\ref{thm:deployed}, let one candidate $h^\star \in \mathcal{H}_N$ satisfy $\mathrm{CVaR}_{1/\Lambda}(r_{h^\star}^{(b)}) \le \mathrm{CVaR}_{1/\Lambda}(r_h^{(b)}) - \Delta_{\mathrm{CVaR}}$ for every other candidate, with $\Delta_{\mathrm{CVaR}} > 0$ the tail-mean separation in the plug-in's own units of block risk. Whenever $\Delta_{\mathrm{CVaR}} > 2\Delta_N$ the plug-in ranks $h^\star$ first, so by Theorem~\ref{thm:deployed}'s bound on $\Delta_N$ it does so with probability at least $1-\delta$ once
\[ \Delta_{\mathrm{CVaR}} \;>\; 2\Lambda M\sqrt{\frac{2\big(\log 2N + \log(1/\delta)\big)}{B}} \;+\; 2C_0\,\frac{\Lambda M}{\sqrt{B}}, \]
and in the square-integrable reading of the same display the condition is
\[ \log N \;\lesssim\; \frac{\Delta_{\mathrm{CVaR}}^2\, B}{2\,\varsigma_\Lambda^2}. \]
\eqref{eq:searchrobust} is the location-family special case: $\mathrm{CVaR}_{1/\Lambda}$ is translation-equivariant, so candidates whose block-risk laws differ by a location shift are separated at the tail by exactly the margin the mean gives, and the step from $B/\varsigma_\Lambda^2$ to $n_{\mathrm{eff}}/(c(\Lambda)^2\sigma^2)$ is Lemma~\ref{lem:effsample} at block length $b$, as above.
\end{corollary}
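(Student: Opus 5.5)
The argument is a deterministic ranking step followed by the concentration bound already stated in Theorem~\ref{thm:deployed}.

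\emph{Step 1 (ranking on the event).} Work on the event on which $\Delta_N$ controls every plug-in, so that for each $h \in \mathcal{H}_N$
\[ |\hat{\mathrm{CVaR}}_{1/\Lambda}(h) - \mathrm{CVaR}_{1/\Lambda}(r_h^{(b)})| \le \Delta_N. \]
For any $h \ne h^\star$ this gives
\[ \hat{\mathrm{CVaR}}_{1/\Lambda}(h^\star) \le \mathrm{CVaR}_{1/\Lambda}(r_{h^\star}^{(b)}) + \Delta_N \le \mathrm{CVaR}_{1/\Lambda}(r_h^{(b)}) - \Delta_{\mathrm{CVaR}} + \Delta_N \le \hat{\mathrm{CVaR}}_{1/\Lambda}(h) - (\Delta_{\mathrm{CVaR}} - 2\Delta_N). \]
So whenever $\Delta_{\mathrm{CVaR}} > 2\Delta_N$, $h^\star$ is the strict argmin of the plug-in. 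This step is purely deterministic.

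\emph{Step 2 (the high-probability form).} Substitute the bound on $\Delta_N$ from Theorem~\ref{thm:deployed}. It holds with probability at least $1-\delta$ and gives $\Delta_N \le \Lambda M\sqrt{2(\log 2N + \log(1/\delta))/B} + C_0\Lambda M/\sqrt{B}$. Doubling this and requiring $\Delta_{\mathrm{CVaR}}$ to exceed it yields the first displayed condition directly. I will take care that both the fluctuation and the bias terms enter at twice their size, since Step 1 charges $\Delta_N$ once on each side of the comparison.

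\emph{Step 3 (the square-integrable reading).} Replace the Lipschitz constant $\Lambda M$ by the asymptotic standard deviation $\varsigma_\Lambda$ from Proposition~\ref{prop:tailprice}. Under the Gaussian limit, each plug-in has fluctuation of order $\varsigma_\Lambda/\sqrt{B}$. The union bound over $N$ candidates turns this into $\varsigma_\Lambda\sqrt{2\log N/B}$, matching the leading term already recorded at the end of Theorem~\ref{thm:deployed}. The bias is $O(\varsigma_\Lambda/\sqrt{B})$ with no $\log N$ factor, so it is absorbed into the $\lesssim$.

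The resulting requirement is $\Delta_{\mathrm{CVaR}} \gtrsim \varsigma_\Lambda\sqrt{2\log N/B}$, up to the factor $2$ and constants, which rearranges to $\log N \lesssim \Delta_{\mathrm{CVaR}}^2 B/(2\varsigma_\Lambda^2)$.

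This is where I expect the main obstacle. The passage from the CLT to a uniform maximal bound is not rigorous as written; it is heuristic at the level of "$\lesssim$". The honest statement is asymptotic: either as $B \to \infty$ with $\log N = o(B)$, or via a moderate-deviation argument for the empirical expected shortfall. I would present it in the same heuristic sense the paper uses for its other $\lesssim$ conditions.

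\emph{Step 4 (location families).} Show that \eqref{eq:searchrobust} is the special case where the block-risk laws are location shifts of one another. $\mathrm{CVaR}$ is translation-equivariant, since $\mathrm{CVaR}_\alpha(W+c) = \mathrm{CVaR}_\alpha(W) + c$. Hence if $r_h^{(b)} \overset{d}{=} r_{h^\star}^{(b)} + \Delta$, the tail separation equals the mean separation $\Delta$. Take $\Delta = \rho\sigma$ from Lemma~\ref{lem:sharpe}, write $\varsigma_\Lambda = c(\Lambda)s$, and use Lemma~\ref{lem:effsample} at block length $b$ to identify $B/s^2$ with $n_{\mathrm{eff}}/\sigma^2$. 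Substituting these gives \eqref{eq:searchrobust}.
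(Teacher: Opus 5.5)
Your proposal is correct and follows the paper's proof essentially step for step: the same deterministic ranking chain on the event where $\Delta_N$ controls every plug-in, substitution of Theorem~\ref{thm:deployed}'s fluctuation-plus-bias bound, the leading-order reading $\Delta_N \approx \varsigma_\Lambda\sqrt{2\log N/B}$ with constants absorbed into $\lesssim$, and translation equivariance of $\mathrm{CVaR}$ for the location-family case, with the unit conversion taken from Lemma~\ref{lem:effsample}. Your caveat that the square-integrable step is asymptotic rather than a uniform finite-$B$ bound is accurate, and the paper treats that step at the same heuristic level.
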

What the corollary does and does not license. Substituting $c(\Lambda)/\sqrt{n_{\mathrm{eff}}}$ for $1/\sqrt{n_{\mathrm{eff}}}$ in \eqref{eq:search} prices the ranking statistic's \emph{noise} while holding its \emph{signal} fixed --- exact in the location family and only there. Where robustness genuinely buys something, $\Delta_{\mathrm{CVaR}}$ exceeds the mean separation and \eqref{eq:searchrobust} is conservative --- while the score noise flattens the separation as it flattens the penalty in Remark~\ref{rem:noiselevel}. And it is the sufficiency direction only: no analogue of Theorem~\ref{thm:search} (i)'s necessity clause --- itself proved at the mean, for independent nulls --- is established at the tail here. The special case remains the one the budget arithmetic uses.

The bound $c(\Lambda)\le \Lambda$ is attained, and where a problem sits inside it is a property of the \emph{shape} of the block-risk law rather than of $\Lambda$. Two cases bracket the range.

\textbf{Smooth block risks: the tail is cheap.} For Gaussian block risks, with $u = \Phi^{-1}(1-1/\Lambda)$,
\[ c(\Lambda)^2 = \Lambda^2\bigg[u^2\Big(1-\frac{1}{\Lambda}\Big) + \frac{1}{\Lambda} + u\phi(u) - \Big(u\Big(1-\frac{1}{\Lambda}\Big)+\phi(u)\Big)^2\bigg], \]
which is small: $c(2)=1.17$, $c(4)=1.43$, $c(8)=1.78$, $c(20)=2.47$ --- over the practical range $c(\Lambda)\approx \Lambda^{1/4}$, so the tail costs about a factor $\sqrt{\Lambda}$ of effective sample, far less than the factor $\Lambda$ one would guess from the fact that $\hat{\mathrm{CVaR}}_{1/\Lambda}$ averages only $B/\Lambda$ blocks, because the estimator \emph{locates} its own tail using all $B$ of them.

\textbf{A rare discrete regime: the tail is dear.} Now let the block-risk law put mass $p$ on a bad regime and the rest on a good one. For $1/\Lambda > p$ the tail mean is a \emph{count} of bad blocks rescaled by $\Lambda$, and $c(\Lambda) = \Lambda$ exactly; for $1/\Lambda < p$ the tail sits strictly inside the bad regime and $c(\Lambda)$ collapses towards zero. The price of the tail is therefore \emph{not monotone in $\Lambda$}: it peaks where the tail level straddles a regime boundary and falls away on either side. Which branch a real library sits on is a measurement and not a consequence of the axioms --- the dichotomy is what the theory settles, and no more; \S\ref{sec:calibration} reports what the libraries of Experiments 1, 3 and 7 read, and solves the two corollaries below at that reading.

\begin{corollary}[The budget, with robustness priced in]\label{cor:budgetrobust}
Combining \eqref{eq:ceiling}, \eqref{eq:searchrobust} and Proposition~\ref{prop:tradeoff}:
\[ \text{train on the mean, select on the tail:} \quad d_{\mathrm{eff}} + 2c(\Lambda)^2\log N \;\lesssim\; \rho^2 n_{\mathrm{eff}}, \]
\[ \text{train on the tail as well:} \quad \Lambda_{\mathrm{fit}}\,d_{\mathrm{eff}} + 2c(\Lambda)^2\log N \;\lesssim\; \rho^2 n_{\mathrm{eff}}. \]
\end{corollary}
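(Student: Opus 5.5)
The plan is to treat the corollary as a budget accounting of the two error terms of Theorem~\ref{thm:deployed} against the single signal available, $\rho^2\sigma^2$ by Axiom~\ref{ax:snr} and Lemma~\ref{lem:sharpe}. I would show that each term is bounded by a number of nats times $\sigma^2/n_{\mathrm{eff}}$, and that a genuine candidate is resolved only when their sum stays below $\rho^2\sigma^2$. Concretely I would write the deployed excess of the selected rule over $h\equiv 0$ as (a) the estimation error of fitting each candidate inside its class plus (b) the selection error $2\Delta_N$ of term (ii). The mixing residual (iii) is taken as already made small by the block length of Remark~\ref{rem:blocklen}, so it does not enter the budget.

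For the first display, term (a) is the capacity term of Theorem~\ref{thm:capacity}: with fitting on the mean it is $\lesssim \sigma^2 d_{\mathrm{eff}}/n_{\mathrm{fit}}$. Following the conservative reading after that theorem, I would charge it at $\max(H,\upsilon)$, so that it is $\lesssim \sigma^2 d_{\mathrm{eff}}/n_{\mathrm{eff}}$. Term (b) I would bound by Proposition~\ref{prop:tailprice} in the location-family reading of Corollary~\ref{cor:locfam}. The squared separation the ranking needs is then $2c(\Lambda)^2\log N\,\sigma^2/n_{\mathrm{eff}}$, which is exactly the content of \eqref{eq:searchrobust}. Requiring the total squared cost, in Sharpe units, to be at most the attainable $\rho^2$ and multiplying through by $n_{\mathrm{eff}}/\sigma^2$ gives $d_{\mathrm{eff}} + 2c(\Lambda)^2\log N \lesssim \rho^2 n_{\mathrm{eff}}$. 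Because the information budget treats the capacity and search terms as additive divergences, with the capacity term not log-coded, the two costs add rather than combine by a maximum. I would cite Theorem~\ref{thm:pacbayes} for this, since its KL term carries both.

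For the second display, the only change is in term (a). I would invoke Proposition~\ref{prop:tradeoff}, which I take to say that fitting by minimising a tail criterion at level $1/\Lambda_{\mathrm{fit}}$ inflates the estimation variance of the fitted rule by a factor $\Lambda_{\mathrm{fit}}$. The reason is that the empirical objective effectively uses only about a $1/\Lambda_{\mathrm{fit}}$ fraction of the sample, or equivalently carries density weights bounded by $\Lambda_{\mathrm{fit}}$. Substituting $\Lambda_{\mathrm{fit}}\,d_{\mathrm{eff}}$ for $d_{\mathrm{eff}}$ in the capacity term and leaving the selection term unchanged gives the second line.

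The main obstacle is the additivity step together with the units. The capacity bound is a fast-rate expectation, while the selection bound is a high-probability deviation of a tail plug-in whose signal is exact only for location families. So the sum is a $\lesssim$ up to constants, not a theorem with constants. I would state it explicitly at that level, matching \eqref{eq:ceiling} and Remark~\ref{rem:constants}, and flag that the $c(\Lambda)^2$ factor inherits the sufficiency-only caveat of Corollary~\ref{cor:locfam}.
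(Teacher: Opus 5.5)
Your argument is the paper's own: an accounting in which the attainable improvement $\rho^2\sigma^2$ over $h\equiv 0$ is debited by the fit at $\sigma^2 d_{\mathrm{eff}}/n_{\mathrm{eff}}$ and by the selection at $c(\Lambda)^2\cdot 2\sigma^2\log N/n_{\mathrm{eff}}$ (Proposition~\ref{prop:tailprice} at the fixed margin of Corollary~\ref{cor:locfam}), the two adding simply because both are paid out of that one improvement --- Theorem~\ref{thm:pacbayes} is cited there only for the selection constant, since its divergence would charge capacity at $\frac{1}{2} d\log n_{\mathrm{eff}}$ rather than $d$. One correction concerns Proposition~\ref{prop:tradeoff}: the fit is made at level $1/\Lambda$, and $\Lambda_{\mathrm{fit}} := \mathrm{tr}(A_\Lambda^{-1}V_\Lambda)/\mathrm{tr}(A_1^{-1}V_1)$ is \emph{defined} as the resulting inflation of the excess risk, which is what licenses your substitution $d_{\mathrm{eff}}\mapsto\Lambda_{\mathrm{fit}}d_{\mathrm{eff}}$; the reason you give (``the objective uses only a $1/\Lambda_{\mathrm{fit}}$ fraction of the sample'') is the naive count the paper explicitly rejects, because $\Lambda_{\mathrm{fit}}=\Lambda$ only in the scale-free isotropic case, a surviving level-set term lowers it, and gradient-heavy tail blocks raise it.
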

\begin{corollary}[A budget caps the robustness it can buy]\label{cor:lamcap}
Selecting among $N$ candidates at recurrence level $\Lambda$, with the criterion run at that level (at another level $\Lambda'$, read $c(\Lambda')$ throughout, as Remark~\ref{rem:noiselevel} and stage S4 do), requires $2c(\Lambda)^2\log N \le \rho^2 n_{\mathrm{eff}}$. Wherever the tail statistic is at least as noisy as the mean, $c(\Lambda)\ge 1$ --- every Gaussian entry of the table in \S\ref{sec:calibration} is such a level --- and the constraint is then infeasible once $\rho^2 n_{\mathrm{eff}} < 2\log N$: a researcher below her budget cannot buy robustness, because she cannot afford the comparison that would use it.
\end{corollary}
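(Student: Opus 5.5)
The corollary is essentially a rearrangement of Proposition~\ref{prop:tailprice} once the level at which the criterion is run is fixed, so I would prove it in three short steps. \emph{Step 1 (the requirement).} Take the search condition of Theorem~\ref{thm:search} (i) at the selection stage, with the ranking statistic $\hat{\mathrm{CVaR}}_{1/\Lambda}$ in place of the block mean. By Proposition~\ref{prop:tailprice}, this statistic has asymptotic standard error $c(\Lambda)$ times that of the mean. The effective sample seen by selection is therefore $n_{\mathrm{eff}}/c(\Lambda)^2$, which is exactly \eqref{eq:searchrobust}. Multiplying through by $2c(\Lambda)^2$ gives the stated requirement $2c(\Lambda)^2\log N \le \rho^2 n_{\mathrm{eff}}$. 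When the criterion is run at a level $\Lambda'$ different from the declared $\Lambda$, nothing in this chain used the declared level: Proposition~\ref{prop:tailprice} concerns the statistic actually computed. So every $c(\Lambda)$ is read as $c(\Lambda')$, consistent with Remark~\ref{rem:noiselevel}. I would also note that this is the location-family reading licensed by Corollary~\ref{cor:locfam}, so the requirement inherits that corollary's scope and is not a claim about arbitrary block-risk laws.

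\emph{Step 2 (the sign of $c-1$).} The hypothesis ``the tail statistic is at least as noisy as the mean'' is by definition $\varsigma_\Lambda \ge s$, that is, $c(\Lambda)\ge 1$. For the parenthetical claim about the Gaussian entries, I would evaluate the closed form for $c(\Lambda)^2$ displayed after Corollary~\ref{cor:locfam}. The values already recorded there ($c(2)=1.17$, $c(4)=1.43$, $c(8)=1.78$, $c(20)=2.47$) show it exceeds one at every tabulated level, and $c(1)=1$ holds at the boundary. To turn this into a statement over the whole Gaussian range rather than a table check, I would try to show the bracket is nondecreasing in $\Lambda$ from its value $1$ at $\Lambda=1$, by differentiating in $u=\Phi^{-1}(1-1/\Lambda)$. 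This is the step I expect to be the only real obstacle. If monotonicity is awkward, I would instead use $\mathrm{Var}(\max(r,u))\ge \mathrm{Var}(r\mathbf 1\{r>u\})$-type bounds for the Gaussian, or simply restrict the claim to the tabulated levels, as the statement itself does.

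\emph{Step 3 (infeasibility).} With $c(\Lambda)\ge1$ we have $2c(\Lambda)^2\log N \ge 2\log N$. Hence if $\rho^2 n_{\mathrm{eff}}<2\log N$, the requirement of Step~1 fails for every such $\Lambda$, including $\Lambda=1$ where $c=1$. This gives the interpretive conclusion: the mean comparison is already unaffordable, and every tail comparison costs at least as much.
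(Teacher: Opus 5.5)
Your proposal is correct and follows the paper's own route. The requirement is \eqref{eq:searchrobust} rearranged, and $c(\Lambda)\ge 1$ is a hypothesis on the shape of the block-risk law rather than a general fact: the paper notes it is equivalent to $\mathrm{Var}((r-u)^+)\ge s^2/\Lambda^2$ and fails for the uniform law, where $c(4)=0.90$. The Gaussian entries are then checked by evaluating the closed form of Proposition~\ref{prop:tailprice}.

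There are two slips in your Step~2.

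\begin{itemize}
\item The four values $c(2),\dots,c(20)$ are not the levels of the table in \S\ref{sec:calibration}. Its Gaussian entries run from $2.0$ past $10^3$, and the third group is read at $3\Lambda$, so you must evaluate at those levels or argue uniformly.
\item For a uniform argument, the quantity to show nondecreasing is $c(\Lambda)^2=\Lambda^2\,\mathrm{Var}(\max(r,u))$, not the bracket. The bracket itself is $\mathrm{Var}(\max(r,u))\le 1$, and it strictly decreases from $1$ as $\Lambda$ grows.
\end{itemize}
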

The two failures are distinct. When $\rho^2 n_{\mathrm{eff}} < 2\log N$ the binding constraint is the \emph{search}, already at $\Lambda = 1$: robustness is then not expensive but unreachable. When $\rho^2 n_{\mathrm{eff}} \ge 2\log N$ the constraint is a price, and Corollary~\ref{cor:lamcap} says how much $\Lambda$ it buys.

The arithmetic is uncomfortable and, we think, correct: \emph{robustness and search compete for the same budget}. The instinct to hedge a selection criterion against a worse future is right, and at realistic budgets it must be traded against the number of ideas one can test. Nor is the price the plug-in's to avoid: on a law the axioms admit, every selection rule pays at least the factor $\Lambda - 1$ of it, the latent state handed to it included (Proposition~\ref{prop:tailfloor}).

\subsection{Robust training is a block re-weighting}\label{sec:robustobj}
A stronger option than robust selection is to fit against the robust criterion directly. Because Theorem~\ref{thm:cvar} has a closed-form dual this is a two-line change in any gradient-boosting framework: for a linear rule the program is the pessimistic portfolio allocation of \cite{bassett2004pessimistic} --- a linear program in the Rockafellar--Uryasev variables --- and what is new here is only its reading for a general fitted class, where the same dual becomes a per-row weight rather than a constraint matrix. Let a model emit logits $z_i \in \mathbb{R}^K$ per row, let $\sigma_i = \operatorname{softmax}(z_i)$ be portfolio weights on the simplex, and let $w_i = \log\big(1+\sum_k \sigma_{ik} y_{ik}\big)$ be the row's log growth; plain Kelly training maximises $\frac{1}{n}\sum_i w_i$.

\begin{proposition}[Robust Kelly training is a re-weighting]\label{prop:robustobj}
Let $w_b$ be the mean log growth over block $b$, $b = 1,\dots,B$, with blocks of length $\ge \ell^*$. Then by Lemma~\ref{lem:robust},
\[ \inf_{Q:\, \mathrm{d}Q/\mathrm{d}P \le \Lambda} \mathbb{E}_Q[w] = \max_\eta \bigg\{\eta - \Lambda\,\frac{1}{B}\sum_b(\eta - w_b)^+\bigg\}, \]
with optimal $\eta$ the $1/\Lambda$-quantile of $\{w_b\}$. Consequently the gradient and Hessian of the robust objective are those of the plain Kelly objective multiplied by the per-row weight $\omega_i = \Lambda\cdot \mathbf{1}\{b(i)\in \text{the worst } \lceil B/\Lambda\rceil \text{ blocks}\}$, recomputed each boosting round. No new derivatives are required.
\end{proposition}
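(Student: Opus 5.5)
The plan has three steps: express the worst-case expectation through the Rockafellar--Uryasev formula, apply it to the mirrored reward, then differentiate the dual through the envelope theorem.

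\textbf{The dual.} I would start from Lemma~\ref{lem:robust} applied to the loss $-w$ under the uniform law $P$ on the $B$ block values $w_1,\dots,w_B$, at level $\alpha = 1/\Lambda$. This gives
\[
\sup_{Q:\,\mathrm{d}Q/\mathrm{d}P\le\Lambda}\mathbb{E}_Q[-w]=\mathrm{CVaR}_{1/\Lambda}(-w).
\]
Negating both sides turns it into
\[
\inf_{Q:\,\mathrm{d}Q/\mathrm{d}P\le\Lambda}\mathbb{E}_Q[w]=-\mathrm{CVaR}_{1/\Lambda}(-w).
\]
Next I would use the Rockafellar--Uryasev representation
\[
\mathrm{CVaR}_\alpha(W)=\min_t\{t+\alpha^{-1}\mathbb{E}(W-t)^+\}.
\]
This can be checked directly: the function of $t$ is convex, and its subgradient $1-\alpha^{-1}P(W>t)$ vanishes at the $(1-\alpha)$-quantile. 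Substituting $W=-w$ and $t=-\eta$ gives
\[
-\mathrm{CVaR}_{1/\Lambda}(-w)=\max_\eta\{\eta-\Lambda\,\mathbb{E}(\eta-w)^+\}.
\]
Under the empirical law this is exactly the displayed formula, and the maximiser is the $1/\Lambda$-quantile of $\{w_b\}$ (the orientation point of Remark~\ref{rem:orientation}).

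\textbf{The gradient.} Let $\theta$ be the model parameters, so that $w_b=w_b(\theta)$ is the block mean of row log growths, each smooth in $\theta$ through the softmax. Write
\[
G(\theta,\eta)=\eta-\frac{\Lambda}{B}\sum_b(\eta-w_b)^+ .
\]
By Danskin's theorem, the (sub)gradient of $\max_\eta G$ equals $\partial_\theta G$ evaluated at the optimal $\eta^*$. That derivative is
\[
\frac{\Lambda}{B}\sum_b \mathbf{1}\{w_b<\eta^*\}\,\nabla_\theta w_b .
\]
Expanding $w_b=|b|^{-1}\sum_{i\in b}w_i$, with equal block lengths so that $B|b|=n$, each row's contribution is $\Lambda\,\mathbf{1}\{b(i)\text{ in the tail}\}\cdot n^{-1}\nabla w_i$. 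That is the plain Kelly gradient multiplied by $\omega_i$. The set $\{w_b<\eta^*\}$ consists of the worst $\lceil B/\Lambda\rceil$ blocks, up to the boundary atom, which carries fractional weight as in Lemma~\ref{lem:robust}.

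\textbf{The Hessian.} Away from ties the active set is locally constant in $\theta$, so the objective is locally the sum $\frac{\Lambda}{B}\sum_{b\in S}w_b$ plus terms constant in $\theta$. Its Hessian is therefore the plain Kelly Hessian re-weighted by the same $\omega_i$. Because the active set depends on $\theta$, recomputing $\omega$ at each boosting round keeps the gradient exact at the current iterate.

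\textbf{Main obstacle.} I expect the hardest step to be the nonsmooth boundary. This comes up in two places: the fractional weight on the boundary block when $B/\Lambda$ is not an integer, and ties $w_b=\eta^*$, where only a subgradient exists. I would handle both by stating the result as a valid subgradient selection and noting that the set of $\theta$ with ties has measure zero. The rounding to $\lceil B/\Lambda\rceil$ is then the integer-weight approximation that the proposition already implicitly allows.
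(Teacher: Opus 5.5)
Your dual (Lemma~\ref{lem:robust} applied to $-w$, then the Rockafellar--Uryasev form with $t=-\eta$) and your row bookkeeping ($(\Lambda/B)|b|^{-1}=\Lambda/n$ against $1/n$) coincide with the paper's. The envelope step, however, fails in the generic case.

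When $B/\Lambda$ is not an integer, set $k=\lceil B/\Lambda\rceil$. The maximiser is then unique and equals $w_{(k)}(\theta)$ for \emph{every} $\theta$. This is not a tie and not a null set. The boundary block's term $(\eta^*-w_b)^+$ therefore sits exactly at its kink, and Danskin's differentiability hypothesis fails. The formula you obtain, $(\Lambda/B)\sum_b\mathbf{1}\{w_b<\eta^*\}\nabla_\theta w_b$, puts weight zero on the boundary block and total weight $(k-1)\Lambda/B<1$. Note also that $\{w_b<\eta^*\}$ is the worst $k-1$ blocks, not the worst $k$.

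Where $w_{(k)}\neq w_{(k+1)}$ the ordering is locally constant, so the maximum is differentiable there. Your vector is then not a (super)gradient at all, and ``a valid subgradient selection'' does not rescue it. Your Hessian paragraph fails for the same reason: the objective is not locally $(\Lambda/B)\sum_{b\in S}w_b$ plus terms constant in $\theta$. The leftover term $\eta^*(1-(k-1)\Lambda/B)$ moves with $\theta$, because $\eta^*=w_{(k)}(\theta)$.

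The missing step is the paper's: substitute the maximiser and differentiate the explicit value
$J=(\Lambda/B)\sum_{\text{worst }k-1}w_b+(1-(k-1)\Lambda/B)\,w_{(k)}$.
On the open set where $w_{(k)}\neq w_{(k+1)}$, $J$ is a fixed linear combination of the $w_b$. The genuine null set is that of order-statistic ties, not that of $w_b=\eta^*$. Differentiating $J$ gives the per-row factor $\Lambda$ on the worst $k-1$ blocks and $B-(k-1)\Lambda\in(0,\Lambda]$ on the boundary block; at ties it gives a subgradient.

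An equivalent route: $G$ is jointly concave in $(\eta,w)$. A supergradient of the maximum is the $w$-component of an element of $\partial G(\eta^*,w)$ whose $\eta$-component vanishes. That condition forces $\sum_b s_b=B/\Lambda$ and pins the boundary weight.

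Your argument is correct as written only when $B/\Lambda$ is an integer. Then the maximisers form $[w_{(k)},w_{(k+1)}]$, and you may take $\eta^*$ strictly inside, so no block sits at a kink. Danskin then applies, and you recover exactly the stated $\omega_i=\Lambda$ on the worst $B/\Lambda$ blocks. That is also the only case in which the proposition's full-weight formula is exact.
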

The content is the word \emph{blocks}. Taking the quantile over rows robustifies against arbitrary per-observation reweighting, which is not the uncertainty set Axiom~\ref{ax:recurrence} describes: under Axiom~\ref{ax:persistence} the worst rows are draws from the noise, not visits to a bad regime, so a row-level quantile robustifies against the wrong set.

\begin{proposition}[Robustness--capacity trade-off]\label{prop:tradeoff}
Let the block losses $L_1(\theta),\dots,L_B(\theta)$ be i.i.d.~copies of $L(\theta)$, twice differentiable in $\theta$ over a $d$-dimensional chart with $\nabla L$ square-integrable; let the law of $L(\theta)$ have a continuous density $f$, positive at its $(1-1/\Lambda)$-quantile $\eta^\star$; and let $\theta_\Lambda^\star$ be an interior minimiser of $\theta\mapsto \mathrm{CVaR}_{1/\Lambda}(L(\theta))$, with $\hat\theta_\Lambda$ the minimiser of the plug-in. Then $\hat\theta_\Lambda$ is $\sqrt{B}$-consistent with sandwich variance $A_\Lambda^{-1}V_\Lambda A_\Lambda^{-1}/B$, where, writing $m := \mathbb{E}[\nabla L \mid L=\eta^\star]$,
\[ A_\Lambda = \mathbb{E}[\nabla^2 L \mid L\ge \eta^\star] + \Lambda f(\eta^\star)\,\mathrm{Cov}(\nabla L \mid L=\eta^\star), \qquad V_\Lambda = \Lambda\,\mathbb{E}[\nabla L\nabla L^\top \mid L\ge \eta^\star] + (\Lambda-1)\,mm^\top. \]
Consequently the fitted rule falls short of the class optimum \emph{in its own objective} by $\mathrm{tr}(A_\Lambda^{-1}V_\Lambda)/(2B) + o(1/B)$ in expectation. Write
\[ \Lambda_{\mathrm{fit}} := \frac{\mathrm{tr}(A_\Lambda^{-1}V_\Lambda)}{\mathrm{tr}(A_1^{-1}V_1)}, \qquad \Lambda_{\mathrm{fit}} = 1 \text{ at } \Lambda = 1. \]
Then \eqref{eq:ceiling} tightens to $d_{\mathrm{eff}} \lesssim \rho^2 n_{\mathrm{eff}}/\Lambda_{\mathrm{fit}}$ and the future-risk bound of the fitted rule is $\mathrm{CVaR}_{1/\Lambda}(r_{h^\star}) + O(\sigma^2\Lambda_{\mathrm{fit}} d/n_{\mathrm{eff}})$: robustness improves the first term and worsens the second, and asserting a larger $\Lambda$ than the data can support strictly worsens the bound. Finally, $\Lambda_{\mathrm{fit}} = \Lambda$ \emph{exactly} in the scale-free isotropic case --- $L(\theta) = -\theta^\top g$ with $\theta$ on the unit sphere and $g$ isotropic about its mean $\mu$ --- because there $A_\Lambda$ is $\|\mu\|$ times the identity on the tangent space for every $\Lambda$, the level-set term being cancelled by the $\mathrm{CVaR}$ value that the chart's Hessian carries, while $V_\Lambda = \Lambda\,\mathrm{Cov}(g)$ on that space.
\end{proposition}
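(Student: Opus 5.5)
The plan is to treat the plug-in as an M-estimator with a nonsmooth objective. We write the Rockafellar--Uryasev form $\mathrm{CVaR}_{1/\Lambda}(L(\theta)) = \min_\eta \{\eta + \Lambda\,\mathbb{E}(L(\theta)-\eta)^+\}$, the loss-orientation version of Proposition~\ref{prop:robustobj}. The sample version is then
\[ \hat G_B(\theta,\eta) := \eta + \frac{\Lambda}{B}\sum_{b\le B}(L_b(\theta)-\eta)^+, \]
and $\hat\theta_\Lambda$ is the $\theta$-component of its joint minimiser. Consistency comes first, from uniform convergence of $\hat G_B$ on a compact neighbourhood of $(\theta_\Lambda^\star,\eta^\star)$ together with the identifiability that interiority provides. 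The $\sqrt B$ rate and the sandwich form then follow by the standard argument for estimators with non-differentiable criteria: either Pollard's stochastic equicontinuity route, or a direct expansion of the profiled objective $\theta\mapsto\min_\eta\hat G_B$. The profile is smooth in population because the density $f$ at $\eta^\star$ is positive and continuous.

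The key computation is the population Hessian and score of the profiled objective $C(\theta):=\mathrm{CVaR}_{1/\Lambda}(L(\theta))$. The envelope theorem gives $\nabla C = \Lambda\,\mathbb{E}[\nabla L\,\mathbf 1\{L\ge\eta^\star\}] = \mathbb{E}[\nabla L\mid L\ge\eta^\star]$.

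For the second derivative I will differentiate this conditional mean in $\theta$. It has two pieces. The first is the interior piece $\mathbb{E}[\nabla^2 L\mid L\ge\eta^\star]$. The second is a boundary piece from the moving level set $\{L(\theta)=\eta(\theta)\}$. Writing the boundary flux with the coarea formula, and using $\nabla\eta^\star = m$ (from differentiating the quantile condition $P(L\ge\eta)=1/\Lambda$), the boundary contributes
\[ \Lambda f(\eta^\star)\bigl(\mathbb{E}[\nabla L\nabla L^\top\mid L=\eta^\star]-mm^\top\bigr) = \Lambda f(\eta^\star)\,\mathrm{Cov}(\nabla L\mid L=\eta^\star). \]
That gives $A_\Lambda$.

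For $V_\Lambda$ I take the variance of the per-block influence, as in Proposition~\ref{prop:tailprice}. The profiled per-block score is $\Lambda\nabla L_b\mathbf 1\{L_b\ge\eta^\star\}$ plus the correction from estimating $\eta^\star$. That correction is $(1-\Lambda\mathbf 1\{L_b\ge\eta^\star\})$ times the derivative of the optimal $\eta$, which is $m$. Taking the covariance, the cross terms should collapse to give
\[ V_\Lambda = \Lambda\,\mathbb{E}[\nabla L\nabla L^\top\mid L\ge\eta^\star]+(\Lambda-1)mm^\top, \]
modulo the centring by $\nabla C=0$ at the interior optimum. I would check this term by term rather than trust it.

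The excess-risk statement is then the usual second-order expansion:
\[ C(\hat\theta)-C(\theta^\star_\Lambda)=\tfrac12(\hat\theta-\theta^\star)^\top A_\Lambda(\hat\theta-\theta^\star)+o_P(1/B), \]
whose expectation is $\mathrm{tr}(A_\Lambda^{-1}V_\Lambda)/(2B)$. Uniform integrability is needed to pass from convergence in distribution to convergence of the expectation, and I would assume it from the square-integrability of $\nabla L$ plus a truncation.

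The tightened ceiling is then a rewriting. At $\Lambda=1$ the trace is the usual $\sigma^2 d$ of Theorem~\ref{thm:capacity} in block units, and the ratio $\Lambda_{\mathrm{fit}}$ rescales the left side of \eqref{eq:ceiling}, with $B$ translated to $n_{\mathrm{eff}}$ by Lemma~\ref{lem:effsample}. The claim that over-asserting $\Lambda$ strictly worsens the bound then follows from monotonicity of both terms in $\Lambda$.

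For the isotropic case, set $L=-\theta^\top g$ on the sphere and work in tangent coordinates. Then $\nabla L$ is the projection $-P g$, and the chart Hessian is $(\theta^\top g)P$, so its conditional mean over the tail is $-\mathrm{CVaR}$ times $P$. The level-set covariance term equals $\Lambda f\,\mathrm{Cov}(Pg\mid\theta^\top g=\cdot)$. Isotropy makes $Pg$ independent of $\theta^\top g$, so this term is $\Lambda f\,\Sigma_\perp$. I expect it to combine with the first term to give $\|\mu\|P$ at every $\Lambda$, using the Gaussian/isotropic identity relating $f(\eta^\star)$, the tail mean, and the quantile. In the same model $m$ is the tangent mean, which vanishes at the optimum $\theta\parallel\mu$, and $V_\Lambda=\Lambda\,\mathrm{Cov}(g)$ on the tangent space. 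Hence $\Lambda_{\mathrm{fit}}=\Lambda$.

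I expect the main obstacle to be the boundary term in $A_\Lambda$: rigorously justifying differentiation under a moving indicator. I would first prove it with a smoothed hinge $(x)^+_\epsilon$ and pass to $\epsilon\to0$ using continuity of $f$ and of the conditional moments at $\eta^\star$. The cancellation in the isotropic case is the second delicate point, and I would verify it explicitly in the Gaussian case first.
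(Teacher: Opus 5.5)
Your proposal follows the paper's proof step for step. It uses Rockafellar--Uryasev profiling with the envelope-theorem score and the Bahadur correction: your $m(1-\Lambda\mathbf{1}\{L\ge\eta^\star\})$ is the paper's $-\Lambda m(\mathbf{1}\{L\ge\eta^\star\}-1/\Lambda)$, and the cross term vanishes because $\mathbb{E}[\mathbf{1}\{L\ge\eta^\star\}\nabla L]=0$ at the interior optimum. It then takes the level-set term of $A_\Lambda$ through $\nabla\eta=m$, applies the argmin expansion, and uses the tangent-chart Hessian $(\theta^{\star\top}g)I$ in the isotropic case, exactly as the paper does.

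One caution, which applies equally to the paper's proof (it uses the normal density at the quantile): isotropy does not make $Pg$ independent of $\theta^\top g$ — that is a Gaussian property — so your isotropic step proves the Gaussian case only. For a non-Gaussian spherical law such as a scale mixture of normals, the cancellation $A_\Lambda=\|\mu\|I$ in fact survives. However, $V_\Lambda=\Lambda\,\mathrm{Cov}(g)$, and hence $\Lambda_{\mathrm{fit}}=\Lambda$, fails for $\Lambda\notin\{1,2\}$, because the tail-conditional second moment of $Pg$ then differs from the unconditional one.
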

\begin{corollary}[Select by the tail, train on the mean --- with a caveat]\label{cor:selecttrain}
Training against $\mathrm{CVaR}_{1/\Lambda}$ costs a factor $\Lambda_{\mathrm{fit}}$ of effective sample, $=\Lambda$ in the scale-free isotropic case; selecting against it costs a factor $c(\Lambda)^2$. Whenever $c(\Lambda)^2 < \Lambda_{\mathrm{fit}}$ the correct procedure is to train on the mean and select on the tail --- but what one saves is the \emph{ratio}, and for a Gaussian block-risk law that ratio is about two, not about $\Lambda$: $c(\Lambda)^2/\Lambda$ is $0.68$, $0.51$, $0.40$, $0.30$ at $\Lambda = 2$, $4$, $8$, $20$. Whenever the block-risk law is dominated by a rare discrete regime, $c(\Lambda)\to \Lambda$ and the two costs converge: the advice is then empty, and what one can afford must be computed from the observed block risks rather than assumed. A third route to emptiness sits outside the dichotomy --- score noise that is itself heavy-tailed, so that the block-\emph{score} law is dear where the risk law need not be, \S\ref{sec:eiv}'s subject --- and which of the three a real library realises is \S\ref{sec:calibration}'s measurement.
\end{corollary}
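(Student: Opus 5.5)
The corollary is a comparison of two costs that are already priced separately, so the plan is to read them off Corollary~\ref{cor:budgetrobust} and compare the coefficients, without new estimation. First, I would record the two budget lines. Training against $\mathrm{CVaR}_{1/\Lambda}$ multiplies the capacity charge $d_{\mathrm{eff}}$ by $\Lambda_{\mathrm{fit}}$; this is Proposition~\ref{prop:tradeoff}, whose sandwich-variance excess $\mathrm{tr}(A_\Lambda^{-1}V_\Lambda)/(2B)$, normalised at $\Lambda=1$, is exactly the definition of $\Lambda_{\mathrm{fit}}$. Selecting on the tail divides the sample available to search by $c(\Lambda)^2$; this is Proposition~\ref{prop:tailprice} and \eqref{eq:searchrobust}. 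The identity $\Lambda_{\mathrm{fit}}=\Lambda$ in the scale-free isotropic case is the last clause of Proposition~\ref{prop:tradeoff} and needs only to be quoted. The decision rule then follows directly. Both procedures select on the tail, and the only difference is the factor multiplying the fitting charge. Each unit of effective sample is therefore charged at most $\max(1,c(\Lambda)^2)$ when the fit uses the mean and the selection uses the tail, against $\Lambda_{\mathrm{fit}}$ when the tail is used for both. So whenever $c(\Lambda)^2<\Lambda_{\mathrm{fit}}$, the mean-trained, tail-selected procedure has the larger feasible region in Corollary~\ref{cor:budgetrobust}.

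Second, I would quantify the saving in the Gaussian case. Substituting the closed form for $c(\Lambda)^2$ displayed after Corollary~\ref{cor:locfam} gives $c(2)^2/2=1.37/2\approx0.68$, $c(4)^2/4\approx2.04/4\approx0.51$, $c(8)^2/8\approx3.17/8\approx0.40$ and $c(20)^2/20\approx6.10/20\approx0.30$. These ratios fall slowly, in line with $c(\Lambda)\approx\Lambda^{1/4}$. The point of the computation is that they do not fall like $1/\Lambda$, so the factor saved over the practical range is roughly two and not $\Lambda$. I would state this as the numerical reading at $\Lambda_{\mathrm{fit}}=\Lambda$, since outside the isotropic case $\Lambda_{\mathrm{fit}}$ is only defined as a trace ratio.

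Third, I would handle the rare discrete regime by invoking the dichotomy already proved after Proposition~\ref{prop:tailprice}. When the tail level $1/\Lambda$ straddles a regime boundary, $c(\Lambda)=\Lambda$ exactly, so the selection cost $c(\Lambda)^2$ is at least $\Lambda$. This is the isotropic value of $\Lambda_{\mathrm{fit}}$, and in general nothing guarantees $c(\Lambda)^2<\Lambda_{\mathrm{fit}}$. The ordering the advice relies on is therefore lost, and the only honest instruction is to compute both factors from the observed block risks.

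The step I expect to be the main obstacle is the third route, because it lies outside the risk-law dichotomy. Here I would cite Proposition~\ref{prop:eiv}: $c$ is computed on the law of the block \emph{scores}, not the latent risks. Score noise with a heavy tail, or noise whose scale moves with the regime ($\chi_\sigma>1$), can push $\mathrm{Var}(\max(r,u))$ toward its Lipschitz ceiling even when the risk law itself is smooth. I would argue only the direction, namely that the convex-order penalty can raise $c$ toward $\Lambda$, and leave the question of which case a given library realises to the measurements of \S\ref{sec:calibration}, as the statement itself does.
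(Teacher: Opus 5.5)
Your Gaussian numerics and your rare-regime step agree with the paper's three-line proof. That proof compares the two prices, evaluates the Gaussian closed form of Proposition~\ref{prop:tailprice} ($1.363/2$, $2.034/4$, $3.185/8$, $6.08/20$), and reads $c(\Lambda)\to\Lambda$ as the case $\mathrm{Var}(\max(r,u))\to\mathrm{Var}(r)$, where the Lipschitz bound is attained because all the dispersion sits above the threshold. You also note that at $c(\Lambda)=\Lambda$ the selection price $\Lambda^2$ actually exceeds the isotropic training price $\Lambda$. That is, if anything, sharper than the statement's ``converge''.

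\textbf{The decision rule.} This is the step that does not do what you claim. The two lines of Corollary~\ref{cor:budgetrobust} share the selection term $2c(\Lambda)^2\log N$ and differ only in $d_{\mathrm{eff}}$ against $\Lambda_{\mathrm{fit}}d_{\mathrm{eff}}$, as you note yourself. Comparing their feasible regions therefore favours training on the mean whenever $\Lambda_{\mathrm{fit}}>1$, whatever $c(\Lambda)$ is.
\begin{itemize}
\item The hypothesis $c(\Lambda)^2<\Lambda_{\mathrm{fit}}$ is idle in that comparison. In your $\max(1,c(\Lambda)^2)$ bookkeeping its only effect is to force $\Lambda_{\mathrm{fit}}>1$ when $c\ge 1$. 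When $c<1$, which the uniform law already realises, it does not even do that.
\item The saving this comparison delivers is the factor $\Lambda_{\mathrm{fit}}$ on the fitting charge. It is not the ratio $c(\Lambda)^2/\Lambda_{\mathrm{fit}}$ that your second step goes on to quantify.
\end{itemize}
The threshold in the statement compares a different pair of alternatives. You can buy the tail's protection at the fitting stage, at $\Lambda_{\mathrm{fit}}$ per unit of effective sample, or at the selection stage, at $c(\Lambda)^2$. The tail goes where the divisor is smaller, and moving it saves exactly the ratio. State the rule as that direct comparison of the prices of Propositions~\ref{prop:tradeoff} and \ref{prop:tailprice}, as the paper does, not as a consequence of the feasible regions of Corollary~\ref{cor:budgetrobust}.

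\textbf{The third route.} Do not attribute the direction to the convex-order penalty. Proposition~\ref{prop:eiv} (i) is about the \emph{level} of the score tail, not its price. Adding conditionally mean-zero noise can \emph{lower} $c$: in Experiment~1's control, homoskedastic noise takes a rare-regime law's $\hat{c}(4)$ from $3.28$ to $2.11$. What can push $c$ toward $\Lambda$ is the noise's own shape, such as heavy tails or a scale that moves with the regime. That is a property of the score law that is measured rather than derived, which is why the statement leaves it to \S\ref{sec:calibration}.
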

The tail belongs in selection, where it is cheap when block risks are smooth; in training only once the budget is large enough to divide by $\Lambda_{\mathrm{fit}}$ and still have room; and which case one is in is a computation, not a hope (Experiments 1 and 2 ask where real block scores fall). One caution on $\Lambda_{\mathrm{fit}}$: it is \emph{not} the naive ``the tail weights only $B/\Lambda$ blocks'' count --- a gradient's support is not an M-estimator's effective sample --- and the factor $\Lambda$ holds in the scale-free isotropic case as a coincidence of that geometry; a surviving level-set term lowers it, tail blocks that carry more gradient raise it, and neither is estimable without a model of the fitted class, which is why $\Lambda_{\mathrm{fit}}$ enters the budget as a declared quantity with $\Lambda$ as its reference value.

\subsection{The third cost: reading regimes through noisy blocks}\label{sec:eiv}
Propositions \ref{prop:tailprice} and \ref{prop:tradeoff} price the \emph{variance} of the tail statistic. There is a bias as well, with a sign and, in the Gaussian case, a closed form: the ranking statistic is the tail of block \emph{scores}, and a block score is the block's regime risk plus estimation noise that no amount of history removes; the block length is a weak lever against it, running out at $b\approx \ell$ (part (ii) below).

\begin{proposition}[The block-noise bias is a dilation penalty]\label{prop:eiv}
Let $r$ be the block-averaged regime risk of a candidate and $r^{(b)} = r + \xi$ its block score, with $\mathbb{E}[\xi\mid r] = 0$ and $\mathrm{Var}(\xi\mid r) = \sigma_\xi^2(r)$, allowed to depend on $r$. Then:

\begin{enumerate}
\item[(i)] $\mathrm{CVaR}_\alpha(r^{(b)}) \ge \mathrm{CVaR}_\alpha(r)$ for every $\alpha\in(0,1]$ and every law of $(r,\xi)$.

\item[(ii)] If $r\sim N(\bar r, s^2)$ and $\xi\sim N(0,\sigma_\xi^2)$ independent, then
\[ \mathrm{CVaR}_{1/\Lambda}(r^{(b)}) - \mathrm{CVaR}_{1/\Lambda}(r) = \lambda(\Lambda)\big(\sqrt{s^2+\sigma_\xi^2} - s\big), \]
which is $\approx \lambda(\Lambda)\sigma_\xi^2/(2s)$ when $\sigma_\xi \ll s$ and $\approx \lambda(\Lambda)\sigma_\xi$ when $\sigma_\xi \gg s$. What ends the decay is the coherence time, not the size of the noise. Writing $\sigma_\xi^2 = \sigma^2 H/b$ for the block-mean noise and $s^2 = \rho_{\mathrm{blk}}^2\sigma^2 v(b,\ell)/b$ for the block-signal variance (Lemma~\ref{lem:effsample}): for $b\lesssim \ell$, $v\approx b$ holds $s$ fixed and the bias decays as $b^{-1/2}$ while $\sigma_\xi \gtrsim s$ --- the whole of that range at the calibration of \S\ref{sec:calibration}, the crossover $b \approx H/\rho_{\mathrm{blk}}^2$ sitting past $\ell$ --- and as $b^{-1}$ only beyond it; for $b\gg \ell$, $v$ saturates at $2\ell-1$, so $s$ falls as $b^{-1/2}$ and the bias falls with it, while the ratio that governs the whole phenomenon does not fall at all:
\[ \sigma_\xi^2/s^2 = H/\big(\rho_{\mathrm{blk}}^2 v(b,\ell)\big) \;\to\; H/\big(\rho_{\mathrm{blk}}^2(2\ell-1)\big) = H/\big(2\kappa^2\mathrm{SR}_{\mathrm{ann}}^2\tau_{\mathrm{yr}}\big), \]
free of $b$ and, like the box of Remark~\ref{rem:upsbox}, free of the sampling frequency. A longer block is therefore not a remedy for (ii) beyond $b\approx \ell$.

\item[(iii)] The bias does not depend on $B$: at fixed $b$ the plug-in $\hat{\mathrm{CVaR}}_{1/\Lambda}$ is not a consistent estimator of $\mathrm{CVaR}_{1/\Lambda}(r)$. And for two candidates with the same law of $r$ and noise variances $\sigma_1^2 < \sigma_2^2$, it penalises the noisier one by $\lambda(\Lambda)(\sqrt{s^2+\sigma_2^2} - \sqrt{s^2+\sigma_1^2})$ in expectation: heteroskedastic noise across candidates is a ranking bias, in the direction of the candidate whose bad blocks are also its noisy blocks.
\end{enumerate}
\end{proposition}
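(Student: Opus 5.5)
Part (i) comes straight from the convex order. Since $\mathbb{E}[\xi\mid r]=0$, the pair $(r,r+\xi)$ is a martingale coupling, so $\mathbb{E}[r^{(b)}\mid r]=r$. Conditional Jensen then gives $\mathbb{E}\varphi(r)\le\mathbb{E}\varphi(r^{(b)})$ for every convex $\varphi$, i.e.\ $r\preceq_{\mathrm{cx}} r^{(b)}$. Note that this uses only the conditional mean, so heteroskedastic $\sigma_\xi^2(r)$ costs nothing. I will not invoke a general monotonicity theorem. Instead I will use the Rockafellar--Uryasev form, the primal counterpart of Lemma~\ref{lem:robust}:
\[ \mathrm{CVaR}_\alpha(W)=\min_\eta\{\eta+\alpha^{-1}\mathbb{E}(W-\eta)^+\}. \]
For each fixed $\eta$, the map $w\mapsto(w-\eta)^+$ is convex, so the bracket for $r^{(b)}$ dominates the bracket for $r$ pointwise in $\eta$. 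Taking the minimum over $\eta$ on both sides gives (i) for every $\alpha\in(0,1]$.

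For part (ii), independence and Gaussianity give $r^{(b)}\sim N(\bar r,s^2+\sigma_\xi^2)$. $\mathrm{CVaR}$ is translation-equivariant and positively homogeneous, so a Gaussian $W$ has $\mathrm{CVaR}_{1/\Lambda}(W)=\mathbb{E}W+\lambda(\Lambda)\,\mathrm{sd}(W)$. The constant is $\lambda(\Lambda)=\Lambda\phi(\Phi^{-1}(1-1/\Lambda))$, the standard normal tail mean at level $1/\Lambda$, exactly the exact-Gaussian case of Theorem~\ref{thm:invariance}. Subtracting the two expressions gives the displayed identity. The two limits follow from expanding $\sqrt{s^2+\sigma_\xi^2}-s$ in $\sigma_\xi/s$ and in $s/\sigma_\xi$ respectively.

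For the dependence on $b$ I substitute $\sigma_\xi^2=\sigma^2H/b$ and $s^2=\rho_{\mathrm{blk}}^2\sigma^2v(b,\ell)/b$, with $v$ the kernel of \eqref{eq:vcontig}. From the closed form in \eqref{eq:vcontig} I use two regimes:
\begin{itemize}
\item \textbf{$b\lesssim\ell$:} here $v(b,\ell)\approx b$, so $s$ is essentially constant and the noise-dominated branch gives a bias of order $\lambda\sigma_\xi\propto b^{-1/2}$, switching to $\lambda\sigma_\xi^2/(2s)\propto b^{-1}$ once $\sigma_\xi<s$, i.e.\ once $b>H/\rho_{\mathrm{blk}}^2$.
\item \textbf{$b\gg\ell$:} here $v\to2\ell-1$, so $s$ and $\sigma_\xi$ both scale as $b^{-1/2}$ and the ratio $\sigma_\xi^2/s^2=H/(\rho_{\mathrm{blk}}^2v)$ tends to the stated $b$-free limit.
\end{itemize}
Finally I rewrite $\rho_{\mathrm{blk}}^2(2\ell-1)\approx2\kappa^2\rho^2\ell$ in annualised units, using $\rho=\mathrm{SR}_{\mathrm{ann}}/\sqrt{\text{periods per year}}$ from Lemma~\ref{lem:sharpe} and $\ell=\tau_{\mathrm{yr}}\times$ periods per year. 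The sampling frequency then cancels, which gives the last expression.

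Part (iii) follows from (ii) and needs no new machinery. The population gap in (ii) is a fixed positive number at fixed $b$ and does not involve $B$. The plug-in converges to $\mathrm{CVaR}_{1/\Lambda}(r^{(b)})$, by Proposition~\ref{prop:tailprice} on the coupled blocks of Corollary~\ref{cor:blocks}~(a), and not to $\mathrm{CVaR}_{1/\Lambda}(r)$, so it is inconsistent whenever $\sigma_\xi>0$. For two candidates sharing the law of $r$, I apply (ii) to each and subtract; the $s$ terms cancel and leave $\lambda(\Lambda)\bigl(\sqrt{s^2+\sigma_2^2}-\sqrt{s^2+\sigma_1^2}\bigr)$. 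The statement is made in expectation, so the $O(\Lambda M/\sqrt B)$ bias of the plug-in from Theorem~\ref{thm:deployed} should be noted as common to both candidates to leading order, and hence as not affecting the ranking at large $B$.

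The main obstacle is the closing clause of (iii), that the bias falls on \emph{the candidate whose bad blocks are also its noisy blocks}. This is not covered by the independent Gaussian computation. My first attempt would be a direct comparison: couple $\xi=\sigma_\xi(r)\zeta$ with $\zeta$ standard normal and independent of $r$. With $\sigma_\xi(\cdot)$ increasing in $r$, I would show that the noise which co-moves with the upper tail raises $\mathbb{E}(r^{(b)}-\eta)^+$ at the relevant $\eta$ more than a constant-variance noise with the same average $\mathbb{E}\sigma_\xi^2$ would. I expect this to hold only as a directional statement, via a conditional-variance-weighted version of the Rockafellar--Uryasev bracket, rather than as a closed form. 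I would state it that way rather than overclaim.
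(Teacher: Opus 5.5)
Your argument for (i), (ii) and the two quantitative claims of (iii) is the paper's own proof. It uses the Rockafellar--Uryasev bracket dominated pointwise in $\eta$ by conditional Jensen, then the Gaussian tail mean $\mathbb{E}W+\lambda(\Lambda)\,\mathrm{sd}(W)$ with the substitutions of Lemma~\ref{lem:effsample} at $n=b$ and the dictionary with $2\ell-1\approx 2\ell$. Finally it applies (i)--(ii) to the $B\to\infty$ limit of the plug-in, and to each candidate separately. One side remark is not right as stated: the plug-in's finite-$B$ bias is not common to both candidates, since it depends on each score law. It does vanish as $B\to\infty$, which is all that claim needs.

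The closing clause of (iii) is not proved in the paper either. The proof offers only a reading: where $\sigma_\xi^2(r)$ rises with $r$, the noise sits in the tail that the gap of (i) depends on, and the rest is left to Experiment~2. So you are right not to claim it from the Gaussian computation.

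The route you sketch for it will not work, however. You propose to compare increasing $\sigma_\xi(\cdot)$ against constant noise of the same mean variance, and that comparison is false in general. With $\xi=\sigma\zeta$ and $\zeta\sim N(0,1)$, one has $\partial_\sigma\,\mathbb{E}(x+\sigma\zeta-\eta)^+=\phi\bigl((x-\eta)/\sigma\bigr)$. So the bracket gains from noise mainly near the threshold $\eta$. Noise on blocks far above $\eta$, where $(\cdot-\eta)^+$ is linear, adds almost nothing.

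A concrete case: take $r\sim N(0,1)$, $\Lambda=4$ and $\sigma_\xi(r)=10\cdot\mathbf{1}\{r\ge3\}$, which is non-decreasing in $r$. Evaluating the bracket at the old quantile $\eta_0\approx0.674$ gives an upper bound, since $\mathrm{CVaR}$ is a minimum over $\eta$; it shows the tail mean rises by at most about $0.016$. Homoskedastic noise with the same mean variance $0.135$ raises it by $\lambda(4)(\sqrt{1.135}-1)\approx0.08$. Any directional statement therefore needs a hypothesis on where the noise variance sits relative to the $(1-1/\Lambda)$-quantile, not monotonicity of $\sigma_\xi(\cdot)$ alone.
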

Part (iii) is the reading the experiments give the real series' excess over the Gaussian value (Experiment 2). By way of remedy the proposition mainly lists what cannot work, each for a reason it itself supplies: shrinkage and deconvolution help only once $\sigma_\xi^2 < s^2$, a boundary real daily data sit well past (Experiment 2); the studentised score discounts exactly the blocks the tail exists to read (Experiment 2, panel C); empirical Bayes reads a bad block that is also a noisy block as noise (Experiment 2, panels D--E, archived). What does move the bias is the level itself, in the direction opposite to the obvious one.

\begin{remark}[The noise-corrected level]\label{rem:noiselevel}
Under (ii) the noisy criterion at level $1/\Lambda'$ ranks candidates by $\bar r - \lambda(\Lambda')\sqrt{s^2+\sigma_\xi^2}$, whose sensitivity to the block-signal dispersion $s$ is flattened by the factor $1/\sqrt{1+\sigma_\xi^2/s^2}$: that is why the ranking drifts toward the mean's, and widening the tail beyond $1/\Lambda$ flattens it further. Restoring the penalty Theorem~\ref{thm:cvar} intends means taking the \emph{narrower} level $\Lambda' > \Lambda$ with $\lambda(\Lambda') = \lambda(\Lambda)\sqrt{1+\sigma_\xi^2/s^2}$.

On paper the correction is bounded twice over --- $\lambda(\Lambda')$ grows only like $\sqrt{2\log\Lambda'}$, and the contrast-to-noise ratio $\lambda(\Lambda')/c(\Lambda')$ falls from $0.9$ at $\Lambda' = 12$ to $0.4$ at $\Lambda' = 315$ --- and whether a corrected level exists at all is Proposition~\ref{prop:levelceiling}'s question: always under Gaussian noise, up to a ceiling under the heavy-tailed noise that proposition prices. The factor by which the prescription narrows $1/\Lambda$ is set by the noise ratio alone and grows fast in it: at $\Lambda = 4$, a ratio of one asks $\Lambda' = 12$, a ratio of five asks $315$, and past the ceiling no level answers. The level actually run is therefore one more declared constant (Remark~\ref{rem:whichell}), declared against the ratio the researcher's own block scores carry --- \S\ref{sec:calibration} declares it for the worked examples --- and the only lever against the bias of (ii) there is --- the block length, the obvious alternative, stops working at $b\approx \ell$, past which the block-signal dispersion itself falls as $b^{-1/2}$ (Proposition~\ref{prop:eiv} (ii)).

It is paid for in the budget's own unit: a search whose criterion is the $1/\Lambda'$ tail is charged $2c(\Lambda')^2\log N$ --- on Gaussian blocks $c(12)^2/c(4)^2 = 2.07$ --- and the affordability table of Corollary~\ref{cor:lamcap} carries the level as its third group; whether a real comparison is within budget at the nominal level and at the narrowed one is the worksheet's question (\S\ref{sec:calibration}).

One side condition the prescription needs and does not state: a tail mean at level $1/\Lambda'$ is an average of $B/\Lambda'$ blocks, so reading $k$ of them requires $B \ge k\Lambda'$, that is $n \ge k\Lambda' b$ --- forty-eight blocks at $\Lambda' = 12$ and $k = 4$. Below that the statistic is a minimum and not a tail mean, and $\Lambda'$ should be replaced by $\min(\Lambda', B/k)$. The remedy is a population result; what its record is on real libraries, where the side condition usually binds, is the experiments' measurement (Experiments 1, 3 and 7; Appendix~\ref{app:experiments}).
\end{remark}
The remark's prescription has a range, and the range is a theorem. What decides it is the criterion's \emph{sensitivity} to the dispersion it exists to read --- how fast the tail statistic moves when a candidate's block-signal dispersion $s$ moves --- and for a tail mean that sensitivity is an exact, computable object.

\begin{proposition}[The level's ceiling]\label{prop:levelceiling}
Let a candidate's block score be $r^{(b)} = \bar r + sG + \xi$ with $G\sim N(0,1)$, $s>0$, and $\xi$ independent of $G$ with $\mathbb{E}\xi = 0$ and $\mathbb{E}|\xi| < \infty$ --- no other assumption on its law. Write $X := sG+\xi$, $f$ for its density (which exists, is smooth and everywhere positive), $q_u$ for its upper-$u$ quantile, and $S(\Lambda') := \partial_s\,\mathrm{CVaR}_{1/\Lambda'}(r^{(b)})$ for the dispersion sensitivity of the criterion at level $1/\Lambda'$. Then:

\begin{enumerate}
\item[(i)] For every $\Lambda'>1$,
\[ S(\Lambda') = \Lambda' s f(q_{1/\Lambda'}). \]

\item[(ii)] If $\xi\sim N(0,\sigma_\xi^2)$, then $S(\Lambda') = \lambda(\Lambda')s/\sqrt{s^2+\sigma_\xi^2}$ --- Remark~\ref{rem:noiselevel}'s flattening factor exactly --- which increases without bound in $\Lambda'$: the corrected level, the solution of $S(\Lambda') = \lambda(\Lambda)$, exists for every target $\Lambda$.

\item[(iii)] If the upper tail of $\xi$'s density is regularly varying with index $-(\nu+1)$ for some $\nu>2$ --- Student-$t$ with $\nu$ degrees of freedom included --- then $f(x) \sim f_\xi(x)$ as $x\to \infty$ and
\[ S(\Lambda') = \frac{(\nu+o(1))\,s}{q_{1/\Lambda'}} = \Lambda'^{-1/\nu+o(1)} \;\to\; 0. \]
$S$ is continuous on $(1,\infty)$ and vanishes at both ends, so it attains a finite maximum $\bar\lambda := \max_{\Lambda'>1} S(\Lambda')$, invariant to scaling $(s,\xi)$ jointly and hence a functional of the law of $\xi/s$ alone --- in the Student-$t$ family, a function $\bar\lambda(\nu,\sigma_\xi/s)$ of the tail index and the noise-to-signal ratio. The corrected level exists if and only if $\lambda(\Lambda) \le \bar\lambda$; sufficiently narrow levels implement arbitrarily weak penalties; and, to first order in the candidates' dispersion about a reference $s$, the ranking the noisy tail at any level $\Lambda'$ induces is the noiseless ranking at the level $\Lambda_{\mathrm{eff}}$ solving $\lambda(\Lambda_{\mathrm{eff}}) = S(\Lambda')$ --- which no declaration can push past $\lambda^{-1}(\bar\lambda)$.
\end{enumerate}
\end{proposition}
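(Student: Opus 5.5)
We start from the Rockafellar--Uryasev form $\mathrm{CVaR}_\alpha(W) = \min_\eta\{\eta + \alpha^{-1}\mathbb{E}(W-\eta)^+\}$ with $\alpha = 1/\Lambda'$ and $W = \bar r + sG + \xi$. The minimiser is $\eta = q_{1/\Lambda'}$, the upper $1/\Lambda'$-quantile of $X = sG+\xi$ shifted by $\bar r$. Part (i) then follows from an envelope argument. Differentiating in $s$ with the minimiser frozen gives
\[ S(\Lambda') = \Lambda'\,\mathbb{E}\big[G\,\mathbf{1}\{X > q\}\big], \]
since the derivative of $(\cdot)^+$ contributes $\mathbf{1}\{X>q\}\,G$. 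Differentiation under the expectation is justified by dominated convergence, because $|G|$ is integrable and $\mathbb{E}|\xi|<\infty$ ensures the objective is finite. Next, Gaussian integration by parts (Stein's lemma) applies conditionally on $\xi$: $\mathbb{E}[G\,\phi(sG+\xi)] = s\,\mathbb{E}[\phi'(sG+\xi)]$, applied to $\phi = \mathbf{1}\{\cdot > q\}$ in the distributional sense. This gives $\mathbb{E}[G\mathbf{1}\{X>q\}] = s f(q)$. The existence, smoothness and positivity of $f$ come for free: $f(x) = \mathbb{E}[s^{-1}\varphi((x-\xi)/s)]$, a Gaussian mixture. That settles (i).

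For (ii) we substitute the Gaussian case. Here $X\sim N(0,s^2+\sigma_\xi^2)$, so $q = \sqrt{s^2+\sigma_\xi^2}\,\Phi^{-1}(1-1/\Lambda')$ and $f(q) = \phi(\Phi^{-1}(1-1/\Lambda'))/\sqrt{s^2+\sigma_\xi^2}$. Hence $S = \lambda(\Lambda')s/\sqrt{s^2+\sigma_\xi^2}$, by the definition of $\lambda$ in Theorem~\ref{thm:invariance}. Unboundedness follows from $\lambda(\Lambda')\sim\sqrt{2\log\Lambda'}$, the Mills-ratio asymptotics. Continuity and the intermediate value theorem, starting from $S(1^+)=0$, then give a solution of $S(\Lambda') = \lambda(\Lambda)$.

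Part (iii) is the main obstacle. The first step is the tail equivalence $f(x)\sim f_\xi(x)$, which we would prove by splitting $f(x) = \mathbb{E}[s^{-1}\varphi((x-\xi)/s)]$ into the region $|x-\xi|\le x^{1/2}$ and its complement. On the inner region, regular variation makes $f_\xi(x-y)/f_\xi(x)\to 1$ uniformly, which is the uniform convergence theorem for regularly varying functions. On the outer region, the Gaussian factor contributes $O(e^{-x/(2s^2)})$ plus a piece controlled by $\mathbb{P}(\xi > x/2)$ times a Gaussian tail. Both are $o(f_\xi(x))$ because $f_\xi$ decays only polynomially. Integrating the equivalence gives $\bar F_X(x)\sim x f_\xi(x)/\nu$, since the survival function is regularly varying with index $-\nu$ by Karamata. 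At $x = q$ we have $\bar F_X(q) = 1/\Lambda'$, so $\Lambda' f(q) = f(q)/\bar F_X(q) \sim \nu/q$, which yields $S = (\nu+o(1))s/q$. Inverting $\bar F_X(q)= q^{-\nu+o(1)}$ gives $q = \Lambda'^{1/\nu+o(1)}$, hence $S = \Lambda'^{-1/\nu+o(1)}\to 0$.

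The remaining claims are bookkeeping. As $\Lambda'\to1$, $q\to-\infty$ and $\Lambda' s f(q)\to 0$ because the density is integrable, so $f(q)\to 0$. Continuity comes from that of $f$ and of the quantile map, and together with the vanishing at both ends it guarantees that the maximum $\bar\lambda$ is attained. Scaling invariance holds because replacing $(s,\xi)$ by $(cs,c\xi)$ gives $q\mapsto cq$ and $f\mapsto f/c$, so $sf(q)$ is unchanged. The existence criterion follows from the intermediate value theorem on each side of the maximiser. For the first-order ranking statement, we expand $\mathrm{CVaR}_{1/\Lambda'}(r^{(b)})$ in $s$ about the reference value with slope $S(\Lambda')$. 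We then compare with the noiseless criterion $\bar r + \lambda(\Lambda)s$ of Theorem~\ref{thm:invariance}, whose slope is $\lambda$, and match the slopes. Since $S\le\bar\lambda$, no choice of $\Lambda'$ pushes $\Lambda_{\mathrm{eff}}$ past $\lambda^{-1}(\bar\lambda)$.
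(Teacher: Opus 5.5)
Your proposal is correct and follows the paper's proof essentially step for step: for (i), Rockafellar--Uryasev plus an envelope (Danskin) argument, where your distributional Stein's lemma is the paper's conditioning on $\xi$ with $\mathbb{E}[G\mathbf{1}\{G\ge a\}]=\varphi(a)$; for (ii), direct substitution; and for (iii), tail equivalence plus Karamata, with your uniform-convergence split of the Gaussian mixture standing in for the paper's dominated convergence with Potter's bounds. One small repair: integrability of $f$ alone does not force $f(q)\to 0$ as $q\to-\infty$, but dominated convergence on $f(x)=\mathbb{E}[s^{-1}\varphi((x-\xi)/s)]$ does, and so does the uniform continuity of $f$.
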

Remark~\ref{rem:noiselevel}'s levels are this curve's arithmetic under Gaussian noise, and where real block scores sit on it --- on which side of the ceiling, and how much of the lever any level collects there --- is \S\ref{sec:calibration}'s measurement. The reading of \S\ref{sec:limits} does not wait for it: past the ceiling the shortfall is a population fact of the tail-mean criterion class, not an estimation error awaiting a better estimator, and what stays open is a criterion outside that class, not a patch inside it.

\subsection{The mixing residual as a declaration}\label{sec:scorecoh}
Term (iii) of Theorem~\ref{thm:deployed} is paid at the process level, by a coefficient of the \emph{latent} joint process that no sample reads (Remark~\ref{rem:blocklen}). But the two clauses of Axiom~\ref{ax:persistence} reappear one level up, at the objects the criterion actually touches: the block scores have their own coherence time --- $\tau_\psi$, the factor on $\varsigma_\Lambda^2$ in the dependent form of Proposition~\ref{prop:tailprice}'s limit --- and their own decoupling coefficient at one block of separation. Both live at the scale $b$ the criterion is computed at, and both are objects a sample bounds from below (Experiment 1; Appendix~\ref{app:experiments}). The next statement replaces the residual with that pair.

\begin{proposition}[The deployed-risk bound at a declared score coherence]\label{prop:scorecoh}
Let the state be stationary, fix the scale $b \ge \ell^*$, write $\hat r_1(h),\dots,\hat r_B(h)$ for the block scores of \eqref{eq:blockscore}, bounded by $M$, and $\mathrm{CVaR}^{(b)}(h)$ for the tail mean of the stationary score law, as in Theorem~\ref{thm:deployed}.

(i) \emph{What the coherence time alone buys.} Fix one $h$, write $s_\psi^2 := \mathrm{Var}(\hat r_j(h))$, and declare a \textbf{score coherence time} $\tau_\psi$: for every threshold $\eta \in [-M,M]$ the clipped sequence $\{(\hat r_j(h)-\eta)^+\}_{j\le B}$ has $1 + 2\sum_{k\ge1}|\mathrm{Corr}| \le \tau_\psi$ --- at the tail's own threshold this is the object of Proposition~\ref{prop:tailprice}, since $\max(\hat r_j, u) = u + (\hat r_j - u)^+$. Then for every $\delta \in (0,1)$ and every integer $G \ge 1$, with probability at least $1-\delta$,
\[ \big|\hat{\mathrm{CVaR}}_{1/\Lambda}(h) - \mathrm{CVaR}^{(b)}(h)\big| \;\le\; \Lambda\Big[\, s_\psi\sqrt{(G+1)\,\tau_\psi/(B\delta)} \;+\; 2M/G \,\Big], \]
and wherever $2\Lambda\big(2M s_\psi^2\,\tau_\psi/(B\delta)\big)^{1/3} \le \Lambda M/2$, the choice $G = \lceil (4M/a)^{2/3} \rceil$ with $a := s_\psi\sqrt{\tau_\psi/(B\delta)}$ gives
\[ \big|\hat{\mathrm{CVaR}}_{1/\Lambda}(h) - \mathrm{CVaR}^{(b)}(h)\big| \;\le\; 2\Lambda\,\big(2M s_\psi^2\,\tau_\psi/(B\delta)\big)^{1/3}. \]
The declaration is second-order, and second order is what it buys: the confidence enters at $\delta^{-1/3}$ and the rate is $B^{-1/3}$ --- Chebyshev's, not McDiarmid's.

(ii) \emph{What the decoupling declaration buys.} Declare a \textbf{score decoupling coefficient} $\alpha_\psi$ at one block of separation: the two couplings of Corollary~\ref{cor:blocks} (a) hold with $\alpha_\psi$ in place of $\beta(b)$. Then Theorem~\ref{thm:deployed} holds verbatim with term (iii) replaced by $4MB\alpha_\psi$, fluctuation and bias unchanged: the $B^{-1/2}$ rate and the $\log 2N$ are the coupling declaration's to keep, not the coherence time's. Always $\alpha_\psi \le \beta(b)$ --- the scores generate coarser $\sigma$-algebras than the process segments they are computed from --- so the exponential instance implies the declaration at $\alpha_\psi = \beta_0 e^{-b/\ell^*}$ and Theorem~\ref{thm:deployed} as stated is the instance case; the declaration is the form a sample can be read against.

(iii) \emph{The dictionary on the reference chain.} On the redraw chain of Lemma~\ref{lem:effsample} the block risks $R_j$ have, for $k \ge 1$ and $\rho = 1-1/\ell$,
\[ \mathrm{Cov}(R_1, R_{1+k}) \;=\; \mathrm{Var}_\nu(r_h)\,\rho^{(k-1)b+1}\big(\ell(1-\rho^b)/b\big)^2, \]
so their coherence time is \emph{exactly} $(2\ell-1)/v(b,\ell)$ --- Lemma~\ref{lem:effsample}'s kernel again, and the ratio Proposition~\ref{prop:defectlevel} prices the reassignment null with: $2.70$ at $b = \ell = 60$, $1.25$ at $b = 5\ell$, nearly free of $\ell$. With noise independent across blocks the coherence time of the \emph{unclipped} scores is $1 + \big((2\ell-1)/v(b,\ell) - 1\big)\,s_R^2/(s_R^2+\sigma_\xi^2)$, $s_R^2$ the block risks' variance and $\sigma_\xi^2$ the block noise of Proposition~\ref{prop:eiv} --- the chain's value diluted by the regime share of the score variance: at a share of an eighth and $(\ell,b) = (40,60)$ it is $1.13$, the order a real library reads (Experiment 1; Appendix~\ref{app:experiments}). And there $\alpha_\psi \le \beta(b) \le (1-1/\ell)^b$: $0.36$ at $b = \ell$, $0.006$ at $b = 5\ell$ --- the instance delivers its own declaration only at the log-long blocks of Remark~\ref{rem:blocklen}, that remark read backwards, while what a real series delivers at the $b$ in use is read off its own scores, not off a rate.
\end{proposition}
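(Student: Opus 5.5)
The proof splits along the proposition's three parts; each uses a different tool.

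For (i), I would use the Rockafellar--Uryasev form of Lemma~\ref{lem:robust}: $\mathrm{CVaR}_{1/\Lambda}(W)=\min_\eta\{\eta+\Lambda\,\mathbb{E}(W-\eta)^+\}$, with the minimum attained at a quantile in $[-M,M]$. The plug-in is the same minimum taken under the empirical law of the $B$ scores. Hence
\[ |\hat{\mathrm{CVaR}}_{1/\Lambda}-\mathrm{CVaR}^{(b)}|\le\Lambda\sup_{\eta\in[-M,M]}\Big|\tfrac1B\textstyle\sum_j(\hat r_j-\eta)^+-\mathbb{E}(\hat r-\eta)^+\Big|. \]
I would control this supremum by a grid argument. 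Put $G+1$ equally spaced thresholds $\eta_0,\dots,\eta_G$ on $[-M,M]$, with spacing $2M/G$. The map $\eta\mapsto(x-\eta)^+$ is $1$-Lipschitz, so the empirical and population terms each move by at most $M/G$ between neighbouring grid points. That gives the $2M/G$ term. At each grid point, the declared coherence time bounds the variance of the clipped mean by $s_\psi^2\tau_\psi/B$; clipping does not raise the variance, since $x\mapsto(x-\eta)^+$ is $1$-Lipschitz. A Chebyshev bound at confidence $\delta/(G+1)$, followed by a union bound over the $G+1$ points, gives the deviation $s_\psi\sqrt{(G+1)\tau_\psi/(B\delta)}$.

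The $B^{-1/3}$ rate comes from optimising over $G$. Write $a\sqrt{G}+2M/G$ with $a$ as in the statement, and use $G+1\le 2G$. Balancing the two terms at $G\approx(4M/a)^{2/3}$ gives order $(M a^2)^{1/3}=(M s_\psi^2\tau_\psi/(B\delta))^{1/3}$. The stated side condition guarantees that the rounded $G$ is at least $1$ and that the ceiling costs no more than a constant, so the constant $2\cdot2^{1/3}$ can be checked from this. I expect this constant bookkeeping to be the only fiddly part of (i).

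For (ii), I would reread the proof of Theorem~\ref{thm:deployed}. There $\beta(b)$ enters only through the coupling of Corollary~\ref{cor:blocks}~(a): each parity class is replaced by independent copies at total-variation cost $\lceil B/2\rceil\beta(b)$ per class, which yields $4MB\beta(b)$ for a statistic bounded by $2M$. The fluctuation term (McDiarmid with bounded differences $2\Lambda M/B$) and the bias term are computed on the coupled independent sample. Swapping in the declared $\alpha_\psi$ therefore changes only the residual. For the inequality $\alpha_\psi\le\beta(b)$, I would use that the scores are measurable with respect to the $\sigma$-algebras of the block segments. The $\beta$-coefficient is monotone under passing to sub-$\sigma$-algebras, and Berbee's coupling attains it, so the coupling defect of the scores is at most that of the segments.

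Part (iii) is the main computation. On the redraw chain, $\mathrm{Cov}(r(Z_s),r(Z_t))=\mathrm{Var}_\nu(r)\rho^{|t-s|}$. Write $R_1=b^{-1}\sum_{i\le b}r(Z_i)$ and $R_{1+k}=b^{-1}\sum_{i\le b}r(Z_{kb+i})$. Summing the geometric double sum factorises it as
\[ \rho^{(k-1)b+1}\Big(b^{-1}\sum_{i=0}^{b-1}\rho^i\Big)^2 \]
up to one power of $\rho$, and I would verify the stated exponent by indexing the lags as $kb+j-i$. The inner sum is $\ell(1-\rho^b)/b$. Summing over $k\ge1$ gives a geometric series in $\rho^b$. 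Adding the variance $\mathrm{Var}(R_1)=\mathrm{Var}_\nu(r)v(b,\ell)/b$ from Lemma~\ref{lem:effsample} then yields the coherence time $1+2\sum_k\mathrm{Corr}$. I expect the identity with $(2\ell-1)/v(b,\ell)$ to be the main obstacle. I would prove it by noting that $b\sum_{|k|<\infty}\mathrm{Cov}(R_1,R_{1+k})$ equals the full integrated autocovariance $(2\ell-1)\mathrm{Var}_\nu(r)$ of the period series, which holds because block averages of a stationary sequence preserve the long-run variance. This avoids the algebra.

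The diluted value follows by adding block noise that is independent across blocks. The noise raises the lag-$0$ variance to $s_R^2+\sigma_\xi^2$ and leaves the cross-block covariances unchanged, which gives the stated formula. Finally, $\beta(b)\le(1-1/\ell)^b$ holds because the redraw chain couples from the first redraw, and by lag $b$ a redraw has occurred except with probability $\rho^b$. The numerical values are then direct evaluation.
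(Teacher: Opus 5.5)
Your plan follows the paper's proof in all three parts: the Rockafellar--Uryasev reduction to a uniform deviation of the clipped means over $\eta\in[-M,M]$, Chebyshev with a union bound over $G+1$ grid points plus the $2M/G$ interpolation term, the sub-$\sigma$-algebra argument for $\alpha_\psi\le\beta(b)$, the geometric double sums, and the regeneration coupling. The first display of (i) comes out exactly as you describe.

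\textbf{The gap is the constant in the second display of (i).} Bounding $G+1\le 2G$ costs a factor $\sqrt2$ on the Chebyshev term, and the statement cannot absorb it. Write $u:=(2Ma^2)^{1/3}$, so the claim is $2\Lambda u$, and $g_0:=(4M/a)^{2/3}$. Then $a\sqrt{g_0}=2^{1/3}u$ and $2M/g_0=2^{-2/3}u$. Even the unrounded balance therefore gives $(2^{1/3}+2^{-2/3})u\approx1.89u$, which leaves only about $0.11u$ for the ceiling and the $+1$ together. With $G+1\le2G$ your bound at $G=g_0$ reads $(2^{5/6}+2^{-2/3})u\approx2.41u$. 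No choice of $G$ brings $\sqrt2\,a\sqrt G+2M/G$ below $3(Ma^2)^{1/3}\approx2.38u$.

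This also means you have misread the side condition. It is not there to make $G\ge1$, which is automatic for the ceiling of a positive number. It is there to make $G$ large: $2\Lambda u\le\Lambda M/2$ is equivalent to $a\le M/(8\sqrt2)$, hence $g_0\ge 2^{11/3}>12.6$. Then $G=\lceil g_0\rceil$ gives $G+1\le g_0+2\le g_0(1+2^{-8/3})$ and $2M/G\le 2M/g_0$. The total is $(2^{1/3}\sqrt{1+2^{-8/3}}+2^{-2/3})u\approx1.986\,u\le 2u$. The margin is that thin, so the $+1$ must be charged relative to $g_0$, not doubled.

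Two smaller points.

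In (ii), the residual $4MB\beta(b)$ of Theorem~\ref{thm:deployed} does not come from one coupling of a statistic bounded by $2M$. It comes from two couplings: one for the deviation of the plug-in from its path-conditional mean, and one comparing the path's scores with an independent stationary sample. Each concerns a statistic bounded by $M$ that moves by at most $2M$ off an event of probability at most $B\beta(b)$, so each costs $2MB\beta(b)$. Your total is right by the wrong count. The declaration has to replace $\beta(b)$ in both places, which is why the statement says ``the two couplings''.

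In (iii), your route to the coherence time differs from the paper's and is valid. The paper sums the explicit covariances to the correlation sum $2\rho\ell^2(1-\rho^b)/(b\,v(b,\ell))$ and recognises the numerator as $(2\ell-1)-v(b,\ell)$ from the closed form of $v$. You instead use an exact identity for block means of a stationary sequence with summable autocovariance $\gamma$: $\sum_{k\in\mathbb{Z}}\mathrm{Cov}(R_1,R_{1+k})=b^{-1}\sum_{h\in\mathbb{Z}}\gamma(h)$. It holds because, for each $i$, the map $(k,j)\mapsto kb+j-i$ is a bijection of $\mathbb{Z}\times\{1,\dots,b\}$ onto $\mathbb{Z}$.
\begin{itemize}
\item \emph{What it buys:} generality. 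On any stationary chain, the block means' signed coherence time is the period series' integrated autocorrelation time divided by $b\,\mathrm{Var}(R_1)/\gamma(0)$.
\item \emph{What it costs:} you must check the correlations are nonnegative to match the statement's $|\mathrm{Corr}|$. On the redraw chain they are.
\end{itemize}
Your hedge ``up to one power of $\rho$'' is unnecessary: $\sum_{i=1}^b\rho^{-i}\sum_{j=1}^b\rho^{j}=\rho^{1-b}\ell^2(1-\rho^b)^2$ exactly, which times $\rho^{kb}$ gives the stated exponent.
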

The proposition is a finite-$B$ bound in place of term (iii), and what it moves is the address of the assumption: nothing quantitative is asked of the latent process's $\beta$-coefficients --- the rate lives where the sample can contradict it, on the criterion's own objects, beside the embargo's predictability coefficient (Proposition~\ref{prop:embargo} (ii)). What it does not do is certify a declaration: the sample bounds $\tau_\psi$ and $\alpha_\psi$ from below, as it bounds every declared coherence (\S\ref{sec:empirical}), and part (i)'s polynomial confidence is the honest price of declaring only second-order structure --- the exponential form is bought by (ii)'s coupling, not by $\tau_\psi$.

\section{Sizing: robust Kelly and the half-Kelly fixed point}\label{sec:sizing}

\begin{proposition}[Robust fractional Kelly]\label{prop:kelly}
Let the edge estimate satisfy $\hat{\mu} = \mu + \tau\sigma\,\xi$ with $\xi\sim N(0,1)$ independent of $\mu$, and let the prior over edges induced by the strategy universe be $\mu\sim N(0,s^2\sigma^2)$. A bettor taking $h = f\hat{\mu}/\sigma^2$ attains expected log growth $g(f)= fs^2 - \frac{1}{2}f^2(s^2 + \tau^2)+ O(f^2 s^4 + f^3(s^2 + \tau^2)^{3/2})$, maximised at the growth-optimal fraction of the \citet{kelly1956new} solution,
\begin{equation}\label{eq:kelly}
f^\star = \frac{s^2}{s^2 + \tau^2}.
\end{equation}
Adding Axiom~\ref{ax:recurrence} replaces $\hat{\mu}$ by $\hat{\mu} - \lambda(\Lambda)\,\mathrm{sd}_{\mathrm{blk}}(\hat{\mu})$, where the dispersion charged is the larger of the one observed and the one the researcher has declared,
\[ \mathrm{sd}_{\mathrm{blk}}(\hat{\mu}) := \max\big(\mathrm{sd}_{\mathrm{blk}}^{\mathrm{obs}}(\hat{\mu}), \; \kappa\rho\sqrt{v(b,\ell)/b}\big)\;; \]
adding Definition~\ref{def:reflexivity} replaces it by $\delta\hat{\mu} - c'$.
\end{proposition}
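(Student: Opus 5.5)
The plan is to prove the growth expansion first and then read the two modifications off earlier results. For the growth expansion, I start from the per-period log growth of a position $h$ on a return $Y$ with conditional mean $\mu$ and variance $\sigma^2$. Expanding $\log(1+hY)$ to second order gives
\[ \mathbb{E}\log(1+hY) = h\mu - \tfrac12 h^2(\sigma^2+\mu^2) + O(\mathbb{E}|hY|^3). \]
I substitute $h = f\hat\mu/\sigma^2$ and take expectations over the joint law of $(\mu,\xi)$. With $\mu\sim N(0,s^2\sigma^2)$ and $\hat\mu = \mu + \tau\sigma\xi$ independent, the linear term is $f\,\mathbb{E}[\hat\mu\mu]/\sigma^2 = fs^2$. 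The quadratic term is $\tfrac12 f^2\,\mathbb{E}[\hat\mu^2]/\sigma^2 = \tfrac12 f^2(s^2+\tau^2)$. The $\mu^2$ piece contributes $\tfrac12 f^2\,\mathbb{E}[\hat\mu^2\mu^2]/\sigma^4$, which is $O(f^2 s^2(s^2+\tau^2))$; I will state this remainder carefully and fold the part that is genuinely $O(f^2s^4)$ into the displayed error, noting that the $s^2\tau^2$ cross piece is absorbed because $\tau^2$ is itself of order $s^2$ or smaller in the regime the statement lives in, where both are $O(\rho^2)$ by Axiom~\ref{ax:snr}. The cubic remainder comes from Gaussian moments: $\mathbb{E}|\hat\mu/\sigma|^3\,\mathbb{E}$-scale $\lesssim (s^2+\tau^2)^{3/2}$, times $f^3$, with $|Y|/\sigma$ contributing a bounded constant under a moment assumption. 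Maximising the quadratic $fs^2 - \tfrac12 f^2(s^2+\tau^2)$ then gives \eqref{eq:kelly}, and it is unique because the quadratic is strictly concave.

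For the robust replacement, I invoke Theorem~\ref{thm:invariance}. Under Axiom~\ref{ax:recurrence} the worst admissible deployed mean reward is the $1/\Lambda$ lower-tail mean of the block edges. In the Gaussian range this equals $\hat\mu - \lambda(\Lambda)\,\mathrm{sd}_{\mathrm{blk}}$, by Remark~\ref{rem:orientation} applied to rewards. Maximising growth against that worst case is therefore the same programme with $\hat\mu$ replaced by its penalised value. The $\max$ in the charged dispersion comes from two sources. The observed dispersion is what the plug-in reads. The declared floor $\kappa\rho\sqrt{v(b,\ell)/b}$ is $\mathrm{sd}_{\mathrm{blk}}(b)$ of Proposition~\ref{prop:scale}, expressed in Sharpe units with $\rho_{\mathrm{blk}}=\kappa\rho$. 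By Proposition~\ref{prop:eiv}(i) and Proposition~\ref{prop:lattice}(iii), the observed dispersion can fall short of the one the declared state carries, because coarsening is a convex-order contraction. Either is therefore an admissible lower bound on the dispersion the tail must charge, so charging the larger one keeps the bound valid.

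For the decay and cost replacement, I use Proposition~\ref{prop:reflexive}(ii) directly. Under Definition~\ref{def:reflexivity} the deployed conditional mean is $\delta\mu - c$, so the same computation applies with $\hat\mu\mapsto\delta\hat\mu - c'$. Here $c'$ is the cost expressed per unit of edge, using the marginal cost at the chosen scale, which is legitimate because $c$ is convex with $c(0)=0$.

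The main obstacle is the remainder bookkeeping in the growth expansion. The statement's error $O(f^2s^4)$ does not obviously cover the $\mathbb{E}[\hat\mu^2\mu^2]$ cross term, and the cubic term requires a moment bound on $Y$ beyond Gaussianity of $\hat\mu$. I would try to resolve this by assuming $|hY|<1$ on the relevant range, which is also needed for $\log(1+hY)$ to be defined, and by working to first order in $\rho$ as Lemma~\ref{lem:sharpe} does.
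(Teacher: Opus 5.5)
Your route is the paper's: expand $\log(1+hY)$, take Gaussian moments, maximise the quadratic part, and read the two replacements off Theorem~\ref{thm:invariance} and Proposition~\ref{prop:reflexive} (ii). Your moments are right, and sharper than the paper's own proof, which calls the second-order term exactly $\frac12 f^2(s^2+\tau^2)+f^2s^4$. Your $\mathbb{E}[\hat\mu^2\mu^2]=\sigma^4(3s^4+s^2\tau^2)$ gives instead $\frac12 f^2(s^2+\tau^2)+\frac32 f^2s^4+\frac12 f^2s^2\tau^2$.

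Your way of absorbing the cross piece fails, however. Axiom~\ref{ax:snr} caps $s^2$, the spread of true edges across the pool. It says nothing about the estimation variance $\tau^2\ge d/n_{\mathrm{fit}}$. And $\tau^2>s^2$ is exactly the over-budget case $\varkappa>1$ on which Corollary~\ref{cor:halfkelly} and Proposition~\ref{prop:fullkelly} turn, so you may not assume $\tau^2\lesssim s^2$. Nor do you need to. The two extra pieces together equal $\frac12 f^2(s^2+\tau^2)\,\theta$ with $\theta=(3s^4+s^2\tau^2)/(s^2+\tau^2)\in[s^2,3s^2]$. They therefore rescale the quadratic coefficient by $1+O(s^2)$ and move its maximiser to $f^\star/(1+\theta)=f^\star(1+O(s^2))$ for every $\tau$. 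Moreover, $f^2s^2\tau^2\le f^3(s^2+\tau^2)^{3/2}$ whenever $f\ge f^\star\sqrt{s^2+\tau^2}$. So the displayed remainder is literally valid on a range containing that maximiser when $s^2+\tau^2$ is small; it fails only for $f\ll f^\star$ when $\tau\gg s$. Assuming $|hY|<1$ is not available either: with $\hat\mu$ Gaussian, $hY\le-1$ has positive probability. The expansion can only be a formal moment expansion, which is how the paper uses it (``within its quadratic approximation'').

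The second problem is your derivation of the $\max$. The paper does not derive it. Its proof applies Theorem~\ref{thm:invariance} ``at the larger of the observed and declared dispersion'', and the text reads this as a prescription. The observed block-score dispersion counts score noise as regime dispersion, which is the safe direction. It is floored at what stage S1 committed to by declaring $\rho_{\mathrm{blk}}=\kappa\rho$, the floor Corollary~\ref{cor:invbudget} then relies on.

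The two results you cite do not support your version:
\begin{itemize}
\item Proposition~\ref{prop:eiv} (i) points the other way. Conditionally mean-zero noise raises the tail, and the variance, of the scores above those of the block risks. So in population the observed entry overstates rather than understates the block-risk dispersion.
\item Proposition~\ref{prop:lattice} (iii)'s convex-order contraction runs from the state to the blocks. But the declared entry $\kappa\rho\sqrt{v(b,\ell)/b}$ is, as you note yourself, already the block-scale $\mathrm{sd}_{\mathrm{blk}}(b)$ of Proposition~\ref{prop:scale}. That contraction therefore does not separate the two entries. What can put the observed entry below the declared one is sampling over $B$ blocks, or a declaration more conservative than the data.
\end{itemize}
Your closing inference is coherent only if each entry is read as a floor the charge is required to meet. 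Read as lower bounds on the true dispersion, charging their maximum would still undercharge and overstate the robust edge. That floor reading is a declaration, which is exactly what the paper takes it to be, not a consequence of those two propositions.
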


Equation \eqref{eq:kelly} is a Bayesian shrinkage statement and is not new; the connection to the budget is. The penalty charges the dispersion \emph{as observed} --- score noise counted as if it were regime dispersion, the safe direction --- floored at the one declared: having declared $\rho_{\mathrm{blk}} = \kappa\rho$ at S1, the researcher has committed to that much dispersion whether or not the sample shows it. Whether the floor ever binds on a real ledger is the experiments' measurement (Experiments 1--5), and the worksheet prints the readings side by side (\S\ref{sec:calibration}).

\begin{corollary}[The Kelly fraction is at most one over one plus budget utilisation]\label{cor:halfkelly}
In Sharpe units, a strategy with $d$ effective degrees of freedom estimated on $n$ periods has estimation variance $\tau^2 \ge d/n_{\mathrm{fit}}$, $n_{\mathrm{fit}} = n/H$ the capacity sample of Theorem~\ref{thm:capacity}, with equality for an efficient estimator, while the prior variance of the edge across the pool is $s^2 \le \rho^2$ by Axiom~\ref{ax:snr}, with equality only if the pool sits at the ceiling. Writing
\[ \varkappa := \frac{d}{\rho^2 n_{\mathrm{fit}}} \]
for the fraction of the capacity budget the researcher has spent, we have $\tau^2/s^2 \ge \varkappa$ and \eqref{eq:kelly} becomes
\begin{equation}\label{eq:kappakelly}
f^\star \le \frac{1}{1+\varkappa},
\end{equation}
with equality in the efficient, at-the-ceiling case. A researcher who spends her budget in full, $\varkappa = 1$, bets at most half Kelly; and at that point $g(1)= \frac{1}{2}(s^2 - \tau^2)\le 0$, so betting \emph{full} Kelly on one's own estimate has non-positive expected log growth, zero only in the equality case (Proposition~\ref{prop:fullkelly}).
\end{corollary}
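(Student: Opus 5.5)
The plan is to combine the shrinkage identity \eqref{eq:kelly} of Proposition~\ref{prop:kelly} with two one-sided bounds, one on $\tau^2$ and one on $s^2$. Then I evaluate the growth function at $f=1$. Nothing beyond the second-order expansion of $g(f)$ is needed.

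\textbf{Step 1: the estimation variance.} For $\tau^2 \ge d/n_{\mathrm{fit}}$, read $\tau^2$ in Sharpe units as the variance of the fitted edge about its population value. Under the overlapping-label model of Theorem~\ref{thm:capacity} the fitted rule carries $d$ effective parameters, each estimated on $n_{\mathrm{fit}}=n/H$ effectively independent labels. Proposition~\ref{prop:minimax} then gives the lower bound $d/n_{\mathrm{fit}}$ on the excess risk of any estimator. Because the excess risk of the least-squares fit is exactly the mean-square error of the edge estimate in Sharpe units, the same quantity bounds $\tau^2$ from below. The efficient estimator, least squares in the $d$-dimensional linear class, attains it, which gives the equality clause.

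\textbf{Step 2: the prior variance.} For $s^2 \le \rho^2$, the edges across the pool are conditional means of the label given features. By the feature form of Axiom~\ref{ax:snr}, $\mathbb{E}[\mu^2]\le\rho^2\sigma^2$ for every admissible mixture, so a pool-wide prior $N(0,s^2\sigma^2)$ has $s^2\le\rho^2$. Equality holds only when the pool attains the ceiling.

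\textbf{Step 3: the bound and the full-Kelly value.} Combining Steps 1 and 2 gives $\tau^2/s^2\ge\varkappa$. Since $f^\star=1/(1+\tau^2/s^2)$ is decreasing in $\tau^2/s^2$, this yields \eqref{eq:kappakelly}. Substituting $f=1$ into the leading terms of $g$ gives $g(1)=s^2-\tfrac12(s^2+\tau^2)=\tfrac12(s^2-\tau^2)$. At $\varkappa=1$ we have $\tau^2\ge d/n_{\mathrm{fit}}=\rho^2\ge s^2$, so $g(1)\le0$, with equality only where both inequalities are tight.

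\textbf{Main obstacle.} The delicate step is Step 1. Identifying the Bayesian noise $\tau^2$ of Proposition~\ref{prop:kelly} with the minimax excess risk requires the $d$-parameter estimation error to project onto the one-dimensional edge direction with its full variance. My first attempt will be to work in the linear Gaussian model of Proposition~\ref{prop:minimax} and compute the out-of-sample Sharpe of $\hat\mu$ directly. If that fails, I will state the corollary as holding in that model, as Theorem~\ref{thm:capacity} already does. I will also note that the $O(f^2s^4)$ remainder in $g$ is of relative order $\rho^2$ and cannot flip the sign of $g(1)$ except in the equality case.
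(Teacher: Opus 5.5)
Your Steps 2 and 3 match the paper's argument, but Step 1 relies on the wrong lower bound and does not deliver $\tau^2 \ge d/n_{\mathrm{fit}}$. Proposition~\ref{prop:minimax} is a minimax statement. It bounds $\sup_{\|\theta\|\le r}\mathbb{E}_\theta\|\hat\theta-\theta\|^2$ from below by $\min\{\sigma^2 d/(8N), r^2/2\}$, with $24$ in place of $8$ on overlapping labels. So at best it gives $\tau^2 \ge d/(8n_{\mathrm{fit}})$ at some least-favourable $\theta$. That yields only $f^\star \le 1/(1+\varkappa/8)$ at that parameter, and nothing at the edge actually drawn from the pool's prior.

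Worse, the floor you assert for ``any estimator'' is false, because shrunk estimators go below it inside the ball. Ridge with any positive penalty beats least squares near $\theta = 0$. When $\varkappa > 1$, the rule $\hat\theta = 0$ has error $\|\theta\|^2/\sigma^2 \le \rho^2 < \varkappa\rho^2 = d/n_{\mathrm{fit}}$ at every admissible $\theta$. This is why the corollary is about the \emph{unshrunk} estimate (Remark~\ref{rem:kappa}). The model of Proposition~\ref{prop:kelly}, $\hat\mu = \mu + \tau\sigma\xi$ with $\xi$ independent of $\mu$, already presupposes an unbiased estimate, and a shrunk estimate is bet in full.

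The missing idea is the Cramér--Rao (Gauss--Markov) bound for unbiased estimators, which is the paper's route (Proposition~\ref{prop:fullkelly} (ii), the floor of Proposition~\ref{prop:ensemble}). In the model of Proposition~\ref{prop:minimax} the design's law does not depend on $\theta$. So the Fisher information of the $N = n_{\mathrm{fit}}$ pairs is $N I_d/\sigma^2$, and every unbiased $\hat\theta$ satisfies $\mathbb{E}\|\hat\theta-\theta\|^2 = \mathrm{tr}\,\mathrm{Cov}(\hat\theta) \ge \sigma^2 d/N$. This holds at every $\theta$, with constant one, and least squares attains it to first order in $d/N$. With this in place of your Step 1, Steps 2 and 3 go through as you wrote them.

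Two smaller corrections. First, the obstacle you flag is not one. $\tau^2$ enters $g$ as the whole excess prediction risk $\mathbb{E}\|\hat\theta-\theta\|^2/\sigma^2$, because the PnL's second moment picks up the full trace, not a one-dimensional projection. Second, your closing remark overreaches. The remainder of $g$ also carries the cubic term $O(f^3(s^2+\tau^2)^{3/2})$, not only $O(f^2s^4)$. It can reverse the sign of $g(1)$ whenever $\tau^2 - s^2$ is of the remainder's order, not only at equality. So $g(1)\le 0$ holds within the quadratic approximation, which is how the paper states it.
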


\begin{remark}[Saturation is a ceiling, not an optimum]\label{rem:kappa}
Theorem~\ref{thm:capacity} does not say $\varkappa = 1$ is optimal: \eqref{eq:ceiling} is the point at which the class stops improving on $h\equiv 0$, and the optimal $d$ is interior, so the utilisation an optimising researcher runs at is below one and the cap on $f^\star$ sits above one half. The derived object is the bound \eqref{eq:kappakelly}, not the number $\frac{1}{2}$: \emph{half Kelly is the most a researcher who has spent her whole budget should bet}. That the industry's revealed fraction is near one half is evidence about how close competent pipelines run to saturation, prediction P5 of \S\ref{sec:empirical}.

One more boundary, because stages S2 and S5 of \S\ref{sec:canonical} would otherwise charge for it twice. The fraction $f^\star = s^2/(s^2 + \tau^2)$ is exactly the factor by which the posterior mean shrinks the estimate, so betting $f^\star$ Kelly on the unshrunk estimate \emph{is} betting full Kelly on its Bayes shrinkage: the shrinkage of S2 and the fraction of S5 are one decision. If S2 already outputs $\tilde{\mu} = \zeta\hat{\mu}$ for a shrinkage factor $\zeta \in (0,1]$, the growth-optimal fraction \emph{of $\tilde{\mu}$} is $s^2/(\zeta(s^2 + \tau^2))$, which is $1$ at $\zeta = f^\star$; applying \eqref{eq:kappakelly} to $\tilde{\mu}$ as well shrinks twice, in the safe direction but slack. ``Half Kelly'' is half Kelly \emph{on an unshrunk, OLS-type estimate}, equivalently full Kelly on the posterior; Proposition~\ref{prop:fullkelly} (ii) and prediction P5 are about that estimate.
\end{remark}

The half-Kelly convention \cite{maclean2011kelly} is usually defended by drawdown tolerance or parameter uncertainty in the abstract. Corollary~\ref{cor:halfkelly} converts it into a measurement, because $\varkappa$ is computable from the worksheet of \S\ref{sec:calibration}: a researcher who bets full Kelly is asserting $\varkappa \approx 0$, that she has spent almost none of her budget --- checkable, and usually false.

\subsection{Deploy an ensemble, not a fit}\label{sec:ensemble}

Corollary~\ref{cor:halfkelly}'s equality case assumes $\tau^2 = d/n_{\mathrm{fit}}$, the estimation variance of an \emph{efficient} estimator. Practical learners are not efficient, and one of them is conspicuously not.

\begin{proposition}[Averaging over training perturbations spends no budget]\label{prop:ensemble}
Let $\hat{h}^{(1)},\dots,\hat{h}^{(m)}$ be fits of the same hypothesis class, under the same hyperparameters, to $m$ perturbations of the training set (leave-one-out, subsample, or seed) in the manner of \citet{breiman1996bagging}, and let $\bar{h} = \frac{1}{m}\sum_j \hat{h}^{(j)}$. Then $\bar{h}$ has the same $\log N$ as any single member --- no data-dependent comparison was added --- and the same $d_{\mathrm{eff}}$ in the sense of the covariance trace of \S\ref{sec:calibration}, which is the quantity \eqref{eq:ceiling} constrains (this is \emph{not} a claim that the convex hull of the class is no richer than the class; for trees it is strictly richer), while its estimation variance is $\tau_{\mathrm{ens}}^2 = \tau_\infty^2 + (\tau^2 - \tau_\infty^2)/m$, where $\tau_\infty^2$ is the variance that survives averaging. Ensembling therefore raises $f^\star$ of \eqref{eq:kelly} without consuming any budget; and it cannot push $\tau^2$ below the minimax estimation variance of the class, which for a $d$-dimensional well-specified linear class is $d/n_{\mathrm{fit}}$, so it moves an inefficient learner towards the floor and never past it.
\end{proposition}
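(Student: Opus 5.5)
The statement has four separate claims, and I would prove them in that order: no extra $\log N$, no extra $d_{\mathrm{eff}}$, the variance formula, and the floor.

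\textbf{Search budget.} The search budget of Theorem~\ref{thm:search} (ii) is charged through $I(T;\phi)$, the information the selection rule extracts from the in-sample statistics. Constructing $\bar h$ from the $m$ perturbed fits involves no comparison among them. The map from (training data, perturbation randomness) to $\bar h$ is one fixed measurable procedure, and under Axiom~\ref{ax:adapted} the perturbation randomness is auxiliary and independent of the data. So if a single member was selected by a rule $T$, the ensemble is selected by the same $T$ composed with a deterministic map. Data processing then gives $I(T';\phi)\le I(T;\phi)$, and for a pre-listed candidate set $\log N$ is unchanged.

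\textbf{Effective dimension.} Here I would take $d_{\mathrm{eff}}$ to be the covariance-trace definition, $\sum_t \mathrm{Cov}(\hat y_t, y_t)/\sigma^2$. This quantity is linear in the fitted values. Averaging the members therefore gives the average of their traces, and since every member is a fit of the same class under the same hyperparameters, that average equals the common value. This is the place where the caveat about the convex hull is needed. The argument uses only linearity of the trace functional, not any statement about the richness of the class.

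\textbf{Variance formula.} Write $\hat h^{(j)} = \mathbb{E}[\hat h\mid \mathcal{D}] + \eta_j$, where $\mathcal{D}$ is the training set and the $\eta_j$ are conditionally mean-zero given $\mathcal{D}$. The law of total variance then splits the variance of the edge estimate into two parts:
\[
\tau_{\mathrm{ens}}^2 = \mathrm{Var}\big(\mathbb{E}[\hat h\mid\mathcal{D}]\big) + \mathbb{E}\,\mathrm{Var}(\bar\eta\mid\mathcal{D}).
\]
I set $\tau_\infty^2$ equal to the first term. The second term equals $(\tau^2-\tau_\infty^2)/m$ provided the perturbations are exchangeable and conditionally uncorrelated. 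This is exact for independent seeds or subsamples. For leave-one-out, which has nonzero within-$\mathcal{D}$ correlation $c$, the general form is $\tau_\infty^2 + (1-c)(\tau^2-\tau_\infty^2)/m$ with $\tau_\infty^2$ redefined. I expect this to be the main obstacle: the clean formula needs independence of the perturbations given the data, so I would state that assumption explicitly and treat leave-one-out by absorbing the common correlation into $\tau_\infty^2$.

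\textbf{Consequence for $f^\star$.} Since $f^\star = s^2/(s^2+\tau^2)$ by \eqref{eq:kelly} and $\tau_{\mathrm{ens}}^2 \le \tau^2$, the ensemble's $f^\star$ is at least as large.

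\textbf{Floor.} The ensemble $\bar h$ is itself an estimator built from the data and auxiliary randomness alone, which is permitted by Axiom~\ref{ax:adapted}. The minimax lower bound of Proposition~\ref{prop:minimax} covers every such estimator, so it covers $\bar h$, giving $\tau_{\mathrm{ens}}^2 \ge d/n_{\mathrm{fit}}$ in the well-specified linear class. Hence ensembling can move an inefficient learner toward the floor but never past it.
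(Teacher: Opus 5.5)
Your treatment of the search term, of $d_{\mathrm{eff}}$ and of the variance formula is the paper's own. The paper's argument is that $\bar h$ is one fixed function of the sample and of independent auxiliary randomness, that the covariance trace is linear in the fitted values, and that the variance formula is an exchangeable decomposition. Your conditional law-of-total-variance version is the case of conditionally independent perturbations, with $\tau_\infty^2=\mathrm{Var}(\mathbb{E}[\hat h\mid\mathcal{D}])$. Absorbing a common within-$\mathcal{D}$ correlation recovers the paper's $\tau_\infty^2=\varrho\tau^2$. As written, though, your leave-one-out display is off: once $c(\tau^2-\tau_\infty^2)$ has been absorbed into $\tau_\infty^2$, the formula is exactly the stated one, with no further factor $1-c$.

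The gap is the floor. Proposition~\ref{prop:minimax} bounds the \emph{worst-case} risk over the ball $\|\theta\|\le r=\rho\sigma$. So all it says about $\bar h$ is $\sup_{\|\theta\|\le r}\mathbb{E}_\theta\|\bar\theta-\theta\|^2\ge\min\{\sigma^2 d/(8N),\,r^2/2\}$, which falls short in three ways:
\begin{itemize}
\item it carries a constant $1/8$;
\item it is capped at $r^2/2$ once $d\ge4\rho^2N$, which lies inside the over-budget range where Proposition~\ref{prop:fullkelly} (ii) needs $\tau^2\ge d/n_{\mathrm{fit}}$ to make $g(1)$ negative;
\item it is a statement about the least favourable $\theta$, not about the estimation variance $\tau_{\mathrm{ens}}^2$ that the Kelly model uses.
\end{itemize}
No minimax argument on the ball can repair this, because the minimax risk there is strictly below $\sigma^2 d/N$. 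The shrunk estimator $c\,\hat\theta_{\mathrm{OLS}}$ with $c=r^2/(r^2+\sigma^2 d/N)$ has worst-case risk about $(\sigma^2 d/N)\,r^2/(r^2+\sigma^2 d/N)$, and a shrunk ensemble can sit far below $d/n_{\mathrm{fit}}$ near $\theta=0$. The statement's word ``minimax'' is accurate only for $\theta$ unrestricted, where OLS is minimax over $\mathbb{R}^d$. That classical fact, not Proposition~\ref{prop:minimax}, is what a worst-case reading would need, and it would still bound a supremum rather than $\tau_{\mathrm{ens}}^2$.

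The missing ingredient is unbiasedness. The floor concerns the unshrunk, OLS-type estimate that the Kelly fraction multiplies (Remark~\ref{rem:kappa}; Proposition~\ref{prop:fullkelly} (ii)). The paper applies Gauss--Markov to $\bar h$ as an estimator in the class. If each member is linear in the labels and unbiased, then so is their average, conditionally on the design and on the auxiliary randomness (which is independent of the labels). Hence $\mathrm{Cov}(\bar\theta\mid X)\succeq\sigma^2(X^\top X)^{-1}$. Averaging over a design with $\mathbb{E}[X_iX_i^\top]=I_d$, and using operator convexity of the inverse, gives $\mathbb{E}_\theta\|\bar\theta-\theta\|^2\ge\sigma^2 d/N$ at every $\theta$. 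This is the floor $\tau_{\mathrm{ens}}^2\ge d/n_{\mathrm{fit}}$ with constant one. For unbiased members that are not linear, Cramér--Rao gives the same floor.
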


The consequence is sharpest for boosted trees, chaotically sensitive to any perturbation of the training set --- expected behaviour under Axiom~\ref{ax:snr}, a single fit being a \emph{draw} from a distribution rather than its centre --- so what is deployed and what is backtested should both be the ensemble: not a way of buying capacity but of not wasting capacity already paid for (the covariance trace is measurable, and a ledger that charges the ensemble the full fit's dimension overcharges it, \S\ref{sec:calibration}). This is the one implementation rule in the paper that the accumulated practice of the field does not contain.

\subsection{When the procedure can return a position at all}\label{sec:nonempty}

Stages S1--S4 decide what may be compared and at what price. Whether anything survives the comparison is a separate question with a closed-form answer; asking it of a real configuration is Experiment 5's job.

\begin{corollary}[Non-emptiness of S5]\label{cor:nonempty}
Let the ledger select a candidate from $N$ proposals with backtest mean $\hat{\mu}$, decay $\delta$ and cost $c'$ (Def.~\ref{def:reflexivity}), and let its block scores at block length $b$ have dispersion $\mathrm{sd}_{\mathrm{blk}}$. Stage S5 returns a non-zero position if and only if
\begin{equation}\label{eq:nonempty}
\delta\hat{\mu} - c' - \sqrt{2\log N/n_{\mathrm{eff}}} \; > \; \lambda(\Lambda)\,\mathrm{sd}_{\mathrm{blk}}.
\end{equation}
Write $E_N$ for the left-hand side. Two bounds follow before $b$ is chosen. \textbf{(i) A search ceiling.} $E_N > 0$ requires
\[ N < \exp\big(\frac{n_{\mathrm{eff}}(\delta\hat{\mu} - c')^2}{2}\big), \]
whatever the block length and whatever $\Lambda$: past that many proposals the haircut alone exhausts the edge. \textbf{(ii) A block-length and sample floor.} The block score of a block of $b$ periods carries the mechanism noise of its own periods, so $\mathrm{sd}_{\mathrm{blk}}^2 \ge \mathrm{sd}_{\mathrm{within}}^2/b$ with $\mathrm{sd}_{\mathrm{within}}$ the within-block noise sd of the score; hence \eqref{eq:nonempty} requires
\[ b > \big(\frac{\lambda(\Lambda)\,\mathrm{sd}_{\mathrm{within}}}{E_N}\big)^2, \qquad n \ge \Lambda b, \]
the second because the $1/\Lambda$ tail must contain at least one block (Remark~\ref{rem:noiselevel}). Both are necessary, not sufficient: they bound the noise floor of the penalty, not the latent dispersion the penalty is for.
\end{corollary}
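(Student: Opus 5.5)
The plan is to read Corollary~\ref{cor:nonempty} off the form of stage S5 that Proposition~\ref{prop:kelly} and Corollary~\ref{cor:halfkelly} define. S5 bets a fraction of Kelly on an adjusted edge. That fraction $f^\star$ is strictly positive whenever the adjusted edge is. So the position is non-zero exactly when the adjusted edge is strictly positive, because the sizing map $h = f\tilde\mu/\sigma^2$ vanishes iff $\tilde\mu = 0$, and the procedure declines to take a position against the sign of a negative certified edge.

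The first step is to write out the adjusted edge, composing the three adjustments in the order the earlier results impose them. Definition~\ref{def:reflexivity} and Proposition~\ref{prop:reflexive}~(ii) replace $\hat\mu$ by $\delta\hat\mu - c'$. Theorem~\ref{thm:search}~(i), or equivalently the deflation of S4, subtracts the null maximum $\sqrt{2\log N/n_{\mathrm{eff}}}$ in Sharpe units. The recurrence term of Proposition~\ref{prop:kelly} then subtracts $\lambda(\Lambda)\,\mathrm{sd}_{\mathrm{blk}}$, with the Gaussian coefficient of Theorem~\ref{thm:invariance}. Positivity of the result is exactly \eqref{eq:nonempty}. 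This gives the iff, provided I fix the convention that a haircut pushing the edge to zero returns no position rather than a short.

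Part~(i) follows from $\lambda(\Lambda)\,\mathrm{sd}_{\mathrm{blk}} \ge 0$. Dropping that term, \eqref{eq:nonempty} forces $E_N > 0$, i.e. $\sqrt{2\log N/n_{\mathrm{eff}}} < \delta\hat\mu - c'$. Squaring is legitimate here because the right side must then be positive, and rearranging gives the stated bound on $N$. Neither $b$ nor $\Lambda$ enters at any point of this step.

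Part~(ii) is the step needing care. The plan is to use \eqref{eq:blockscore} of Corollary~\ref{cor:blocks}, which writes the block score as block risk plus noise $\xi_j$. The noise is conditionally mean-zero given the path and has conditional variance of order $\mathrm{sd}_{\mathrm{within}}^2/b$. By the law of total variance, $\mathrm{Var}(\hat r_j) \ge \mathbb{E}\,\mathrm{Var}(\xi_j\mid Z) \ge \mathrm{sd}_{\mathrm{within}}^2/b$.

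The obstacle is making the lower bound honest. The within-block noise sd has to be defined so that the block mean of $b$ periods carries at least $1/b$ of it. That holds when within-block covariances are nonnegative, as with the equal-weight $H$-overlap, whose variance is $H\sigma^2/b \ge \sigma^2/b$. It would fail under the negative autocorrelation of Experiment~4. I would therefore take $\mathrm{sd}_{\mathrm{within}}^2 := b\,\mathrm{Var}(\xi_j\mid Z)$ as the definition, which makes the inequality an identity for the noise part.

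Substituting into \eqref{eq:nonempty} gives $E_N > \lambda(\Lambda)\,\mathrm{sd}_{\mathrm{within}}/\sqrt{b}$, and squaring yields the floor on $b$. The sample condition $n \ge \Lambda b$, i.e. $B \ge \Lambda$, is the case $k=1$ of the side condition of Remark~\ref{rem:noiselevel}: the $1/\Lambda$ tail must contain at least one block for $\mathrm{sd}_{\mathrm{blk}}$ and the tail criterion to be defined. Both conditions bound only the noise floor, so they are necessary and not sufficient, as the statement says.
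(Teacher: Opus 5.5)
Your proposal is correct and follows the paper's proof. The paper likewise takes S5 to open a position exactly when the decayed, cost-netted, haircut and dispersion-penalised edge is positive; it gets part (i) by rearranging $E_N>0$; and it gets part (ii) by splitting the block score into block risk plus noise, uncorrelated given the path, which floors $\mathrm{sd}_{\mathrm{blk}}$ at $\mathrm{sd}_{\mathrm{within}}/\sqrt{b}$, with $n\ge\Lambda b$ coming from $B\ge\Lambda$. Your reading of $\mathrm{sd}_{\mathrm{within}}^2$ as $b$ times the block-mean noise variance, i.e.\ a long-run variance, is the one the paper adopts implicitly through its citation of Lemma~\ref{lem:effsample} at $n=b$ and its Newey--West implementation, so your concern about negative autocorrelation is resolved the same way there.
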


The corollary is the sister of Corollary~\ref{cor:lamcap}: that one asks whether a search of size $N$ can afford a tail of depth $\Lambda$, this one whether, having afforded it, anything is left to trade --- both computable from a ledger's declared inputs before any block length is chosen, so the emptiness or otherwise of a real single-asset configuration is a prediction rather than a surprise (Experiment 5; \S\ref{sec:limits-market} reads the verdict). Two levels appear and are not the same object: S3's narrowing to $1/\Lambda'$ corrects a \emph{ranking} bias under score noise (Remark~\ref{rem:noiselevel}), while S5's $\lambda(\Lambda)$ prices the deployment window actually declared, and narrowing it would be an unasserted claim about $\Lambda$ --- Experiment 5 keeps them apart. What the two bounds say together is that the way out is \emph{width, not length}: seventy years of one index is not a research programme, and Proposition~\ref{prop:breadth} says what the same $n$ buys spread across instruments --- the reason Experiment 7 is on a cross-section.

\begin{corollary}[The invariance budget]\label{cor:invbudget}
Write $E_N := \delta\hat{\mu} - c' - \sqrt{2\log N/n_{\mathrm{eff}}}$ for the edge stage S5 is left with after the search haircut, in units of the rule's own per-period standard deviation. Since $\mathrm{sd}_{\mathrm{blk}} \ge \rho_{\mathrm{blk}}\sqrt{v(b,\ell)/b}$ in those units, \eqref{eq:nonempty} forces
\[ \vartheta \le E_N/\rho_{\mathrm{blk}} \]
with $\vartheta$ the scale-free protection of Proposition~\ref{prop:scale} --- a statement about the dispersion the deployment window will actually show, which is not known when the position is opened. What is known is the one declared: Proposition~\ref{prop:kelly} charges the penalty at $\mathrm{sd}_{\mathrm{blk}} \ge \kappa\rho\sqrt{v(b,\ell)/b}$, so the same step gives
\[ \vartheta \le E_N/(\kappa\rho), \]
an identity between declared prices: it holds on every path on which S5 opens a position, whatever the block length, the sample size and the declared $\Lambda$. If in addition the realised edge respects the ceiling it was declared under, $\hat{\mu} \le \rho$, then
\[ \vartheta \le \delta/\kappa, \qquad \text{that is} \quad \Lambda_1 \le \lambda^{-1}(\delta/\kappa), \]
and at the conservative declaration $\kappa = 1$ and no decay this is $\Lambda_1 < 2.63$: a procedure that opens a position has, read at state scale, asserted less than that. The invariance ratio, not $\Lambda$, is what decides how much robustness a procedure can both afford and deliver.
\end{corollary}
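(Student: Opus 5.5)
The plan is to read the corollary off the non-emptiness condition \eqref{eq:nonempty} of Corollary~\ref{cor:nonempty}, lower-bounding the dispersion on its right-hand side in two ways. All quantities are taken in units of the rule's own per-period standard deviation, as in the statement, so $\hat\mu$, $E_N$ and $\mathrm{sd}_{\mathrm{blk}}$ are Sharpe-type numbers and Lemma~\ref{lem:sharpe} lets $\rho$ be compared with them directly. On any path on which S5 opens a position, \eqref{eq:nonempty} gives $E_N > \lambda(\Lambda)\,\mathrm{sd}_{\mathrm{blk}}$.

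For the first display, I would use the population lower bound on block dispersion. By Proposition~\ref{prop:scale}, with Lemma~\ref{lem:effsample} at block length $b$, the block-averaged regime risk has standard deviation $\rho_{\mathrm{blk}}\sqrt{v(b,\ell)/b}$. Proposition~\ref{prop:eiv} (i) shows that adding conditionally mean-zero noise only widens the law, so $\mathrm{sd}_{\mathrm{blk}} \ge \rho_{\mathrm{blk}}\sqrt{v(b,\ell)/b}$. Substituting and dividing by $\rho_{\mathrm{blk}}$ gives $\lambda(\Lambda)\sqrt{v(b,\ell)/b} < E_N/\rho_{\mathrm{blk}}$, which is $\vartheta \le E_N/\rho_{\mathrm{blk}}$ by the definition of $\vartheta$.

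For the declared version, I would not use the realised dispersion but the quantity S5 actually charges. By Proposition~\ref{prop:kelly} it is $\max(\mathrm{sd}^{\mathrm{obs}}_{\mathrm{blk}},\kappa\rho\sqrt{v(b,\ell)/b}) \ge \kappa\rho\sqrt{v(b,\ell)/b}$. The same division gives $\vartheta \le E_N/(\kappa\rho)$. This step uses nothing about $b$, $n$ or $\Lambda$ beyond the definitions, which is why it holds pathwise on every opening path.

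Next I would bound $E_N$ from above. The cost $c'\ge0$ (Definition~\ref{def:reflexivity}) and the search haircut $\sqrt{2\log N/n_{\mathrm{eff}}}\ge0$ (since $N\ge1$), so $E_N \le \delta\hat\mu \le \delta\rho$ under the extra hypothesis $\hat\mu\le\rho$. Hence $\vartheta \le \delta/\kappa$.

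To pass to $\Lambda_1 = \lambda^{-1}(\vartheta)$, I need $\lambda$ to be strictly increasing and invertible on $[1,\infty)$. I would check this by writing $\lambda(\Lambda) = \Lambda\phi(\Phi^{-1}(1-1/\Lambda)) = \mathbb{E}[G\mid G>\Phi^{-1}(1-1/\Lambda)]$ for standard normal $G$. This is the $\mathrm{CVaR}_{1/\Lambda}$ of a centred Gaussian, which is continuous, increasing in $\Lambda$ by Theorem~\ref{thm:cvar}, zero at $\Lambda=1$, and unbounded. Applying $\lambda^{-1}$ then gives $\Lambda_1 \le \lambda^{-1}(\delta/\kappa)$.

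The numeric claim is the case $\delta=\kappa=1$, which requires solving $\lambda(\Lambda)=1$. The table of Theorem~\ref{thm:invariance} brackets the root between $2$ and $3$. At $\Lambda = 2.63$, $\Phi^{-1}(0.620)\approx 0.305$ and $\phi(0.305)\approx 0.381$, so $\lambda \approx 1.00$. Strictness of \eqref{eq:nonempty} then gives the strict inequality $\Lambda_1<2.63$.

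I expect the main obstacle to be unit bookkeeping rather than any estimate. The inequality $\mathrm{sd}_{\mathrm{blk}}\ge\rho_{\mathrm{blk}}\sqrt{v/b}$ must be in the same per-period-sd units as $E_N$, and $\hat\mu\le\rho$ must be read in those units through Lemma~\ref{lem:sharpe}, including its first-order caveat and the regime-own-volatility reading when $\chi_\sigma>1$. I would first fix every quantity in the rule's Sharpe units and check that Proposition~\ref{prop:kelly}'s floor is stated in the same units. If it is not, I would carry the conversion factor through the argument and note that it leaves the $\kappa$ statement unchanged.
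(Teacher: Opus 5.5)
Your proposal is correct and is essentially the paper's own proof. The steps match: \eqref{eq:nonempty} on an opening path; a lower bound on the block dispersion, which the paper takes directly from the uncorrelatedness of block risk and noise in \eqref{eq:blockscore}, and which your appeal to the convex order behind Proposition~\ref{prop:eiv}~(i) reaches by a longer route; the declared floor of Proposition~\ref{prop:kelly} in place of the unknown $\rho_{\mathrm{blk}}$; then $E_N \le \delta\hat\mu \le \delta\rho$ and the monotonicity of $\lambda$. One small tightening: the strict conclusion $\Lambda_1 < 2.63$ needs $\lambda(2.63) \ge 1$ rather than $\lambda(2.63)\approx 1$, and that does hold, since $\lambda(2.63)\approx 1.0016$ and the root is $\lambda(2.624)=1$.
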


This is the sharpest form of the arithmetic \S\ref{sec:price} began: Corollary~\ref{cor:lamcap} caps the robustness a budget can \emph{buy}, Corollary~\ref{cor:nonempty} says whether anything survives the purchase, and this corollary says the purchase, if it goes through, was smaller than advertised, by a factor fixed by the declaration before any data are seen. It relocates the declaration that matters to the invariance ratio: $\kappa = 1$ is here the difference between a procedure that can deliver the $\Lambda = 4$ it declares and one that, at its own declared prices, cannot, at any $b$ and any $n$ --- Experiment 5's ledger, a real configuration declared at $\kappa = 1$, is the case in point, and its refusal is Appendix~\ref{app:experiments}'s record.

\section{The canonical form}\label{sec:canonical}

Sections \ref{sec:selection}--\ref{sec:sizing} each forced one stage of a procedure. This section states what they force together. We do not claim that any procedure attaining the bound of Theorem~\ref{thm:deployed} ``factors through five stages'' --- any hypothesis under which that holds is nearly circular. We claim, and prove, that each stage is \emph{necessary}: for each there is a world satisfying the axioms in which a procedure that omits it does strictly, and quantifiably, worse.

\begin{definition}[Procedures and observational equivalence]\label{def:procedure}
A \emph{procedure} is a measurable map $\mathcal{P}$ from the observable past $\{(X_t, Y_t)\}_{t\le n}$, together with auxiliary randomness independent of the data, to a deployed rule $\hat{h}: \mathcal{X} \to \mathbb{R}$ and a fraction $\hat{f} \ge 0$; its \emph{deployed position} at $t > n$ is $a_t = \hat{f}\,\hat{h}(X_t)$, and Axiom~\ref{ax:adapted} is the requirement that $\mathcal{P}$ be measurable in the past alone. Two procedures are \emph{observationally equivalent} if under every law satisfying Axioms \ref{ax:adapted}--\ref{ax:snr} the joint laws of their deployed positions $\{a_t\}_{t > n}$ coincide, and \emph{asymptotically equivalent} if the total-variation distance between those laws tends to zero as $n \to \infty$. A stage is a property of procedures; Theorem~\ref{thm:canonical} says, stage by stage, what omitting it means.
\end{definition}

\begin{theorem}[Canonical form: each stage is necessary]\label{thm:canonical}
Assume Axioms \ref{ax:adapted}--\ref{ax:snr}. For each of the five stages S1--S5 listed below there is a law satisfying the axioms, with the constants shown, under which a procedure omitting the stage does strictly worse than one including it --- in risk for S1--S4, in expected log growth for S5 --- by the following margins.

\begin{enumerate}
\item (S1: no recurrence-improving representation, i.e.~no finite $\Lambda$ is asserted.) For every estimator, with probability at least $p^\star - \epsilon_n(1)$ the deployed risk is the worst in its class, the observable past coinciding with any law one likes on all periods but one (Theorem~\ref{thm:impossible} with $\Lambda \ge n$); for every estimator with replace-one sensitivity $O(1/n)$ that probability is $p^\star - O(1/n)$.

\item (S2: an unshrunk class of dimension $d > 4\rho^2 n_{\mathrm{fit}}$.) In the model in which the rate of Theorem~\ref{thm:capacity} is attained --- $n_{\mathrm{fit}} = n/H$ independent observations --- no estimator over the class, shrunk or not, has worst-case excess risk below one half of that of $h \equiv 0$; within budget, $d = \varkappa\rho^2 n_{\mathrm{fit}}$, empirical risk minimisation attains $\lesssim \varkappa\rho^2\sigma^2$, and that rate is minimax (Proposition~\ref{prop:minimax}).

\item (S3: evaluation units exchangeable under permutation of time --- random $k$-fold, leave-one-out, the i.i.d.~bootstrap --- at any asserted $\Lambda$; or aggregation by the mean.) The selection statistic is consistent for $\bar{R}_n$ and for nothing else, so the procedure is asymptotically equivalent to the $\Lambda = 1$ procedure of Corollary~\ref{cor:corner} whatever $\Lambda$ it asserts; under a two-state law its realised future loss exceeds its own statistic by $(u + v)(\Lambda - 1) p$, so the guarantee of Theorem~\ref{thm:cvar} fails, while the contiguous-block statistic at $b \asymp \ell$ keeps it (Proposition~\ref{prop:shuffle}).

\item (S4: an unbudgeted search, $\log N \gg \rho^2 n_{\mathrm{eff}}/2$.) Under the Gaussian model of the ranking statistic, the genuine candidate's probability of winning vanishes (Theorem~\ref{thm:search}).

\item (S5: full Kelly on the point estimate.) Expected log growth falls short of the optimum by $\frac{1}{2}\,\tau^4/(s^2 + \tau^2) > 0$, and is negative whenever $d > \rho^2 n_{\mathrm{fit}}$ (Proposition~\ref{prop:fullkelly}).
\end{enumerate}

Conversely the five stages together attain the bound: Theorem~\ref{thm:deployed} for S1--S4 and Proposition~\ref{prop:kelly} for S5.
\end{theorem}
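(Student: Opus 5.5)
The theorem is an assembly statement, so the plan is to prove it stage by stage. For each stage we exhibit a witness law satisfying Axioms~\ref{ax:adapted}--\ref{ax:snr}, check the axioms on it once, and import the margin from the result cited in the item. The converse is then read off Theorem~\ref{thm:deployed} and Proposition~\ref{prop:kelly}.

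\emph{S1.} Apply Theorem~\ref{thm:impossible} with $\Lambda \ge n$, so that $k = \lceil n/\Lambda\rceil = 1$. Item (i) of that theorem gives the single-period replacement, and item (iii) gives probability at least $p^\star - \epsilon_n(1)$ of landing on the worst rule. For the $O(1/n)$ clause, the law of a smooth functional's output moves by $O(1/n)$ under replace-one, so $\epsilon_n(1) = O(1/n)$.

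\emph{S2.} Take the model of $n_{\mathrm{fit}} = n/H$ independent observations in which Theorem~\ref{thm:capacity} is attained. The lower bound is the minimax statement of Proposition~\ref{prop:minimax}, which I would prove by Fano/Assouad over a hypercube of $d$ mean functions with amplitude tuned so that $\mathbb{E}\mu^2 \le \rho^2\sigma^2$. The minimax excess is of order $\min(\sigma^2 d/n_{\mathrm{fit}}, \rho^2\sigma^2)$, and at $d > 4\rho^2 n_{\mathrm{fit}}$ the constant gives at least half the risk gap of $h\equiv 0$. The upper half is least squares at $d = \varkappa\rho^2 n_{\mathrm{fit}}$, which gives $\sigma^2 d/n_{\mathrm{fit}}$.

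\emph{S3.} Use Proposition~\ref{prop:shuffle}. Exchangeable resampling destroys the regime dispersion by $v(b,\ell)$, so the empirical law of the fold scores concentrates around $\bar R_n$. Its CVaR then converges to $\bar R_n$, and the deployed positions coincide asymptotically in total variation with those of the mean-ranking procedure of Corollary~\ref{cor:corner}. For the margin, take a two-state law with regime risks separated by $u+v$ and rare-state mass $p$. Theorem~\ref{thm:cvar} places an admissible future at weight $\Lambda p$ on the bad state, which yields the excess $(u+v)(\Lambda-1)p$. Contiguous blocks at $b \asymp \ell$ keep it by Corollary~\ref{cor:blocks}.

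\emph{S4.} Theorem~\ref{thm:search}(i) is applied directly. The null maximum concentrates at $\sqrt{2\log N/n_{\mathrm{eff}}}$, while the genuine candidate sits at $\rho + O(n_{\mathrm{eff}}^{-1/2})$, so once $\log N \gg \rho^2 n_{\mathrm{eff}}/2$ its winning probability tends to zero.

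\emph{S5.} Expand the growth $g(f)$ of Proposition~\ref{prop:kelly}. This gives $g(f^\star) - g(1) = \tfrac12(s^2+\tau^2)(1-f^\star)^2 = \tfrac12\tau^4/(s^2+\tau^2)$. The negativity statement follows from Corollary~\ref{cor:halfkelly}: $g(1) = \tfrac12(s^2-\tau^2) < 0$ when $\tau^2 \ge d/n_{\mathrm{fit}} > \rho^2 \ge s^2$.

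The main obstacle is the S3 asymptotic-equivalence claim. Consistency of the fold statistic for $\bar R_n$ has to be turned into total-variation closeness of the deployed positions, and that needs a margin on the selection. I would impose a strict gap between the mean-best candidate and the rest on the witness law and restrict to a finite class. Then both procedures pick the same candidate with probability tending to one, and the deployed position laws merge. The remaining care is to check that every witness satisfies Axiom~\ref{ax:snr} with the declared $\rho$, using regime amplitudes of order $\rho\sigma$.
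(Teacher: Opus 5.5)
Your proposal is correct and is essentially the paper's own proof, which is the same stage-by-stage assembly: S1 from Theorem~\ref{thm:impossible} at $k=1$, S4 from Theorem~\ref{thm:search}, S2, S3 and S5 from Propositions~\ref{prop:minimax}, \ref{prop:shuffle} and \ref{prop:fullkelly}, and the converse read off Theorem~\ref{thm:deployed} and Proposition~\ref{prop:kelly}; the S3 margin issue you flag is handled in Proposition~\ref{prop:shuffle}~(ii) by assuming $\bar R_n$ has a unique minimiser. Two pointers to correct: the clause that the contiguous-block statistic keeps the guarantee comes from the explicit sufficient condition \eqref{eq:blockwins} of Proposition~\ref{prop:shuffle}~(iii), on a witness with $u(1-p) > vp$ and $u(1-\Lambda p) < v\Lambda p$, not from Corollary~\ref{cor:blocks}; and in S1 the $O(1/n)$ replace-one sensitivity is a hypothesis of the item, not something you derive from smoothness.
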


The five stages are these.

\begin{enumerate}
\item \textbf{An adapted, recurrence-improving feature map, with its constants declared.} $x_t = \phi(\mathcal{F}_t^{\mathrm{obs}})$, chosen by domain judgement --- no data can choose it (Corollary~\ref{cor:notestimable}) --- and \emph{declared} together with five numbers: the $\Lambda$ it asserts, the \emph{scale} $b$ at which that assertion is made (Axiom~\ref{ax:recurrence}), the \emph{defect} $\varepsilon_0$ to which the mechanism is invariant within one of its states (Axiom~\ref{ax:invariance}), the coherence times $\ell_1 \le \dots \le \ell_{K_z}$ of the coordinates it keeps in the state, and the invariance ratio $\kappa$ it exposes the candidates to --- a declaration like the first two, and the floor at which Proposition~\ref{prop:kelly} then charges the dispersion. (The signal ceiling $\rho$, the remaining constant of \S\ref{declared-constants}, is declared where the budget spends it, at S2 and S4.) Each is priced where it is spent: the trade among $\Lambda$, $b$ and the coherence times in Remark~\ref{rem:whichell}; the scale in Proposition~\ref{prop:scale}, the same number at a longer block a weaker claim; the trade between $\Lambda$ and $\varepsilon_0$ in Proposition~\ref{prop:representation}, which charges a defect $2M$; $\kappa$ in Corollary~\ref{cor:invbudget}. Three of the five can afterwards be refuted by the sample: $\kappa$ by block-score autocorrelation, $\varepsilon_0$ by the split of Remark~\ref{rem:lamlower}, and the $\ell_i$ by the integrated autocorrelation times of the cells' indicators (Experiment 1 reads them; \S\ref{sec:empirical}). This is what stationarising transformations \emph{are}: fractional differencing, spreads and ratios rather than levels, cross-sectional ranks, volatility normalisation, each mapping a feature whose latent states do not recur into one whose law repeats. Omitting this stage sets $\Lambda = \infty$. Adaptedness is two-sided (Remark~\ref{rem:stale}): free of lookahead \emph{and} of staleness. \emph{Forced by: Prop.~\ref{prop:representation}, term (i).}

\item \textbf{A capacity-bounded, shrunk predictor, deployed as an ensemble.} Any hypothesis class is admissible provided $d_{\mathrm{eff}} \lesssim \rho^2 n/H$, the label's overlap and nothing else (Theorem~\ref{thm:capacity}); the natural way to achieve that with a rich class is heavy explicit shrinkage rather than selection, and what is deployed and backtested is an average over training perturbations. \emph{Forced by: Thm.~\ref{thm:capacity}, Prop.~\ref{prop:minimax}, Prop.~\ref{prop:ensemble}.}

\item \textbf{Block-structured, purged and embargoed evaluation, aggregated by $\mathrm{CVaR}_{1/\Lambda}$.} Contiguous blocks at the scale $b$ declared in S1, of length $\gtrsim \ell^{*}\log(\cdot)$ in the exponential instance --- $\ell^{*}$ is necessary and not sufficient, see Remark~\ref{rem:blocklen}; the logarithm is the instance's, and drops at a declared score decoupling (Proposition~\ref{prop:scorecoh} (ii)) --- and lengthening $b$ past what S1 declared silently weakens the assertion by Proposition~\ref{prop:scale}; gap $H + \ell\log(2M\beta_0/\kappa_{\mathrm{emb}}\rho^2\sigma^2)$ on the worst-case coupling, $H + \ell\log(\rho_{\mathrm{blk}}\ell(1 - \rho_\ell^{b})/\kappa_{\mathrm{emb}}\rho b)$ on the dependence of the loss itself (Proposition~\ref{prop:embargo} (ii)); rank on the tail mean, not the mean. The tail is taken on the blocks, not on the representation S1 declared; Proposition~\ref{prop:lattice} prices both sides of that choice --- a factor $K/B$ in the criterion's own sampling error against whatever dispersion of $r_h$ a $K$-cell partition still retains --- and neither side dominates, so the blocks are the fallback for coordinates that were not declared rather than the better object. \emph{Forced by: Thm.~\ref{thm:cvar}, Cor.~\ref{cor:blocks}, Prop.~\ref{prop:embargo}, Prop.~\ref{prop:shuffle}, term (iii).}

\item \textbf{A pre-committed, budgeted search with an explicit deflation.} $2c(\Lambda')^2\log N$ enters the same budget as $d_{\mathrm{eff}}$, $\Lambda'$ the level the criterion of S3 is actually run at --- $\Lambda$ nominally, narrower under Remark~\ref{rem:noiselevel} by a factor the score-noise ratio fixes (\S\ref{sec:calibration} declares it for the worked examples) --- and the reported statistic must be deflated by the selection bias \cite{bailey2014deflated,harvey2016cross}. Pre-committing to a small candidate set is a way of buying capacity and, by Corollary~\ref{cor:lamcap}, a precondition for asserting any $\Lambda > 1$ at all. \emph{Forced by: Thm.~\ref{thm:search}, Prop.~\ref{prop:tailprice}, term (ii).}

\item \textbf{Robust fractional-Kelly sizing net of decay and cost.} $f^\star \le 1/(1+\varkappa)$, with equality for an efficient estimator on a pool at the ceiling, applied to $\delta\hat{\mu} - \lambda(\Lambda)\mathrm{sd}_{\mathrm{blk}} - c'$ with $\hat{\mu}$ the \emph{unshrunk} estimate --- the shrinkage of S2 and this fraction are the same operation (Remark~\ref{rem:kappa}), and a predictor S2 has already shrunk to its posterior mean is bet in full; at most $\frac{1}{2}$ at budget saturation, $\varkappa = 1$. Passing S4 does not imply a position: the search condition is that a candidate \emph{at the ceiling} $\rho$ can be told from the null maximum, while the haircut $\sqrt{2\log N/n_{\mathrm{eff}}}$ is taken from the estimate of the candidate actually selected --- a distinction the experiments' ledger realises (Experiment 5; \S\ref{sec:limits-market}). Corollary~\ref{cor:nonempty} states when this cannot happen: a search of more than $\exp(n_{\mathrm{eff}}(\delta\hat{\mu}-c')^2/2)$ proposals leaves nothing to size at any block length. \emph{Forced by: Prop.~\ref{prop:kelly}, Cor.~\ref{cor:halfkelly}, Prop.~\ref{prop:fullkelly}, Prop.~\ref{prop:reflexive}.}
\end{enumerate}

Theorem~\ref{thm:canonical} is a \emph{necessity} statement about stages, not a uniqueness statement about implementations: within S2 the choice between ridge, heavily regularised boosting and a shrunk factor model is not determined by the axioms, only the budget is. That residual freedom is where practitioners legitimately differ, and it is much smaller than the space in which they currently do. Experiment 5 walks the five stages once on a real series; Appendix~\ref{app:exp5} is its ledger.

\subsection{The three lemmas that were missing}\label{sec:lemmas}

Theorems~\ref{thm:impossible} and \ref{thm:search} were already necessity statements for S1 and S4. The other three stages needed one each.

\begin{proposition}[The capacity ceiling is minimax]\label{prop:minimax}
Let $N \ge 1$, $d \ge 1$, $\sigma > 0$ and $r > 0$, and consider $N$ independent observations $(X_i, Y_i)$ with $\mathbb{E}[X_iX_i^\top] = I_d$ and $Y_i = \theta^\top X_i + \varepsilon_i$, $\varepsilon_i\sim N(0,\sigma^2)$ independent of $X_i$, the model in which Theorem~\ref{thm:capacity}'s rate is attained with $N = n_{\mathrm{fit}} = n/H$; Axiom~\ref{ax:snr} restricts $\theta$ to the ball $\|\theta\| \le r = \rho\sigma$. Then every estimator $\hat{\theta}$ --- any measurable function of the sample, shrunk or not --- satisfies
\begin{equation}\label{eq:minimax}
\sup_{\|\theta\|\le r} \mathbb{E}_\theta\|\hat{\theta}-\theta\|^2 \;\ge\; \min\left\{\frac{\sigma^2 d}{8N}, \; \frac{r^2}{2}\right\},
\end{equation}
the left side being the excess prediction risk $\mathbb{E}[(\hat{h}(X)-\mu(X))^2]$ of Theorem~\ref{thm:capacity} and $r^2 = \sup_\theta \mathbb{E}[\mu(X)^2]$ the worst-case excess risk of $h\equiv 0$. The second branch is the active one exactly when $d \ge 4\rho^2 N$.

(ii) The same holds on the full sample of overlapping labels. Let $Y_t = \theta^\top x_t + \xi_t$ for $t\le n$, with $\xi_t = H^{-1/2}\sum_{s=1}^H \eta_{t+s}$ the equal-weight overlap of Lemma~\ref{lem:effsample}, $\eta$ i.i.d.~$N(0,\sigma^2)$, and let each feature be an $H$-period moving sum of a $\pm 1$ sequence constant on blocks of length $H$, scaled to unit mean square --- a feature persistent at the label scale. Then every estimator that uses all $n$ labels satisfies
\begin{equation}\label{eq:minimaxfull}
\sup_{\|\theta\|\le r} \mathbb{E}_\theta\|\hat{\theta}-\theta\|^2 \;\ge\; \min\left\{\frac{\sigma^2 d}{24\,n_{\mathrm{fit}}}, \; \frac{r^2}{2}\right\}, \qquad n_{\mathrm{fit}} = n/H,
\end{equation}
the second branch active when $d \ge 12\rho^2 n_{\mathrm{fit}}$; the left side is the excess prediction risk under the design up to the extreme eigenvalues of its second-moment matrix.
\end{proposition}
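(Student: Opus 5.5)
The plan is to prove both parts by Assouad's lemma on a hypercube inscribed in the ball $\|\theta\|\le r$, after reducing each model to a Gaussian experiment whose Kullback--Leibler divergences can be computed in closed form. For part (i), take $\theta_\varepsilon := \delta\,\varepsilon$ with $\varepsilon\in\{-1,+1\}^d$ and $\delta\sqrt d\le r$, so that every vertex is admissible under Axiom~\ref{ax:snr}. The squared loss splits over coordinates, $\|\hat\theta-\theta\|^2=\sum_j(\hat\theta_j-\theta_j)^2$, and a wrong sign on coordinate $j$ costs at least $\delta^2$. Assouad's lemma therefore gives
\[ \sup_\varepsilon \mathbb{E}_{\theta_\varepsilon}\|\hat\theta-\theta_\varepsilon\|^2 \;\ge\; \frac{d\,\delta^2}{2}\,\Big(1-\max_{\varepsilon\sim\varepsilon'} d_{\mathrm{TV}}\big(P^{\otimes N}_{\theta_\varepsilon},P^{\otimes N}_{\theta_{\varepsilon'}}\big)\Big), \]
with the maximum over neighbouring vertices. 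Because $X$ has identity second moment and independent Gaussian noise, neighbours satisfy $\mathrm{KL}=N(2\delta)^2/(2\sigma^2)=2N\delta^2/\sigma^2$ whatever the law of $X$, and Pinsker gives $d_{\mathrm{TV}}\le \delta\sqrt N/\sigma$.

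The next step is to set $\delta^2:=\min\{\sigma^2/(cN),\,r^2/d\}$, which caps the total variation by a constant and yields the minimum of the two branches. The first branch is binding exactly when $\sigma^2/(cN)\le r^2/d$, i.e.\ $d\le c\rho^2N$; with $c=4$ this is the threshold in the statement. Plain Pinsker with $c=4$ gives $\min\{\sigma^2d/(16N),\,r^2/4\}$, which is a factor of two short of the stated constants. To close that gap I would first replace Pinsker by the affinity form of Le Cam's two-point bound, using $1-d_{\mathrm{TV}}\ge\tfrac12 e^{-\mathrm{KL}}$ (Bretagnolle--Huber) and then tuning $\delta$. For the $r^2/2$ branch, where the vertices sit on the sphere and the total variation is not small, I would use a direct two-point (sign) argument instead. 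If neither reaches the stated constants exactly, I would prove the statement with the sharpest constants these tools give and note the change. The identification of the left side with the excess prediction risk is immediate from $\mathbb{E}[XX^\top]=I_d$.

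For part (ii) I would run the same hypercube argument. The only change is the divergence between neighbours, which for the correlated Gaussian experiment is
\[ \mathrm{KL}=\tfrac12\,(2\delta)^2\, x_{(j)}^\top\Sigma_\xi^{-1}x_{(j)}, \]
where $x_{(j)}\in\mathbb{R}^n$ is the $j$-th feature column and $\Sigma_\xi$ is the covariance of the overlapping noise. Both objects are images of the same moving-sum operator. Let $S$ be the $H$-term moving-sum operator, so that $\xi=H^{-1/2}S\eta$ (up to a shift), $\Sigma_\xi=\sigma^2SS^\top/H$, and $x_{(j)}=c\,Su_{(j)}$ with $u_{(j)}$ the block-constant $\pm1$ sequence and $c$ the unit-mean-square normalisation. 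Then
\[ x^\top\Sigma_\xi^{-1}x=\frac{Hc^2}{\sigma^2}\,u^\top S^\top(SS^\top)^{-1}Su\;\le\;\frac{Hc^2}{\sigma^2}\,\|u\|^2=\frac{Hc^2n}{\sigma^2}, \]
because $S^\top(SS^\top)^{-1}S$ is an orthogonal projection. A moving sum over $H$ periods of a sequence that is constant on $H$-blocks with independent signs has mean square about $\tfrac23H^2$, so $c^2\approx 3/(2H^2)$. The quadratic form is then at most about $\tfrac32\,n/(H\sigma^2)=\tfrac32\,n_{\mathrm{fit}}/\sigma^2$. This is part (i) with $N$ replaced by a constant multiple of $n_{\mathrm{fit}}$, and it is where the factor $3$ in $24=3\cdot 8$ and in the threshold $12\rho^2n_{\mathrm{fit}}$ should come from. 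The remaining claim, that the left side equals the prediction risk up to the extreme eigenvalues of the design's second-moment matrix, follows from the same normalisation.

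The main obstacle is the design computation in part (ii). The operator $S$ is not square on a finite window: the noise $\xi_t$ uses $\eta_{t+1},\dots,\eta_{t+H}$, so $\eta$ runs $H$ periods past the sample. The features must also be realised as $S$ applied to sequences supported on the same index range, and the mean square of $x_t$ must be controlled uniformly over the position of $t$ within its block and over the finitely many sign patterns, rather than only in expectation. I would first try to handle the edges by conditioning on the out-of-window innovations, which can only reduce the divergence, and absorb the edge error into the constant. Failing that, I would choose the $\pm1$ columns to be mutually orthogonal Hadamard-type patterns so that $c$ is exactly computable, and take the constant $24$ from that exact computation.
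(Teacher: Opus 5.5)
Your route is the paper's: Assouad on a cube inscribed in the ball, then Pinsker, and for (ii) the projection bound $u^\top S^\top(SS^\top)^{-1}Su\le\|u\|^2$. As written, though, it does not reach the stated constants, and the repairs you propose cannot close the gap.

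\textbf{Part (i).} The factor of two is lost in your form of Assouad's lemma, not in Pinsker. Reducing each coordinate to a sign error, $(\hat\theta_j-\theta_j)^2\ge\delta^2\mathbf{1}\{\text{wrong sign}\}$, yields only $\tfrac12 d\delta^2(1-\max\mathrm{TV})$. Bretagnolle--Huber does worse than Pinsker here: maximising $\tfrac14 d\delta^2e^{-2N\delta^2/\sigma^2}$ over $\delta$ gives $\sigma^2d/(8eN)<\sigma^2d/(16N)$. A two-point argument cannot give the $r^2/2$ branch either. Any pair $\pm ru$ has $\mathrm{KL}=2\rho^2N$, which is not small in the regime $d\ge4\rho^2N$ once $\rho^2N$ is large, so the dimension has to be used. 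The missing step is to pair the squared losses before thresholding. For neighbouring vertices with laws $P,P'$ and $j$-th coordinates $\pm\delta$,
\[ \mathbb{E}_P(\hat\theta_j-\delta)^2+\mathbb{E}_{P'}(\hat\theta_j+\delta)^2 \;\ge\; \int\min(\mathrm{d}P,\mathrm{d}P')\,\big[(\hat\theta_j-\delta)^2+(\hat\theta_j+\delta)^2\big] \;\ge\; 2\delta^2\,(1-\mathrm{TV}(P,P')). \]
Averaging over the cube then gives $\max_\omega\mathbb{E}_\omega\|\hat\theta-\theta_\omega\|^2\ge d\delta^2(1-\max\mathrm{TV})$, twice your bound. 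Your own Pinsker step, with $\delta=\sigma/(2\sqrt N)$ when $d\le4\rho^2N$ and $\delta=r/\sqrt d$ otherwise, then yields exactly $\min\{\sigma^2d/(8N),\,r^2/2\}$.

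\textbf{Part (ii).} The factor $3$ does not come from where you place it. Your mean square $\tfrac23H^2$ is an average over independent random signs, and it replaces $N$ by $\tfrac32 n_{\mathrm{fit}}$, not $3n_{\mathrm{fit}}$. An average is also not enough: the bound must hold for the fixed sign pattern of each column. The uniform bound is elementary, and it is exactly the uniformity you named as the obstacle. Over a block whose sign continues from the previous block, the moving sum is $\pm H$ throughout. Over a block whose sign flips, it runs linearly from $\pm H$ to $\mp H$ with mean square $(H^2+2)/3$. Hence $\|Sc\|^2/(nH^2)\in[1/3,1]$ for every pattern, up to boundary terms of relative order $H/n$. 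Your projection inequality then gives $x^\top\Sigma_\xi^{-1}x\le 3n_{\mathrm{fit}}/\sigma^2$ for every column. That is part (i) with $N=3n_{\mathrm{fit}}$, so the sharp Assouad above gives $\sigma^2 d/(24n_{\mathrm{fit}})$ and the threshold $12\rho^2 n_{\mathrm{fit}}$; with your weaker Assouad you would get $48$ and $r^2/4$.

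The non-square $S$ needs no repair. As an $n\times(n+H-1)$ matrix of full row rank, it already makes $\Sigma_\xi$ invertible and $S^\top(SS^\top)^{-1}S$ a projection, and a trailing sum is a forward sum of a shifted sequence. No Hadamard patterns are needed. One correction to your fallback: conditioning on the out-of-window innovations lowers the noise covariance and therefore raises the divergence. It would still give a valid lower bound, but it is not the simplification you describe.
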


Below the budget, no estimator improves on the $\sigma^2 d/N$ rate of Theorem~\ref{thm:capacity} by more than a constant; above it, a class of dimension $d \ge 4\rho^2 N$ cannot be learned to better than half the risk of doing nothing, whatever the estimator, and the only escape is a smaller effective class. The value of $N$ is $n/H$, the overlap and nothing else; the block count $n/\ell^{*}$ cannot be it, because Axiom~\ref{ax:snr} caps what the state contributes to a conditional mean (\S\ref{sec:effsample}).

\begin{proposition}[Shuffling destroys the dispersion the tail is computed from]\label{prop:shuffle}
Let the latent chain be the redraw chain of Proposition~\ref{prop:occfloor} with coherence time $\ell$, $\rho = 1 - 1/\ell$, and stationary law $\nu$; let $r: \mathcal{Z} \to [-M, M]$ be a regime risk and $r_t := r(Z_t)$.

\begin{enumerate}
\item (Dispersion.) For contiguous blocks of length $b$, the block-averaged regime risk $b^{-1}\sum_{t\in\mathrm{block}} r_t$ has variance $\mathrm{Var}_\nu(r)\,v(b,\ell)/b$ with $v$ the kernel of \eqref{eq:vcontig}, increasing from $v(1,\ell) = 1$ to $2\ell-1$ as $b/\ell \to \infty$. For $k$ folds drawn uniformly without replacement from $\{1,\dots,n\}$, the fold-averaged regime risk has, conditionally on the path, variance $s_n^2(r)(k-1)/(n-1)$, where $s_n^2(r)$ is the empirical variance of $r_t$ along the path, with $\mathbb{E}s_n^2(r) = \mathrm{Var}_\nu(r)(1 - v(n,\ell)/n)$. With $k = B = n/b$ units in each case, the ratio of the two variances is $v(b,\ell)(1+O(b/n))$: about $0.7\,\ell$ at $b=\ell$, and $2\ell-1$ for $b\gg\ell$.

\item (Collapse.) Let fold scores be $\hat{r}_F(h) = |F|^{-1}\sum_{t\in F}(r_h(Z_t) + \xi_t(h))$ with the $\xi_t$ conditionally independent, mean zero and bounded by $M$; let $\mathcal{H}$ be finite, $\Lambda \ge 1$, and $k = k_n$ with $k\log k/n \to 0$. Then, conditionally on the path, $\max_{h\in\mathcal{H}} |\hat{\mathrm{CVaR}}_{1/\Lambda}^{\mathrm{shuf}}(h) - \bar{R}_n(h)| \to 0$ in probability, $\hat{\mathrm{CVaR}}^{\mathrm{shuf}}$ being the mean of the worst $\lceil k/\Lambda\rceil$ fold scores. Hence, when $\bar{R}_n$ has a unique minimiser, the procedure that selects by the shuffled-fold $\hat{\mathrm{CVaR}}_{1/\Lambda}$ is asymptotically equivalent to the one that selects by the mean --- the $\Lambda=1$ procedure of Corollary~\ref{cor:corner} --- whatever $\Lambda$ it asserts.

\item (Failure.) Let $\nu = (1-p,p)$ on two states with $\Lambda p \le 1$, and let $\mathcal{H} = \{A,B\}$ with $r_A \equiv a$ and $r_B = a - u$ on the common state, $a+v$ on the rare one (losses), where $u(1-p) > vp$ and $u(1-\Lambda p) < v\Lambda p$: the mean prefers $B$, the $\Lambda$-tilted future prefers $A$. Then with probability tending to one the shuffled procedure selects $B$, its statistic converges to $\bar{R}_n(B) = a - u(1-p) + vp$, and under the worst admissible future of Theorem~\ref{thm:cvar} its deployed loss is $R^+(B) = a - u(1-\Lambda p) + v\Lambda p$: the guarantee $R^+(\hat{h}) \le \hat{\mathrm{CVaR}}_{1/\Lambda}(\hat{h})$ fails by $(u+v)(\Lambda-1)p$, and the selected rule is worse than the class optimum $A$ by $v\Lambda p - u(1-\Lambda p) > 0$. The contiguous-block statistic, in the limit $B\to\infty$ with $b$ fixed, selects $A$ whenever
\begin{equation}\label{eq:blockwins}
(u+v)\,\Lambda p\,(1-1/\ell)^{b-1} - u \;>\; \sqrt{\Lambda}\,\sigma_\xi/\sqrt{b},
\end{equation}
$\sigma_\xi^2$ the per-period noise variance --- satisfied at $b=\ell$ for, e.g., $p=0.1$, $\Lambda=4$, $u=0.6$, $v=5$, $\ell=40$ and $\sigma_\xi \le 0.7$. The condition is sufficient only: it credits the tail with blocks lying entirely inside the rare state. In the Gaussian range $b\gg\ell$ the tail penalty of $B$ over $A$ is $(u+v)\lambda(\Lambda)\sqrt{p(1-p)v(b,\ell)/b}$ against the noise $\sqrt{\Lambda}\sigma_\xi/\sqrt{b}$ --- both of order $b^{-1/2}$ --- and the block statistic keeps selecting $A$ as long as that penalty exceeds the $b$-free mean advantage $u-(u+v)p$ of $B$: at $b=300$ it is $1.02$ against $0.04$ ($\lambda(4)=1.27$, $v(300,40)=68.6$), and the collapse to the mean is reached at these numbers only near $b\approx 2\times 10^5$.
\end{enumerate}
\end{proposition}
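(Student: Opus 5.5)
The proof handles the three parts in order, using the redraw chain's explicit autocorrelation $\mathrm{Corr}(r_s,r_t)=\rho^{|s-t|}$, $\rho=1-1/\ell$. For any bounded $r$ this holds because the redraw chain keeps its state with probability $\rho$ and otherwise redraws from $\nu$.

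\emph{Part (i).} For contiguous blocks, expand the variance of the block average as a double sum over pairs inside the block. This gives $b^{-2}\mathrm{Var}_\nu(r)\sum_{|j|<b}(b-|j|)\rho^{|j|}$, which is $\mathrm{Var}_\nu(r)\,v(b,\ell)/b$ by the definition in \eqref{eq:vcontig}. The monotonicity of $v$ and its limit are read off the closed form in Lemma~\ref{lem:effsample}. For the shuffled folds, condition on the path. A fold of size $n/k$ drawn without replacement has the finite-population variance $s_n^2(1-1/k)/(n/k)\cdot n/(n-1)$, which simplifies to $s_n^2(k-1)/(n-1)$. Taking expectations, $\mathbb{E}s_n^2=\mathrm{Var}_\nu(r)-\mathrm{Var}(\bar r_n)$ up to the $n/(n-1)$ normalisation, and Lemma~\ref{lem:effsample} at length $n$ gives $\mathbb{E}s_n^2=\mathrm{Var}_\nu(r)(1-v(n,\ell)/n)$. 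Setting $k=n/b$ makes the ratio $v(b,\ell)(1+O(b/n))$. The value $0.7\,\ell$ at $b=\ell$ is $v(\ell,\ell)/\ell\approx 2/e$ from the closed form.

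\emph{Part (ii).} Each fold score is the sum of two pieces, $\bar R_n(h)$ plus a fluctuation. The first fluctuation is the regime part, whose conditional variance from (i) is $O(k/n)$. The second is the noise part, which is conditionally sub-Gaussian with variance $O(kM^2/n)$. Apply a union bound over the $k$ folds and the finite class $\mathcal{H}$, using Hoeffding's inequality for sampling without replacement (Serfling) on the regime part. This gives $\max_{F,h}|\hat r_F(h)-\bar R_n(h)|=O_P(\sqrt{k\log(k|\mathcal H|)/n})\to0$ under $k\log k/n\to0$. Every tail mean of fold scores is sandwiched between the minimum and maximum fold scores, so $\hat{\mathrm{CVaR}}^{\mathrm{shuf}}$ converges uniformly to $\bar R_n$. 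If $\bar R_n$ has a unique minimiser, the argmins of the two criteria then agree with probability tending to one, which is the asymptotic equivalence with Corollary~\ref{cor:corner}.

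\emph{Part (iii).} Apply (ii) with $\bar R_n\to$ the stationary means. The first inequality on $(u,v)$ makes $B$ the unique mean minimiser, so the shuffled procedure selects $B$. The worst admissible future of Theorem~\ref{thm:cvar} at $b=1$ puts weight $\Lambda p\le1$ on the rare state, and we compute $R^+(B)$ directly. The stated gap between $R^+(B)$ and $\bar R_n(B)$ is $(u+v)(\Lambda-1)p$, and the second inequality gives $R^+(B)-R^+(A)>0$.

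For the contiguous blocks, the probability that a block lies wholly in the rare state is $p\rho^{b-1}$. Such blocks make up a $\Lambda p\rho^{b-1}$ share of the $1/\Lambda$ tail, so B's noiseless tail mean exceeds A's by at least $(u+v)\Lambda p\rho^{b-1}-u$. Against this, the noise in a tail mean of $B/\Lambda$ blocks contributes $\sqrt\Lambda\,\sigma_\xi/\sqrt b$ through Proposition~\ref{prop:eiv}(ii) and the limit $B\to\infty$. Together these give \eqref{eq:blockwins}. The numerical claims, and the Gaussian-range comparison via Theorem~\ref{thm:invariance} with $s^2=p(1-p)(u+v)^2v(b,\ell)/b$, are direct substitutions.

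\textbf{Main obstacle.} The hardest step is the sufficiency argument for \eqref{eq:blockwins}. Only blocks lying wholly in the rare state are credited, and showing that the noise in the plug-in tail shifts B relative to A by at most the stated amount needs care. I would first try bounding the noise-induced bias of each candidate's tail mean by $\lambda$-type terms via Proposition~\ref{prop:eiv}, and then compare the two candidates.
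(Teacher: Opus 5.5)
Your parts (i) and (ii), and the shuffled half of (iii), follow the paper's argument in substance. That is: the covariance $\rho^{|s-t|}\mathrm{Var}_\nu(r)$ summed over a block, the finite-population formula for a fold, Hoeffding for sampling without replacement with a union bound over folds and candidates, and the worst admissible future with mass $\Lambda p$ on the rare state. The gap is the step you yourself flag, the sufficiency of \eqref{eq:blockwins}, and the tool you reach for cannot close it.

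Proposition~\ref{prop:eiv} (ii) assumes a Gaussian block risk and independent Gaussian noise. Here $B$'s block risk $a-u+(u+v)O_b$ has an atom at $O_b=1$, which is the very atom you are crediting, and the noise is assumed only centred with variance $\sigma_\xi^2$. Even where (ii) applies, it returns $\lambda(\Lambda)(\sqrt{s^2+\sigma_\xi^2}-s)$. That is the excess of one candidate's score tail over its own risk tail, not the $\sqrt{\Lambda}\,\sigma_\xi/\sqrt{b}$ in \eqref{eq:blockwins}. Bounding each candidate's noise bias two-sidedly and then comparing is also more than the statement needs.

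The missing idea is that the comparison is one-sided for each candidate.

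\emph{Lower bound for $B$.} Proposition~\ref{prop:eiv} (i) supplies it for every noise law: conditionally mean-zero noise can only raise $\mathrm{CVaR}$. So the tail of $B$'s score is at least $a-u+(u+v)\,\mathrm{CVaR}_{1/\Lambda}(O_b)\ge a-u+(u+v)\Lambda\,\mathbb{P}(O_b=1)$. The last step is the robust representation with the density $\Lambda\mathbf{1}\{O_b=1\}$, which is admissible because $\Lambda\,\mathbb{P}(O_b=1)\le\Lambda p\le 1$. This is your ``share of the tail''. Note that $\mathbb{P}(O_b=1)\ge p\rho^{b-1}$, not equal to it: a redraw can land back in the rare state.

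\emph{Upper bound for $A$.} $A$'s score is $a+\bar\xi_b$, and $\bar\xi_b$ is centred. Take any density $g$ with $0\le g\le\Lambda$ and $\mathbb{E}g=1$. Cauchy--Schwarz gives $\mathbb{E}[\bar\xi_b\, g]\le \mathrm{sd}(\bar\xi_b)\sqrt{\mathbb{E}g^2}$. Since $\mathbb{E}g^2\le\Lambda$ and $\mathrm{sd}(\bar\xi_b)=\sigma_\xi/\sqrt{b}$, this is at most $\sqrt{\Lambda}\,\sigma_\xi/\sqrt{b}$. That is where the $\sqrt{\Lambda}$ comes from, with no Gaussian assumption.

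Comparing the two bounds gives \eqref{eq:blockwins}. One more step needs its own sentence: passing from the two plug-ins to these population tail means as $B\to\infty$ at fixed $b$. The block scores are stationary and ergodic, and $\mathrm{CVaR}_\alpha$ is continuous for the Wasserstein-1 distance, so both empirical tail means converge to the population values used above.
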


Part (i) is the arithmetic behind Corollary~\ref{cor:blocks}'s phrase ``a distribution whose dispersion has been destroyed'', (ii) is why it matters --- a tail computed on units that all look alike is a mean, so random $k$-fold at any asserted $\Lambda$ is the $\Lambda=1$ procedure in disguise --- and (iii) prices it on the simplest law where the rankings disagree.

\begin{proposition}[Full Kelly is never optimal, and is negative over budget]\label{prop:fullkelly}
In the model of Proposition~\ref{prop:kelly}, within its quadratic approximation:

\begin{enumerate}
\item for every fraction $f$, $g(f^\star) - g(f) = \frac{1}{2}(s^2+\tau^2)(f-f^\star)^2$; in particular $g(1) = \frac{1}{2}(s^2-\tau^2)$ and $g(f^\star)-g(1) = \frac{1}{2}\,\tau^4/(s^2+\tau^2) > 0$ whenever $\tau > 0$;

\item Axiom~\ref{ax:snr} applied to every candidate in the pool gives $s^2 \le \rho^2$, and the Cramér--Rao bound gives $\tau^2 \ge d/n_{\mathrm{fit}}$ for the unbiased estimate the fraction multiplies, in the model of Proposition~\ref{prop:minimax} (this is the floor of Proposition~\ref{prop:ensemble}). Hence $\tau^2/s^2 \ge \varkappa = d/(\rho^2 n_{\mathrm{fit}})$, $f^\star \le 1/(1+\varkappa)$, and $g(1) \le \frac{1}{2}s^2(1-\varkappa) < 0$ whenever $d > \rho^2 n_{\mathrm{fit}}$: a researcher who has overspent the budget and bets full Kelly on her estimate has negative expected log growth, for every prior on the pool and every estimator.
\end{enumerate}
\end{proposition}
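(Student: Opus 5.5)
The plan has two parts, one per item. Part (i) is exact algebra on a quadratic. Part (ii) chains two inequalities into that algebra.

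\emph{Part (i).} I start from the quadratic approximation of Proposition~\ref{prop:kelly}, $g(f) = fs^2 - \tfrac12 f^2(s^2+\tau^2)$, and drop the remainder as the statement allows. This is a concave parabola whose vertex sits at $f^\star = s^2/(s^2+\tau^2)$. Completing the square gives
\[ g(f) = g(f^\star) - \tfrac12(s^2+\tau^2)(f-f^\star)^2, \qquad g(f^\star) = \tfrac12 s^4/(s^2+\tau^2). \]
Setting $f=1$ gives $g(1) = s^2 - \tfrac12(s^2+\tau^2) = \tfrac12(s^2-\tau^2)$. Since $1-f^\star = \tau^2/(s^2+\tau^2)$, the shortfall is $g(f^\star)-g(1) = \tfrac12(s^2+\tau^2)\cdot\tau^4/(s^2+\tau^2)^2 = \tfrac12\tau^4/(s^2+\tau^2)$. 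This is strictly positive whenever $\tau>0$.

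\emph{Part (ii).} First, the bound $s^2 \le \rho^2$. The prior variance of edges across the pool is an average of squared per-candidate edges in Sharpe units. By Lemma~\ref{lem:sharpe} each such Sharpe ratio is at most $\rho$ under Axiom~\ref{ax:snr}, so the average is too. Second, the bound $\tau^2 \ge d/n_{\mathrm{fit}}$. I work in the model of Proposition~\ref{prop:minimax}: $n_{\mathrm{fit}}$ independent observations, isotropic design, Gaussian noise. The Fisher information for $\theta$ is $n_{\mathrm{fit}} I_d/\sigma^2$. Cramér--Rao then bounds the trace of the covariance of any unbiased $\hat\theta$ by $\sigma^2 d/n_{\mathrm{fit}}$. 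In Sharpe units this trace is exactly the estimation variance $\tau^2$ of the forecast, since $\mathbb{E}[(\hat\theta-\theta)^\top X]^2=\|\hat\theta-\theta\|^2$ under isotropy.

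Dividing the two bounds gives $\tau^2/s^2 \ge \varkappa$. Because $s^2/(s^2+\tau^2) = 1/(1+\tau^2/s^2)$ is decreasing in the ratio, $f^\star \le 1/(1+\varkappa)$. For the sign of full Kelly, I use $\tau^2 \ge \varkappa s^2$ in $g(1)=\tfrac12(s^2-\tau^2)$ to get $g(1) \le \tfrac12 s^2(1-\varkappa)$. This is negative when $\varkappa>1$, that is when $d>\rho^2 n_{\mathrm{fit}}$, provided $s>0$. If $s=0$, then $g(1)=-\tfrac12\tau^2<0$ directly.

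The main obstacle is the Cramér--Rao step. It has two weak points. First, it only applies to unbiased estimators, which is why the statement restricts to ``the unbiased estimate the fraction multiplies''. Remark~\ref{rem:kappa} explains why this restriction is natural: a shrunk estimate has already absorbed the fraction. Second, the phrase ``every estimator'' must be read within that unbiased class. Otherwise the minimax bound of Proposition~\ref{prop:minimax} would have to be substituted, and that yields only a constant-factor version of the result.
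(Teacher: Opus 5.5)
Your proof is correct and follows the paper's route: for (i) you complete the square in the quadratic part of $g$, and for (ii) you chain $s^2 \le \rho^2$ from Axiom~\ref{ax:snr} with the unbiased-estimation floor $\tau^2 \ge d/n_{\mathrm{fit}}$. The differences are cosmetic: you derive that floor directly from the Fisher information $n_{\mathrm{fit}} I_d/\sigma^2$, where the paper cites the Gauss--Markov floor of Proposition~\ref{prop:ensemble}, and you treat the case $s = 0$ separately, which the paper leaves implicit.
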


\begin{remark}[What Theorem~\ref{thm:canonical} does and does not say]\label{rem:canonstatus}
Three boundaries. \emph{Necessity, not factorisation.} That a good procedure ``factors through'' five stages would be a uniqueness claim, and we have neither a proof of it nor a use for it; Definition~\ref{def:procedure}'s equivalence is used only to say that two implementations of a stage are interchangeable. \emph{Five margins, four currencies.} S1 and S3 are about the deployed risk $R^+$; S2 is about excess prediction risk in squared loss for a well-specified linear class on $n_{\mathrm{fit}}$ independent observations, the model Theorem~\ref{thm:capacity} is attained in; S4 retains the Gaussian model of Theorem~\ref{thm:search}; S5 is about log growth within the quadratic approximation of Proposition~\ref{prop:kelly}. \emph{Stages, not their implementations.} S3 is proved against evaluation units exchangeable under permutation of time; walk-forward and combinatorial purged cross-validation are implementations of S3, not omissions of it, and an implementation with $b\gg\ell$ is one whose tail has been softened by the factor $\sqrt{v(b,\ell)/b}$ of Proposition~\ref{prop:shuffle} (i) --- a milder statistic of the same kind, not the mean in disguise.
\end{remark}

Proposition~\ref{prop:minimax} is stage S2's lower bound; the selection stages have one too. Corollary~\ref{cor:lamcap} charges a search run on the $1/\Lambda$ tail the factor $c(\Lambda)^2$, and that charge could in principle have been the plug-in's clumsiness. It is not: on a law the axioms admit, resolving a gap in the tail costs a factor $\Lambda - 1$ more data than resolving the same gap in the mean, whatever the selection rule --- the latent path handed to it included.

\begin{proposition}[The tail's charge is necessary: a lower bound for the selection stages]\label{prop:tailfloor}
Fix $\Lambda \ge 2$, $\rho\in(0,1)$, $\sigma>0$, $N\ge 5$ and $0<\delta\le\rho\sigma/2$. There is a family $P_1,\dots,P_N$ of laws for $\{(X_t,Y_t,Z_t)\}$ and $N$ fixed candidate rules of common position scale, each $P_i$ satisfying Axioms \ref{ax:adapted}, \ref{ax:invariance}, \ref{ax:persistence} and \ref{ax:snr} with the same constants --- a two-state latent chain with stress mass $1/\Lambda$ redrawn independently each period ($\ell=1$), invariance defect zero, conditional noise $\sigma^2$ in every state, $\chi_\sigma^2 \le 1+\rho^2$, the feature law common to the whole family --- and Axiom~\ref{ax:recurrence} declared at $\Lambda$, the worst case below running over the futures it admits, such that under each $P_i$:

\begin{enumerate}
\item every candidate's stationary mean PnL is zero: there is no gap in means for a mean ranking to find;

\item candidate $i$'s worst-case deployed risk --- the $\mathrm{CVaR}_{1/\Lambda}$ of Theorem~\ref{thm:cvar} at $b=1$ --- is $0$ and every other candidate's is $\delta$, the trapped candidates' stress atom filling the level-$1/\Lambda$ tail exactly;

\item any selection rule $\hat{T}$ --- adaptive, randomised, and permitted to observe the latent path --- that returns the tail-optimal candidate with probability at least $1/2$ under every $P_i$ requires
\begin{equation}\label{eq:tailfloor}
n \;\ge\; \frac{(\Lambda-1)\,\sigma^2\,\log(N/4)}{2\,\delta^2},
\end{equation}
by Fano's inequality on the family;

\item carried by the mean instead --- the family altered only in that candidate $i$'s edge $\delta$ is the same in both states --- the same gap is resolved by the backtest-mean ranking once $n \ge C\sigma^2\log N/\delta^2$, with an absolute constant $C$, and no rule resolves it at a smaller order: the floor is \eqref{eq:tailfloor} without the factor $\Lambda-1$;

\item a rule that observes the state attains \eqref{eq:tailfloor} up to $\Lambda/(\Lambda-1)$ and an absolute constant, by comparing the candidates' stress-state means --- the cell criterion of Proposition~\ref{prop:lattice} run on the true representation: $n \le C\,\Lambda\,\sigma^2\log N/\delta^2$ suffices.
\end{enumerate}
\end{proposition}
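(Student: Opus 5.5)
The plan is a Fano argument on an explicitly built family, with candidate $i$ as the hidden index. The laws differ only in how the latent stress state tilts the payoffs. The variance of what an observer sees is the same in both states, so the only information about the index is a mean shift. That shift is carried partly by the stress state, of mass $p=1/\Lambda$, and partly, as a compensating shift, by the common state, which is what keeps every stationary mean at zero.

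\textbf{Construction (items 1--2).} The latent chain is the two-state redraw chain with $\ell=1$: each period $Z_t$ is drawn i.i.d.\ with $P(\text{stress})=1/\Lambda$. The intended loss profile of candidate $j$ under $P_i$ is:
\begin{itemize}
\item for $j=i$, loss $0$ in both states;
\item for $j\ne i$ (``trapped''), loss $\delta$ in stress and $-\delta/(\Lambda-1)$ in the common state.
\end{itemize}
The stationary means are then $\delta/\Lambda-\delta(1-1/\Lambda)/(\Lambda-1)=0$, which is item~1. The stress atom has mass exactly $1/\Lambda$, so by Lemma~\ref{lem:robust} at $b=1$ the level-$1/\Lambda$ tail of Theorem~\ref{thm:cvar} is the stress atom itself. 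Its value is $\delta$ for each trapped candidate and $0$ for candidate $i$, which is item~2.

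To realise this with scalar $(X,Y)$ and fixed rules $h_j$ of common position scale, I would try a common Gaussian feature $X_t=(X_{t,0},X_{t,1},\dots,X_{t,N})$ with identity covariance and rules $h_j(x)=(x_0+x_j)/\sqrt2$. The label is $Y_t=X_{t,0}\,a(Z_t)+X_{t,i}\,c(Z_t)+\varepsilon_t$ with $\varepsilon_t\sim N(0,\sigma^2)$. Here the shared coordinate $x_0$ traps every candidate through $a(\cdot)$, and the coefficient $c(\cdot)=-a(\cdot)$ on the $i$-th coordinate hedges candidate $i$ exactly. The conditional mean is then $O(\delta)$ in norm, with only two nonzero coordinates, whatever $N$ is. With $\delta\le\rho\sigma/2$ this gives Axiom~\ref{ax:snr} and $\chi_\sigma^2\le1+\rho^2$. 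The other axioms are immediate: invariance holds at defect zero, the chain is redraw with $\ell=1$, the path satisfies Axiom~\ref{ax:recurrence} at $\Lambda$, and the feature law is common to the family.

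\textbf{Lower bound (item 3).} Take as reference the law $P_0$ in which every candidate is trapped and no coordinate is hedged. Then $P_i$ differs from $P_0$ only in the coefficient on $X_{\cdot,i}$: it is $O(\delta)$ in stress and $O(\delta/(\Lambda-1))$ in the common state. Since $X_{\cdot,i}$ has unit variance and the noise is Gaussian with variance $\sigma^2$, the per-period Kullback--Leibler divergence is
\[
\frac{1}{2\sigma^2}\Big(\tfrac1\Lambda\delta^2+\big(1-\tfrac1\Lambda\big)\tfrac{\delta^2}{(\Lambda-1)^2}\Big)=\frac{\delta^2}{2(\Lambda-1)\sigma^2}.
\]
Because the law of the latent path is the same under every $P_i$, the chain rule gives the same divergence per period even when the observation includes $Z$. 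This is why the bound covers a rule that observes the latent path, and adaptivity or randomisation of $\hat T$ changes nothing. Fano's inequality with the centred reference then gives
\[
\frac{n\,\delta^2}{2(\Lambda-1)\sigma^2}+\log 2\ \ge\ \tfrac12\log N,
\]
and this rearranges to \eqref{eq:tailfloor} up to the stated constants. Arranging the constants so that the result reads exactly $\log(N/4)$ is routine bookkeeping.

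\textbf{Matching rates (items 4 and 5).} For item~4, make candidate $i$'s edge $\delta$ in both states. The per-period divergence becomes $\delta^2/(2\sigma^2)$, so the same Fano step gives the floor without the factor $\Lambda-1$. A sub-Gaussian union bound shows that the backtest-mean ranking succeeds once $n\ge C\sigma^2\log N/\delta^2$. For item~5, a rule that sees $Z$ compares the candidates' means over the roughly $n/\Lambda$ stress periods, where the gap is $\delta$. A union bound over $N$ sub-Gaussian means gives $n\le C\Lambda\sigma^2\log N/\delta^2$, which is within $\Lambda/(\Lambda-1)$ of \eqref{eq:tailfloor}.

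\textbf{Main obstacle.} I expect the hard step to be the realisation: fitting $N$ candidates into one scalar label without the conditional mean growing with $N$, which would break Axiom~\ref{ax:snr}. The shared-trap, single-hedge design above is what I would try first. Its delicate point is that the PnL $-h_j(X)Y$ is a product of Gaussians, so the Kullback--Leibler divergence must be computed on $(X,Y)$ jointly, not on the PnL. I would also check that the rules' stress-state losses come out exactly $\delta$ and $0$ after the $1/\sqrt2$ normalisation, rescaling $a(\cdot)$ if they do not.
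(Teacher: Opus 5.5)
Your proof is essentially the paper's, with Gaussian features in place of signs. The paper's sign features $X^{(j)}=R\,e_j$ with $\gamma^2=1/2$ decompose as $\gamma R+U_j$, with $R$ and the $U_j$ mutually orthogonal. That is exactly your shared coordinate plus idiosyncratic coordinate under the $1/\sqrt2$ normalisation. Its trap on $R$ and hedge on $U_i$ coincide with your $a(\cdot)$ and $c=-a$, and the Fano step against the all-trapped reference is the same. Its bounded positions make the PnL Gaussian given the state. Your Gaussian-product PnL in items 4--5 needs a sub-exponential (Bernstein) union bound instead, which is harmless because $\delta\le\rho\sigma/2$.

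One correction. With $a(\cdot)$ rescaled to $-\sqrt2\delta$ in stress and $\sqrt2\delta/(\Lambda-1)$ in calm, as you anticipated, the per-period divergence is $\delta^2/((\Lambda-1)\sigma^2)$, not half of it; likewise it is $\delta^2/\sigma^2$ in item 4. Your displayed inequality would therefore overstate the floor by a factor of two. The correct divergence gives \eqref{eq:tailfloor} exactly, since $\tfrac12\log N-\log 2=\tfrac12\log(N/4)$.
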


The bound survives handing the rule the latent path, so the factor is not the price of latency but the scarcity Proposition~\ref{prop:occfloor} counts --- $n/\Lambda$ periods carry the whole gap; the cost of \emph{not} seeing the state, reading the tail through noisy blocks, is charged on top (\S\ref{sec:eiv}), and only the first layer is irreducible. The floor puts $\Lambda-1$ beneath every rule --- the order of the dear branch of \S\ref{sec:price}'s dichotomy, the one the experiments find real libraries on --- and clause (i) is Theorem~\ref{thm:cvar}'s dichotomy made quantitative: a researcher who declares $\Lambda>1$ and ranks by the mean has not taken a cheaper route to the same place; there is no route. The $1/\alpha$ factor in the sample complexity of a $\mathrm{CVaR}_\alpha$ selection is known for independent arms \cite{agrawal2021tailbai}; what the proposition adds is that the family realising it lives inside the axioms, with the mean-blind clause (i) the form Theorem~\ref{thm:cvar} needs.

\subsection{What the accumulated practice looks like from here}
\label{sec:canon}
The canonical form is a claim about what a correct procedure must contain, and the field has been building procedures for decades without it. Read the other way, the five stages say where each accumulated practice comes from --- and that the derivation lands on practices arrived at independently, by experience, is the main evidence that the axiomatisation captures the right structure.

\begin{table}[htbp]
\centering
\begin{tabularx}{\linewidth}{@{}>{\raggedright\arraybackslash}p{0.36\linewidth}>{\raggedright\arraybackslash}X@{}}
\toprule
\textbf{Practice} & \textbf{Derived from} \\
\midrule
Stationarising transforms; fractional differentiation \cite{deprado2018advances} & S1: minimise $\Lambda(\phi)$ subject to invariance defect (Prop.~\ref{prop:representation}) \\
Sample weights by average uniqueness \cite{deprado2018advances} & Lemma~\ref{lem:effsample}: overlapping labels enter $n_{\mathrm{eff}}$ at full weight $H$ --- the overlap, not the state's persistence, is what the weights correct \\
Purged $k$-fold; embargo \cite{deprado2018advances} & Prop.~\ref{prop:embargo}: purge term $H$; embargo from the declared predictability --- $\ell\log(\cdot)$ in the exponential instance \\
Combinatorial purged CV; backtest paths & Cor.~\ref{cor:blocks}: blocks estimate the $\bar{\pi}_n$-law of $r_h$ \\
Deflated Sharpe ratio \cite{bailey2014deflated}; PBO \cite{bailey2017pbo}; the $t > 3$ hurdle \cite{harvey2016cross} & Thm.~\ref{thm:search}; run beside them in Experiment 7, panel F \\
``Backtesting is not a research tool'' & Thm.~\ref{thm:search} with $n_{\mathrm{eff}}$ small \\
Separating side from size (meta-labelling) & Budget allocation: spend $d_{\mathrm{eff}}$ on the sign, size by Prop.~\ref{prop:kelly} \\
Fractional Kelly \cite{maclean2011kelly} & Prop.~\ref{prop:kelly}, Cor.~\ref{cor:halfkelly} \\
Feature importance measured on blocks & Cor.~\ref{cor:blocks} applied per feature \\
Structural break tests & Detection of a change in $\bar{\pi}_t$; a diagnostic for $\Lambda$ \\
\bottomrule
\end{tabularx}
\caption{Practices the field arrived at independently, by experience, and the result here that implies them. We regard this as the main evidence that the axiomatisation captures the right structure.}
\end{table}

\section{Computing the budget}
\label{sec:calibration}
Sections~\ref{sec:selection}--\ref{sec:canonical} derived the objects, and every one of them has a constant in it. This section computes the budget on a worked example, says where real block scores put the tail's price, and reads off what a budget of that size can then buy.

The budget can be computed mostly from configuration, before any modelling. Four quantities are read off the design: the \textbf{label horizon} $H$, and whether the protocol steps by less than $H$; the \textbf{nominal feature count}; the \textbf{shrinkage}, converted to $d_{\mathrm{eff}}$ ($\sum_i \nu_i/(\nu_i + \lambda)$ for ridge; for boosted trees the trace of the smoother, $\sigma^{-2}\sum_i \mathrm{Cov}(\hat{y}_i,y_i)$ over label perturbations); and the \textbf{search size} $N$, the number of proposals evaluated over the system's life, which nobody records and everybody should. Three must be asserted: $\ell$, $\rho$ and the regime dispersion $\rho_{\mathrm{blk}}$, for which $\rho$ is the conservative choice. A menu of candidate representations declared at S1 enters the same account, at $2\log|\Psi|$ nats beside $2\log N$ and priced at the menu's dearest proxy (Proposition~\ref{prop:repsearch}): a handful of coordinates costs a couple of nats, and the point is that they are on the bill.

A worked example at monthly frequency: thirty years is $n \approx 360$, monthly labels make $H = 1$, a macro coherence time of four months gives $2\ell - 1 = 7$, a target annual information ratio of $1.0$ is $\rho^2 \approx 0.083$ per month, so with $\rho_{\mathrm{blk}} = \rho$, $\upsilon \approx 1.6$, $n_{\mathrm{eff}} \approx 230$ and the timing budget is $19$ nats (perhaps $45$--$60$ cross-sectionally for half a dozen correlated macro assets). Against that, a twenty-column feature table under heavy shrinkage carries $d_{\mathrm{eff}} \approx 5$--$15$, and a few thousand evaluated proposals cost $2\log N \approx 16$ on the mean and $3.6 \times 16 \approx 58$ on the quarter-tail: the feature table is affordable, the search is affordable on the mean and not once it is made robust. Experiment 5 keeps this worksheet on a real daily series, stage by stage.

Two sensitivities the worksheet should carry. The budget is quadratic in the declared $\rho$: a researcher who declared $\rho$ at her library's best training-era Sharpe of $0.040$ would have $7$ nats --- two or three candidates at $\Lambda = 4$, not Experiment 5's twenty-four; the ceiling is a belief about the signal the representation admits, not a reading of the library's best member, and Theorem~\ref{thm:search} says in which direction the in-sample estimate errs. And a declared $\rho_{\mathrm{blk}}$ predicts the autocorrelation of consecutive block scores, which is how Experiment 1 rejects $\rho_{\mathrm{blk}} = \rho$ on its library (\S\ref{sec:selection}); declared and refuted should be said, and the ledger reported at both values.

\textbf{Where real block scores sit: on the dear side --- and why.} The dichotomy says where $c(\Lambda)$ \emph{can} be; only data say where it \emph{is}. Experiment 1 measures it on seventy-one daily timing rules over thirty years of one equity index ($b = 60$, $B = 124$) by the plug-in \eqref{eq:plugin}: the median is $c(\Lambda)^2 \approx 0.9\,\Lambda$, halfway between the two branches on a logarithmic scale. Experiment 2 traces the whole excess over the Gaussian value to the heteroskedasticity of the score noise, the bias of Proposition~\ref{prop:eiv} (iii), and Experiment 3 finds the same on nineteen of twenty series. Experiment 7 takes the measurement off that substrate: eighty-two long-only cross-sectional portfolios on annual blocks of monthly returns reproduce $c(\Lambda)^2 \approx 0.9\Lambda$, while $133$ published long-short signals over the same window sit at the Gaussian value. So the dear branch is a property of block scores that \emph{carry the market}, not of daily data and not of our own rules, and which branch the reader is on is a question about their own library.

\textbf{What the budget can then buy.} Corollaries~\ref{cor:budgetrobust} and \ref{cor:lamcap} are an arithmetic once $c$ is fixed, and the table solves the second of them for $\Lambda$ --- under the Gaussian law, under the measured one above, and under the narrowed level of Remark~\ref{rem:noiselevel} charged at its own $c$. That level is taken as $3\Lambda$ throughout: the correction a noise ratio of one calls for, and at the noise real block scores carry (a ratio of $6.6$ on Experiment 1's series) as much of the lever as any level collects, Proposition~\ref{prop:levelceiling}'s ceiling sitting where it does there (\S\ref{sec:limits}).

\begin{table}[htbp]
\centering\small
\begin{tabular}{rcccc}
\toprule
 & \multicolumn{4}{c}{largest affordable $\Lambda$} \\
\midrule
budget $\rho^2 n_{\mathrm{eff}}$ & $N = 2$ & $N = 10$ & $N = 100$ & $N = 1000$ \\
\multicolumn{5}{@{}p{0.9\linewidth}@{}}{\emph{Gaussian block risks, $c(\Lambda)$ from the closed form}} \\
2 & 2.2 & -- & -- & -- \\
7.5 & 17.0 & 2.8 & -- & -- \\
12.6 & 34.2 & 6.4 & 2.0 & -- \\
19 & 58.3 & 11.7 & 4.1 & 2.0 \\
23 & 74.2 & 15.3 & 5.5 & 2.9 \\
50 & 194 & 43.2 & 17.1 & 9.7 \\
250 & $> 10^3$ & 317 & 137 & 82.8 \\
\multicolumn{5}{@{}p{0.9\linewidth}@{}}{\emph{measured block risks, $c(\Lambda)^2 = \max(1, 0.9\Lambda)$ (Experiment 1)}} \\
2 & 1.6 & -- & -- & -- \\
7.5 & 6.0 & 1.8 & -- & -- \\
12.6 & 10.1 & 3.0 & 1.5 & -- \\
19 & 15.2 & 4.6 & 2.3 & 1.5 \\
23 & 18.4 & 5.5 & 2.8 & 1.8 \\
50 & 40.1 & 12.1 & 6.0 & 4.0 \\
250 & 200 & 60.3 & 30.2 & 20.1 \\
\multicolumn{5}{@{}p{0.9\linewidth}@{}}{\emph{Gaussian block risks, the criterion run at the noise-corrected level $1/3\Lambda$ of Remark~\ref{rem:noiselevel}, $c(3\Lambda)$ from the closed form}} \\
2 & -- & -- & -- & -- \\
7.5 & 5.7 & -- & -- & -- \\
12.6 & 11.4 & 2.1 & -- & -- \\
19 & 19.4 & 3.9 & 1.4 & -- \\
23 & 24.7 & 5.1 & 1.8 & -- \\
50 & 64.5 & 14.4 & 5.7 & 3.2 \\
250 & 431 & 106 & 45.6 & 27.6 \\
\bottomrule
\end{tabular}
\caption{Corollary~\ref{cor:lamcap}, solved for $\Lambda$ under two block-risk laws and, in the third group, for a criterion run at the narrower level Remark~\ref{rem:noiselevel} prescribes, charged $2c(3\Lambda)^2\log N$. A dash means no $\Lambda \ge 1$ is affordable: in the first two groups the search term alone exceeds the budget; in the third, $c(3)^2 = 1.71$ at $\Lambda = 1$ already. The single-asset daily setting of \S\ref{sec:budget} (budget $23$) can assert $\Lambda = 4$ under the measured law only with two dozen candidates, the monthly macro setting of \S\ref{sec:calibration} (budget $19$) with fourteen, and monthly labels on daily bars (budget $2.3$) can afford a two-way comparison only up to $\Lambda = 1.8$, and none at all once the criterion is run at the narrowed level. The measured law buys less $\Lambda$ than the Gaussian for the same budget, by a factor growing from $1.3$ to $6.5$ across the table, because the Gaussian $c$ grows like $\Lambda^{1/4}$ and the measured one like $\Lambda^{1/2}$. The third group sits close to the second ($5.1$ against $5.5$ at budget $23$, $N = 10$): the remedy for the bias of Proposition~\ref{prop:eiv} and the price real block scores already pay are of one size. Under the measured law the largest affordable $\Lambda$ is linear in the budget, $\Lambda = \rho^2 n_{\mathrm{eff}}/(1.8\log N)$; under the Gaussian law it grows faster than linearly.}
\end{table}

\begin{remark}[{Which $\ell$: the representation sets it, together with $\Lambda$ and $\rho_{\mathrm{blk}}$}]
\label{rem:whichell}
Axiom~\ref{ax:persistence} (i) declares a coherence time per coordinate, so $\ell$ is that of the slowest coordinate the modeller puts into $Z$ --- and the paper's example states (a credit freeze, a central bank buying corporate bonds) are slow ones. Monthly readings of eight candidate coordinates over thirty years (Experiment 1, panel E) give AR(1) coherence times of $40$ trading days for realised volatility, $99$ for $\log$ VIX, about $150$ and $220$ for the MOVE index and breakeven inflation, and of order a thousand for the CPI, the term spread and the policy rate. The headline $\ell = 60$ is declared of the volatility coordinate, a margin above its measured $40$. Lemma~\ref{lem:effsample} charges a slow coordinate only to what depends on it, so the question has a table rather than a number (daily labels, $n = 5000$, $\rho^2 = 0.01$; the budget is $50/\upsilon$ nats):

\begin{center}
\begin{tabular}{lrrrr}
\toprule
\begin{tabular}[c]{@{}l@{}}$\upsilon$ at $\ell$ \\ $\rho_{\mathrm{blk}}$\end{tabular} & $0$ & $0.03$ & $0.1$ & $0.3$ \\
\midrule
$60$ (volatility) & $1$ & $1.1$ & $2.2$ & $11.7$ \\
$250$ (breakevens) & $1$ & $1.4$ & $6.0$ & $46$ \\
$1300$ (the rate cycle) & $1$ & $3.3$ & $27$ & $235$ \\
\bottomrule
\end{tabular}
\end{center}

A rule whose edge does not move with the rate cycle pays nothing for its persistence, however slow; one whose edge depends on the \emph{conjunction} of two coordinates is charged near the fast coordinate's rate, because the coincidence ends when either regime does (Lemma~\ref{lem:effsample}). So $\ell$ is part of the choice of representation, which moves three constants at once: dropping a coordinate converts what the edge owed it into an invariance defect charged at $2M\varepsilon(\psi)$; keeping it charges $\rho_{\mathrm{blk}}^2(2\ell - 1)$. The conservative $\kappa = 1$ is priced twice --- a factor of two in $\upsilon$ here, and Corollary~\ref{cor:invbudget}'s cap $\Lambda_1 < 2.63$ on any position S5 opens --- and it moves verdicts: on Experiment 5's series, substituting the regime dependence the library measures for the declared $\rho_{\mathrm{blk}} = \rho$ moves Corollary~\ref{cor:nonempty}'s search ceiling from $4.4$ candidates to $19$, and the Bartlett ratio the same series measures takes it to $60$. It remains the right default, since $\rho_{\mathrm{blk}}$ is estimated on the sample the search will then bend; a refusal produced under it is partly the declaration's.
\end{remark}

\begin{remark}[This is a computation, not a validation]
\label{rem:computation}
That mature research pipelines, built by trial and error, tend to land near their computed budget is not evidence on its own: the constants are wide, $d_{\mathrm{eff}}$ for a boosted model is an estimate, and $N$ is rarely recorded honestly. That competent processes converge to budget saturation is a \emph{prediction} (P5, \S\ref{sec:empirical}), to be tested across many systems with pre-registered inputs.
\end{remark}

\begin{remark}[{The box the axioms put $\upsilon$ in}]
\label{rem:upsbox}
Both ends of the $\upsilon$ of \eqref{eq:neff} are fixed before any data are seen, and both are Axiom~\ref{ax:snr}'s ceiling multiplied by a coherence time. \emph{Above}: $\rho_{\mathrm{blk}} \le \rho$; with $\rho = \mathrm{SR}_{\mathrm{ann}}/\sqrt{P}$ for $P$ periods a year and $\ell = \tau_{\mathrm{yr}} P$, the regime term is $\rho^2(2\ell - 1) = 2\,\mathrm{SR}_{\mathrm{ann}}^2\tau_{\mathrm{yr}} - \rho^2$, free of the sampling frequency. \emph{Below}: the predictability term of Lemma~\ref{lem:effsample} obeys $|\pi_\mu| \le \rho\,\tau_\mu$, $\tau_\mu$ the coherence of the loss's own predictable component. So
\begin{equation}
H - 2\rho\,\tau_\mu \;\le\; \upsilon \;<\; H + 2\,\mathrm{SR}_{\mathrm{ann}}^2\,\tau_{\mathrm{yr}},
\label{eq:upsbox}
\end{equation}
with $\mathrm{SR}_{\mathrm{ann}}$ the \emph{declared} annual Sharpe ceiling. Nothing else can move $\upsilon$: the regime part is the variance of an average and cannot be negative, whatever the chain does and whichever coordinates the modeller declares, so \emph{the only road below $H$ is $\pi_\mu$, and it is Axiom~\ref{ax:snr}'s road}. The $\rho^2 = 10^{-2}$ of this worksheet, at a ceiling of $1.6$, puts a daily rule in $[0.80, 2.20]$.

Three consequences. The $\upsilon$ factor of P2 is not identified on any admissible series --- its whole range at daily frequency is a factor of $1.7$ to $2.8$, against the ten- to sixtyfold spread a slope would need --- so what is falsifiable is the box itself (\S\ref{sec:empirical}). Above the box is dependence the axioms cannot produce: Experiment 4's high-yield fund at $\hat{\upsilon}(252) = 3.3$ is the stale-NAV smoothing of a matrix-priced fund, a persistence that is not the regime's and must not be spent as effective sample. Below $H$ is the predictability term, and it is inside the box: the median across Experiment 4's twenty series is $\hat{\upsilon}(252) = 0.86$, which at the declared $\rho = 0.1$ asks for a predictable component of $\tau_\mu = 0.72$ periods, and what that component \emph{is} can be read directly --- the first-order autocorrelation of the daily excess return, $-0.082$ on the S\&P 500 and negative on sixteen of twenty series. Every worksheet computes $\upsilon$ at a declared $\pi_\mu = 0$, which is \emph{anti}-conservative; an account meaning to stay conservative should charge $\max(H, \hat{\upsilon})$, and Experiment 5's measured ratio sits exactly at the box's lower end, so its verdict does not move.
\end{remark}

\begin{remark}[Paying for the decay multiplier]
\label{rem:decaybudget}
Definition~\ref{def:reflexivity}'s $\delta$ multiplies the conditional mean, and by Lemma~\ref{lem:sharpe} the ceiling that governs deployment is $\delta\rho$; both halves of the budget are quadratic in it, \eqref{eq:ceiling} reading $d \lesssim \delta^2\rho^2 n/H$ and \eqref{eq:search} $2\log N \le \delta^2\rho^2 n_{\mathrm{eff}}$. Stage S5 already charges it; S1, where the budget is set and the search is planned, has not. Experiment 7 supplies the multiplier on $133$ published signals --- $0.76$ of the in-sample mean by publication, $0.44$ after --- so the twenty-three nats of the daily worksheet become $13$ for a signal that has left its sample and $4.5$ for one that has been published, and the twenty-four configurations affordable at $\Lambda = 4$ become six, then two: publication costs four fifths of a research budget. Two cautions: the multiplier is measured on signals selected for publication, an upper bound on the decay of one that was not, and must not be charged twice, in the budget and again at S5; and $\delta$'s dependence on the capital deployed, the axiom's own content, no measurement here varies.
\end{remark}

One object the body uses is defined here. Writing $\hat{r}_j$ for the block scores of a candidate, $\hat{q}$ for their empirical $1/\Lambda$-quantile in the reward orientation of Remark~\ref{rem:orientation}, the \textbf{plug-in} for the tail price of Proposition~\ref{prop:tailprice} is
\begin{equation}
\hat{c}^2 = \Lambda^2\hat{\mathrm{Var}}(\min(\hat{r},\hat{q})) / \hat{\mathrm{Var}}(\hat{r}),
\label{eq:plugin}
\end{equation}
which costs nothing beyond the block scores already computed and, by Proposition~\ref{prop:eiv}, estimates the price paid for the tail of block \emph{scores}, not the tail of the latent risk law.

\section{Falsifiability and an empirical programme}
\label{sec:empirical}
An axiom system that cannot fail is not worth having. This section says two things: which of the premises a sample can refute, and how this paper tests them. The logic first, because an axiomatic paper is answerable to data differently than an empirical one. If the derivations are sound, the conclusions inherit their truth from the premises, so what a sample can interrogate is the premises; and the premises here are not propositions but \emph{declarations}: read existentially the invariance axiom is satisfied by the date itself (Remark~\ref{rem:vacuous}) and the recurrence axiom at $\Lambda = \infty$ asserts nothing, so each axiom acquires empirical content only at the constants a researcher declares, and testing a declaration is not a detour around testing the axioms --- it is the only form in which an axiom can be tested at all.

The experiments accordingly do three jobs, and keeping them apart is load-bearing. On real series they test \emph{axiom instances} at their declared constants, which is the axioms in their falsifiable form. They test the \emph{premises no axiom contains} --- the invariance ratio $\kappa$, the decay pair of Definition~\ref{def:reflexivity} --- on the same footing, because the conclusions are functions of every declared constant, not only of those called axioms. And they test \emph{derived predictions}, where a refutation strikes, by \emph{modus tollens}, the conjunction of the axioms, the declarations and the approximations the derivations import beyond them --- the only empirical route to the declarations that admit no direct test, $\Lambda$ first among them (Corollary~\ref{cor:notestimable}).

One rule spans all three: nothing here confirms. Refutation needs only the one-way implication from premises to prediction; confirmation would need its converse, which is never available, so every verdict below is ``refuted'' or ``not refuted where the test has power'', and a prediction that survives leaves the axioms exactly as assumed as they were.

\textbf{What a sample can say about a declaration.} The answer is not uniform, and setting it out is the honest form of the claim that the system is falsifiable at all.

\begin{center}
\footnotesize
\setlength{\tabcolsep}{3pt}
\begin{longtable}{@{}>{\raggedright\arraybackslash}p{0.13\linewidth}>{\raggedright\arraybackslash}p{0.15\linewidth}>{\raggedright\arraybackslash}p{0.29\linewidth}>{\raggedright\arraybackslash}p{0.39\linewidth}@{}}
\toprule
\textbf{Declared} & \textbf{Refutable?} & \textbf{By what} & \textbf{Status here} \\
\midrule
\ref{ax:adapted} & No & --- & A property of the data pipeline, not of the series: Remark~\ref{rem:stale}'s stale feature is present, finite and wrong, and passes every coverage check. The audit is provenance, and none is recorded here. \\
\ref{ax:invariance}, $\varepsilon_0$ & \textbf{Yes, outright} & split-sample distance between each declared cell's law of $(X,Y)$ (Rem.~\ref{rem:lamlower}) & \textbf{Fired, and taken back by its own level}: the term spread declared alone was rejected at every partition on one series and on eighteen of twenty, but the null's level is a coherence-time ratio this sample's coherence inflates (Prop.~\ref{prop:defectlevel}; measured, Exp.~1, panel H4), and at the corrected thresholds the rejections fall to the false-positive count of the tests run, plus the two matrix-priced bond funds whose staleness is priced elsewhere. At the honest level this sample contradicts no declaration of a live market's state (Exp.~3, panel D). \\
\ref{ax:recurrence}, $\Lambda$ at $b$ & No estimate of the ratio; a lower bound, and a contradiction of one's own declaration. Declared as a pair $(\Lambda, \varepsilon)$, yes, at resolution $m/n$ (Cor.~\ref{cor:confidence}) & Prop.~\ref{prop:occfloor}'s occupation floor; $\hat{\Lambda}_{\mathrm{split}}$ against a shuffled-block null (Rem.~\ref{rem:lamlower}); the fraction of the history's own windows that fail $\Lambda$ (Exp.~1, panel I) & Experiment 5's declaration, read at state scale, clears the floor ($2.99$ against $2.24$) and every edge it prices is gone at the self-consistent one ($1.17$); Experiment 1's split statistic reads the floor on performance bins and \emph{exceeds} it on named coordinates (Rem.~\ref{rem:lamlower}). As a pair: $(4, 0.05)$ over five years survives on the volatility coordinates at the sample's resolution of one fifth and is \textbf{contradicted} on the term spread, where the stationary floor alone puts the failure frequency at one. \\
\ref{ax:persistence} (i), $\ell_i$ & Yes, once the representation is declared & per-coordinate stationary block bootstrap \cite{politis1994stationary} or regime-switching fit \cite{hamilton1989new}, once the coordinates are named (Exp.~1, panel E); the integrated autocorrelation time of the cells' indicators against the declared $2\ell_i - 1$ (Exp.~1, panel J) & Eight coordinates measured (\S\ref{sec:axioms}); on the three declared ones the indicators' times are of the order declared, the calm cell the longer one, the rank of the coordinate up to $3.5$ times the AR(1) number. The clause is refined, not contradicted; the declaration Experiment 5 makes of that coordinate, $\ell = 60$, \textbf{is contradicted} by its rank ($276$ against the $119$ its $2\ell - 1$ asserts), and the ledger at the refined $\ell = 138$ is a row of that experiment's sensitivity table, moving its verdict further the same way; \emph{which} coordinates belong in the state is not a question a sample answers (Rem.~\ref{rem:whichell}). \\
\ref{ax:persistence} (ii), rate-free; the rates, declared at their consumers & \textbf{Yes}, wherever a rate is read: the $\beta$-coefficient of any coarsening is a lower bound on the coordinate's, and the criterion's own pair is read at its own scale & the pair coefficient of the declared cells at each lag against an instance (Exp.~1, panel J); the block scores' autocorrelations and coherence time against a declared $(\tau_\psi, \alpha_\psi)$ (Exp.~1, panel B; Prop.~\ref{prop:scorecoh}) & \textbf{The exponential instance is contradicted} at the AR(1) rate on all three coordinates --- a power of the lag, surviving only at $\ell^{*}$ of $600$--$1000$ days, where Prop.~\ref{prop:embargo} (i) and term (iii) of Thm.~\ref{thm:deployed} are vacuous (Exp.~1, panel J carries the exponents and plateaus). The axiom, rate-free, asserts only that decoupling arrives at some scale --- not contradicted, and not what any deployed bound consumes; what a bound consumes is declared and measured at its own scale, and holds (panel B; Prop.~\ref{prop:scorecoh}). The embargo rests on the declared predictability, which panel J bounds from below: Experiment 5's $523$ days not contradicted, the AR(1) chain's $112$ contradicted on every coordinate. \\
\ref{ax:snr}, $\rho$ & Partly: an honest out-of-sample record, and only the historical end $\rho_1$ & a track record, biased upward by Thm.~\ref{thm:search} & Declared, not tested against; nothing relates the historical end $\rho_1$ to the state-wise $\rho$. \\
\midrule
$\kappa$ --- declared, not an axiom & \textbf{Yes} & block-score autocorrelation (Prop.~\ref{prop:embargo} (ii)) & \textbf{Contradicted.} The conservative $\kappa = 1$ is rejected on Experiment 1's library, whose measured block-score dependence implies $\kappa_h \approx 0.3$ (Appendix~\ref{app:experiments}). Capped above by Prop.~\ref{prop:kapceil}. \\
$\chi_\sigma$ & Yes & within-block dispersion against the block score (Exp.~2, panel C) & Its existence, not its value: the whole excess of $\hat{c}$ over its Gaussian value is heteroskedastic score noise. \\
$\delta, c$ --- Def.~\ref{def:reflexivity}, not an axiom & Yes, ex post & mean return by era in training-sd units (Exp.~7, panel E) & Measured on 133 published signals. It is a decay of the mean; Lemma~\ref{lem:sharpe} is the step to the ceiling (Rem.~\ref{rem:decaybudget}). \\
\bottomrule
\caption{What a sample can say about each declaration. The rows above the rule are the axioms; the three below are declarations the axioms do not contain.}
\end{longtable}
\end{center}

The record in one paragraph, the table's rows carrying the detail. \textbf{The one test that can contradict an axiom outright fired, and its own level theory took the verdict back} (Remark~\ref{rem:lamlower}; Proposition~\ref{prop:defectlevel}); the test keeps its power --- a defect above $0.04$--$0.09$ in total variation would still be seen at this sample --- and what the sequence establishes is the instrument: a system is falsifiable to the extent that its tests are \emph{run}, not to the extent that they are stated. \textbf{The other declaration a sample contradicts is the invariance ratio}, not an axiom but a stage-S1 number (Proposition~\ref{prop:kapceil}): what a sample reaches are the constants, not the structure. \textbf{And the axiom the rest of the paper rests on is the one furthest out of reach}: $\Lambda$ admits no estimate at all (Corollary~\ref{cor:notestimable}); its two instruments bound it from below and contradict declarations made beneath them, and the pair form reaches further --- $(\Lambda, \varepsilon)$ is a statement about a law, the history holds $n/m$ draws from it (Corollary~\ref{cor:confidence} (iii)), and it contradicts $(4, 0.05)$ on the term spread over five years: sampling, not drift --- that declaration was impossible at that persistence before any data were read.

\textbf{How the predictions are tested.} Five predictions of the assembled system, each stated in the form that would refute it, with a one-line verdict; the designs, the effect each test could have resolved, and the evidence behind each verdict are in Appendix~\ref{app:predictions}.

\begin{enumerate}
\item \textbf{CVaR selection dominates mean selection out of sample, by a margin growing with measured cross-block dispersion.} \emph{Refuted if} the two perform indistinguishably after controlling for the mean, or if the advantage does not increase in $\mathrm{sd}_{\mathrm{blk}}$. \emph{Status:} a null with power behind it rather than in front of it, at the one level whose tail is never thin, and a loss at the narrower levels for a reason that is the block count and not the criterion; the second clause fails on the one library where there is an advantage to grow (Experiment 7).

\item \textbf{Out-of-sample degradation scales as $\sqrt{d_{\mathrm{eff}} H/n}$ for the fit and $\sqrt{2\log N\,\upsilon/n}$ for the search, not as $\sqrt{d_{\mathrm{eff}}\ell^{*}/n}$.} \emph{Refuted if} the slopes are not of order one; if the state's persistence $\ell$ --- from a regime-switching fit or the block-bootstrap autocorrelation --- adds explanatory power beyond $H$ and $\upsilon$, which Lemma~\ref{lem:effsample} says it should not; or, for the $\upsilon$ factor, if $\hat{\upsilon}$ falls outside the box \eqref{eq:upsbox} puts it in, which is the only part of that factor with any bite (Remark~\ref{rem:upsbox}). \emph{Status:} not refuted where it can be read --- the fit and search slopes are of order one and $\ell$ adds nothing to either --- and the $\upsilon$ factor is not identified on any admissible series, which is a prediction of the theory rather than a shortage of data and moves the falsifiable content to the box (Experiment 6).

\item \textbf{The complexity ceiling is $N_a^{\mathrm{eff}}$ times higher for cross-sectional than for timing strategies} (Prop.~\ref{prop:breadth}). \emph{Refuted if} out-of-sample performance peaks at comparable complexity in both dimensions. \emph{Status:} tried, and not refuted where it can be read. The ratio's input is measured rather than assumed --- and most of the factor it predicts for the library's \emph{average} is removed by the regime dispersion that average carries --- and the ladder is now fitted: the two cross-sections' out-of-sample peaks sit eight-fold apart on the $d$ axis against the thirteen-fold their measured effective breadths predict, the displacement of about a decade the prediction asks for, while the timing arm's curve never leaves one standard error of zero and reads only as an upper bound (Experiment 7, panels A and G).

\item \textbf{The tail's price follows $c(\Lambda)$ paid on the tail of the observed block scores.} \emph{Refuted if} the out-of-sample degradation ratio of CVaR-selected against mean-selected candidates --- each criterion's in-sample minus out-of-sample value \emph{in its own units}, since Corollary~\ref{cor:budgetrobust} prices the selection optimism of the statistic that did the selecting --- does not track $\hat{c}(\Lambda')^2$, $\Lambda'$ the level the selecting criterion was run at (Remark~\ref{rem:noiselevel}). \emph{Status:} tried and blocked on the published libraries, where Definition~\ref{def:reflexivity}'s decay operates on a set selected for publication (Experiment 7); read where none does --- our own rule library, the decay absorbed by a no-selection control --- the pooled ratio lands on the plug-in at both levels it was run at and just separates from the matched Gaussian null at one of them, the cross-library ordering beyond one regime path's power (Experiment 3, panel E).

\item \textbf{Mature research pipelines sit near $\tau^2/s^2 \approx 1$ for the unshrunk estimate}, and the industry's revealed Kelly fraction near one half \emph{of that estimate}. The qualifier is the content: half Kelly on an OLS-type forecast is full Kelly on its posterior shrinkage (Remark~\ref{rem:kappa}), so the prediction concerns the former. \emph{Refuted if} the ratio is systematically far from one in either direction. \emph{Status:} not tried, and not testable without the shrinkage factor of the reported forecast. This is the prediction \S\ref{sec:calibration} declines to treat as already confirmed.
\end{enumerate}

\textbf{The instruments.} Four carry the tests above. The first measures the search term instead of bounding it: Theorem~\ref{thm:search}'s fixed-list form needs an effective number of independent trials, which is not observable, and its adaptive form replaces that by the information the search used, which is not observable either; there is a distribution-free instrument that measures the bias itself and needs no count.

\begin{proposition}[The search term is measurable]
\label{prop:audit}
Let $A_{\mathrm{full}}$ be the out-of-sample performance of a system whose search --- hyperparameters \emph{and} feature selection --- was run on all data, and let $A_{\mathrm{trunc}}$ be the performance obtained by re-running the entire search with the candidate set truncated at the out-of-sample boundary, freezing the result, and evaluating on the untouched window. Then, in expectation, $A_{\mathrm{full}} - A_{\mathrm{trunc}}$ is the search component of budget overdraft --- the selection bias of the full search on the held-out window, which is non-negative for every law of the data --- plus the gain in true risk that the extra data bought the full search's choice; whenever that gain is non-negative, which is what the extra data is for, the difference is a distribution-free \emph{upper} bound on the search component. Moreover the $k$-th re-run spends $2\log k$ nats of the held-out window's own budget, so the audit is a consumable, not a monitorable statistic.
\end{proposition}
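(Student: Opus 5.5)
The plan is to write both audit quantities as expected held-out performance of a selected candidate, then split their difference into two pieces by adding and subtracting the true risk of the full search's choice. Take performance as loss, or negative reward, consistently. Let $W$ be the untouched window, $\hat h_{\mathrm{full}}$ the output of the search run on all data (which includes $W$), and $\hat h_{\mathrm{trunc}}$ the output of the identical search run on the truncated data and frozen. Write $\hat R_W(h)$ for the realised performance on $W$ and $R_W(h) := \mathbb{E}[\hat R_W(h) \mid Z]$ for its conditional mean, as in \eqref{eq:linear}. Then
\[ A_{\mathrm{full}} - A_{\mathrm{trunc}} = \big(\hat R_W(\hat h_{\mathrm{full}}) - R_W(\hat h_{\mathrm{full}})\big) + \big(R_W(\hat h_{\mathrm{full}}) - \hat R_W(\hat h_{\mathrm{trunc}})\big), \]
and expectations are taken afterwards.

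\textbf{Second bracket.} $\hat h_{\mathrm{trunc}}$ is measurable with respect to data disjoint from $W$, apart from the auxiliary randomness that Axiom~\ref{ax:adapted} allows. Conditioning on it therefore gives $\mathbb{E}\hat R_W(\hat h_{\mathrm{trunc}}) = \mathbb{E} R_W(\hat h_{\mathrm{trunc}})$. One caveat: the truncation has to be preceded by the purge and embargo gap of Proposition~\ref{prop:embargo}, and I would make that explicit, otherwise a small dependence bias enters here. In expectation the second bracket is then $\mathbb{E}[R_W(\hat h_{\mathrm{full}}) - R_W(\hat h_{\mathrm{trunc}})]$. This is the change in true risk caused by having searched on more data, which is exactly the ``gain the extra data bought''. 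Its sign is undetermined in general. That is why the statement conditions on it being non-negative in the reward orientation, which is the same as non-positive in loss.

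\textbf{First bracket.} This is the selection bias $\mathbb{E}[\phi_T - \rho_T]$ of Theorem~\ref{thm:search} (ii), with $\phi$ the full-sample statistic restricted to $W$ and $T$ the full search. Non-negativity for every law is the step I expect to be the main obstacle. A generic selection rule need not be optimistic on every coordinate, so I would use the structure of the search: it minimises an empirical criterion to which $W$'s losses contribute monotonically. I would compare $\hat h_{\mathrm{full}}$ with the fixed-in-advance comparator obtained by holding the $W$-noise at its conditional mean. Selecting the argmin of a statistic that is increasing in $W$'s noise makes that noise negatively correlated with the loss of the selected candidate. The cleanest route is to write the optimism as $\mathbb{E}[\max_h(\text{signal}_h + \text{noise}_h)] \ge \max_h \mathbb{E}[\cdot]$ by Jensen. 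Combining the two brackets, in loss orientation, gives the stated upper bound whenever the gain term has the favourable sign.

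\textbf{Consumable budget.} Each re-run of the audit is one more data-dependent comparison against the same $W$. After $k$ re-runs the reported audit value is a choice among $k$ evaluations on $W$. By Theorem~\ref{thm:search} (ii), with $I(T;\phi) \le \log k$, that choice is charged against $W$'s own budget exactly as in the box of \S\ref{sec:budget}, which counts $2\log N$ with $N = k$. That gives the $2\log k$ nats of the statement.
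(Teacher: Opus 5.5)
Your decomposition is the paper's: the held-out noise of the full search's pick, plus the difference in true risk between the two picks, with Jensen on a maximum fixing the sign of the first. As you have set it up, however, neither half is exact for every law, and the cause is the conditioning. You take the true risk to be $R_W(h) = \mathbb{E}[\hat R_W(h) \mid Z]$, conditional on the path alone. Then $\mathbb{E}\hat R_W(\hat h_{\mathrm{trunc}}) = \mathbb{E}R_W(\hat h_{\mathrm{trunc}})$ needs the window's observations to be conditionally independent of the pre-boundary data given $Z$. The axioms deliberately do not assert that (see the discussion after Axiom~\ref{ax:invariance}). The purge and embargo you propose only bound the discrepancy, by a $\beta(g)$-term of the kind in Proposition~\ref{prop:embargo} (i), so what you would prove is not distribution-free.

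The same choice undermines your first bracket. The Jensen step needs noise that is centred conditionally on a $\sigma$-algebra for which the pre-boundary part of the selection statistic is measurable. Your $\hat R_W(h) - \mathbb{E}[\hat R_W(h)\mid Z]$ is not centred given the pre-boundary data. The repair is to condition on $\mathcal{F}_{\mathrm{pre}}$, the pre-boundary data together with the path. With $\rho_i := \mathbb{E}[\psi_i \mid \mathcal{F}_{\mathrm{pre}}]$ and $\epsilon_i := \psi_i - \rho_i$, the identity $\mathbb{E}[\psi_{T_{\mathrm{trunc}}}] = \mathbb{E}[\rho_{T_{\mathrm{trunc}}}]$ is the tower property, exact for every law. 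The embargo is then needed only so that $\rho_i$ deserves the name of the rule's true risk on the window, not for the identity.

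The second missing ingredient is the structure that makes the selection bias signed. Monotonicity of the criterion in the window's losses is not enough. What is needed is that the full search maximises $\phi_i = a_i + b\,\psi_i$ (reward orientation), with $a_i$ $\mathcal{F}_{\mathrm{pre}}$-measurable and the \emph{same} window weight $b > 0$ for every candidate, as for a full-sample mean. Then $G(\epsilon) := \max_i(a_i + b\rho_i + b\epsilon_i)$ is convex in the noise vector. Conditional Jensen gives $\mathbb{E}[G(\epsilon)\mid\mathcal{F}_{\mathrm{pre}}] \ge G(0)$, and $G(0) \ge a_T + b\rho_T$ holds pointwise for $T = T_{\mathrm{full}}$. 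Since $G(\epsilon) = a_T + b\rho_T + b\epsilon_T$, subtracting gives $b\,\mathbb{E}[\epsilon_T] \ge 0$. Your slogan $\mathbb{E}\max \ge \max\mathbb{E}$ is only the first of these steps.

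With candidate-specific weights $b_i$ the same argument yields $\mathbb{E}[b_T\epsilon_T] \ge 0$ and no more, and $\mathbb{E}[\epsilon_T]$ can be negative. Take $a_1 = -1/2$, $a_2 = 0$, $\rho_i = 0$, $b_1 = 100$, $b_2 = 1$, $\epsilon_2 = \eta$ and $\epsilon_1 = \eta/50$, with $\eta = \pm 1$ equally likely. The search picks candidate $1$ when $\eta = 1$ and candidate $2$ when $\eta = -1$, so $\mathbb{E}[\epsilon_T] = (0.02 - 1)/2 < 0$. A Sharpe-ratio criterion is of this kind and needs its own argument.

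Finally, mind the orientation. In your loss convention the first bracket is non-positive, not non-negative, and the audit that upper-bounds the search component is $A_{\mathrm{trunc}} - A_{\mathrm{full}}$. Your consumable-budget step ($k$ audits as a $k$-way search on the window, $2\log k$ nats by Theorem~\ref{thm:search}) is as in the paper.
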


It charges the researcher for the search she actually ran rather than the one she can describe, and its cost is the second half of the proposition: a held-out window consulted repeatedly is training data by another name, so the selection statistic of \S\ref{sec:selection} and the performance estimate here are different objects, and information flows from the first to the second and never back. Experiment 5 runs the audit exactly once, on a seventy-two-candidate search with a held-out era of seven years: the bound comes out at $0.86$ annual Sharpe, the size of the edges the search was choosing among, and it is printed and not used.

\begin{remark}[A researcher's own sample can contradict her declaration]
\label{rem:lamlower}
Split the sample into contiguous halves and compute the two empirical distributions of block risks. Let $\hat{\Lambda}_{\mathrm{split}}$ be the smallest constant for which each half's block-risk law is within density ratio $\hat{\Lambda}_{\mathrm{split}}$ of the pooled law. This is not an estimate of $\Lambda$ --- Corollary~\ref{cor:notestimable} forbids one --- but it is a statement the data can make: asserting $\Lambda < \hat{\Lambda}_{\mathrm{split}}$ asserts that the future will resemble the past more closely than the first half of one's own sample resembled the second. That is a coherent belief, but one that should be stated out loud. The same split is the forecast-breakdown test of \citet{giacomini2009detecting}, asked of the occupation measure rather than of the loss.

We recommend reporting performance as a function of $\Lambda$ --- a curve, not a number, with $\hat{\Lambda}_{\mathrm{split}}$ marked on it; Figure~\ref{fig:lambdacurve} draws it for the four candidates of Experiment 5, with Proposition~\ref{prop:occfloor}'s floor marked. It must be read against its null, the same statistic after the blocks have been shuffled, which measures the sampling floor of Proposition~\ref{prop:occfloor}: on Experiment 1's series the statistic on the block scores' own quantiles sits inside the null's $95\%$ band at every $K$, reading the floor and not the world, while on the block means of three \emph{named} coordinates --- realised volatility, the log VIX, the term spread, at $K = 5$ --- it reads $1.60$, $1.68$ and $1.68$ against a null of $1.44$, and clears it on all three. What a candidate's own performance quantiles cannot separate, coordinates a modeller can name do; a researcher declaring $\Lambda < 1.6$ on any of them is contradicted by that sample, one declaring it on the blocks is not, and is entitled to no comfort from the fact. It is a diagnostic, not a quantity to be tuned.

The same split tests the other constant of the representation, and that test is a refutation rather than a comparison. For each cell of the declared $\psi$, compare the empirical law of $(X_t,Y_t)$ over the periods the first half assigns to it with the law over the periods the second half does: under Axiom~\ref{ax:invariance} both are within $\varepsilon(\psi)$ of the pooled $Q_\psi(\cdot\mid k)$, so half their distance is a lower bound on the defect --- on the marginal form of it, which is in turn a lower bound on the path form the axiom declares, so the inequality runs the refuting way --- and a declared $\varepsilon_0$ below that number is contradicted by the researcher's own sample. This is the one place where an axiom of this paper, rather than a prediction built on the axioms, can be contradicted outright. The distance between two finite samples is biased upward and must be read against the null obtained by reassigning the halves at random --- in chunks, inside each cell, and the null's own level is the subject of Proposition~\ref{prop:defectlevel}.

Experiment 1's panel H runs it on the three coordinates that experiment declares, at $K = 2$ to $20$ cells: at the chunked null the \textbf{term spread is contradicted at every $K$}, by $0.024$ to $0.067$, realised volatility at two and three cells, the log VIX marginally at five and ten; at the honest level of Proposition~\ref{prop:defectlevel} those verdicts reduce to one row in fifteen, and across twenty assets to the false-positive count of the tests run (Experiment 3, panel D).

Two panels say what the uncorrected defect was made of, and both survive the correction as structure. It is not drift: the net defect is flat in the separation between six five-year segments (panel H2). And it is the volatility the term-spread cells mix: on realised volatility $\times$ term spread at $3\times 3$ cells the net defect is nil against $0.039$ and $0.028$ for the term spread alone, while $\Lambda(\psi)$ rises from $1.40$ to $1.73$ (panel H3) --- Proposition~\ref{prop:representation} run forwards: refining along the coordinate the cells were mixing buys the defect back, and is paid for in $\Lambda$.

One rule survives the correction, and it is a rule the axioms do not supply rather than a verdict: \textbf{a representation may not be chosen for the verdict it gives.} The term spread is the one coordinate on which buy-and-hold has a positive robust edge --- $8.8$ and $2.5$ thousandths of a daily standard deviation at $\Lambda = 2$ and $4$ on ten cells, against $-52$ and $-117$ on the blocks (\S\ref{sec:limits-market}) --- and it was also the coordinate whose rejection was loudest at the uncorrected level. The corrected level withdraws the charge, not the rule, because panel G2 prices the same choice by expressiveness alone: the nine joint cells leave that edge at $+4.1$ and $0.0$, the joint grid unbundling the high-volatility days a term-spread tercile averages with their recovery --- the robust edge at $\Lambda = 4$ \emph{was} that bundling, whatever the defect test says. What a representation may be chosen for is its certificate, at a price: a menu declared in advance may be searched at $\log|\Psi|$ on the budget (Proposition~\ref{prop:repsearch}), and what that charge cannot buy is the truth of any member's declared defect --- that stays with this test, run member by member.
\end{remark}

\textbf{What a stationary world already implies about $\Lambda$.} From above $\Lambda$ cannot be estimated; from below it can. Because $\bar{\pi}^{+}$ is an occupation measure (Remark~\ref{rem:occupation}), a finite deployment window drifts from its own history even when nothing about the world changes, by an amount set by how many independent visits it contains. That amount is the smallest $\Lambda$ a researcher is entitled to hold.

\begin{proposition}[{The occupation floor on $\Lambda$}]
\label{prop:occfloor}
Let the latent chain be stationary on a finite set with law $\nu$ and kernel $P = (1 - 1/\ell)I + (1/\ell)\mathbf{1}\nu^\intercal$ --- at each step, with probability $1/\ell$, the state is redrawn from $\nu$ --- so that $\beta(k) \le (1 - 1/\ell)^k$ and Axiom~\ref{ax:persistence} holds with persistence $\ell$. Fix a state $k$ with $\nu(k) = p$, let $\bar{\pi}^{+}(k)$ be its occupation over a deployment window of $m$ periods, suppose the history is long enough that $\bar{\pi}_n(k) = p$, and write $\rho := 1 - 1/\ell$. Then:
\begin{enumerate}
\renewcommand{\labelenumi}{(\roman{enumi})}
\item $\mathbb{E}\bar{\pi}^{+}(k) = p$ and
\[
\mathrm{Var}\,\bar{\pi}^{+}(k) = p(1-p)[(2\ell-1)/m - 2\rho(1-\rho^m)\ell^2/m^2] \le p(1-p)(2\ell-1)/m,
\]
so the window holds $m/(2\ell-1)$ effective visits.
\item For every $\Lambda > 1$,
\[
\mathbb{P}(\bar{\pi}^{+}(k) \ge \Lambda p) \le \mathbb{E}\exp(-2(\Lambda-1)^2 p^2/W),
\]
where $W = \sum_j (L_j/m)^2$ over the lengths $L_j$ of the window's sojourns between redraws, a random variable with $\mathbb{E}W = \mathrm{Var}\,\bar{\pi}^{+}(k)/(p(1-p))$.
\item As $m/\ell \to \infty$, $\sqrt{m/(2\ell-1)}\,(\bar{\pi}^{+}(k) - p)/\sqrt{p(1-p)} \Rightarrow N(0,1)$. Hence the smallest $\Lambda$ that holds on state $k$ with probability $1-\varepsilon$ is
\[
\Lambda_\varepsilon(p,\ell,m) = 1 + z_{1-\varepsilon}\sqrt{(1-p)(2\ell-1)/(pm)} + o(\sqrt{\ell/m}),
\]
with $z_{1-\varepsilon}$ the standard Gaussian quantile.
\end{enumerate}
\end{proposition}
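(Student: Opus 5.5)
The plan is to reduce all three parts to the indicator process $I_t := \mathbf{1}\{Z_t = k\}$ under the redraw chain, and to use the renewal structure of that chain. The key fact is the autocorrelation. Started from $\nu$, at each step the chain either stays, with probability $\rho = 1-1/\ell$, or is redrawn from $\nu$ and so forgets its past. Hence $\mathrm{Cov}(I_s, I_{s+j}) = p(1-p)\rho^{|j|}$: with probability $\rho^{|j|}$ no redraw has occurred between $s$ and $s+j$, and the two indicators coincide; otherwise they are independent.

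Part (i) follows from this directly. Stationarity gives the mean $p$. Then
\[ \mathrm{Var}\,\bar\pi^{+}(k) = m^{-2}\sum_{s,t}\mathrm{Cov}(I_s,I_t) = p(1-p)\,m^{-1}\sum_{|j|<m}(1-|j|/m)\rho^{|j|}, \]
which is $p(1-p)\,v(m,\ell)/m$ with $v$ the kernel of \eqref{eq:vcontig}. Its closed form, already computed in Lemma~\ref{lem:effsample}, gives the displayed expression. The subtracted term is nonnegative, which gives the bound.

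For (ii) I would condition on the redraw times in the window. Given these times, the window splits into sojourns of lengths $L_j$, with $\sum_j L_j = m$. On each sojourn the state is constant and equal to an independent $\nu$-draw. The first sojourn's state is also a $\nu$-draw, by stationarity, and is independent of the redraw times. So conditionally,
\[ \bar\pi^{+}(k) = \sum_j (L_j/m)\,\epsilon_j, \qquad \epsilon_j \stackrel{\text{iid}}{\sim} \mathrm{Bernoulli}(p), \]
a weighted sum of bounded independent variables. Hoeffding's inequality, with ranges $L_j/m$, bounds the conditional probability of the deviation $(\Lambda-1)p$ by $\exp(-2(\Lambda-1)^2p^2/W)$. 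Taking expectations over the redraw times gives the claim. The identity $\mathbb{E}W = \mathrm{Var}\,\bar\pi^{+}(k)/(p(1-p))$ comes from the law of total variance: the conditional mean is exactly $p$, and the conditional variance is $p(1-p)W$.

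For (iii) I would invoke a CLT for the additive functional of a uniformly ergodic (here geometrically $\beta$-mixing) Markov chain. The asymptotic variance is the integrated autocorrelation $p(1-p)\sum_j\rho^{|j|} = p(1-p)(2\ell-1)$. To make the asymptotics in $m/\ell\to\infty$ uniform in $\ell$, I would instead apply the Lindeberg CLT to the conditional representation of (ii). The sojourn lengths are i.i.d. geometric with mean $\ell$ (up to truncation of the last one), so $W \approx \mathbb{E}[L^2]/(m\,\mathbb{E}L) \approx (2\ell-1)/m$. Lindeberg's condition holds because $\max_j L_j/m \to 0$ in probability when $m/\ell\to\infty$, since the maximum of about $m/\ell$ geometrics is of order $\ell\log(m/\ell)$. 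Solving $\Lambda p - p = z_{1-\varepsilon}\sqrt{p(1-p)(2\ell-1)/m}$ for $\Lambda$ then gives $\Lambda_\varepsilon$.

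The main obstacle is controlling the $o(\sqrt{\ell/m})$ remainder. The quantile inversion needs convergence of distribution functions at the quantile. It also needs a ratio-of-sums argument showing $\sqrt{mW/(2\ell-1)} \to 1$ in probability, with the edge effects of the first and last sojourn shown to be negligible. The remaining steps are routine.
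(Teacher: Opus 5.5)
Your parts (i) and (ii) match the paper's proof. In (i) you sum the same-sojourn covariance $p(1-p)\rho^{|j|}$ into $p(1-p)v(m,\ell)/m$. In (ii) you condition on the redraw pattern, obtaining a weighted sum of independent Bernoulli$(p)$ variables with weights $L_j/m$, apply Hoeffding, and get $\mathbb{E}W$ from the covariance sum; your law-of-total-variance phrasing is the same identity.

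Part (iii) is where you differ. The paper cites the Markov-chain CLT for the uniformly ergodic chain, with asymptotic variance $p(1-p)(2\ell-1)$, and inverts at the limit. That is a fixed-$\ell$, $m\to\infty$ argument. Its merit is that it carries over verbatim to any stationary chain, with the indicator's integrated autocorrelation time in place of $2\ell-1$, which is how the text after the proposition uses it. Your conditional Lindeberg argument is tied to the redraw chain's fresh-draw sojourns. In exchange it reuses (ii), handles $\ell=\ell_m$ growing with $m$, and gives a conditional Berry--Esseen rate of order $\max_j L_j/\sqrt{p(1-p)\sum_j L_j^2}$. A fixed-chain CLT does not literally deliver the growing-$\ell$ case, and that is the reading under which ``$m/\ell\to\infty$'' and $o(\sqrt{\ell/m})$ say more than $m\to\infty$ and $o(m^{-1/2})$.

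Your route needs two repairs. First, the Lindeberg condition for weights $L_j/m$ is $\max_j L_j^2/\sum_j L_j^2\to 0$, not $\max_j L_j/m\to 0$. The latter is not sufficient on its own, because $W\to 0$ as well. Your estimate $\max_j L_j=O_p(\ell\log(m/\ell))$ against $\sum_j L_j^2\asymp m\ell$ does give the correct condition, since $(\ell/m)\log^2(m/\ell)\to 0$. Second, the edge bookkeeping you flag as the main obstacle mostly disappears. Part (ii) already gives $\mathbb{E}[mW]=v(m,\ell)$ exactly, with $v(m,\ell)/(2\ell-1)\to 1$. So Slutsky needs only $\mathrm{Var}(mW)=O(\ell^3/m)=o(\ell^2)$, which the renewal structure supplies. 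The quantile inversion is routine: weak convergence to a continuous, strictly increasing limit forces convergence of quantiles.
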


The scale: a ten-percent state with quarterly persistence ($p = 0.1$, $\ell = 60$) over a five-year deployment gives $\Lambda_{0.05} = 2.5$ and $\Lambda_{0.01} = 3.1$; over one year, $4.4$ and $5.8$; and for ten such cells to hold at once, $z_{1-\varepsilon}$ becomes $z_{1-\varepsilon/K}$ and the five-year $95\%$ floor is $3.4$ (Corollary~\ref{cor:confidence}). A researcher who sets $\Lambda = 1$ for such a deployment is not expressing a belief about drift; she is denying sampling error, and Remark~\ref{rem:lamlower}'s curve should start at $\Lambda_\varepsilon$. Below $m/\ell \approx 10$ the exact bound (ii) should be used, the occupation of a rare state being right-skewed (Experiment 1, panel F); for any stationary chain, (iii) holds with $2\ell - 1$ replaced by the integrated autocorrelation time of the state indicator; and a researcher who declares at $b > 1$ should compare her declaration's state-scale reading $\Lambda_1$ (Proposition~\ref{prop:scale}) against the floor --- the comparison is unforgiving: Experiment 5 declares $\Lambda = 4$ at $b = 60$, which reads $2.99$, against a floor of $2.24$ on that held-out window.

The test Remark~\ref{rem:lamlower} runs inside the cells has a null of its own, and the null has a constant in it: the days of a cell are reassigned between the halves in chunks of $L$ consecutive cell-days (Experiment 1's panel H takes $L = 20$), so that dependence shorter than the chunk survives the permutation, while at $L = 1$ the null is the classical permutation test, anti-conservative under dependence for the classical reason. The choice is not innocent, and it can be priced.

\begin{proposition}[The defect test's own level]
\label{prop:defectlevel}
Fix one cell of a declared representation and $M$ fixed bins for the observable. Let $W_1, \dots, W_N$ be the binned observable on the cell's days in time order, $N_1$ of them in the sample's first half and $N_2 = N - N_1$ in the second, $\hat{p}_1, \hat{p}_2$ the two halves' empirical bin laws and $D := \mathrm{TV}(\hat{p}_1, \hat{p}_2)$, half of which is Remark~\ref{rem:lamlower}'s lower bound on $\varepsilon(\psi)$; let the null reassign the cell's days between the halves in chunks of $L$ consecutive cell-days. Assume the cell-day series is stationary, with bin-indicator autocovariances $\Gamma(j)$ absolutely summable and decaying enough for the central limit theorem, and set $\Sigma_L := \Gamma(0) + \sum_{j=1}^{L-1}(1 - j/L)(\Gamma(j) + \Gamma(j)^\intercal)$ --- $L$ times the covariance of one chunk's mean, a Bartlett window of width $L$ --- and $\Sigma_\infty$ the same sum over all lags at unit weights, both positive definite on the centred subspace; $G_L, G_\infty$ are centred Gaussian vectors with these covariances. Then, as $N \to \infty$ with $N_1/N \to w \in (0,1)$ and $L$ fixed:
\begin{enumerate}
\renewcommand{\labelenumi}{(\roman{enumi})}
\item Conditionally on the sample, almost surely, $\sqrt{N_1N_2/N}(\hat{p}_1^{*} - \hat{p}_2^{*}) \Rightarrow N(0,\Sigma_L)$; under $\varepsilon(\psi) = 0$ the sampling distribution obeys the same statement with $\Sigma_\infty$. The reassignment carries the dependence of the cell's days to lag $L$ and none beyond.
\item The asymptotic size of the nominal-$\alpha$ test is $\mathbb{P}(\|G_\infty\|_1 > q_{1-\alpha})$, $q_{1-\alpha}$ the upper $\alpha$ quantile of $\|G_L\|_1$. With $r$ the largest eigenvalue of $\Sigma_L^{-1/2}\Sigma_\infty\Sigma_L^{-1/2}$ on the centred subspace it is at most $\mathbb{P}(\|G_L\|_1 > q_{1-\alpha}/\sqrt{r})$, with equality at $M = 2$, where it is $2(1 - \Phi(z_{1-\alpha/2}/\sqrt{r}))$, $\Phi$ the standard Gaussian distribution function; and $r \le 1 + (2\sum_{j\ge L}\|\Gamma(j)\| + (2/L)\sum_{j=1}^{L-1} j\|\Gamma(j)\|)/\lambda_{\mathrm{min}}(\Sigma_L)$, so the test is asymptotically exact when the indicators' coherence is over well before $L$. In the geometric instance $\Gamma(j) = (1-1/\ell)^j\,\Gamma(0)$ the ratio is exact and the same for every $M$: $r = (2\ell-1)/v(L,\ell)$, the $v$ of \eqref{eq:vcontig}, with $\ell$ now the coherence scale of the bin indicators along the cell's own days.
\item If the cell mixes a coordinate the declaration omits --- invariant kernels, the omitted coordinate's occupations $\pi_1 \ne \pi_2$ differing between the halves --- then $D$ converges in probability to the total variation between the two mixtures, which for a binary omitted coordinate is $|\pi_1 - \pi_2|$ times the total variation between its two binned kernels, while the null's quantiles shrink at $N^{-1/2}$: the test rejects with probability tending to one, at every $L$.
\end{enumerate}
\end{proposition}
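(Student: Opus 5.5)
The plan has three parts: a permutation CLT for the chunked reassignment, a comparison of two Gaussian quadratic forms, and a consistency argument under the alternative. Everything rests on writing the statistic as a linear functional of the bin-indicator vectors. With $e_i\in\mathbb{R}^M$ the indicator of $W_i$'s bin and $\bar e$ the cell's mean, we have
\[ \hat p_1-\hat p_2=\sum_i c_i(e_i-\bar e), \qquad c_i=\mathbf{1}\{i\in\text{half }1\}/N_1-\mathbf{1}\{i\in\text{half }2\}/N_2 . \]
So $D$ is half the $\ell_1$ norm of a centred linear statistic, and the test compares its sampling law with its reassignment law. Since $\|\cdot\|_1$ is continuous, each part reduces to a statement about $\sqrt{N_1N_2/N}(\hat p_1-\hat p_2)$ plus the continuous mapping theorem.

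For (i), the null statement is the multivariate CLT for the stationary indicator sequence. Under its summability hypothesis the half-sample means are jointly asymptotically normal with long-run covariance $\Sigma_\infty$, and the two contiguous halves are asymptotically independent because the cross-covariance between them is $O(N^{-1})\sum_j j\|\Gamma(j)\|\to0$. This yields $N(0,\Sigma_\infty)$ after the scaling. For the reassignment, we treat the $N/L$ chunks as exchangeable units. The chunk sums $S_k=\sum_{i\in\text{chunk }k}(e_i-\bar e)$ are fixed given the sample, and the statistic is a sampling-without-replacement linear statistic in them. A Hájek-type combinatorial CLT (Hoeffding's, in its vector form via Cramér--Wold) then gives a normal limit with covariance $\frac{1}{N}\sum_k S_kS_k^\intercal$, up to the finite-population factor that the scaling absorbs. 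This empirical covariance converges almost surely to $L$ times the covariance of one chunk mean, which is $\Sigma_L$, by the ergodic theorem applied to the stationary sequence of chunks. The Lindeberg condition is automatic because each $S_k$ is bounded by $L$.

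For (ii), we write $G_\infty\overset{d}{=}\Sigma_\infty^{1/2}\Sigma_L^{-1/2}G_L$ on the centred subspace. Anderson's lemma then gives the inequality: the set $\{\|x\|_1\le q\}$ is convex and symmetric, and $\Sigma_\infty\preceq r\Sigma_L$, so $\mathbb{P}(\|G_\infty\|_1>q)\le\mathbb{P}(\sqrt r\|G_L\|_1>q)$. At $M=2$ the centred subspace is one-dimensional, both vectors are scalar multiples of a fixed direction, and the two-sided Gaussian formula $2(1-\Phi(z_{1-\alpha/2}/\sqrt r))$ follows directly. To bound $r$ we write
\[ \Sigma_\infty-\Sigma_L=\sum_{j\ge L}(\Gamma(j)+\Gamma(j)^\intercal)+\sum_{j<L}(j/L)(\Gamma(j)+\Gamma(j)^\intercal), \]
take operator norms, and divide by $\lambda_{\min}(\Sigma_L)$. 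In the geometric instance both covariances are scalar multiples of $\Gamma(0)$, namely $(2\ell-1)\Gamma(0)$ and $v(L,\ell)\Gamma(0)$, using the closed form of \eqref{eq:vcontig} from Lemma~\ref{lem:effsample}. The ratio is therefore the scalar $(2\ell-1)/v(L,\ell)$ for every $M$, and the inequality is an equality because $G_\infty\overset{d}{=}\sqrt r\,G_L$ exactly.

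For (iii), the ergodic theorem within each half gives $\hat p_k\to Q_0\pi_k(0)+Q_1\pi_k(1)$. Hence $D\to\mathrm{TV}$ of the mixtures, which equals $|\pi_1-\pi_2|\,\mathrm{TV}(Q_0,Q_1)$ because the difference of the mixtures is $(\pi_1-\pi_2)(Q_1-Q_0)$. The reassignment quantiles are $O_p(N^{-1/2})$. This uses the part (i) argument, which needs only boundedness and ergodicity of the chunks and not the null. It holds at every fixed $L$, so the rejection probability tends to one.

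The main obstacle is the almost-sure conditional CLT in (i) under dependence. It needs the chunk empirical covariance to converge almost surely and the combinatorial CLT to apply along almost every sample path. I would first try the bounded-chunk Lindeberg route together with the ergodic theorem. The care needed is that chunks straddle the half boundary only $O(1)$ times, so they contribute negligibly, and that the mean-centring by $\bar e$ introduces no additional bias.
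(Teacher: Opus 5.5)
Your proposal is correct and follows the paper's proof essentially step for step. On the sampling side it uses half-sample decoupling with Cramér--Wold; on the reassignment side, Hájek's simple-random-sampling CLT over the chunks with the ergodic theorem delivering $\Sigma_L$; for the size, Anderson's lemma from $\Sigma_\infty\preceq r\Sigma_L$; and for the power, ergodicity within each half plus the factorisation $(\pi_1-\pi_2)(Q_1-Q_0)$.

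Two small tightenings the paper makes. First, the cross-half covariance is at most $N^{-1}\sum_j \min(j,N)\|\Gamma(j)\|$, which vanishes under absolute summability alone by splitting at $\sqrt N$, whereas your $O(N^{-1})\sum_j j\|\Gamma(j)\|$ needs a first moment. Second, under the alternative the chunk sequence is not stationary across the halves, so the reassignment quantiles are $O(N^{-1/2})$ because the chunk means are bounded (the paper tracks $S^2\to\Sigma_L/L+w(1-w)\delta\delta^\top$), not by re-running the ergodic step of (i).
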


The arithmetic is unforgiving: at a bin-indicator coherence of $2\ell - 1 = 66$ cell-days, chunks of $L = 20$ give $r = 4.0$ --- the nominal $5\%$ two-bin test has size $0.33$, and at the panel's eight bins, where the ratio hits every coordinate of the statistic at once, $0.81$ --- and holding the two-bin size below $0.06$ needs $L \approx 12\ell$.

The chunk is therefore a declaration, and it is checkable: $r$ is a ratio of coherence times --- the full to the $L$-truncated integrated autocorrelation time of the bin indicators along the cell's own days --- estimable from the sample the test is run on, and the prescription is to take $L$ past the plateau of the truncated coherence time and to report $\hat{r}$ at the $L$ used.

Experiment 1's panel H4 measures both halves of that sentence. Calibrated on a null world built to the geometric instance, the whole test at nominal $5\%$ rejects $0.85$--$0.97$ of the time at an indicator coherence of sixty-five cell-days, at the $L = 20$ panel H used, and approaches its level only past $L = 120$. Measured on the real cells --- each bin indicator demeaned within each half, so that a true defect cannot inflate its own threshold --- $\hat{r}$ at $L = 20$ is $1.3$--$1.8$ on the volatility coordinates ($2.6$ by worst bin) and up to $3.3$ on the term spread ($4.2$ by worst bin), largest exactly on the slowest coordinate; and at the honest null --- the largest chunk leaving ten chunks in every used cell, its $95\%$ point scaled by $\sqrt{\hat{r}}$ for the coherence it still misses --- panel H's verdicts reduce to one row in fifteen. The corrected readings are in Remark~\ref{rem:lamlower} above and in Experiment 3's panel D.

Permutation tests failing under dependence, and their repairs, are classical --- studentisation in \cite{chung2013permutation}, blocking in \cite{kunsch1989jackknife,politis1994stationary}; what is added here is the price: the level of the chunked null is a coherence-time ratio, and on the geometric instance it is Lemma~\ref{lem:effsample}'s $v(L,\ell)$ read as a shortfall.

\section{Limitations and open problems}
\label{sec:limits}
Eight limitations, in four groups: where the finite-sample theory is worst-case or proved for one chain; the tail read through noisy blocks; the status of the axiomatisation; and the one real series the paper walks end to end. Each is a place where a claim this paper makes is weaker than the claim it wants, none is closed by rewording, and each says in its last column why it is not repaired here. There are three answers. \emph{Impossible}: a theorem below rules the repair out, so what stays open is a different object, not a better estimate. \emph{No sample}: the repair needs a record that does not exist, and the entry says how long or how wide. \emph{Unwritten}: it is a theorem someone could write, and the entry says what it would take --- those are the ones we would do next. Three decide whether the framework stands: which representation (item 5), the third cost of the tail (item 4) and the empty position on the one real series (item 8), the last two set out in prose.

Two kinds of complaint are deliberately absent. \textbf{A declared constant is not a limitation for being unestimable}: that the state-wise end of $\rho$ is not read off a track record, and that $\Lambda_{\mathrm{fit}}$ needs a model of the fitted class, is the framework working as designed --- the axioms exist because these quantities cannot be estimated, each is declared where it is spent (Axiom~\ref{ax:snr}; Corollary~\ref{cor:selecttrain}), and what a sample does instead is strike a declaration, never certify one (\S\ref{sec:empirical}). \textbf{And a direction not taken is not a gap in what is proved}: an equilibrium treatment of decay after \citet{grossman1980impossibility}, or a turnover-aware budget reducing $\rho$ before the budget is computed, would each be a paper, and neither moves a theorem here --- both are charged where they enter, Definition~\ref{def:reflexivity} with the number Remark~\ref{rem:decaybudget} spends for it, and stage S5's cost term $c'$, $3.5\%$ of the declared $\rho$ where Experiment 5 measures it.

\subsection{Worst-case bounds, and results proved for one chain}
\label{sec:limits-finite}
\begin{center}
\footnotesize
\begin{longtable}{@{}>{\raggedright\arraybackslash}p{0.21\linewidth}>{\raggedright\arraybackslash}p{0.61\linewidth}>{\raggedright\arraybackslash}p{0.11\linewidth}@{}}
\toprule
\textbf{what is open} & \textbf{where it stands, and what prices it} & \textbf{why not here} \\
\midrule
\endhead
1. Bounded losses & Theorem~\ref{thm:deployed} bounds the block scores by $M$; returns are heavy-tailed and a leveraged rule's loss is unbounded. The bound trades for a $p$-th moment (at $B^{-1/2}$ for $p \ge 2$, $B^{-(p-1)/p}$ below) but not for the constant, and below the second moment $c(\Lambda)$ is infinite, so Corollary~\ref{cor:lamcap}'s arithmetic has no analogue there. What would close the first half is a truncated plug-in carrying the truncation into $c(\Lambda)$; nothing closes the second, and no moment condition touches item 4. & Unwritten, then impossible \\
2. The recurrence axiom is a hypothesis on the path, and its failure probability is computed for one chain & A window of $m$ periods holds only $m/(2\ell - 1)$ effective visits, so Axiom~\ref{ax:recurrence} can fail in a stationary world; Definition~\ref{def:confidence} and Corollary~\ref{cor:confidence} carry a declared failure probability, which the history's own windows check at resolution $m/n$ --- five windows in thirty years. Proposition~\ref{prop:occfloor} makes it a number for the redraw chain --- and, with $2\ell - 1$ replaced by the state indicator's integrated autocorrelation time, for any stationary chain --- but says nothing about $\Lambda$ above the floor, and at $b > 1$ the occupation is a $B$-point hull, so the same window buys fewer degrees of freedom than the count suggests. What theory gives without a sample is that floor; above it the axiom is a hypothesis about the path a researcher actually got. & No sample \\
3. The effective sample and the embargo are exact for one chain, and elsewhere are declarations & Lemma~\ref{lem:effsample} and Proposition~\ref{prop:embargo} (ii) are exact for the redraw chain, upper bounds for reversible ones, and hold elsewhere with $2\ell - 1$ replaced by a declared coherence time that Axiom~\ref{ax:persistence} (ii) bounds only up to a logarithm (Example~\ref{ex:ladder}). Term (iii) of Theorem~\ref{thm:deployed} now sits in the same position: Proposition~\ref{prop:scorecoh} replaces it with $4MB\alpha_\psi$ at a declared decoupling of the criterion's own scores, and at second-order structure alone the rate falls to $B^{-1/3}$. Every such declaration the sample bounds from below only --- on the declared coordinates the embargo's closed form certifies nothing (Experiment 1, panel J), and the $1\%$ convention is wrong in what it scales with, not merely in size --- and the $\upsilon$ below one on real daily series is the predictability term $\pi_\mu$ inside Remark~\ref{rem:upsbox}'s box, which every worksheet computes at $\pi_\mu = 0$, anti-conservative where it holds. What would close it is a general-chain form of either formula; the obstruction is that both read a two-block covariance the axioms leave free. & Unwritten \\
\bottomrule
\end{longtable}
\end{center}

\subsection{The tail read through noisy blocks}
\label{sec:limits-noise}
\textbf{4. A third cost of the tail, priced but not removed.} Proposition~\ref{prop:eiv} gives the block-noise ranking bias its sign, closed form and $b$-dependence, and no remedy tried removes it (Experiment 2; \S\ref{sec:eiv}). It does not vanish as $B$ grows, and the block length is not a lever: at the $\ell$ a researcher faces, the ratio $\sigma_\xi^2/s^2$ floors at $H/(\rho_{\mathrm{blk}}^2(2\ell - 1))$, and past $b \approx \ell$ a longer block destroys the dispersion the tail reads faster than it removes the noise (Proposition~\ref{prop:eiv} (ii)).

The one lever is the narrowed level of Remark~\ref{rem:noiselevel}, charged at its own $c$ --- under it Experiment 5's comparison is over budget by thirteen nats --- and on the wide libraries of Experiment 7 it selects worse, its tail averaging one block rather than the $k$ the prescription needs. Since Proposition~\ref{prop:levelceiling} the lever itself is priced: at the noise real block scores carry, the sensitivity curve's ceiling sits below $\lambda(2)$ --- computed at the measured calibration (ratio $6.6$, tail indices $7.2$ and $4.5$) the ceiling is $0.674$ and $0.627$ against $\lambda(2) = 0.798$, the nominal $1/4$ tail implements $\Lambda_{\mathrm{eff}} \approx 1.4$, no declared level implements past $\approx 1.7$, and $1/12$ collects $93$--$100\%$ of what the lever can collect (all of it Proposition~\ref{prop:levelceiling}'s curve, a quadrature at those inputs).

So what remains open is not a sharper estimate inside the tail-mean class --- there the answer is an impossibility --- but a criterion outside it: a reading of the regime tail that does not pass through the tail mean of noisy block scores. Any rule that weights a block by its own noisiness discounts exactly the stressed blocks Theorem~\ref{thm:cvar} needs read (Experiment 2's studentised column measured that), so the way out, if there is one, is not a reweighting.

The same impossibility bounds what the calibration can be said to have measured. Everything \S\ref{sec:calibration} reports about where the price sits is measured on block \emph{scores}, and on the raw block-mean statistic: studentising each block removes the excess (Experiment 2), but that is the statistic of a volatility-targeted deployment, not the rule Theorem~\ref{thm:cvar} is about. A block \emph{risk} law is measured nowhere and at fixed $b$ cannot be --- deconvolution and SIMEX work to a noise-to-signal ratio of $1$ and real block scores sit at $6.6$ (Experiment 2) --- so $\hat{c}(\Lambda)$ is a statement about scores by necessity, not by choice. Two of the three libraries are rules of our own design besides, which the cross-section of Experiment 7 fixes and the daily libraries do not.

\subsection{The status of the axiomatisation}
\label{sec:limits-axioms}
\begin{center}
\footnotesize
\begin{longtable}{@{}>{\raggedright\arraybackslash}p{0.21\linewidth}>{\raggedright\arraybackslash}p{0.61\linewidth}>{\raggedright\arraybackslash}p{0.11\linewidth}@{}}
\toprule
\textbf{what is open} & \textbf{where it stands, and what prices it} & \textbf{why not here} \\
\midrule
\endhead
5. $\Lambda$ is asserted per representation, and nothing says which representations belong on the menu & Two researchers can assert the same $\Lambda$ and mean different things, and declaring $\varepsilon_0$ alongside it makes the coarsest admissible $\psi$ well posed but not existent. Measuring both ends on the same partitions, Experiment 1's panel H found refining to raise $\Lambda$ along any coordinate and to lower the defect along one --- a reading Proposition~\ref{prop:defectlevel} demotes at the corrected thresholds, where the defect end sits inside its null everywhere but the stalest cells (panel H4), so the sample prices only the $\Lambda(\psi)$ side. The $\varepsilon(\psi)$--$\Lambda(\psi)$ trade of Proposition~\ref{prop:representation} is stated and not solved; Proposition~\ref{prop:lattice} finds two representations inside one procedure unordered; Proposition~\ref{prop:repsearch} puts a search of a \emph{declared} menu on the budget at $\log|\Psi|$; and the scale is priced by Proposition~\ref{prop:scale}, with an interior optimum of order $\ell$ implied by Proposition~\ref{prop:eiv} (ii), so a state-scale guarantee at $\Lambda$ needs a block short enough to deliver it, which \S\ref{sec:nonempty} says she cannot afford. What no charge buys is the menu itself, and what would buy it is a rule choosing a coarsening without reading its verdict. & Unwritten \\
6. The canonical form is a necessity theorem, not a factorisation & Theorem~\ref{thm:canonical} does not say every good procedure decomposes into five recognisable steps (Remark~\ref{rem:canonstatus}); two margins are proved in models narrower than the axioms, and necessity constrains stages, not implementations. What would close it is a converse: every procedure meeting the bound put into five-stage form. & Unwritten \\
7. The exponential decay instance is contested by the data, and the clause behind it is unexamined & Experiment 1's panel J finds, on all three declared coordinates, a power of the lag with exponent below one, so every number computed in Axiom~\ref{ax:persistence} (ii)'s exponential instance is computed in an instance the data contest; what would close that is the deployed bound at a polynomial rate. Since the clause went rate-free and the rates became declarations at their consumers (Propositions~\ref{prop:embargo} (ii), \ref{prop:scorecoh}), clause (i) and those declarations carry the budget, and the qualitative clause's own strength stays unexamined, one level down. Nor are the axioms proved minimal: Remark~\ref{rem:independence} exhibits, for each, a model satisfying the others, which does not show the strength of Axiom~\ref{ax:invariance} least. Nothing in this project checks the direction of an inequality, and we do not claim Proposition~\ref{prop:kapceil} is the last such bound. & Unwritten \\
\bottomrule
\end{longtable}
\end{center}

\subsection{The market side}
\label{sec:limits-market}
\textbf{8. On the one real series the paper walks, the procedure takes no position.} Experiment 5 runs S1--S5 on thirty years of one index at $\Lambda = 4$: the selected rule fails S5 on the search haircut, and under the dispersion penalty as Proposition~\ref{prop:kelly} commits to it \emph{no} candidate in the library has a position, each edge surviving only to $\Lambda \approx 1.1$--$1.3$, which is Proposition~\ref{prop:occfloor}'s floor for that window read at the level the curve is drawn at, $p = 1/\Lambda$ (the fixed point $1.17$; at $p = 0.1$ the floor is $2.24$, further out still).

We take the literal reading, and state it as a \emph{reductio}: at $\Lambda = 4$ a long-only index, or a timing rule on it, has no robust edge at this block length --- the equity premium is a statement about a mean, robustness at $\Lambda$ is a statement about the worst $1/\Lambda$ of a reweighted future, and a declaration of $\Lambda = 4$ on a daily single-asset series is a declaration about an empty set.

The two escapes have been measured and neither holds. On a declared representation the verdict moves (Experiment 1, panel G), but at the least admissible state-wise declaration every band that differs from the block value contains zero, and at ten cells all three coordinates are negative with bands excluding zero (Experiment 5, panel G); the one declaration the invariance test does not reject leaves the edge at $+4$ thousandths at $\Lambda = 2$ and zero at $\Lambda = 4$ (panel G2). And the emptiness was computable before the block length was chosen (\S\ref{sec:nonempty}), the search ceiling biting first and still biting at the regime dependence the library measures.

That the procedure \emph{can} clear S5 is a matter of arithmetic rather than of this record: Corollary~\ref{cor:nonempty} says what a configuration must supply --- an edge, a block length and a search size that clear its two bounds --- and no configuration in this record supplies it.

What separating the three causes would take is the one thing this record cannot supply. Length is priced and refused: \S\ref{sec:nonempty} asks seventy years of the same index for the gap between the latent and the observable dispersion, and Proposition~\ref{prop:breadth}'s answer is that the years are not worth it. Width is the substitute, and Experiment 7 buys as much of it as the published record holds --- $133$ long-short signals against $82$ long-only portfolios over sixty years --- where the verdict on P1 is a null with power. So this stays open for want of a sample, and the escape it points to is not a longer history but a wider one.

\section{Conclusion}
\label{conclusion}
Two conclusions. \textbf{The axioms force the canonical form.} The belief that makes systematic trading possible --- that past regularities persist --- is not a claim that the world is stationary; it is a claim about a mechanism invariant conditional on a state one cannot see, held to five declared constants. So stated, it forces the architecture: the selection objective is the block-CVaR at level $1/\Lambda$ (Theorem~\ref{thm:cvar}), the research a sample can pay for is a computable budget (\S\ref{sec:budget}), robustness is a purchase priced in the same currency (\S\ref{sec:price}), the bet is $1/(1+\varkappa)$ of the estimate (\S\ref{sec:sizing}), and each of the five stages S1--S5 is necessary --- a procedure omitting one does strictly worse under a law the axioms admit, by the margins of Theorem~\ref{thm:canonical} --- while what remains free is exactly what the axioms leave free: the declared constants, and the implementation inside the budget (Remark~\ref{rem:canonstatus}).

\textbf{The data have so far contradicted declarations, not the structure.} What a sample can interrogate are the declared constants (\S\ref{sec:empirical}), and the record of the experiments is that at the honest level no axiom is refuted on a live market's state. The one constant a sample can contradict outright, the invariance defect, fired on the declared coordinates, loudest on one, and was withdrawn by the level theory the test is required to carry (Proposition~\ref{prop:defectlevel}); what real series do reject are declarations --- the conservative $\kappa = 1$, and the exponential instance at its declared rate, which \S\ref{sec:limits} records as chosen for tractability. Where the evidence has moved against a prediction --- the tail criterion buys nothing over mean selection on the cross-sectional library at the one level whose tail can be read --- no theorem is refuted, the theorem being about a worst case a held-out sample need not contain; but the case for paying $\lambda(\Lambda)$ at selection now rests on the theorem rather than on our evidence. The framework would be most usefully right if practitioners began reporting performance as a function of $\Lambda$: one extra column in a table, and the one assumption on which everything else rests becomes visible.

\vspace{1em}
{\footnotesize\noindent\textbf{Acknowledgements.} The drafting, the review
that produced the current \S8 including the errors-in-variables cost of
Proposition~\ref{prop:eiv}, and the open-problem list were carried out with
Claude (Anthropic), working through Claude Code; responsibility for the
content is the author's.\par}
\vspace{1em}

\bibliographystyle{unsrtnat}
\bibliography{references}

\clearpage
\appendix
\section{Proofs}
\label{app:proofs}
\begin{proof}[Proof of Lemma~\ref{lem:sharpe}]
With $s^2 := \mathbb{E}[\mu(X)^2]/\sigma^2$ and $\sigma_c^2$ the common value of $\mathrm{Var}(Y \mid X)$, conditioning the PnL $W = \mu(X) Y/(\gamma \sigma^2)$ of $h^\star$ on $X$ gives $\mathbb{E} W = \mathbb{E}[\mu^2]/(\gamma \sigma^2)$ and $\mathrm{Var}(W) = (\sigma_c^2\, \mathbb{E}[\mu^2] + \mathrm{Var}(\mu^2))/(\gamma \sigma^2)^2$, so $\mathrm{SR}(h^\star) = s (1 + \delta)^{-1/2}$ with $\delta := -\mathrm{Var}(\mu)/\sigma^2 + (k_\mu - 1) s^2$, $k_\mu := \mathbb{E}[\mu^4]/(\mathbb{E}[\mu^2])^2$ --- an identity, $\gamma$ cancelling. Since $0 \le \mathrm{Var}(\mu) \le s^2 \sigma^2$ and $s \le \rho$ ($\mu(X) = \mathbb{E}[\mu_\pi \mid X]$, Jensen, Axiom~\ref{ax:snr}), $|\delta| \le k_\mu s^2$, and the mean value theorem on $u \mapsto (1+u)^{-1/2}$ gives the remainder $k_\mu \rho^3/(2(1-\rho^2)^{3/2})$; $k_\mu = 3$ for a Gaussian forecast, so at $\rho^2 = 10^{-2}$ the remainder is $1.5 \times 10^{-3}$ against $s = 0.1$. The same lines hold under any mixture in the state-scale ball, with $\mu_\pi$, $\sigma_\pi$ and the mixture form of the axiom (\S\ref{sec:axioms}). The hypothesis on $\mathrm{Var}(Y \mid X)$ is what the identity needs, not a convenience: if $\sigma^2(x) := \mathrm{Var}(Y \mid X = x)$ varies, the mean--variance optimum is $\mu(x)/(\gamma \sigma^2(x))$, of Sharpe ratio $\sqrt{\mathbb{E}[\mu^2/\sigma^2(X)]}$ to the same order, and Axiom~\ref{ax:snr} bounds $\mathbb{E}[\mu^2]/\sigma^2$, not that --- a feature that forecasts the variance buys a Sharpe ratio the ceiling does not price, and a track record that did so reads a $\rho_1$ above the one this lemma converts.
\end{proof}

\begin{proof}[Proof of Proposition~\ref{prop:reflexive}]
The two occupation measures live on the disjoint sets $\mathcal{Z} \times \{0\}$ and $\mathcal{Z} \times \{C\}$ of the enlarged path $(Z_t, C_t)$, so no finite $\Lambda$ bounds their density ratio. With the assertion of Definition~\ref{def:reflexivity} dropped, the construction of Theorem~\ref{thm:impossible} applies with $z^\dagger = (z, C)$ and $k = 0$ replaced periods --- the past untouched, $\epsilon_n(0) = 0$ --- so (iii) of that theorem holds on the event $\hat{h}_n = h^\star$, of probability $p^\star$, and what fails is (ii), Axiom~\ref{ax:recurrence}: which is the point. Under the axiom, $r_h^\dagger(z, C)$ is determined by $r_h(z)$, $\delta$ and $c(C, h)$, and the estimate of stage S5 is the history's estimate of $r_h$ shifted by them.
\end{proof}

\begin{proof}[Proof of Theorem~\ref{thm:impossible}]
\emph{The construction in outline.} Adjoin to the latent space one fresh state $z_h^\dagger$ per strategy, whose kernel is built to oppose exactly $h$; let $h^\star$ be the estimator's modal output under $P^0$. Run the latent chain in $\mathcal{Z}_0$ for $t \le n$ except for one contiguous visit to $z_{h^\star}^\dagger$ of length $k = \lceil n/\Lambda \rceil$, and in $z_{h^\star}^\dagger$ for $t > n$. The kernel is a fixed, $t$-free object and the latent path is chosen without looking at the data, so Axiom~\ref{ax:invariance} holds; the past is a $k$-replacement of $P^0$ by construction; the density ratio of future to past occupation is exactly $n/k$; and on the event $\{\hat{h}_n = h^\star\}$, whose $P_\Lambda$-probability is within $\epsilon_n(k)$ of $p^\star$ by the definition of the sensitivity, the estimator has deployed the one strategy the future is built to oppose.

Set $\mathcal{Z} = \mathcal{Z}_0 \sqcup \{z_h^\dagger\}_{h \in \mathcal{H}}$ with $\mathcal{Z}_0$ carrying the assumed latent representation of $P^0$. Define
\[
Q(\mathrm{d}x,\mathrm{d}y \mid z_h^\dagger) = P_X^0(\mathrm{d}x)\,\mathcal{L}\big(Y \mid \mathbb{E}[Y] = -\rho\sigma\,\mathrm{sgn}\,h(x),\; \mathrm{Var}(Y) = \sigma^2\big),
\]
with $\sigma^2 := \mathrm{Var}_{P^0}(Y)$. Two things about this kernel. It satisfies Axiom~\ref{ax:snr} with the stated constants: its conditional mean $\mu^\dagger(x) = -\rho\sigma\,\mathrm{sgn}\,h(x)$ has $\mathbb{E}[\mu^\dagger(X)^2] \le \rho^2\sigma^2 \le \rho^2\mathrm{Var}(Y)$, the variance of $Y$ under it being $\sigma^2 + \mathrm{Var}(\mu^\dagger(X)) \ge \sigma^2 \ge \underline{\sigma}^2$. (Given $z_h^\dagger$ the draw is independent of the past by construction, so its conditional mean given the past and the state is $\mu^\dagger(X)$.) Since $P_\Lambda$'s states are those of $P^0$ together with this one, Axiom~\ref{ax:snr} holds for $P_\Lambda$ state by state, hence in its mixture form on every state-scale ball (\S\ref{sec:axioms}), in particular on the ball of radius $n/k$ that Axiom~\ref{ax:recurrence} will declare. (This is where the axiom has to be stated state by state: a mixture of the new ball that puts weight $w$ on $z_h^\dagger$ restricts to $\mathcal{Z}_0$ with density up to $(n/k - 1)/(1-w)$ against $\bar{\pi}_n^0$, outside the ball of radius $n/k$ as soon as $w > k/n$, so a bound asserted over $P^0$'s own mixtures would not reach it.) And it moves the conditional mean by at most $\Delta = 2\rho\bar{\sigma}$ in root mean square: with $\mu^0(x) := \mathbb{E}_{P^0}[Y \mid X = x]$, Minkowski and Axiom~\ref{ax:snr} under $P^0$ give $\|\mu^\dagger - \mu^0\|_{L^2(P_X^0)} \le \|\mu^\dagger\|_{L^2} + \|\mu^0\|_{L^2} \le \rho\sigma + \rho\sigma \le 2\rho\bar{\sigma}$ --- no more than the ceiling's own size, which is the sense in which the ambush needs no unusual leverage. (A pointwise bound would need $|\mu^0(x)| \le \rho\bar{\sigma}$ at every $x$, which Axiom~\ref{ax:snr} does not assert; nothing below needs it, and truncating $\mu^\dagger$ to obtain one would spoil the equality in (iii).) Each $Q(\cdot \mid z_h^\dagger)$ is a fixed kernel indexed by $h$ and does not depend on $t$.

Let $h^\star$ attain $p^\star$. Build the latent path as follows, \emph{without reference to the data}: for $t \le n$ follow the $\mathcal{Z}_0$-representation except on a single contiguous visit to $z_{h^\star}^\dagger$ of length $k = \lceil n/\Lambda \rceil$, placed at the start of the window; for $t > n$ sit in $z_{h^\star}^\dagger$. Conditionally on this path let the observations be independent draws from $Q(\cdot \mid Z_t)$; this is Axiom~\ref{ax:invariance} by construction. The insertions are deterministic and the observations are conditionally independent draws given the path, so the joint observable-and-latent process has the $\beta$-coefficients of the $\mathcal{Z}_0$-representation and Axiom~\ref{ax:persistence} holds in both clauses.

Occupations: $z_{h^\star}^\dagger$ is fresh, so $\bar{\pi}_n(z_{h^\star}^\dagger) = k/n$ exactly and $\bar{\pi}^+ = \delta_{z_{h^\star}^\dagger}$, whence $\mathrm{d}\bar{\pi}^+/\mathrm{d}\bar{\pi}_n = n/k \le \Lambda$ and Axiom~\ref{ax:recurrence} holds with constant $n/k$, giving (ii). The observable past differs from $P^0$ only on the $k$ periods of the visit, where the conditional law given the path is the fixed kernel $Q(\cdot \mid z_{h^\star}^\dagger)$, which satisfies Axiom~\ref{ax:snr} with the stated constants and moves the conditional mean by at most $\Delta$; so it is a $k$-replacement of $P^0$ in the sense of \eqref{eq:replacek}, and by the definition of $\epsilon_n(k)$ the law of $\hat{h}_n$ under $P_\Lambda$ is within total variation $\epsilon_n(k)$ of its law under $P^0$, giving (i); in particular $P_\Lambda(\hat{h}_n = h^\star) \ge p^\star - \epsilon_n(k)$. (A bound on the total variation between the two laws of the \emph{past} would not do: the $k$ replaced coordinates sit at a fixed distance, so that quantity does not shrink with $\Lambda$, and only the data-processing step through $\hat{h}_n$ can make it small.)

For (iii), on that event, conditionally on the path, $R^+(\hat{h}_n) = R^+(h^\star) = \rho\sigma\,\mathbb{E}|h^\star(X)|$, while for every $h'' \in \mathcal{H}$, $R^+(h'') = -\mathbb{E}[h''(X)Y \mid z_{h^\star}^\dagger] = \rho\sigma\,\mathbb{E}[h''(X)\,\mathrm{sgn}\,h^\star(X)] \le \rho\sigma\,\mathbb{E}|h''(X)|$, which the common position scale makes equal to $\rho\sigma\,\mathbb{E}|h^\star(X)|$. So $R^+(h^\star) = \sup_{h''} R^+(h'')$.
\end{proof}

\begin{proof}[Proof of Corollary~\ref{cor:notestimable}]
(a) Write $k$ for the distinguished state, $p = \nu(k)$, $O^+ := \bar{\pi}^+(k)$, $O_n := \bar{\pi}_n(k)$. By the Markov property and the redraw structure the past influences the future only through the current sojourn, a term of order $\ell/m$ in $O^+$; so conditionally on the past, $O^+$ has the non-degenerate law of Proposition~\ref{prop:occfloor} up to that term, with conditional variance $p(1-p)(2\ell-1)/m + O(\ell^2/m^2)$ not depending on $n$. Since $O^+/O_n \le \Lambda_{n,m} \le \max_z \bar{\pi}^+(z)/\bar{\pi}_n(z)$ and $O_n \to p$ almost surely, the conditional law of $\Lambda_{n,m}$ given the past has a non-degenerate limit, to which a past-measurable $\hat{\Lambda}_n$ --- a constant given the past --- cannot converge in probability. (b) The event $\{O_n = 0\}$ has probability at least $(1-p)^{n+1}$, and on it $\{O^+ > 0\}$ has conditional probability at least $1 - (1 - p/\ell)^m$, each period of the window being a redraw landing on $k$ with probability $p/\ell$; on the intersection $\bar{\pi}^+$ charges a state $\bar{\pi}_n$ does not, and $\Lambda_{n,m} = \infty$.
\end{proof}

\begin{proof}[Proof of Lemma~\ref{lem:robust} and Theorem~\ref{thm:cvar}]
The lemma is the robust representation of average value at risk; see \citet[Theorem~4.52]{follmer2016stochastic} or \citet{rockafellar2000cvar}. For the theorem, fix the path and let $b$ divide $n$ (otherwise the $\le b-1$ periods outside every block are dropped from $\bar{\pi}_n$). Axiom~\ref{ax:invariance} and \eqref{eq:regimerisk} make $R^+(h) = \int r_h\, \mathrm{d}\bar{\pi}^+$ linear in the occupation measure, and Axiom~\ref{ax:recurrence} writes $\bar{\pi}^+ = \sum_j w_j \bar{\pi}_{n,j}$ with $\max_j w_j \le \Lambda/B$, so $R^+(h) = \sum_j w_j R_j(h)$ for an admissible $w$; the admissible set is exactly the lemma's for $P$ uniform on $B$ atoms and $\alpha = 1/\Lambda$, whose supremum is the mean of the worst $\lceil B/\Lambda \rceil$ atoms with the boundary atom at fractional weight. The supremum is attained by a deployment path that re-runs each of the worst blocks $\Lambda m/(Bb)$ times when that is an integer, and approached as $m$ grows otherwise, the realisable $w$ having entries in $(b/m)\mathbb{N}$. Monotonicity in $\Lambda$ is the growth of the constraint set; at $\Lambda = 1$ the only admissible $w$ is uniform and the value is $\bar{R}_n(h)$; at $\Lambda \ge B$ all weight may sit on one block.
\end{proof}

\begin{proof}[Proof of Proposition~\ref{prop:representation}]
With $\|P - P'\|_{\mathrm{TV}} := \sup_A |P(A) - P'(A)|$, $\left|\int f\, \mathrm{d}P - \int f\, \mathrm{d}P'\right| \le 2\|f\|_\infty \|P-P'\|_{\mathrm{TV}}$ for bounded $f$ (Jordan decomposition). Write $\mathcal{G} := \sigma(\psi(Z))$, $r_h^\psi(z') := \int L(h(x), y)\, Q_\psi(\mathrm{d}x, \mathrm{d}y \mid z')$, and $\bar{\pi}^{+,\psi}$, $\bar{\pi}_{n,j}^\psi$ for the images under $\psi$ of the occupation measures of \eqref{eq:occupation}. Three steps. The definition of $\varepsilon(\psi)$ and the display above give $|\mathbb{E}[L(h(X_t), Y_t) \mid \mathcal{G}] - r_h^\psi(\psi(Z_t))| \le 2M\varepsilon(\psi)$ almost surely for every $t$, so $R_\psi^+(h) = \mathbb{E}[R^+(h) \mid \mathcal{G}] = \int r_h^\psi\, \mathrm{d}\bar{\pi}^{+,\psi} + E$ with $|E| \le 2M\varepsilon(\psi)$ (tower property), and likewise for each historical block, which is what Corollary~\ref{cor:blocks} becomes at a positive defect. Taking images under $\psi$ is linear, so the coarsened path satisfies Axiom~\ref{ax:recurrence} at the same scale with the same weights, and its own constant $\Lambda(\psi)$ is at most $\Lambda$: every $w$ admissible for the fine path is admissible for the coarse one. Theorem~\ref{thm:cvar} used only that linearity and that mixture condition, so it holds in $\mathcal{Z}'$ with constant $\Lambda(\psi)$, and adding $E$ gives the proposition; nothing used any property of the kernel $Q_\psi$, and tightness is not claimed --- $2M\varepsilon(\psi)$ is a worst case over the sign of the defect at every date. With $\psi$ the identity and $Q_\psi = Q$, $\varepsilon(\psi) \le \varepsilon_0$ and $\Lambda(\psi) \le \Lambda$: Theorem~\ref{thm:cvar} at a positive defect, the exchange rate \S\ref{sec:axioms} quotes.
\end{proof}

\begin{proof}[Proof of Lemma~\ref{lem:effsample}]
Write $L_t := L(h(X_t), Y_t) = r_h(Z_t) + \xi_t$ with $\xi_t$ the mechanism noise, conditionally mean zero given the path. The two parts are uncorrelated, so $\mathrm{Var}(\hat{R}_n) = \mathrm{Var}(n^{-1}\sum_t r_h(Z_t)) + \mathrm{Var}(n^{-1}\sum_t \xi_t)$. This split gives both ends of \eqref{eq:upsbox}. The first term is the variance of an average, hence non-negative for every chain and every $n$, and it survives a mis-declared state: if the modeller omits a coordinate of $Z$, the regime risk carried by that coordinate moves into $\xi$, whose variance is then the sum of its own non-negative regime term and the noise below. The second term is $\sigma^2 H/n$ from the overlap, by the computation below, plus $2\sigma^2\pi_\mu/n$ from the covariances at distance $\ge H$, which Axiom~\ref{ax:invariance} does not set to zero (\S\ref{sec:setting}).

For those: for $t \ge s+H$ the variable $\xi_s$ is $\mathcal{F}_t^{\mathrm{obs}} \vee \sigma(Z)$-measurable ($Y_s$ is known at $s+H$), so $\mathrm{Cov}(\xi_s,\xi_t) = \mathbb{E}[\xi_s\, p_t]$ with $p_t := \mathbb{E}[\xi_t \mid \mathcal{F}_t^{\mathrm{obs}}, Z] = \mathbb{E}[L_t \mid \mathcal{F}_t^{\mathrm{obs}}, Z] - r_h(Z_t)$ the part of the loss the observable past predicts beyond the state. Cauchy--Schwarz gives $|\mathrm{Cov}(\xi_s,\xi_t)| \le \|\xi_s\|_2 \|p_t\|_2 = \sigma\|p_t\|_2$, and $\|p_t\|_2 \le \rho\sigma$ is Axiom~\ref{ax:snr}: for the PnL loss with $|h| \le 1$, $p_t$ is the within-state centring of $-h(X_t)\, \mathbb{E}[Y_t \mid \mathcal{F}_t^{\mathrm{obs}}, Z]$, so its mean square is at most $\mathbb{E}[(\mathbb{E}[Y_t \mid \mathcal{F}_t^{\mathrm{obs}}, Z])^2] \le \rho^2\, \mathbb{E}[\mathrm{Var}(Y \mid Z)] \le \rho^2\sigma^2$. This is where the axiom has to be stated state by state: its mixture form bounds the conditional mean given the past alone, a trader who must infer the state, while $p_t$ describes one who knows it, and under the mixture form alone the only bound is the trivial $\|p_t\|_2 \le \sigma$. So $|\mathrm{Cov}(\xi_s,\xi_t)| \le \rho\sigma^2$ at every lag $\ge H$, and $|\pi_\mu| \le \rho\tau_\mu$ with $\tau_\mu := \sum_{k \ge H} |\mathrm{Cov}(\xi_0,\xi_k)|/(\rho\sigma^2)$ the coherence of that component in periods; at finite $n$ the term is $2\sum_{H \le k < n} (1 - k/n)\mathrm{Cov}(\xi_0,\xi_k)/\sigma^2$, the infinite sum up to a relative $\tau_\mu/n$. Hence $\upsilon \ge H - 2\rho\tau_\mu$ at every $n$ and for every chain: the lower end of the box is Axiom~\ref{ax:snr}'s, not Axiom~\ref{ax:invariance}'s. Read in units of the unconditional $\mathrm{Var}(Y)$ rather than of the occupation-average conditional variance, the overlap term is $H(1 - \rho_{\mathrm{blk}}^2)$ and the floor falls by a further $\rho_{\mathrm{blk}}^2 H$, below $10^{-3}$ at any $\rho_{\mathrm{blk}}$ this paper measures.

For the first, $\mathrm{Cov}(r_h(Z_s), r_h(Z_t)) = \rho^{|s-t|}\mathrm{Var}_\nu(r_h)$ under the redraw chain (proof of Proposition~\ref{prop:occfloor}), and summing over $s,t \le n$ gives $\mathrm{Var}_\nu(r_h)\, n^{-1}\sum_{|j|<n}(1-|j|/n)\rho^{|j|} = \mathrm{Var}_\nu(r_h)\, v(n,\ell)/n$, the closed form of $v$ being the geometric sums $(1+\rho)/(1-\rho) = 2\ell-1$ and $(1-\rho)^{-2} = \ell^2$. For the second, the equal-weight overlap of an $H$-period return makes $\xi_t = H^{-1/2}\sum_{s=1}^H e_{t+s}$ with the $e$ conditionally independent of variance $\sigma^2$, so $\mathrm{Cov}(\xi_s,\xi_t) = \sigma^2(1-|s-t|/H)^+$ and $\mathrm{Var}(n^{-1}\sum_t \xi_t) = \sigma^2 n^{-1}\sum_{|j|<H}(1-|j|/H)(1-|j|/n) = (\sigma^2/n)\,(H - (H^2-1)/(3n))$, since $\sum_{|j|<H}(1-|j|/H) = H$ and $\sum_{|j|<H}|j|(1-|j|/H) = (H^2-1)/3$; the edge term is the relative $O(H/n)$ the statement allows.

For a general chain under Axiom~\ref{ax:persistence} the first sum is $n^{-1}$ times the integrated autocorrelation time $\sum_{j\in\mathbb{Z}} \mathrm{Corr}(r_h(Z_0), r_h(Z_j))$, and $|\mathrm{Cov}(f(Z_0), g(Z_j))| \le 4\|f\|_\infty\|g\|_\infty\beta(j)$ for bounded $f,g$ bounds it by $2\ell^* - 1 + 8(\|r_h\|_\infty^2/\mathrm{Var}_\nu(r_h))\sum_{j\ge \ell^*}\beta(j)$, the lags below $\ell^*$ counted at one each --- finite only for a summable decay, and in the exponential instance at most $2\ell^* - 1 + 8\beta_0\ell^*\, \|r_h\|_\infty^2/\mathrm{Var}_\nu(r_h)$ since $\sum_{j\ge \ell^*} e^{-j/\ell^*} \le e^{-1}(\ell^*+1) \le \ell^*$; the factor can be brought down to its logarithm and no further (Example~\ref{ex:ladder}; the argument is omitted).

For several coordinates that are independent redraw chains, expand $r_h$ in its Hoeffding decomposition $\sum_S r_S$, each $r_S$ having zero conditional mean given any proper sub-collection of its coordinates; the terms are orthogonal at every pair of lags, because a redraw of one coordinate in $S$ between the two times kills the conditional mean, and $\mathrm{Cov}(r_S(Z_0), r_S(Z_j)) = \mathrm{Var}(r_S)\prod_{i\in S} \rho_i^{|j|}$ since the term survives only if no coordinate of $S$ was redrawn. So each subset is charged $v(n,\ell_S)$ with $1 - 1/\ell_S = \prod_{i\in S}(1 - 1/\ell_i)$, and the singletons give the sum in the statement.

For a reversible chain the bound is function-free. Let $P$ be the transition operator on $L^2(\nu)$, self-adjoint by reversibility, with eigenvalues $1 = \lambda_0 > |\lambda_i|$ and orthonormal eigenfunctions $\varphi_i$; expand $r_h - \nu(r_h) = \sum_{i\ge 1} a_i \varphi_i$. Then $\mathrm{Cov}(r_h(Z_0), r_h(Z_j)) = \langle r_h, P^{|j|} r_h \rangle_\nu = \sum_i a_i^2 \lambda_i^{|j|}$, so the regime part of the variance is $n^{-1}\sum_i a_i^2 v_{\lambda_i}(n)$ with $v_\lambda(n) := \sum_{|j|<n}(1-|j|/n)\lambda^{|j|}$, and $v_\lambda(n) \le v_{|\lambda|}(n) \le v_{\lambda^*}(n)$ term by term for $|\lambda| \le \lambda^* := \max_{i\ge 1}|\lambda_i|$. Writing $\ell := 1/(1-\lambda^*)$ for the relaxation time, $v_{\lambda^*}(n)$ is exactly the $v(n,\ell)$ of \eqref{eq:vcontig}, so the regime part is at most $\mathrm{Var}_\nu(r_h)\, v(n,\ell)/n$, with equality if and only if $r_h - \nu(r_h)$ lies in the eigenspace of $\lambda^*$. The redraw chain has $P = (1-1/\ell)I + (1/\ell)\mathbf{1}\nu^\top$, whose non-trivial spectrum is the single value $1-1/\ell$; every $r_h$ lies in that eigenspace, which is why the lemma is exact there and why no reversible chain with the same relaxation time does worse. The relaxation time is bounded by the worst-case mixing time \citep[Thm.~12.5]{levin2017markov}, of which the $\beta$-coefficients of Axiom~\ref{ax:persistence} are the stationary-start average.
\end{proof}

\begin{proof}[Proof of Theorem~\ref{thm:capacity}]
Take the linear class and the loss $(y - \theta^\top x)^2$, let $\theta_z$ be the loading in state $z$ and $\bar{\theta}_n := n^{-1}\sum_t \theta_{Z_t}$ the occupation-weighted loading, which minimises $\bar{R}_n$ over the class; the excess risk of $\hat{\theta}$ is $\|\hat{\theta} - \bar{\theta}_n\|^2$ for $\mathbb{E}[XX^\top] = I$. Least squares gives $\hat{\theta} - \bar{\theta}_n = (X^\top X)^{-1}X^\top(\xi + \delta)$ with $\xi$ the mechanism noise and $\delta_t := (\theta_{Z_t} - \bar{\theta}_n)^\top X_t$.

Conditionally on the path and the design, $\mathrm{Cov}(\xi) = \Sigma_\xi$ is the banded Toeplitz matrix of the $H$-overlap, whose operator norm is at most $\sum_{|j|<H} |\gamma_j| = \sigma^2 H$ (Lemma~\ref{lem:effsample}), so $\mathbb{E}\|(X^\top X)^{-1}X^\top\xi\|^2 \le \sigma^2 H\, \mathrm{tr}(X^\top X)^{-1}$, and $\mathbb{E}\,\mathrm{tr}(X^\top X)^{-1} = d/n\,(1+o(1))$ when the design has $n/\ell_x \gg d$ distinct rows; this is $\sigma^2 d/n_{\mathrm{fit}}$, and it is attained when the features persist over the label window, since then $X^\top \Sigma_\xi X \approx \sigma^2 H\, X^\top X$.

For the regime term, $\sum_t X_t\delta_t = \sum_t X_t X_t^\top (\theta_{Z_t} - \bar{\theta}_n)$ has conditional mean $\sum_t (\theta_{Z_t} - \bar{\theta}_n) = 0$ and, for i.i.d.~standard Gaussian features, conditional variance $(d+1)\sum_t \|\theta_{Z_t} - \bar{\theta}_n\|^2$ --- for a fixed vector $a$, $\mathbb{E}\|XX^\top a\|^2 = \mathbb{E}[(X^\top \hat{a})^4]\|a\|^2 + \mathbb{E}\|X_\perp\|^2\, \mathbb{E}[(X^\top\hat{a})^2]\|a\|^2 = (3+d-1)\|a\|^2$ along and across the direction $\hat{a} := a/\|a\|$, less $\|\mathbb{E}[XX^\top a]\|^2 = \|a\|^2$ --- so its contribution to the excess risk is at most $(d+1)\max_z \|\theta_z - \bar{\theta}_n\|^2/n$. Axiom~\ref{ax:snr} gives $\|\theta_z\|^2 = \mathbb{E}[\mu_z(X)^2] \le \rho^2\bar{\sigma}^2$ at every state, hence $\|\theta_z - \bar{\theta}_n\| \le 2\rho\bar{\sigma} = 2\rho\chi_\sigma\sigma$, and the term is at most $4\rho^2\chi_\sigma^2(d+1)\sigma^2/n$: relative to the noise term $\sigma^2 Hd/n$ it is $O(\rho^2\chi_\sigma^2/H)$. Features persistent over $\ell_x$ periods raise the variance of the sum by at most the factor $\ell_x$, so for features persistent at the label scale the relative size is the $O(\rho^2)$ of the statement.

Comparing $\sigma^2 d/n_{\mathrm{fit}}$ to $\mathbb{E}[\mu(X)^2] \le \rho^2\sigma^2$ yields \eqref{eq:ceiling}. The blocking technique of \citet{yu1994rates} with blocks of length $\ell^*$ gives $n/\ell^*$ instead; that is a valid upper bound and a loose one, since it discards the conditional independence Axiom~\ref{ax:invariance} supplies.
\end{proof}

\begin{proof}[Proof of Proposition~\ref{prop:embargo}]
(i) By Berbee's coupling lemma there is a copy $B'$ of the test block independent of the training $\sigma$-algebra $\mathcal{A}$ with $\mathbb{P}(B' \ne B) \le \beta(g)$; for a loss bounded by $M$, $\mathbb{E}|\mathbb{E}[L \mid \mathcal{A}] - \mathbb{E}[L]| \le 2M\beta(g)$, the bias in expectation over the training data. Set $\le \kappa_{\mathrm{emb}}\rho^2\sigma^2$, solve for $g$, and add $H$ for label overlap.

(ii) With the purge, the test block's noise drops out of the conditional bias up to the coupling of (i), so $\mathbb{E}[\hat{r}_j \mid \mathcal{F}_n] - \mathbb{E}[\hat{r}_j] = b^{-1}\sum_{t\in J}(\mathbb{E}[r_h(Z_t) \mid \mathcal{F}_n] - \bar{r}_h)$ for any stationary state, and by stationarity and Minkowski the root-mean-square bias is at most $\mathrm{sd}_\nu(r_h)$ times the block mean of $\mathrm{pred}_h(t-n)$. The coefficient is the maximal correlation with the $\mathcal{F}_0$-past --- Cauchy--Schwarz, with equality at $W \propto \mathbb{E}[r_h(Z_k) \mid \mathcal{F}_0] - \bar{r}_h$ --- so every observable lag-$k$ predictor bounds it from below; and $\mathrm{Var}(\mathbb{E}[r_h(Z_k) \mid \mathcal{F}_0]) \le 4\|r_h\|_\infty^2\beta(k)$ by the covariance inequality in the proof of Lemma~\ref{lem:effsample}. On a chain, $\mathbb{E}[r_h(Z_t) \mid \mathcal{F}_n] = \mathbb{E}[(P^{t-n}r_h)(Z_n) \mid \mathcal{F}_n]$ with $P$ the transition operator. For the redraw chain $P^k r_h = \bar{r}_h + \rho_\ell^k (r_h - \bar{r}_h)$ exactly, so $\mathrm{pred}_h(t-n) \le \rho_\ell^{t-n}$ by Jensen and the block mean is $\rho_\ell^{g+1}\, \ell(1-\rho_\ell^b)/b$; for a reversible chain with absolute spectral gap $1/\ell$, $\|P^k(r_h - \bar{r}_h)\|_{L^2(\nu)} \le \rho_\ell^k \|r_h - \bar{r}_h\|_{L^2(\nu)}$ gives the same expression as an upper bound. The exact form on the redraw chain: writing $D_n := \mathbb{E}[r_h(Z_n) \mid \mathcal{F}_n] - \bar{r}_h$ for what the training data know about the regime risk at their last period, the coefficient is $\mathrm{pred}_h(k) = \rho_\ell^k\, \|D_0\|_2 / \mathrm{sd}_\nu(r_h)$ --- equality in the bound when the data determine the state --- and the conditional bias is $\mathbb{E}[\hat{r}_j(h) \mid \mathcal{F}_n] - \mathbb{E}[\hat{r}_j(h)] = \rho_\ell^{g+1}\, \ell(1-\rho_\ell^b)\, D_n/b$, with ``$\le$ in $L^2(\nu)$ norm'' in the reversible case and $\mathbb{E}[D_n^2] \le \mathrm{Var}_\nu(r_h) = \rho_{\mathrm{blk}}^2\sigma^2$. Set the root-mean-square bias $\le \kappa_{\mathrm{emb}}\rho\sigma$ and solve for $g$.
\end{proof}

\begin{proof}[Proof of Corollary~\ref{cor:blocks}]
The decomposition \eqref{eq:blockscore} is Axiom~\ref{ax:invariance} averaged over the block: conditionally on the path the loss $L_t$ has mean $r_h(Z_t)$, so $\xi_j := b^{-1}\sum_{t\in j}(L_t - r_h(Z_t))$ has conditional mean zero. The $\xi_j$ are uncorrelated across blocks up to the predictability term $\pi_\mu$ of Lemma~\ref{lem:effsample} and independent up to the coupling of (a), which is Axiom~\ref{ax:persistence} (ii); where an $H$-period label straddles a boundary, purging the first $H$ periods of each block removes it, and the noise computation of Lemma~\ref{lem:effsample} with $n=b$ and no regime term gives the conditional variance $Hb^{-1}$ times the block average of $\sigma^2(Z_t)$.

(a) Berbee's lemma in its blocking form \citep[Lem.~4.1]{yu1994rates}: for blocks $U_1,\dots,U_m$ of length $b$ separated by gaps of length at least $b$ there are independent $U_1',\dots,U_m'$, each $U_j'$ equal in law to $U_j$, with $\mathbb{P}(U_j \ne U_j' \text{ for some } j) \le (m-1)\beta(b)$. The odd-indexed blocks are separated by the even-indexed ones and conversely, so each parity class of block-averaged regime risks is within total variation $(\lceil B/2 \rceil - 1)\beta(b)$ of an independent sample with the same marginals; a statistic bounded by $M$ of all $B$ scores --- the plug-in is one --- changes by at most $2M$ on the union of the two failure events, of probability at most $B\beta(b)$. The marginal is the law of $\int r_h\, \mathrm{d}\bar{\pi}_{n,j}$ for the chain started from $\nu$, the block-averaged regime risk of a stationary block, whose variance is $\mathrm{Var}_\nu(r_h)\,v(b,\ell)/b$ by Proposition~\ref{prop:occfloor} (i); it is the law whose $1/\Lambda$-tail Theorem~\ref{thm:cvar} bounds with block occupations in the role of states, and it differs from the law of block $j$ on the actual path only through the initial state, a total-variation distance vanishing with the lag by Axiom~\ref{ax:persistence} (ii) --- at most $\beta_0 e^{-(j-1)b/\ell^*}$ in the exponential instance --- that the same coupling absorbs.

(b) Given (a), the plug-in is the empirical $\mathrm{CVaR}_{1/\Lambda}$ of $B$ draws of $r^{(b)} = r + \xi$: Proposition~\ref{prop:eiv} (i) gives $\mathrm{CVaR}_{1/\Lambda}(r^{(b)}) \ge \mathrm{CVaR}_{1/\Lambda}(r)$ for every law of $(r,\xi)$, (ii) its Gaussian size, and Proposition~\ref{prop:tailprice} the $B^{-1/2}$ fluctuation about it. The statement about shuffled folds is Proposition~\ref{prop:shuffle}.
\end{proof}

\begin{proof}[Proof of Theorem~\ref{thm:search}]
$\mathbb{E}\max_{j\le N}\epsilon_j \le \sqrt{2\log N/n_{\mathrm{eff}}}$ is the Gaussian maximal inequality, which holds for any correlation structure (it is a union bound over tails), so the upper direction needs no independence; the exact finite-$N$ expression in the independent case is the deflated-Sharpe formula of \cite{bailey2014deflated}. If $\rho \ll \sqrt{2\log N/n_{\mathrm{eff}}}$ the genuine candidate's margin is small against the null maximum's typical size and its winning probability vanishes as $N \to \infty$ in the independent case; with correlated nulls the maximum is smaller and the bound is conservative. The Gaussian model of the ranking statistic is an approximation, not a consequence of the axioms; it is what makes this a calculation with a scale rather than a theorem with a constant, which Theorem~\ref{thm:pacbayes} supplies.

Part (ii) is Proposition~3.1 of \cite{russo2020information}, and its proof is three lines. For a $\sigma$-sub-Gaussian $X$ under $P$ and any $Q \ll P$, the Donsker--Varadhan inequality gives $\mathbb{E}_Q X - \mathbb{E}_P X \le \sigma\sqrt{2\,D(Q\parallel P)}$. Apply it to $X = \phi_i - \rho_i$ with $Q$ the law of $\phi_i$ given $T=i$: $\mathbb{E}[\phi_T - \rho_T] = \sum_i \mathbb{P}(T=i)\,\mathbb{E}[\phi_i - \rho_i \mid T=i] \le \sigma\sum_i \mathbb{P}(T=i)\sqrt{2D_i}$, where $D_i$ is the divergence of the conditional law of $\phi_i$ from its marginal. Jensen moves the sum inside the root, and $\sum_i \mathbb{P}(T=i) D_i \le \sum_i \mathbb{P}(T=i) D(P_{\phi\mid T=i}\parallel P_\phi) = I(T;\phi)$ by data processing, since $\phi_i$ is a coordinate of $\phi$. The lower bound is the same argument applied to $-X$.

The sub-Gaussian parameter of the block plug-in is the one the bounded-difference step of Theorem~\ref{thm:deployed} produces: differences of $2\Lambda M/B$ over $B$ independent blocks give a variance proxy of $B(2\Lambda M/B)^2/4 = \Lambda^2 M^2/B$, by the Azuma--Hoeffding step inside McDiarmid's inequality.
\end{proof}

\begin{proof}[Proof of Theorem~\ref{thm:pacbayes}]
Only (i) is not standard. Write $a := \lambda m$, $S_n := \sum_{t\le n} f(Z_t)$, $D := \mathrm{diag}(e^{\lambda f})$ and $P = \rho_\ell I + (1-\rho_\ell)\mathbf{1}\nu^\top$ for the redraw kernel, so that $\mathbb{E}_\nu e^{\lambda S_n} = \nu^\top D(PD)^{n-1}\mathbf{1} =: s_{n-1}$. Unrolling $(PD)^k\mathbf{1} = \rho_\ell D(PD)^{k-1}\mathbf{1} + (1-\rho_\ell)s_{k-1}\mathbf{1}$ gives the renewal equation $s_k = p_k + (1-\rho_\ell)\sum_{j<k} p_{k-1-j}s_j$ with $p_i := \rho_\ell^i\, \mathbb{E}_\nu e^{(i+1)\lambda f}$.

The equation defining $\omega$ is $\mathbb{E}_\nu \varphi_G(\lambda f - \omega) = 1$ for $\varphi_G(u) := (1-\rho_\ell)e^u/(1-\rho_\ell e^u)$, the generating function of a geometric sojourn $G$ on $\{1,2,\dots\}$ with mean $\ell$; its left side is continuous and decreasing in $\omega$, at least $1$ at $\omega=0$ (Jensen, $\varphi_G$ convex) and at most $1$ at $\omega=a$, so the root exists in $[0,a]$. Put $\theta := e^\omega$, $\tilde{q}_i := (1-\rho_\ell)p_{i-1}\theta^{-i}$ for $i\ge 1$ and $\pi_i := p_i\theta^{-i}$. Then $\sum_i \tilde{q}_i = \mathbb{E}_\nu \varphi_G(\lambda f - \omega) = 1$, so $\tilde{q}$ is a probability law on $\{1,2,\dots\}$, and dividing the renewal equation by $\theta^k$ gives $s_k\theta^{-k} = \sum_{j\le k} \tilde{u}_j \pi_{k-j}$ with $\tilde{u}$ the renewal sequence of $\tilde{q}$ ($\tilde{u}_0=1$, $\tilde{u}_k = \sum_{j<k}\tilde{u}_j\tilde{q}_{k-j}$). Now $\tilde{q}_i = \sum_z \nu(z)\varphi_G(u_z)(1-r_z)r_z^{i-1}$ with $u_z := \lambda f(z) - \omega$ and $r_z := \rho_\ell e^{u_z}$: a mixture of geometric laws, hence of non-increasing hazard, so a renewal occurs at any $k\ge 1$ with probability at most the first hazard $\tilde{q}_1 = (1-\rho_\ell)e^{-\omega}\mathbb{E}_\nu e^{\lambda f}$, while $\sum_{i\ge 0}\pi_i = \mathbb{E}_\nu[e^{\lambda f}/(1-\rho_\ell e^{\lambda f-\omega})] = \ell e^\omega$. Hence $s_{n-1}\theta^{-(n-1)} \le \pi_{n-1} + \tilde{q}_1\sum_i \pi_i \le e^a + \mathbb{E}_\nu e^{\lambda f} \le 2e^a$, the first display.

For the bound on $\omega$: $h := \varphi_G - 1$ has $h(0)=0$, $h'(0)=\ell$, and $(h(u)-\ell u)/u^2 = \mathbb{E}[G^2 k(uG)]$ with $k(x) := (e^x-1-x)/x^2$ increasing, so $h(u) \le \ell u + u^2(h(a)-\ell a)/a^2 = \ell u + \ell u^2 C_\ell(a)$ for all $u\le a$ (Bennett's device); applied to $\eta := \lambda f - \omega \le a$, which has $\mathbb{E}\eta = -\omega$, $\mathbb{E}\eta^2 = \lambda^2 v + \omega^2$ and $\mathbb{E} h(\eta) = 0$, it gives $0 \le -\ell\omega + \ell(\lambda^2 v + \omega^2)C_\ell(a)$. The closed form of $C_\ell$ is $h(a) = \ell w/(1-(\ell-1)w)$ with $w := e^a - 1$. For \eqref{eq:regimecumulant}, $w \le a/(1-a/2)$ gives $C_\ell(a) \le (\ell-1/2)/(1-(\ell-1/2)a) \le (\ell-1/2)/((1-\lambda m\ell)(1-a/2))$; the quadratic inequality in $\omega$ has its larger root above $a \ge \omega$ when $\lambda m\ell \le 1/4$, so $\omega$ is at most the smaller root $2\lambda^2 v C_\ell/(1+\sqrt{1-4\lambda^2 v C_\ell^2})$, and with $v \le m^2/4$ and $a \le \lambda m\ell/2$ this is below the display for every $\lambda m\ell \le 1/4$ --- a one-variable calculation, with a margin of $2\%$ at $1/4$.

(ii) Conditionally on the path an $H$-dependent sequence splits into $H$ independent ones, so Jensen on the average of their exponentials gives $\log \mathbb{E}[e^{\lambda\sum_t \xi_t} \mid Z] \le \lambda^2 nH s_\xi^2(1+H/n)/2$; with (i) applied to $-(r_h - R(h))$, $\log \mathbb{E}\exp(\lambda n(R(h)-\hat{R}_n(h))) \le \Psi_n(\lambda) := \lambda^2 nH s_\xi^2(1+H/n)/2 + n\omega(\lambda) + \lambda m + \log 2$ for every $h$. The rest is the change of measure of PAC-Bayes \citep{mcallester1999pacbayes,catoni2007pacbayes}. For every $\pi$, $\lambda n\, \mathbb{E}_\pi[R-\hat{R}_n] \le \mathrm{KL}(\pi\|\pi_0) + \log \mathbb{E}_{\pi_0} e^{\lambda n(R-\hat{R}_n)}$ (Donsker--Varadhan), and by Markov's inequality and Fubini the last term is at most $\Psi_n(\lambda) + \log(1/\delta)$ with probability $1-\delta$.

(iii) is the same change of measure with the bounded-difference bound of Theorem~\ref{thm:search} (ii), variance proxy $\Lambda^2 M^2/B$, at the $\lambda$ balanced for $\bar{K}$.
\end{proof}

\begin{proof}[Proof of Proposition~\ref{prop:breadth}]
Apply Lemma~\ref{lem:effsample} to the equal-weighted residual --- any fixed $w$ replaces $\mathbf{1}/N_a$, with $N_a^{\mathrm{eff}}(w) := \sigma^2/(w^\top \Sigma_e w)$. Its noise term is $\sigma^2 H/(N_a^{\mathrm{eff}} n)$ by definition, the $H$-overlap being common to the instruments; its regime part is a function of the \emph{one} latent path every instrument shares, so the regime term is $\rho_{\mathrm{blk},\beta}^2\sigma^2\, v(n,\ell)/n$ --- averaging across instruments shortens nothing. Dividing gives $n_{\mathrm{eff}}(\beta)$, hence the ratio of ceilings in the statement, which exceeds $N_a^{\mathrm{eff}}$ exactly when $\rho_{\mathrm{blk}}^2 > N_a^{\mathrm{eff}}\rho_{\mathrm{blk},\beta}^2$. (An earlier form read ``up to $N_a^{\mathrm{eff}}$'', which the ratio contradicts whenever the timing signal's regime term is not negligible; Experiment 7's panel A measures the gap, Appendix~\ref{app:experiments}.) For the equicorrelated $\Sigma_e$, $N_a^{\mathrm{eff}} = N_a/(1+(N_a-1)\bar{r}) \le 1/\bar{r}$: a cross-section whose residuals share a factor has the breadth of that factor's inverse loading, not its nominal width.
\end{proof}

\begin{proof}[Proof of Corollary~\ref{cor:breadthfloor}]
Lemma~\ref{lem:effsample} at $n \gg \ell$ gives $n\, \mathrm{Var}(\bar{x}) = H\, w^\top \Sigma_e w + (2\ell-1)\, w^\top \Sigma_\beta w$, the quadratic form after dividing by $\sigma^2$. Minimising subject to $\mathbf{1}^\top w = 1$ is the Lagrange problem $Aw = \lambda\mathbf{1}$, solved by $w \propto A^{-1}\mathbf{1}$ with value $1/(\mathbf{1}^\top A^{-1}\mathbf{1})$, equal weights optimal iff $A\mathbf{1} \propto \mathbf{1}$; and $w^\top A w \ge (2\ell-1)w^\top \Sigma_\beta w/\sigma^2 \ge (2\ell-1)v_c/\sigma^2$, the floor, the noise form being non-negative.
\end{proof}

\begin{proof}[Proof of Proposition~\ref{prop:repsearch}]
(i) At its declared constants each member satisfies Proposition~\ref{prop:representation}, $\mathbb{E}[R^+(h) \mid \psi(Z)] \le C_\psi(h)$ path by path; the tower property gives $\mathbb{E} R^+(h) \le \mathbb{E} C_\psi(h)$, whose left side no longer depends on $\psi$. (The certificate is a path functional --- Theorem~\ref{thm:cvar} is conditional on the path --- and the inequality holds only through the coarse-conditional mean, which is why (ii) must charge the residual: a selection made on the sample can tilt what the cells average over.) (ii) Index the $|\Psi| N$ pairs by $i$, write $G_i := C_i - \hat{C}_i$ and $U_i := R^+(h_i) - C_i$, so that $\mathbb{E} R^+(h_T) = \mathbb{E} U_T + \mathbb{E} G_T + \mathbb{E} \hat{C}_T$ for the argmin $T$. Pathwise $\hat{C}_T \le \hat{C}_i$, so $\mathbb{E}\hat{C}_T \le \min_i \mathbb{E} C_i$. For $W = G, U$ the Donsker--Varadhan step in the proof of Theorem~\ref{thm:search} (ii), applied at each index with its own proxy, gives $\mathbb{E} W_T = \sum_i \mathbb{P}(T=i)\, \mathbb{E}[W_i \mid T=i] \le \sum_i \mathbb{P}(T=i)(\mathbb{E} W_i + \sigma_{W,i}\sqrt{2D_i})$ with $D_i$ the divergence of the law of $W_i$ given $T=i$ from its marginal; $\mathbb{E} G_i = 0$ by the centring and $\mathbb{E} U_i \le 0$ by (i); Cauchy--Schwarz turns each sum into $\sqrt{\sum_i \mathbb{P}(T=i)\sigma_{W,i}^2}\, \sqrt{2\sum_i \mathbb{P}(T=i)D_i}$, and $\sum_i \mathbb{P}(T=i)D_i \le I(T;W) \le \log(|\Psi| N)$ by data processing, as there. Bounding each root by its maximum gives the display.
\end{proof}

\begin{proof}[Proof of Theorem~\ref{thm:invariance}]
Three steps. \emph{Gaussian value.} For $W \sim N(m, s^2)$, $\mathrm{CVaR}_\alpha(W) = m + s\,\varphi(z_\alpha)/\alpha$ with $z_\alpha = \Phi^{-1}(1-\alpha)$, since $\int_{z_\alpha}^\infty z\varphi(z)\,\mathrm{d}z = \varphi(z_\alpha)$; at $\alpha = 1/\Lambda$ the coefficient is $\lambda(\Lambda)$, increasing in $\Lambda$ with $\lambda(1) = 0$. \emph{Lipschitz.} Through the quantile function, $|\mathrm{CVaR}_\alpha(F) - \mathrm{CVaR}_\alpha(G)| \le \alpha^{-1} W_1(F,G)$ \citep{bhat2019wasserstein} --- the Kolmogorov distance of Berry--Esseen does not control a tail mean; its Wasserstein form does. \emph{Rate.} The block-averaged regime risk $R_b$ has mean $\bar{r}_h$ and standard deviation $\mathrm{sd}_{\mathrm{blk}} = \sqrt{\mathrm{Var}_\nu(r_h)\, v(b,\ell)/b}$ (Lemma~\ref{lem:effsample} at $n=b$), and under Axiom~\ref{ax:persistence} (ii) in its exponential instance the central limit theorem holds in $W_1$ at the rate $C(\ell/b)^{1/2}$ --- Bernstein blocking, the coupling of Corollary~\ref{cor:blocks} (a), and Stein's method for the independent blocks, with $C$ depending on $\beta_0$ and the standardised third absolute moment of $r_h$; we do not reproduce the argument. Translation-equivariance and positive homogeneity of $\mathrm{CVaR}$, with the Lipschitz step, give the display. \emph{First order only.} Away from the limit the coefficient is a functional of the standardised block-risk law: the skewness term of its Edgeworth expansion enters at the same order $(\ell/b)^{1/2}$, so no refinement whose coefficient depends on $\Lambda$ alone holds in general, and at a fixed $b$ the coefficient is a measurement --- what Experiment 1 reports for a real block-score law at $b=60$.
\end{proof}

\begin{proof}[Proof of Theorem~\ref{thm:deployed}]
Fix the latent path. Three tails per candidate: the empirical tail of its block \emph{risks} on the path, $G(h) := \mathrm{CVaR}_{1/\Lambda}(R_1(h), \dots, R_B(h))$ with $R_j(h) := \int r_h\, \mathrm{d}\bar{\pi}_{n,j}$, the object Theorem~\ref{thm:cvar} bounds with; the plug-in on its block \emph{scores}, $\widehat{\mathrm{CVaR}}(h) := \mathrm{CVaR}_{1/\Lambda}(\hat{r}_1(h), \dots, \hat{r}_B(h))$ with $\hat{r}_j = R_j + \xi_j$ as in \eqref{eq:blockscore}, the object the selection reads; and the population tail $\mathrm{CVaR}^{(b)}(h) := \mathrm{CVaR}_{1/\Lambda}(r_h^{(b)})$ of the law of a stationary block score, the object term (i) is stated in. Write $g$ for the plug-in as a function of the score vector, $g(\hat{r}) = \min_\eta \{\eta + \Lambda B^{-1}\sum_j (\hat{r}_j - \eta)^+\}$ (Lemma~\ref{lem:robust} in Rockafellar--Uryasev form), and $D := \sup_{h\in\mathcal{H}_N} (\mathbb{E}[\widehat{\mathrm{CVaR}}(h) \mid Z] - \widehat{\mathrm{CVaR}}(h))^+$. For every $h \in \mathcal{H}_N$,
\[
R^+(\hat{h}) \le G(\hat{h}) \le \widehat{\mathrm{CVaR}}(\hat{h}) + D \le \widehat{\mathrm{CVaR}}(h) + D \le \mathrm{CVaR}^{(b)}(h) + \Delta_N + D,
\]
and minimising over $h$ gives (i) and (ii) once $D$ is shown to obey the bound stated for $\Delta_N$, which is the last step.

The four inequalities in turn. \emph{First.} Theorem~\ref{thm:cvar} at the declared scale, applied to the fixed element $\hat{h}$ of the class on the fixed path. This is the step that requires Axiom~\ref{ax:recurrence} to have been declared at $b$ and not at the period: the block-averaged risk is a conditional expectation of the state-level one and therefore \emph{smaller} in convex order, so a state-level assertion would bound the wrong quantity. \emph{Second.} $g$ is the partial minimum over $\eta$ of a function jointly convex in $(\eta,\hat{r})$, hence convex in $\hat{r}$, and $\mathbb{E}[\hat{r} \mid Z] = (R_1,\dots,R_B)$ by \eqref{eq:blockscore}; Jensen's inequality conditionally on the path gives $\mathbb{E}[\widehat{\mathrm{CVaR}}(h) \mid Z] \ge g(R_1,\dots,R_B) = G(h)$ for every fixed $h$, so $G(\hat{h}) \le \widehat{\mathrm{CVaR}}(\hat{h}) + D$, the supremum in $D$ being needed because $\hat{h}$ is chosen from the scores. (Proposition~\ref{prop:eiv} (i) is the population form of this step --- $\mathrm{CVaR}$ is monotone for the convex order and $X + \xi \succeq_{\mathrm{cx}} X$ when $\mathbb{E}[\xi \mid X] = 0$; here it is applied conditionally on the path, to the empirical laws, which is what connects the tail Theorem~\ref{thm:cvar} bounds with to the tail the selection reads.) \emph{Third.} Optimality of $\hat{h}$. \emph{Fourth.} The definition of $\Delta_N$.

\emph{Bounding $\Delta_N$ and $D$.} Both are suprema over $\mathcal{H}_N$ of deviations of the same statistic, the second from its path-conditional mean, the first from the population value. Bounded differences: changing one block score by at most $2M$ changes the Rockafellar--Uryasev objective at every $\eta$ by at most $2\Lambda M/B$, hence the minimum by at most $2\Lambda M/B$. Given the path the scores are independent across blocks up to the coupling of Corollary~\ref{cor:blocks} (a); on the coupling event McDiarmid's inequality --- not Hoeffding's, the plug-in being a minimum over $\eta$ of an average and not an average of independent terms --- gives $|\widehat{\mathrm{CVaR}}(h) - \mathbb{E}[\widehat{\mathrm{CVaR}}(h) \mid Z]| \le \Lambda M\sqrt{2(\log 2N + \log(1/\delta))/B}$ for all $h \in \mathcal{H}_N$ at once with probability $1-\delta$, by a union bound over the $N$ candidates and the two tails; off the coupling event the statistic moves by at most $2M$, which costs $2MB\beta(b)$ in expectation. That is the bound on $D$, which carries no bias.

For $\Delta_N$, couple the $B$ block scores of the path with independent copies distributed as the stationary block score, the other side of the comparison in Corollary~\ref{cor:blocks} (a): on the coupling event the plug-in is its value on an independent sample, which McDiarmid puts within the same fluctuation of its expectation, and that expectation differs from $\mathrm{CVaR}^{(b)}(h)$ by the bias; the coupling failure costs another $2MB\beta(b)$. The two coupling costs are the mixing residual $4MB\beta(b)$, term (iii), charged in expectation over the path as in Corollary~\ref{cor:blocks} (a): the first inequality of the chain is exact on the path, and the rest hold with probability $1-\delta$ over the sample up to that residual.

McDiarmid controls $|\widehat{\mathrm{CVaR}} - \mathbb{E}\widehat{\mathrm{CVaR}}|$ and $\Delta_N$ compares with $\mathrm{CVaR}$, so the bias $|\mathbb{E}\widehat{\mathrm{CVaR}} - \mathrm{CVaR}|$ is added by the triangle inequality. To bound it, write $\hat{f}(\eta) := \eta + \Lambda B^{-1}\sum_j (\hat{r}_j - \eta)^+$ and $f(\eta) := \eta + \Lambda\, \mathbb{E}(\hat{r} - \eta)^+$, both minimised on $[-M,M]$ since the scores lie there; then $|\min_\eta \hat{f} - \min_\eta f| \le \sup_\eta |\hat{f} - f| = \Lambda \sup_\eta |B^{-1}\sum_j (\hat{r}_j - \eta)^+ - \mathbb{E}(\hat{r}-\eta)^+|$, the supremum of an empirical process over the class $\{x \mapsto (x-\eta)^+ : \eta \in [-M,M]\}$ of functions bounded by $2M$. That class is VC-subgraph of dimension one (its subgraphs are translates of one half-plane), so its expected supremum is at most $C_0\cdot 2M/\sqrt{B}$ with an absolute constant $C_0$, by the uniform-entropy (Dudley) bound; a finite net over $\eta$ gives the same with an extra $\sqrt{\log B}$. Jensen's inequality in the form $\mathbb{E}\min_\eta \hat{f} \le \min_\eta \mathbb{E}\hat{f} = \min_\eta f$ shows the bias is downward. The refined constant is Proposition~\ref{prop:tailprice} combined with the Gaussian maximal inequality of Theorem~\ref{thm:search}.
\end{proof}

\begin{proof}[Proof of Proposition~\ref{prop:tailprice}]
By Rockafellar--Uryasev the plug-in minimises the empirical form of $\eta + \Lambda\, \mathbb{E}(r-\eta)^+$, so by the envelope theorem its first-order behaviour is that of the empirical mean at the fixed population quantile $\eta = u$, and the central limit theorem gives the asymptotic variance $\Lambda^2\mathrm{Var}((r-u)^+) = \Lambda^2\mathrm{Var}(\max(r,u))$ --- the standard limit theory of the empirical expected shortfall \citep{chen2008nonparametric,zwingmann2016asymptotics}, reproduced because this is the form the budget argument uses. For the bound, $\max(\cdot, u)$ is $1$-Lipschitz, so $\mathrm{Var}(\max(r,u)) \le \mathrm{Var}(r) = s^2$ and $\varsigma_\Lambda \le \Lambda s$. The Gaussian closed form follows from $\mathbb{E}\max(X,u) = u(1-\alpha) + \varphi(u)$ and $\mathbb{E}\max(X,u)^2 = u^2(1-\alpha) + \alpha + u\varphi(u)$ at $\alpha = 1/\Lambda$ (the mirrored reward orientation of Remark~\ref{rem:orientation} gives the same variance). Substituting $c(\Lambda)/\sqrt{n_{\mathrm{eff}}}$ for $1/\sqrt{n_{\mathrm{eff}}}$ in \eqref{eq:search} gives \eqref{eq:searchrobust}, under the location-family proviso of Corollary~\ref{cor:locfam}.
\end{proof}

\begin{proof}[Proof of Corollary~\ref{cor:locfam}]
On the event $\sup_{h\in\mathcal{H}_N}|\hat{\mathrm{CVaR}}_{1/\Lambda}(h) - \mathrm{CVaR}_{1/\Lambda}(r_h^{(b)})| \le \Delta_N$, every other $h$ has $\hat{\mathrm{CVaR}}_{1/\Lambda}(h) \ge \mathrm{CVaR}_{1/\Lambda}(r_h^{(b)}) - \Delta_N \ge \mathrm{CVaR}_{1/\Lambda}(r_{h^\star}^{(b)}) + \Delta_{\mathrm{CVaR}} - \Delta_N > \mathrm{CVaR}_{1/\Lambda}(r_{h^\star}^{(b)}) + \Delta_N \ge \hat{\mathrm{CVaR}}_{1/\Lambda}(h^\star)$ whenever $\Delta_{\mathrm{CVaR}} > 2\Delta_N$, so the minimiser is $h^\star$; Theorem~\ref{thm:deployed}'s fluctuation-plus-bias bound on $\Delta_N$ is the probability statement, and its square-integrable form $\Delta_N \approx \varsigma_\Lambda\sqrt{2\log N/B}$ gives the second display, the constant absorbed by $\lesssim$ as in \eqref{eq:searchrobust}. For the special case, translation equivariance --- $\mathrm{CVaR}_\alpha(c + X) = c + \mathrm{CVaR}_\alpha(X)$ for every law, level and constant $c$ --- makes the tail-mean separation of a location family its mean separation; the unit conversion is Lemma~\ref{lem:effsample}'s, quoted in the statement.
\end{proof}

\begin{proof}[Proof of Corollary~\ref{cor:budgetrobust}]
An accounting, all three quantities improvements over $h \equiv 0$ in $\sigma^2$ units, with $\lesssim$ the symbol of \eqref{eq:ceiling} and \eqref{eq:searchrobust}. The attainable improvement is at most $\rho^2\sigma^2$ (Theorem~\ref{thm:capacity}'s last sentence); fitting spends $\sigma^2 d_{\mathrm{eff}}/n_{\mathrm{eff}}$ of it, selection $\sigma^2\cdot 2\log N/n_{\mathrm{eff}}$ (Theorem~\ref{thm:search}, with Theorem~\ref{thm:pacbayes} supplying the constant), and a selection run on the $1/\Lambda$ tail $c(\Lambda)^2$ times that (Proposition~\ref{prop:tailprice}, at the fixed margin of Corollary~\ref{cor:locfam}). The deployed rule has paid all of them; it improves at all only if their sum is below $\rho^2\sigma^2$, the first display. Proposition~\ref{prop:tradeoff} multiplies the fitting charge by $\Lambda_{\mathrm{fit}}$ when the fit itself is made against the tail, the second.
\end{proof}

\begin{proof}[Proof of Corollary~\ref{cor:lamcap}]
The constraint is \eqref{eq:searchrobust} rearranged. For the second sentence, with $\alpha = 1/\Lambda$ and $T := (r-u)^+$ the excess over the $(1-\alpha)$-quantile, $\max(r,u) = u + T$, so $c(\Lambda)^2 = \mathrm{Var}(T)/(\alpha^2 s^2)$ and $c(\Lambda) \ge 1$ if and only if $\mathrm{Var}((r-u)^+) \ge \alpha^2 s^2$: a condition on the shape of the block-risk law at the level, not a general fact (the uniform law has $c(4) = 0.90$, and a rare block far below the rest inflates $s^2$ without touching the upper tail). The Gaussian values are the closed form of Proposition~\ref{prop:tailprice}, evaluated in the table of \S\ref{sec:calibration}; the other direction, $c(\Lambda) \le \Lambda$, is the Lipschitz step in the proof of that proposition.
\end{proof}

\begin{proof}[Proof of Corollary~\ref{cor:selecttrain}]
Proposition~\ref{prop:robustobj} prices training on the tail at $\Lambda_{\mathrm{fit}}$ and Proposition~\ref{prop:tailprice} prices selecting on it at $c(\Lambda)^2$; the corollary compares the two. The four ratios are $c(\Lambda)^2/\Lambda$ from the Gaussian closed form of Proposition~\ref{prop:tailprice}, $c(\Lambda)^2 = \Lambda^2[u^2(1-\alpha) + u\varphi(u) + \alpha - (u(1-\alpha)+\varphi(u))^2]$ with $\alpha = 1/\Lambda$ and $u = \Phi^{-1}(1-\alpha)$: $1.363/2$, $2.034/4$, $3.185/8$, $6.08/20$. That $c(\Lambda) \to \Lambda$ on a law dominated by a rare regime is the case $\mathrm{Var}(\max(r,u)) \to \mathrm{Var}(r)$ of the Lipschitz bound, attained when all the dispersion sits above the threshold.
\end{proof}

\begin{proof}[Proof of Proposition~\ref{prop:kelly}, Corollary~\ref{cor:halfkelly} and Proposition~\ref{prop:fullkelly}]
Expand $\log(1+hY)$ to third order with $h = f\hat{\mu}/\sigma^2$ and take moments in the Gaussian model: $\mathbb{E}[hY] = fs^2$; $\frac{1}{2}\mathbb{E}[h^2Y^2] = \frac{1}{2}f^2(s^2+\tau^2) + f^2 s^4$, the second-order term exact; and $\mathbb{E}|hY|^3 = O(f^3(s^2+\tau^2)^{3/2})$ by Cauchy--Schwarz on Gaussian moments. Hence $g(f) = fs^2 - \frac{1}{2}f^2(s^2+\tau^2) + O(f^2s^4 + f^3(s^2+\tau^2)^{3/2})$, whose quadratic part is maximised at \eqref{eq:kelly}, the $f^2 s^4$ term moving the maximiser by $1 + O(s^2)$; completing the square in that part gives $g(f^\star) - g(f) = \frac{1}{2}(s^2+\tau^2)(f-f^\star)^2$, hence $g(1) = \frac{1}{2}(s^2-\tau^2)$ and $g(f^\star) - g(1) = \frac{1}{2}\tau^4/(s^2+\tau^2)$, Proposition~\ref{prop:fullkelly} (i). The two replacements in the last sentence change $\hat{\mu}$ and nothing in the computation: the penalty is Theorem~\ref{thm:invariance} on the candidate's block scores at the larger of the observed and declared dispersion, the shift is Proposition~\ref{prop:reflexive} (ii). For the corollary and Proposition~\ref{prop:fullkelly} (ii), $s^2 \le \rho^2$ by Axiom~\ref{ax:snr} over the pool, and $\tau^2 \ge d/n_{\mathrm{fit}}$ by the Gauss--Markov floor of Proposition~\ref{prop:ensemble}; so $\tau^2/s^2 \ge \varkappa$, $f^\star = 1/(1+\tau^2/s^2) \le 1/(1+\varkappa)$, and $g(1) = \frac{1}{2}s^2(1-\tau^2/s^2) \le \frac{1}{2}s^2(1-\varkappa)$, non-positive at $\varkappa = 1$ and negative beyond it, to the order of the proposition.
\end{proof}

\begin{proof}[Proof of Corollary~\ref{cor:nonempty}]
S5 opens a position exactly when \eqref{eq:nonempty} holds. (i) $E_N > 0$ is $\log N < n_{\mathrm{eff}}(\delta\hat{\mu} - c')^2/2$ rearranged. (ii) A block score is the block risk plus the block mean of the mechanism noise, uncorrelated given the path, so $\mathrm{sd}_{\mathrm{blk}}^2 \ge \mathrm{sd}_{\mathrm{within}}^2/b$ (Lemma~\ref{lem:effsample} at $n=b$, edge terms aside); \eqref{eq:nonempty} then forces $E_N > \lambda(\Lambda)\, \mathrm{sd}_{\mathrm{within}}/\sqrt{b}$, and the $1/\Lambda$ tail contains a whole block only when $B = \lfloor n/b \rfloor \ge \Lambda$. Neither bound uses the latent dispersion, which only raises $\mathrm{sd}_{\mathrm{blk}}$ further: they are necessary.
\end{proof}

\begin{proof}[Proof of Proposition~\ref{prop:kapceil}]
Normalise $\sigma\|h\|_{L^2(\bar{\pi}_n)} = 1$, to first order in $\rho$ (the computation of Lemma~\ref{lem:sharpe}). For any mixture $\pi$ in the state-scale ball, $h(X)$ is $\mathcal{F}^{\mathrm{obs}}$-measurable, so Cauchy--Schwarz with the mixture form of Axiom~\ref{ax:snr} gives $|\mathbb{E}_\pi[hY]| \le \|h\|_{L^2(\pi)}\,\rho\,\sigma_\pi \le \sqrt{\Lambda}\,\chi_\sigma\,\rho$, using $\mathrm{d}\pi/\mathrm{d}\bar{\pi}_n \le \Lambda$ for the norm and $\sigma_\pi/\sigma \le \chi_\sigma$; and $|\bar{r}_h| \le \rho$ at $\pi = \bar{\pi}_n$ itself. Now read the rule's losses through Theorem~\ref{thm:cvar} at $b=1$, whose constraint set is exactly the ball: the supremum of the expected loss over admissible $\pi$ is $\mathrm{CVaR}_{1/\Lambda}$ of the law of $-r_h(Z)$ under $\bar{\pi}_n$, attained, which in the Gaussian range of Theorem~\ref{thm:invariance} is $-\bar{r}_h + \lambda(\Lambda)\rho_{\mathrm{blk}}$. The two displays give $\lambda(\Lambda)\rho_{\mathrm{blk}} \le (1+\sqrt{\Lambda}\chi_\sigma)\rho$; dividing by $\rho$ is the ceiling, and $\lambda(4) = 4\varphi(0.674) = 1.27$ the numerical case.
\end{proof}

\begin{proof}[Proof of Proposition~\ref{prop:scale}]
Theorem~\ref{thm:invariance} gives the penalty as $\lambda(\Lambda)$ times the standard deviation of the law being read, which at scale $b$ is $\rho_{\mathrm{blk}}\sigma\sqrt{v(b,\ell)/b}$ (Proposition~\ref{prop:shuffle} (i)); equating the two penalties is the display, and $\Lambda_1$ exists and is unique because $\lambda$ is continuous, increasing, $\lambda(1) = 0$ and $\lambda \to \infty$. \emph{$v(b,\ell) \le b$}: each term of the Bartlett sum is at most its value at $\rho_\ell = 1$, where the sum is $b$, with equality only at $b=1$; so $\Lambda_1 \le \Lambda$.

\emph{Monotonicity in $b$.} $v(b,\ell)/b$ is the variance ratio of a mean of $b$ terms with non-negative non-increasing autocorrelations $\gamma_j$, and the claim $S_{b+1}/(b+1)^2 \le S_b/b^2$ for $S_b := \sum_{s,t\le b} \gamma_{s-t}$ reduces to $\sum_{j=0}^b c_j\gamma_j \ge 0$ with $c_0 = b(b+1)$ and $c_j = 2[b(b+1) - j(2b+1)]$. Abel summation writes the sum as $\sum_{j=0}^{b-1} C_j(\gamma_j - \gamma_{j+1}) + C_b\gamma_b$ with partial sums $C_j = (2j+1)\,b(b+1) - (2b+1)\,j(j+1) \ge 0$ for $0 \le j \le b$, because $x \mapsto x(x+1)/(2x+1)$ is increasing; every term is then non-negative. (The weights $c_j$ themselves are not monotone --- $c_1 > c_0$ from $b=4$ --- so the partial sums have to be computed, not inferred.) Hence $\Lambda_1$ is non-increasing in $b$.

\emph{(i).} Monotonicity gives $v(b,\ell)/b \le v(\ell,\ell)/\ell$ for $b \ge \ell^* \ge \ell$, and the closed form of Lemma~\ref{lem:effsample} gives $v(\ell,\ell)/\ell = 2 - 1/\ell - 2(1-1/\ell)(1-(1-1/\ell)^\ell)$: it is $1$, $0.750$, $0.728$, $0.725$, $0.724$ at $\ell = 1,\dots,5$ and rises towards $2/e = 0.736$, staying below $2/e$ for every $\ell \ge 3$ (an elementary expansion via $(1-x)^{1/x} \le e^{-1-x/2}$; $\ell = 3,4$ are the computed values). So $\sqrt{v(\ell,\ell)/\ell} \le \sqrt{2/e} = 0.858$ for every $\ell \ge 3$, the two degenerate cases $\ell = 1,2$ excepted. \emph{(ii)} is (i) read against Corollary~\ref{cor:nonempty}.
\end{proof}

\begin{proof}[Proof of Proposition~\ref{prop:lattice}]
(i) The programme is a linear programme whose dual is the Rockafellar--Uryasev form of the $\mathrm{CVaR}$ of the $\hat{w}$-weighted cell law, and complementary slackness makes the primal optimum the greedy fill: Lemma~\ref{lem:robust} and Theorem~\ref{thm:cvar} with $\bar{\pi}_n$ replaced by the $K$-point measure $\hat{w}$, whose proofs used no property of the reference measure beyond its being a probability measure. (ii) Conditionally on the path, a cell's average carries each period's own $\sigma^2$ plus at most $2(H-1)$ overlapping neighbours with weights summing to less than $H-1$ (Lemma~\ref{lem:effsample}'s overlap), so its conditional variance is at most $H\sigma^2/n_k$ --- equality up to edge terms for an interval cell with $n_k \gg H$, and $\sigma^2/n_k$ when no two of its periods are within $H$; weighting by $\hat{w}_k = n_k/n$ gives at most $KH\sigma^2/n$ at equal cells, against $H\sigma^2/b$ for contiguous blocks. (iii) $\mathbb{E}[X \mid \mathcal{G}] \preceq_{\mathrm{cx}} X$ is conditional Jensen, and $\mathrm{CVaR}_{1/\Lambda}$ respects the convex order in Rockafellar--Uryasev form --- the step of Proposition~\ref{prop:eiv} (i), no atomless space needed. The two partitions are not nested and the convex order is partial, so no comparison between them follows; the Gaussian expression is Theorem~\ref{thm:invariance} on each lattice. (iv) $1/\Lambda \le \min_k \hat{w}_k$ makes the first cell of the fill absorb the whole mass.
\end{proof}

\begin{proof}[Proof of Corollary~\ref{cor:confidence}]
(i) Theorem~\ref{thm:cvar} holds path by path on $E_\Lambda$, which does not depend on $h$; on the complement both sides are within $M$, so the gap in expectation is at most $2M\varepsilon$. (ii) Theorem~\ref{thm:deployed}'s event and $E_\Lambda$ intersect with probability at least $1-\delta-\varepsilon$. (iii) By stationarity the disjoint windows are identically distributed and by Axiom~\ref{ax:persistence} (ii) the exceedance indicators are ergodic, so their average converges to the one-window probability; as $n/m \to \infty$ both $\bar{\pi}_{H \setminus W_i}$ and $\bar{\pi}_n$ converge to $\nu$ cell by cell, that probability converges to $1 - q_m(\Lambda)$, and the declaration at $b$ implies the state-scale bound with the same $\Lambda$ (\S\ref{sec:axioms}), so $E_\Lambda$ is contained in the non-exceedance event in the limit.
\end{proof}

\begin{proof}[Proof of Corollary~\ref{cor:invbudget}]
By \eqref{eq:blockscore} the two parts of a block score are uncorrelated, so $\mathrm{sd}_{\mathrm{blk}}^2 \ge \rho_{\mathrm{blk}}^2 v(b,\ell)/b$ in unit-PnL-variance units, and \eqref{eq:nonempty} reads $E_N > \lambda(\Lambda)\rho_{\mathrm{blk}}\sqrt{v(b,\ell)/b} = \vartheta\rho_{\mathrm{blk}}$. The denominator of $E_N/\rho_{\mathrm{blk}}$ needs a bound \emph{below}, which Axiom~\ref{ax:snr} does not supply --- a ceiling on the dispersion is the wrong direction, and an earlier version of this proof divided by an upper bound. The declaration supplies it: stage S1 declares $\rho_{\mathrm{blk}} = \kappa\rho$ and Proposition~\ref{prop:kelly} charges the penalty at that floor, so $E_N > \vartheta\kappa\rho$, path by path on the event that S5 opens a position. The second display adds $\hat{\mu} \le \rho$, which Lemma~\ref{lem:sharpe} gives for the mean and not for the realised value just after a search, so it is stated as a hypothesis on the pipeline; granting it, $\vartheta \le \delta/\kappa$, and $\lambda$ increasing gives $\Lambda_1 \le \lambda^{-1}(\delta/\kappa)$, with $\lambda(2.624) = 1$ the numerical case. A measured $\kappa_h$ may replace the declared $\kappa$ whenever the measurement lower-bounds the dispersion the deployment window will show.
\end{proof}

\begin{proof}[Proof of Proposition~\ref{prop:ensemble}]
The perturbations are drawn independently of the data and the members share hyperparameters and candidate set, so $\bar{h}$ is one fixed function of the sample and auxiliary randomness: the search term is unchanged. The capacity argument is not that the hull's dimension is no larger --- for trees it is strictly richer --- but that the covariance trace $d_{\mathrm{eff}} = \sigma^{-2}\sum_i \mathrm{Cov}(\hat{y}_i, y_i)$ of \S\ref{sec:calibration} is linear in the fitted values, so the ensemble's value equals any one member's exactly. The exchangeable decomposition of a member's error with common correlation $\varrho$ gives $\tau_{\mathrm{ens}}^2 = \varrho\tau^2 + (1-\varrho)\tau^2/m$, the stated form with $\tau_\infty^2 = \varrho\tau^2$; the floor $\tau_{\mathrm{ens}}^2 \ge d/n_{\mathrm{fit}}$ is the Gauss--Markov bound applied to $\bar{h}$, an estimator in the class --- Theorem~\ref{thm:capacity} is an upper bound and cannot supply it.
\end{proof}

\begin{proof}[Proof of Proposition~\ref{prop:minimax}]
(i) Assouad's lemma in its two-point form \citep{tsybakov2009introduction}, on the hypercube $\theta_\omega := \delta(2\omega - \mathbf{1})$, $\omega \in \{0,1\}^d$: for any estimator, randomised included, $\max_\omega \mathbb{E}_\omega\|\hat{\theta}-\theta_\omega\|^2 \ge d\delta^2(1 - \max_{\omega,j}\mathrm{TV}(P_\omega, P_{\omega^{(j)}}))$. Neighbouring laws differ by $2\delta X_{ij}$ in the mean of observation $i$, so the Kullback--Leibler divergence has expectation $2N\delta^2/\sigma^2$ over the design, and Pinsker with Jensen gives $\mathrm{TV} \le \delta\sqrt{N}/\sigma$. Axiom~\ref{ax:snr} contains the ball $\{\|\theta\| \le \rho\sigma\}$, in which we work. If $d \le 4\rho^2 N$, take $\delta = \sigma/(2\sqrt{N})$: the cube lies in the ball, $\mathrm{TV} \le 1/2$, and the risk is at least $d\delta^2/2 = \sigma^2 d/(8N)$; otherwise $\delta = r/\sqrt{d}$ gives $\mathrm{TV} < 1/2$ and risk at least $r^2/2$.

\emph{(ii)} The design is fixed, so neighbouring laws are $N(X\theta_\omega, \Sigma)$ and $N(X\theta_{\omega^{(j)}}, \Sigma)$ with $\Sigma = (\sigma^2/H)LL^\top$, where $L$ is the $n\times(n+H-1)$ forward-sum matrix $(Lv)_t = \sum_{s=t}^{t+H-1} v_s$, of full row rank; their Kullback--Leibler divergence is $1/2\,(2\delta)^2\,x^{(j)\top}\Sigma^{-1}x^{(j)}$ with $x^{(j)}$ the $j$-th column of $X$.

The quadratic form has a variational bound: for any $x = Lc$, $x^\top(LL^\top)^{-1}x = \min\{\|v\|^2 : Lv = x\} \le \|c\|^2$, the minimiser being $L^\top(LL^\top)^{-1}x$. Column $j$ is $x^{(j)} = Lc^{(j)}/(H\sqrt{\kappa_j})$ with $c^{(j)}$ the $\pm 1$ block sequence --- a trailing rather than a forward sum is a shift of $c$, which does not change its norm --- so $\|c^{(j)}\|^2 = n$, and $\kappa_j := \|Lc^{(j)}\|^2/(nH^2) \in [1/3, 1]$: over a block whose sign agrees with the next, $(Lc)_t = \pm H$ throughout, and over a block whose sign flips, $(Lc)_t$ runs linearly from $\pm H$ to $\mp H$ with mean square $H^2(1/3 + 2/(3H^2))$, up to boundary terms of relative order $H/n$.

Hence $x^{(j)\top}\Sigma^{-1}x^{(j)} \le (H/\sigma^2)\cdot n/(H^2\kappa_j) \le 3n_{\mathrm{fit}}/\sigma^2$, the divergence is at most $6\delta^2 n_{\mathrm{fit}}/\sigma^2$, and Pinsker gives $\mathrm{TV} \le \delta\sqrt{3n_{\mathrm{fit}}}/\sigma$. The rest is (i) with $N$ replaced by $3n_{\mathrm{fit}}$: if $d \le 12\rho^2 n_{\mathrm{fit}}$ take $\delta = \sigma/(2\sqrt{3n_{\mathrm{fit}}})$, so that the cube fits ($d\delta^2 = \sigma^2 d/(12n_{\mathrm{fit}}) \le r^2$), $\mathrm{TV} \le 1/2$ and the risk is at least $d\delta^2/2 = \sigma^2 d/(24n_{\mathrm{fit}})$; otherwise $\delta = r/\sqrt{d}$ gives $\mathrm{TV} \le \rho\sqrt{3n_{\mathrm{fit}}/d} < 1/2$ and risk at least $r^2/2$.

Nothing was sub-sampled: the estimator is an arbitrary function of all $n$ labels, and the bound says the overlap leaves them worth a constant multiple of $n_{\mathrm{fit}}$ independent ones. The persistence of the design is what makes the variational bound bite. For a white design the form $x^\top \Sigma^{-1} x$ is governed instead by the small eigenvalues of $\Sigma$, and the spectrum of the equal-weight overlap, $(\sigma^2/H)|1 + e^{i\omega} + \dots + e^{i(H-1)\omega}|^2$, vanishes at $\omega = 2\pi k/H$; generalised least squares exploits those frequencies and beats $\sigma^2 d/n_{\mathrm{fit}}$, which is Theorem~\ref{thm:capacity}'s restriction to features persistent at the label scale.
\end{proof}

\begin{proof}[Proof of Proposition~\ref{prop:shuffle}]
(i) $\mathrm{Cov}(r_s,r_t) = \rho^{|s-t|}\mathrm{Var}_\nu(r)$ as in the proof of Proposition~\ref{prop:occfloor}, so a block's variance is $\mathrm{Var}_\nu(r)\,v(b,\ell)/b$ with $v$ the Bartlett sum of Lemma~\ref{lem:effsample}, equal to $1$ at $b=1$ and increasing to $2\ell-1$. A fold is a simple random sample of size $n/k$ from the path's $n$ values, so the finite-population formula gives its conditional variance $s_n^2(k-1)/(n-1)$, and $\mathbb{E} s_n^2 = \mathrm{Var}_\nu(r)(1 - v(n,\ell)/n)$; with $k = B$ the ratio of the two is $v\,(n-1)/(n-b)$.

(ii) Hoeffding's inequality for sampling without replacement \citep{hoeffding1963probability}, and for the conditionally independent noise, with a union bound over $k$ folds, two tails and $|\mathcal{H}|$ candidates, puts every fold score of every candidate within $2t$ of $\bar{R}_n(h)$ outside probability $4|\mathcal{H}|k\exp(-nt^2/(2kM^2))$, which vanishes when $k\log k/n \to 0$; at $t$ below a quarter of the runner-up gap both procedures select the unique minimiser of $\bar{R}_n$, and the laws of the deployed positions agree to that probability.

(iii) By (ii) the shuffled statistics of $A$ and $B$ converge to $a$ and to $a - u(1-\hat{p}_n) + v\hat{p}_n$, $\hat{p}_n \to p$ the realised occupation of the rare state, and $u(1-p) > vp$ makes the latter smaller, so $B$ is selected eventually. The worst admissible future of Theorem~\ref{thm:cvar} puts mass $\Lambda p \le 1$ on the rare state, giving $R^+(B) = a - u(1-\Lambda p) + v\Lambda p$ and $R^+(A) = a$; the two differences follow by subtraction.

For the block statistic, write $O_b$ for a block's occupation of the rare state, so that $B$'s block loss is $a - u + (u+v)O_b + \bar{\xi}_b$ and $A$'s is $a + \bar{\xi}_b$. By Proposition~\ref{prop:eiv} (i) the conditionally mean-zero noise can only raise the upper-tail $\mathrm{CVaR}$ of $B$'s score, and by the robust representation with the admissible density $\Lambda\mathbf{1}\{O_b=1\}$ --- admissible since $\Lambda\mathbb{P}(O_b=1) \le \Lambda p \le 1$ --- $\mathrm{CVaR}_{1/\Lambda}(O_b) \ge \Lambda\mathbb{P}(O_b=1) \ge \Lambda p\rho^{b-1}$, the block being wholly in the rare state whenever it starts there and sees no redraw. For $A$, Cauchy--Schwarz gives $\mathrm{CVaR}_{1/\Lambda}(\bar{\xi}_b) = \Lambda\mathbb{E}[\bar{\xi}_b\mathbf{1}_{\mathrm{tail}}] \le \Lambda\,\mathrm{sd}(\bar{\xi}_b)\sqrt{1/\Lambda} = \sigma_\xi\sqrt{\Lambda/b}$. The block scores form a stationary ergodic sequence, so as $B \to \infty$ the empirical $\mathrm{CVaR}$s converge to these population values ($\mathrm{CVaR}_\alpha$ is continuous for the Wasserstein-1 distance), and the block statistic prefers $A$ whenever $a - u + (u+v)\Lambda p\rho^{b-1} > a + \sigma_\xi\sqrt{\Lambda/b}$, which is \eqref{eq:blockwins}. At the stated numbers the left side of \eqref{eq:blockwins} is $5.6 \times 0.4 \times 0.975^{39} - 0.6 = 0.235$ and the right side is $0.316\sigma_\xi$; as $b/\ell \to \infty$, $\rho^{b-1} \to 0$ and the left side tends to $-u$.
\end{proof}

\begin{proof}[Proof of Theorem~\ref{thm:canonical}]
S1 is Theorem~\ref{thm:impossible} and S4 is Theorem~\ref{thm:search}; S2, S3 and S5 are Propositions \ref{prop:minimax}--\ref{prop:fullkelly}, proved above.
\end{proof}

\begin{proof}[Proof of Proposition~\ref{prop:tailfloor}]
\emph{The family.} $S_t$ i.i.d.\ with $\mathbb{P}(S=s) = 1/\Lambda$ (Axiom~\ref{ax:persistence} at $\ell=1$, decoupling immediate; two fixed kernels, so Axiom~\ref{ax:invariance} with defect zero). Features $X_t^{(j)} = R_t e_{j,t}$, $j \le N$, with $R_t$ a fair sign and $e_{j,t}$ i.i.d.\ signs of mean $\gamma$, $\gamma^2 = 1/2$, all independent of $S$; candidate $j$ holds one unit of $h_j(X) = X^{(j)}$. Write $U_j := X^{(j)} - \gamma R$, so that $\mathbb{E}[X^{(k)}U_j] = (1-\gamma^2)\mathbf{1}_{k=j}$. Under $P_i$, $Y = \mu + \sigma\eta$ with $\eta$ standard normal and
\[
\mu = \begin{cases} -aR + \tilde{a}U_i & S = s \\ (aR - \tilde{a}U_i)/(\Lambda-1) & S = g \end{cases} \qquad \tilde{a} := \frac{a\gamma}{1-\gamma^2}, \quad a := \delta/\gamma.
\]
Candidate $j$'s conditional loss $\mathbb{E}[-X^{(j)}Y \mid S]$ is $\delta\mathbf{1}_{j\ne i}$ in the stress state and $-\delta\mathbf{1}_{j\ne i}/(\Lambda-1)$ in the calm one, by the orthogonality of $U_i$: stationary means all vanish, which is (i); the trapped candidates' risk law puts mass exactly $1/\Lambda$ on the atom $\delta$, so their $\mathrm{CVaR}_{1/\Lambda}$ is $\delta$ against candidate $i$'s zero, and Theorem~\ref{thm:cvar} at $b=1$, with the history's occupation at its stationary values, makes these the worst-case deployed risks, which is (ii). Axiom~\ref{ax:snr} state-wise: $\mathbb{E}[\mu^2 \mid s] = a^2 + \tilde{a}^2(1-\gamma^2) = a^2/(1-\gamma^2) = 4\delta^2 \le \rho^2\sigma^2 \le \rho^2\mathrm{Var}(Y \mid s)$ exactly when $\delta \le \rho\sigma/2$, the calm state at that over $(\Lambda-1)^2$; and $\mathrm{Var}(Y \mid z) \in [\sigma^2, \sigma^2(1+\rho^2)]$ is the $\chi_\sigma$ clause.

\emph{(iii).} Let $P_0$ be the law with no exempt candidate ($\tilde{a} = 0$ in both states). Given $(S,X)$ the conditional laws of $Y$ under $P_i$ and $P_0$ are Gaussians whose means differ by $\tilde{a}U_i$ in stress and $-\tilde{a}U_i/(\Lambda-1)$ in calm, and the laws of $(S,X)$ are common, so
\[
\mathrm{KL}(P_i^{(n)} \| P_0^{(n)}) = n\,\frac{\tilde{a}^2(1-\gamma^2)}{2\sigma^2}\left[\frac{1}{\Lambda} + \frac{1-1/\Lambda}{(\Lambda-1)^2}\right] = \frac{n\delta^2}{2(1-\gamma^2)\sigma^2(\Lambda-1)} = \frac{n\delta^2}{\sigma^2(\Lambda-1)}
\]
at $\gamma^2 = 1/2$, using $\tilde{a}^2(1-\gamma^2) = \delta^2/(1-\gamma^2)$ and $1/\Lambda + 1/(\Lambda(\Lambda-1)) = 1/(\Lambda-1)$. With the index uniform on $\{1,\dots,N\}$, the variational form of mutual information gives $I \le N^{-1}\sum_i \mathrm{KL}(P_i^{(n)} \| P_0^{(n)})$, and Fano's inequality $\mathbb{P}(\hat{T}=i) \le (I+\log 2)/\log N$; success at $1/2$ therefore needs $n\delta^2/(\sigma^2(\Lambda-1)) \ge \frac{1}{2}\log(N/4)$, which is \eqref{eq:tailfloor}. Randomisation and adaptivity sit inside ``any measurable $\hat{T}$''; a rule observing $Z$ is covered because the divergence is computed on the joint law of $(S,X,Y)$, and an observable-only rule sees a marginal and, by data processing, no more.

\emph{(iv).} Take $\mu = \tilde{a}U_i$ in both states: candidate $i$'s mean edge is $\tilde{a}(1-\gamma^2) = \delta$, the per-period divergence is $\tilde{a}^2(1-\gamma^2)/(2\sigma^2) = \delta^2/\sigma^2$, and the same Fano step gives the floor without $(\Lambda-1)$; the ranking by sample means resolves it once $\delta$ clears the Gaussian null maximum $\sigma\sqrt{2\log N/n}$ with margin (Theorem~\ref{thm:search} (i)), i.e.\ at $n \asymp \sigma^2\log N/\delta^2$.

\emph{(v).} With the path observed, the stress sample holds $n/\Lambda\,(1+o(1))$ periods with high probability (Chernoff); each candidate's stress-mean estimate is sub-Gaussian at scale $\sigma\sqrt{\Lambda/n}$ up to $(1+\rho^2)^{1/2}$, and the smallest of the $N$ estimates identifies $i$ once $\delta \ge C\sigma\sqrt{\Lambda\log N/n}$, by the Gaussian maximal inequality.
\end{proof}

\begin{proof}[Proof of Proposition~\ref{prop:robustobj}]
The display is Lemma~\ref{lem:robust} in rewards: $\inf\{\mathbb{E}_Q[w] : \mathrm{d}Q/\mathrm{d}P \le \Lambda\} = -\mathrm{CVaR}_{1/\Lambda}(-w) = \max_\eta\{\eta - \Lambda\,\mathbb{E}(\eta-w)^+\}$, the Rockafellar--Uryasev form for the lower tail, with $P$ uniform on the $B$ blocks so that $\mathbb{E}(\eta-w)^+ = B^{-1}\sum_b (\eta-w_b)^+$. The function $\eta \mapsto \eta - \Lambda\,\mathbb{E}(\eta-w)^+$ is concave, with left derivative $1 - \Lambda\mathbb{P}(w<\eta)$ and right derivative $1 - \Lambda\mathbb{P}(w\le\eta)$, so it is maximised where $\mathbb{P}(w<\eta) \le 1/\Lambda \le \mathbb{P}(w\le\eta)$: at the $1/\Lambda$-quantile of $\{w_b\}$, the $k$-th smallest block mean $w_{(k)}$ with $k = \lceil B/\Lambda \rceil$. Call the $k$ blocks at or below it the worst blocks. Evaluated there the maximum is
\[
J(z) = \frac{\Lambda}{B}\sum_{b \text{ among the worst } k-1} w_b + \left(1 - (k-1)\frac{\Lambda}{B}\right) w_{(k)},
\]
the worst $k-1$ block means at weight $\Lambda/B$ each and the boundary block at the fractional weight in $(0, \Lambda/B]$ that makes the weights sum to one --- equal to $\Lambda/B$ exactly when $B/\Lambda$ is an integer, which is Lemma~\ref{lem:robust}'s ``up to the fractional weight on the boundary atom''. On the open set of logits on which the $k$-th and $(k+1)$-th order statistics of $\{w_b(z)\}$ are distinct --- the complement of a null set, the $w_b$ being smooth in $z$ --- the identity of the worst blocks does not change under a small move of $z$, so there $J$ is the fixed linear combination above of the $w_b$, and its gradient and Hessian in $z$ are those of the $w_b$ with those weights; at a tie $J$ is the maximum of finitely many such combinations and the formula gives a subgradient. Since $w_b = |b|^{-1}\sum_{i\in b} w_i$ with $|b| = n/B$, a row of a worst block enters with weight $(\Lambda/B)|b|^{-1} = \Lambda/n$ against the $1/n$ of the plain objective $n^{-1}\sum_i w_i$: the per-row factor is $\omega_i = \Lambda$ on the worst $k$ blocks and $0$ elsewhere, the boundary block carrying $B - (k-1)\Lambda \in (0,\Lambda]$ in place of $\Lambda$ when $B/\Lambda$ is not an integer. The per-row derivatives $\partial w_i/\partial z_{ik} = \sigma_{ik}(y_{ik} - S_i)/(1+S_i)$ and their Hessians are the plain Kelly ones.
\end{proof}

\begin{proof}[Proof of Proposition~\ref{prop:tradeoff}]
Write $\Psi_B(\theta,\eta) = \eta + \Lambda B^{-1}\sum_{j\le B}(L_j(\theta)-\eta)^+$ for the Rockafellar--Uryasev criterion \citep{rockafellar2000cvar}, whose minimum over $\eta$ is the empirical $\mathrm{CVaR}_{1/\Lambda}$ and whose joint minimiser is $(\hat{\theta},\hat{\eta})$. Profile out $\eta$: by the envelope theorem the gradient in $\theta$ of the profiled empirical criterion is $\Lambda B^{-1}\sum_j \mathbf{1}\{L_j \ge \hat{\eta}\}\nabla L_j$, and the Bahadur representation of the $(1-1/\Lambda)$-sample quantile, $\hat{\eta} - \eta^\star = f(\eta^\star)^{-1}B^{-1}\sum_j(\mathbf{1}\{L_j \ge \eta^\star\} - 1/\Lambda) + o_p(B^{-1/2})$, turns that into a sum of the i.i.d.\ terms
\[
\psi_j = \Lambda\,\mathbf{1}\{L_j\ge\eta^\star\}\nabla L_j - \Lambda m\left(\mathbf{1}\{L_j\ge\eta^\star\} - 1/\Lambda\right).
\]
At an interior minimiser $\Lambda\,\mathbb{E}[\mathbf{1}\{L\ge\eta^\star\}\nabla L] = \nabla\mathrm{CVaR}_{1/\Lambda} = 0$, so $\mathbb{E}[\nabla L \mid L\ge\eta^\star] = 0$: $\psi$ is centred and the cross term vanishes. The two surviving second moments are $\Lambda^2\,\mathbb{E}[\mathbf{1}\{L\ge\eta^\star\}\nabla L\nabla L^\top] = \Lambda\,\mathbb{E}[\nabla L\nabla L^\top \mid L\ge\eta^\star]$ and $\Lambda^2 mm^\top \mathrm{Var}(\mathbf{1}\{L\ge\eta^\star\}) = (\Lambda-1)mm^\top$, which is $V_\Lambda$. For $A_\Lambda$ differentiate $\nabla\mathrm{CVaR}_{1/\Lambda}(\theta) = \Lambda\,\mathbb{E}[\mathbf{1}\{L\ge\eta(\theta)\}\nabla L]$ once more. The interior contributes $\Lambda\,\mathbb{E}[\mathbf{1}\{L\ge\eta^\star\}\nabla^2 L] = \mathbb{E}[\nabla^2 L \mid L\ge\eta^\star]$, since $\mathbb{P}(L\ge\eta^\star) = 1/\Lambda$; the moving level set contributes $\Lambda f(\eta^\star)\mathbb{E}[\nabla L(\nabla L - \nabla\eta)^\top \mid L=\eta^\star]$, and differentiating $\mathbb{P}(L(\theta)\ge\eta(\theta)) = 1/\Lambda$ gives $\nabla\eta = m$, so that term is $\Lambda f(\eta^\star)\mathrm{Cov}(\nabla L \mid L=\eta^\star)$ --- positive semi-definite, as a curvature must be. The argmin expansion \citep{vandervaart1998asymptotic} then gives the sandwich (when each $L_j$ is convex in $\theta$, as for a linear rule, $\Psi_B$ is convex and Pollard's convexity lemma supplies it without a stochastic-equicontinuity argument), and the quadratic expansion of the population criterion at $\theta_\Lambda^\star$ gives the excess $\mathrm{tr}(A_\Lambda^{-1}V_\Lambda)/(2B)$. Substituting $\Lambda_{\mathrm{fit}}$ for $\Lambda$ in Theorem~\ref{thm:capacity} gives the ceiling.

For the scale-free isotropic case take $L(\theta) = -\theta^\top g$ on the unit sphere and the chart $\theta(u) = (\theta^\star+u)/\|\theta^\star+u\|$ with $u$ tangent, so that $\nabla^2 L = (\theta^{\star\top}g)I$ and $\mathbb{E}[\nabla^2 L \mid L\ge\eta^\star]$ is $I$ times the lower-tail $\mathrm{CVaR}$ of the reward, $\|\mu\| - \lambda(\Lambda)\sigma$ by Theorem~\ref{thm:invariance} --- \emph{negative} whenever $\lambda(\Lambda)\sigma > \|\mu\|$, that is, whenever the penalty exceeds the edge, which is the whole range in which robust training is at issue. The level-set term is $\Lambda f(\eta^\star)\sigma^2 I = \lambda(\Lambda)\sigma I$, since the density of a normal reward at its $1/\Lambda$-quantile is $\varphi(\Phi^{-1}(1-1/\Lambda))/\sigma$. The two sum to $\|\mu\|I$: the curvature is the ERM curvature at every $\Lambda$. Isotropy gives $m=0$ and $\mathbb{E}[\nabla L\nabla L^\top \mid L\ge\eta^\star] = \mathrm{Cov}(g)$ on the tangent space, so $V_\Lambda = \Lambda\,\mathrm{Cov}(g)$ and $\Lambda_{\mathrm{fit}} = \Lambda$.
\end{proof}

\begin{proof}[Proof of Proposition~\ref{prop:eiv}]
(i) Write $\mathrm{CVaR}_\alpha$ in its Rockafellar--Uryasev form, $\mathrm{CVaR}_\alpha(W) = \min_\eta\{\eta + \alpha^{-1}\mathbb{E}(W-\eta)^+\}$, the upper-tail counterpart of the display in Proposition~\ref{prop:robustobj}. For every $\eta$, conditioning on $r$ and applying Jensen's inequality to the convex $x \mapsto (x-\eta)^+$, $\mathbb{E}[(r+\xi-\eta)^+ \mid r] \ge (r+\mathbb{E}[\xi\mid r]-\eta)^+ = (r-\eta)^+$, so the objective for $r+\xi$ dominates the objective for $r$ at every $\eta$, and the minimum does too. Nothing about the law of $(r,\xi)$ beyond $\mathbb{E}[\xi\mid r] = 0$ and integrability was used; in particular $\sigma_\xi^2(r)$ may depend on $r$. (This is the convex-order monotonicity of $\mathrm{CVaR}$ \citep{follmer2016stochastic}, with $r+\xi \succeq_{\mathrm{cx}} r$ from conditional Jensen, written out.)

(ii) For $W \sim N(m,v)$ and $z_\alpha := \Phi^{-1}(1-\alpha)$, $\mathrm{CVaR}_\alpha(W) = m + \sqrt{v}\,\mathbb{E}[Z \mid Z>z_\alpha] = m + \sqrt{v}\,\varphi(z_\alpha)/\alpha$, since $\int_{z_\alpha}^\infty z\varphi(z)\,\mathrm{d}z = \varphi(z_\alpha)$; at $\alpha = 1/\Lambda$ the coefficient is $\lambda(\Lambda)$ of Theorem~\ref{thm:invariance}. Independence makes $r+\xi \sim N(\bar{r}, s^2+\sigma_\xi^2)$, and subtracting the two values gives the display. For the two regimes put $x := \sigma_\xi^2/s^2$: $\sqrt{s^2+\sigma_\xi^2} - s = s(\sqrt{1+x}-1)$, which is $sx/2 + O(sx^2) = \sigma_\xi^2/(2s)$ as $x\to 0$ and $\sigma_\xi - s + O(s^2/\sigma_\xi)$ as $x\to\infty$. The substitutions $\sigma_\xi^2 = \sigma^2 H/b$ and $s^2 = \rho_{\mathrm{blk}}^2\sigma^2 v(b,\ell)/b$ are Lemma~\ref{lem:effsample} at $n=b$, the block's own noise and regime variances, and their ratio is the display; $v(b,\ell) \to 2\ell-1$ as $b/\ell \to \infty$ by the same lemma. The last equality is the dictionary $\rho_{\mathrm{blk}} = \kappa\rho$, $\rho^2 = \mathrm{SR}_{\mathrm{ann}}^2/P$ with $P$ periods per year (Lemma~\ref{lem:sharpe}) and $\ell = \tau_{\mathrm{yr}}P$ (Remark~\ref{rem:upsbox}), with $2\ell-1 \approx 2\ell$.

(iii) The first sentence is (i): as $B \to \infty$ the plug-in converges to $\mathrm{CVaR}_{1/\Lambda}$ of the score law, which by (i) sits above that of the risk law by a gap $B$ does not touch --- under (ii), by the display, a positive gap whenever $\sigma_\xi^2 > 0$. For the second, apply (ii) to each candidate, the two sharing $(\bar{r}, s^2)$, and subtract: the noisier candidate's tail mean is larger by $\lambda(\Lambda)(\sqrt{s^2+\sigma_2^2} - \sqrt{s^2+\sigma_1^2}) > 0$, a penalty on it since losses are ranked upward. When $\sigma_\xi^2(r)$ rises with $r$ the upper tail is where the noise is largest, so the gap of (i), a functional of the noise in the tail, is largest for the candidate whose bad blocks are its noisy ones --- the reading Experiment 2 tests.
\end{proof}

\begin{proof}[Proof of Proposition~\ref{prop:levelceiling}]
(i) By Lemma~\ref{lem:robust} in Rockafellar--Uryasev form, $\mathrm{CVaR}_{1/\Lambda'}(r^{(b)}) = \bar{r} + \min_\eta\{\eta + \Lambda'\,\mathbb{E}(X_s-\eta)^+\}$ with $X_s := sG + \xi$. For fixed $\eta$ the map $s \mapsto \mathbb{E}(X_s-\eta)^+$ has derivative $\mathbb{E}[G\,\mathbf{1}\{X_s\ge\eta\}]$ (dominated convergence; the kink at $X_s = \eta$ carries no mass, $X_s$ having a density), and conditioning on $\xi$ with $\mathbb{E}[G\,\mathbf{1}\{G\ge a\}] = \varphi(a)$ gives $\mathbb{E}[G\,\mathbf{1}\{X_s\ge\eta\}] = \mathbb{E}_\xi[\varphi((\eta-\xi)/s)] = s\,f(\eta)$ --- the same Gaussian average that shows $f$ exists, is smooth and positive. The objective is convex in $\eta$ with the unique minimiser $q_{1/\Lambda'}$ ($f>0$), so Danskin's theorem differentiates the minimum at the minimiser: $S(\Lambda') = \Lambda'\,s\,f(q_{1/\Lambda'})$.

(ii) $X_s \sim N(0, s^2+\sigma_\xi^2)$, so $q_{1/\Lambda'} = \sqrt{s^2+\sigma_\xi^2}\,z_{1/\Lambda'}$ and (i) reads $\lambda(\Lambda')s/\sqrt{s^2+\sigma_\xi^2}$ by the definition of $\lambda$; it is unbounded since $\lambda(\Lambda') \asymp \sqrt{2\log\Lambda'}$, and solving $S(\Lambda') = \lambda(\Lambda)$ is Remark~\ref{rem:noiselevel}'s display.

(iii) The convolution of a regularly varying density with the light-tailed $N(0,s^2)$ is tail-equivalent to it, $f(x) \sim f_\xi(x)$, by dominated convergence with Potter's bounds \citep{bingham1987regular}; Karamata's theorem then gives $\mathbb{P}(X_s>x) \sim xf(x)/\nu$, so at $x = q_{1/\Lambda'}$, $\Lambda' f(q) = f(q)/\mathbb{P}(X_s>q) = (\nu+o(1))/q$, and $q \asymp \Lambda'^{1/\nu}$ up to a slowly varying factor. $S$ is continuous by (i) and the continuity of $u \mapsto q_u$; as $\Lambda' \to 1^+$, $q \to -\infty$ and $S \to 0$; as $\Lambda' \to \infty$, $S \to 0$ by the display; a continuous positive function vanishing at both ends of an open interval attains its maximum inside. Scaling $(s,\xi)$ by $c>0$ leaves $S$ unchanged, so $\bar{\lambda}$ is a functional of the law of $\xi/s$ alone; the Student-$t$ family indexes that law by $(\nu,\sigma_\xi/s)$, which is the sense in which the ceiling is written $\bar{\lambda}(\nu,\sigma_\xi/s)$. The last sentence is the first-order expansion of $s_i \mapsto \mathrm{CVaR}_{1/\Lambda'}$ about the reference $s$, with (i) as its coefficient, matched to the noiseless expansion, whose coefficient at level $1/\Lambda$ is $\lambda(\Lambda)$.
\end{proof}

\begin{proof}[Proof of Proposition~\ref{prop:scorecoh}]
(i) Write $m(\eta) := \mathbb{E}(\hat r - \eta)^+$ and $\hat m(\eta) := B^{-1}\sum_j (\hat r_j - \eta)^+$. Rockafellar--Uryasev on both sides gives $\hat{\mathrm{CVaR}} = \min_{\eta}\{\eta + \Lambda \hat m(\eta)\}$ and $\mathrm{CVaR}^{(b)} = \min_{\eta}\{\eta + \Lambda m(\eta)\}$, both minima attained at quantiles of laws supported in $[-M,M]$, so both may be taken over $\eta \in [-M,M]$ and $|\hat{\mathrm{CVaR}} - \mathrm{CVaR}^{(b)}| \le \Lambda\,\sup_{\eta\in[-M,M]}|\hat m(\eta) - m(\eta)|$, the $\eta$-terms cancelling. Fixed $\eta$: $x \mapsto (x-\eta)^+$ is $1$-Lipschitz, and for any square-integrable $X$ with an independent copy $X'$, $\mathrm{Var}\,f(X) = \tfrac12\mathbb{E}(f(X)-f(X'))^2 \le \tfrac12\mathbb{E}(X-X')^2 = \mathrm{Var}(X)$, so $\mathrm{Var}\big((\hat r_j-\eta)^+\big) \le s_\psi^2$; stationarity gives $\mathbb{E}\hat m(\eta) = m(\eta)$ and $\mathrm{Var}\,\hat m(\eta) = B^{-1}\mathrm{Var}\big((\hat r-\eta)^+\big)\sum_{|k|<B}(1-|k|/B)\,\mathrm{Corr}(k) \le s_\psi^2\tau_\psi/B$ by the declared correlation sum. Chebyshev at each of the $G+1$ points of the uniform grid on $[-M,M]$ (spacing $2M/G$) with a union bound gives $\max_{\mathrm{grid}}|\hat m - m| \le s_\psi\sqrt{(G+1)\tau_\psi/(B\delta)}$ with probability $1-\delta$; $\hat m - m$ is $2$-Lipschitz in $\eta$ (each term is $1$-Lipschitz), so the supremum exceeds the grid maximum by at most $2\cdot M/G$. For the constant, write $u := (2Ma^2)^{1/3}$ and $g_0 := (4M/a)^{2/3}$: the side condition is $u \le M/4$, i.e.\ $a \le M/(8\sqrt2)$, whence $g_0 \ge (32\sqrt2)^{2/3} = 2^{11/3} > 12.6$; with $G = \lceil g_0 \rceil$, $a\sqrt{G+1} \le a\sqrt{g_0}\,\sqrt{1+2/g_0} \le 2^{1/3}u\sqrt{1+2^{-8/3}}$ and $2M/G \le 2M/g_0 = 2^{-2/3}u$, using $a\sqrt{g_0} = (4Ma^2)^{1/3} = 2^{1/3}u$; the total is $\big(2^{1/3}\sqrt{1+2^{-8/3}} + 2^{-2/3}\big)u = 1.986\,u \le 2u$.

(ii) The proof of Theorem~\ref{thm:deployed} uses Axiom~\ref{ax:persistence} (ii) in exactly two places, both couplings of Corollary~\ref{cor:blocks} (a) --- once for the deviation $D$ from the path-conditional mean, once for comparing the path's scores with an independent stationary sample --- each charged $2MB\beta(b)$; substituting the declared couplings charges $2MB\alpha_\psi$ each and changes nothing else in that proof. For $\alpha_\psi \le \beta(b)$: the $\sigma$-algebras generated by the scores of blocks $\le i$ and of blocks $\ge i+2$ are sub-$\sigma$-algebras of those generated by the process up to the end of block $i$ and from the start of block $i+2$, which are separated by $b$ periods, and the absolute-regularity coefficient of a pair of sub-$\sigma$-algebras is at most that of the pair containing them (total variation between restricted joint and product laws only falls under restriction); the couplings themselves are Corollary~\ref{cor:blocks} (a)'s blocking lemma applied to the score sequence at its own coefficient.

(iii) For $k \ge 1$ every pair $(t,s)$ with $t$ in block $1$ and $s$ in block $1+k$ has $s > t$, and the redraw chain's autocorrelation is $\rho^{s-t}$, so $\mathrm{Cov}(R_1, R_{1+k}) = \mathrm{Var}_\nu(r_h)\,b^{-2}\big(\sum_{t=1}^b \rho^{-t}\big)\big(\sum_{s=kb+1}^{kb+b}\rho^{s}\big) = \mathrm{Var}_\nu(r_h)\,b^{-2}\,\rho^{(k-1)b+1}(1-\rho^b)^2\ell^2$ by two geometric sums. Summing the geometric $\rho^{(k-1)b}$ over $k \ge 1$ and dividing by $\mathrm{Var}(R_1) = \mathrm{Var}_\nu(r_h)\,v(b,\ell)/b$ (Lemma~\ref{lem:effsample} at $n=b$), the correlation sum is $2\rho\ell^2(1-\rho^b)/(b\,v(b,\ell))$, and $2\rho(1-\rho^b)\ell^2/b = (2\ell-1) - v(b,\ell)$ by the definition of $v$ in \eqref{eq:vcontig}; the coherence time is $1 + \big((2\ell-1)-v\big)/v = (2\ell-1)/v(b,\ell)$. Noise independent across blocks adds $\sigma_\xi^2$ to each score's variance and nothing to the cross-block covariances, which rescales the correlation sum by $s_R^2/(s_R^2+\sigma_\xi^2)$. The redraw chain regenerates with probability $1/\ell$ each period, and two segments separated by $b$ periods can be exactly coupled unless no redraw occurs in the gap, so $\beta(b) \le (1-1/\ell)^b$. The displayed values are these closed forms at $(\ell,b) = (60,60)$, $(60,300)$, and, for the dilution, $(40,60)$ at $\sigma_\xi^2 = 7 s_R^2$.
\end{proof}

\begin{proof}[Proof of Proposition~\ref{prop:audit}]
Index the candidates the search can return by $i$, and write $\psi_i$ for candidate $i$'s realised performance on the held-out window $W$ and $\rho_i := \mathbb{E}[\psi_i \mid \mathcal{F}_{\mathrm{pre}}]$ for its true risk there, conditional on the data before the boundary (and on the path); $\epsilon_i := \psi_i - \rho_i$ is the window's own noise, mean zero given $\mathcal{F}_{\mathrm{pre}}$ by construction; Axiom~\ref{ax:adapted} and the purge of Proposition~\ref{prop:embargo} at the boundary are what make $\rho_i$ the rule's true risk on the window rather than a quantity the pre-boundary observations still reach. The truncated search returns $T_{\mathrm{trunc}}$, a function of the pre-boundary data alone, so $\mathbb{E}[A_{\mathrm{trunc}}] = \mathbb{E}[\psi_{T_{\mathrm{trunc}}}] = \mathbb{E}[\rho_{T_{\mathrm{trunc}}}]$: the held-out score of a rule chosen without the window is unbiased for its true risk. The full search returns $T_{\mathrm{full}} \in \operatorname{arg\,max}_i \phi_i$ for a statistic $\phi_i = a_i + b\,\psi_i$ in which the window enters with a positive weight $b$ and $a_i$ is $\mathcal{F}_{\mathrm{pre}}$-measurable --- the full-sample mean is one, $b$ the window's share of the sample. What the argument uses is that the weight on $\psi_i$ is positive and the \emph{same} for every candidate: with candidate-specific weights $b_i$ it delivers $\mathbb{E}[b_T\epsilon_T] \ge 0$ and no more, and for a statistic that is not affine in $\psi_i$, a Sharpe ratio, the sign of the selection term needs its own argument. Then
\[
\mathbb{E}[A_{\mathrm{full}}] - \mathbb{E}[A_{\mathrm{trunc}}] = \underbrace{\mathbb{E}[\epsilon_{T_{\mathrm{full}}}]}_{\text{selection bias, } \ge 0} + \underbrace{\mathbb{E}[\rho_{T_{\mathrm{full}}}] - \mathbb{E}[\rho_{T_{\mathrm{trunc}}}]}_{\text{gain in true risk from the extra data}}.
\]
The first term is non-negative for every law of the data: $G(\epsilon) := \max_i(a_i + b\rho_i + b\epsilon_i)$ is convex in the noise vector, so conditionally on $\mathcal{F}_{\mathrm{pre}}$ Jensen gives $\mathbb{E}[G(\epsilon)] \ge G(0) = \max_i(a_i+b\rho_i) \ge \mathbb{E}[a_{T_{\mathrm{full}}} + b\rho_{T_{\mathrm{full}}}]$, while $G(\epsilon) = a_{T_{\mathrm{full}}} + b\rho_{T_{\mathrm{full}}} + b\epsilon_{T_{\mathrm{full}}}$; subtracting, $b\,\mathbb{E}[\epsilon_{T_{\mathrm{full}}}] \ge 0$. This is the selection term of Theorem~\ref{thm:search}, whose part (ii) bounds its size by $\sigma\sqrt{2I(T;\phi)}$ and whose sign Jensen fixes. The second term is what the extra data bought: it is non-negative when the search, given more data, chooses no worse in true risk, which is not a theorem but the purpose of the data; under it the difference is an upper bound on the search component, and equalising the data available to the two arms --- both searching on the pre-boundary window and differing only in whether the post-boundary window was visible --- sets it to zero and turns the bound into an estimate, at the cost of no longer auditing the search that was actually run. For the last claim, $k$ audits each returning a scalar the researcher may act on constitute a search of size $k$ over the held-out window, costing $2\log k$ nats of that window's budget by Theorem~\ref{thm:search}.
\end{proof}

\begin{proof}[Proof of Proposition~\ref{prop:occfloor}]
(i) Two times lie in the same sojourn with probability $\rho^{|s-t|}$, so $\mathrm{Cov}(I_s,I_t) = p(1-p)\rho^{|s-t|}$; summing over the window gives the display, with $(1+\rho)/(1-\rho) = 2\ell-1$ and $(1-\rho)^{-2} = \ell^2$. (ii) Conditionally on the redraw pattern the sojourn states are i.i.d.\ $\nu$, so $\bar{\pi}^+(k)$ is a weighted sum of independent Bernoulli$(p)$ variables with weights $L_j/m$ summing to one; Hoeffding's inequality \citep{hoeffding1963probability} gives the display, and $\mathbb{E} W$ is the covariance sum of (i) again, $\sum_j L_j^2$ counting the same-sojourn pairs. (iii) The chain is uniformly ergodic, so the Markov-chain central limit theorem \citep{meyn2009markov} applies to $I_t$ with asymptotic variance $p(1-p)(2\ell-1)$; setting $\mathbb{P}(\bar{\pi}^+(k) \ge \Lambda p) = \varepsilon$ in the limit and solving gives $\Lambda_\varepsilon$.
\end{proof}

\begin{proof}[Proof of Proposition~\ref{prop:defectlevel}]
Work in one cell, conditionally on the split sizes $(N_1,N_2)$; the bins are treated as fixed --- the panel's pooled-quantile edges are common to both halves, so their sampling error enters the \emph{difference} $\hat{p}_1 - \hat{p}_2$ as the increment of the halves' empirical-difference process over intervals of length $O_p(N^{-1/2})$, a product of two $O_p(N^{-1/2})$ terms. Write $e_j \in \{0,1\}^M$ for the bin indicator of $W_j$ and $q := \mathbb{E} e_1$.

\emph{Sampling side.} Each half's $\hat{p}_i$ is the mean of the bounded stationary $e_j$ over a contiguous stretch, so $\sqrt{N_i}(\hat{p}_i - q) \Rightarrow N(0,\Sigma_\infty)$ under the assumed decay, and the halves decouple: $\mathrm{Cov}(\hat{p}_1,\hat{p}_2) = (N_1N_2)^{-1}\sum_{d\ge 1} c_d(\Gamma(d)+\Gamma(d)^\top)$ with $c_d$ the number of pairs $s\le N_1 < t$ at $t-s=d$, at most $\min(d,N)$; splitting the sum at $\sqrt{N}$ bounds $N\|\mathrm{Cov}(\hat{p}_1,\hat{p}_2)\|$ by $(N/(N_1N_2))(2\sqrt{N}\sum_j \|\Gamma(j)\| + 2N\sum_{d>\sqrt{N}} \|\Gamma(d)\|) \to 0$ by absolute summability. Cramér--Wold gives the sampling statement of (i).

\emph{Permutation side.} Let $m_1,\dots,m_J$ be the chunk means, $J = \lfloor N/L \rfloor$ (the chunk the boundary splits, and a final short chunk, move both pseudo-halves by $O(L/N)$). The covariance of one chunk's mean is exactly $L^{-2}(L\,\Gamma(0) + \sum_{d=1}^{L-1}(L-d)(\Gamma(d)+\Gamma(d)^\top)) = \Sigma_L/L$, and the ergodic theorem applied to the stationary sequence of chunks gives $\bar{m} \to q$ and $J^{-1}\sum_j (m_j-\bar{m})(m_j-\bar{m})^\top \to \Sigma_L/L$ almost surely. Given the sample, the reassignment makes the pseudo-first half the mean of a simple random sample of $J_1 = N_1/L$ of the $J$ chunks, and the difference of the two complementary group means is an affine function of that one mean; sampling without replacement gives the difference variance exactly $S^2(1/J_1 + 1/J_2)$, $S^2$ the $(J-1)$-denominator covariance of $\{m_j\}$, and Hájek's central limit theorem for simple random sampling \citep{hajek1960sampling} --- its Lindeberg condition holds almost surely, the $m_j$ being bounded while $J\lambda_{\mathrm{min}} \to \infty$ --- gives, with Cramér--Wold, the conditional statement of (i).

\emph{Size.} $\sqrt{N_1N_2/N}\,D$ is half the $\ell^1$ norm of $\sqrt{N_1N_2/N}(\hat{p}_1-\hat{p}_2)$, so both limits follow by continuous mapping; the law of $\|G\|_1$, $G$ centred Gaussian with covariance positive definite on the centred subspace, has a continuous strictly increasing distribution function on $(0,\infty)$, so the permutation quantile converges almost surely and the size is the stated probability. $\Sigma_\infty \preceq r\Sigma_L$ by the definition of $r$, and Anderson's lemma \citep{anderson1955integral} on the symmetric convex sets $\{x : \|x\|_1 \le t\}$ gives $\mathbb{P}(\|G_\infty\|_1 > t) \le \mathbb{P}(\sqrt{r}\|G_L\|_1 > t)$; at $M=2$ the centred subspace is a line, so the bound is an equality and reads as the stated closed form. For the bound on $r$: $\Sigma_\infty - \Sigma_L = \sum_{j\ge L}(\Gamma(j)+\Gamma(j)^\top) + \sum_{j=1}^{L-1}(j/L)(\Gamma(j)+\Gamma(j)^\top)$, so $\|\Sigma_\infty-\Sigma_L\| \le 2\sum_{j\ge L}\|\Gamma(j)\| + (2/L)\sum_{j=1}^{L-1} j\|\Gamma(j)\|$ and $r \le 1 + \|\Sigma_\infty-\Sigma_L\|/\lambda_{\mathrm{min}}(\Sigma_L)$. In the geometric instance two cell-days $j$ apart share their bin with probability $(1-1/\ell)^j$ and are independent draws from the cell marginal otherwise, so $\Gamma(j) = (1-1/\ell)^j\Gamma(0)$ exactly; both windows are then multiples of $\Gamma(0)$, $\Sigma_\infty = (2\ell-1)\Gamma(0)$ and $\Sigma_L = v(L,\ell)\Gamma(0)$ by the Bartlett sum of \eqref{eq:vcontig}, the whitened matrix is $((2\ell-1)/v(L,\ell))I$ on the centred subspace, and every functional of the limit scales by $\sqrt{r}$; the numbers in the text evaluate $\mathbb{P}(\|G\|_1 > q_{0.95}/\sqrt{r})$ at uniform bins.

\emph{Power.} Conditionally on the occupations, the ergodic theorem within each half gives $\hat{p}_i \to \bar{Q}_i := \sum_v \pi_i(v)\tilde{Q}(\cdot \mid v)$, so $D \to \mathrm{TV}(\bar{Q}_1,\bar{Q}_2)$ in probability; for two values of the omitted coordinate $\bar{Q}_1 - \bar{Q}_2 = (\pi_1-\pi_2)(\tilde{Q}' - \tilde{Q}'')$ and the total variation factorises. Under the alternative the chunk means gain the between-half separation, $S^2 \to \Sigma_L/L + w(1-w)\delta\delta^\top$ with $\delta := \bar{Q}_1 - \bar{Q}_2$, so the permutation variance is $O((1+L\|\delta\|^2)/N)$ and its quantiles still shrink at $N^{-1/2}$ while $D$ stays bounded away from zero.
\end{proof}

\clearpage
\section{Experiments}
\label{app:experiments}
The theorems have constants in them, and constants are where theory usually parts company with practice. Seven experiments keep them honest, and every one of them reads real market series --- an axiom, being a premise about markets, is tested on markets or not at all (\S\ref{sec:empirical}); each is one committed script, reproduces from a fixed seed, and has its output in the committed \texttt{RESULTS.txt} and its design in full in its module docstring. The experiments run 1 to 7: one equity index (Experiment 1), what that estimate is an estimate of (2), the same on twenty series (3), the effective sample measured (4), the five stages walked once with the constants declared first (5), the first trial of a prediction of \S\ref{sec:empirical} (6), and the cross-sectional library the others did without (7). (Earlier drafts carried Monte Carlo audits of the derivations as experiments 1--6 and 14 and numbered these seven 7--13; the audits were retired at the author's direction, the survivors renumbered, and the archived review documents and old commit messages cite the old scheme.) The table below is the index; the sections after it give each experiment its design, one table and its verdict, and the panel-by-panel narrative stays in the output file.

A paper about budgeted search owes an account of its own. Experiment 1's original panels were fixed before their results were seen. Everything added afterwards was adaptive in the sense of Theorem~\ref{thm:search} (ii): Experiment 2 takes Experiment 1's measurement apart step by step, Experiment 3 asks whether its one series was typical, Experiments 4--6 came last, and Experiment 7's panel G is the latest addition (2026-09-01), its design fixed before its results were seen. The rule libraries were written once and not tuned; every number is regenerated from the committed scripts at fixed seeds. The selection this implies is over \emph{which experiments to run}, not over which results to report --- every panel run is in the output file --- and the results that were not predicted are stated as findings to be replicated, not as confirmations.

\begin{center}
\footnotesize
\begin{longtable}{>{\raggedright\arraybackslash}p{7em}>{\raggedright\arraybackslash}p{0.25\linewidth}>{\raggedright\arraybackslash}p{0.50\linewidth}}
\caption{The experiments, and what each found. Every number here is copied from \texttt{experiments/RESULTS.txt}; the design of each is in its own section below, and in the module docstring in full.}\\
\toprule
 & \textbf{checks} & \textbf{finding} \\
\midrule
\endfirsthead
\toprule
 & \textbf{checks} & \textbf{finding} \\
\midrule
\endhead
\midrule
\multicolumn{3}{r}{\textit{continued on next page}} \\
\endfoot
\bottomrule
\endlastfoot
1 one real series & the plug-ins of \S\ref{sec:price} and \S\ref{sec:empirical} & median $\hat c(4)=1.95$ against the Gaussian $1.43$, $c^2\approx0.9\Lambda$ at every level; the invariance test fires on the term spread declared alone and its measured level takes the verdict back (panel H4); A4 (ii)'s exponential instance contradicted on all three coordinates, (i)'s times of the order declared \\
2 what 1 measured & Prop.~\ref{prop:eiv} (ii)--(iii) & de-biasing works only at noise/signal $\le1$ (the series sits at $6.6$); the whole excess over Gaussian is heteroskedastic score noise, and studentising removes it together with what the tail exists to read \\
3 twenty assets, seven $b$ & Prop.~\ref{prop:eiv} (ii)--(iii), Rem.~\ref{rem:blocklen}, P1 and P4 of \S\ref{sec:empirical} & raw $\hat c(4)$ above Gaussian on 19 of 20 assets, studentised on none, the excess decaying as $b^{-0.53}$; P1 null without power; at the corrected level the defect test's rejections fall to the two matrix-priced bond funds (panel D); the tail criterion's excess degradation lands on the plug-in $\hat c(\Lambda')$ at both levels, at the edge of the joint Gaussian null's power (panel E) \\
4 effective sample & Lem.~\ref{lem:effsample}, Thm.~\ref{thm:capacity} & $\upsilon$ median $0.86$ (bond funds $2$--$3$, stale NAVs); a fitted coefficient's variance tracks $H$, never $\ell^\ast=60$; panel A2 reads the box's lower end off the series' own lag-one autocorrelation, $-0.082$ on the S\&P 500 \\
5 the walk & S1--S5 of Thm.~\ref{thm:canonical}, Prop.~\ref{prop:audit}, Prop.~\ref{prop:ensemble}, Rem.~\ref{rem:lamlower} & the $72$-way comparison is over budget at the level it was run at, the haircut exceeds the selected rule's edge, and S5 returns no position; one audit bounds the search overdraft at $0.86$ annual Sharpe \\
6 what decay scales with & P2 of \S\ref{sec:empirical}; Lem.~\ref{lem:effsample}, Thm.~\ref{thm:capacity}, Thm.~\ref{thm:search} (i) & fit slope $0.66\pm0.06$ on $\sqrt{dH/n}$ and $1.08$ on the null-signal constant; search slope $0.79$ on the derived expected maximum; $\ell$ adds nothing to either. Not refuted \\
7 the cross-section & Prop.~\ref{prop:breadth}; P1, P3, P4 of \S\ref{sec:empirical}; Prop.~\ref{prop:eiv} & $N_a^{\mathrm{eff}}$ $21.3$ against $1.6$ and a ceiling ratio of $16$; long-only reproduces $c^2\approx0.9\Lambda$, long-short sits at the Gaussian value; P1 a clean null at $\Lambda=2$ \emph{with} power; P3's fitted ladders put the cross-sections' peaks $8\times$ apart against a predicted $13\times$ (panel G); post-publication decay $0.44$ \\
\end{longtable}
\end{center}

\subsection{Experiment 1: the constants on one real daily series}
\label{experiment-1-real-series}
The theory settles the range each constant can be in; only data say where a real market sits inside it. Experiment 1 asks that for the constant whose range is widest, $c(\Lambda)$, on the smallest real dataset that can answer it: the daily total-return series of one equity index from 1997 to 2026 ($n=7{,}454$ days), a library of $N=71$ daily timing rules written once and not tuned --- moving-average, momentum, reversal and volatility-target rules, VIX and yield-curve conditions, long--short versions, pairwise conjunctions --- and blocks of $b=60$ days, $B=124$. Each rule's excess returns are standardised and cut into block scores, and $\hat c(\Lambda)$ is the plug-in \eqref{eq:plugin} on them, rule by rule. Only the distribution across rules is reported; no rule is selected.

\begin{table}[htbp]
\centering\small
\begin{tabular}{rrrrrrrr}
\toprule
$\Lambda$ & $c$ Gauss & p25 & median & p75 & max & bound $\Lambda$ & $c_{\mathrm{med}}^2/\Lambda$ \\
\midrule
2 & 1.168 & 1.207 & 1.336 & 1.443 & 1.565 & 2 & 0.89 \\
4 & 1.426 & 1.734 & 1.950 & 2.357 & 2.846 & 4 & 0.95 \\
8 & 1.785 & 2.284 & 2.711 & 3.723 & 5.079 & 8 & 0.92 \\
20 & 2.466 & 3.473 & 4.692 & 6.447 & 11.386 & 20 & 1.10 \\
\bottomrule
\end{tabular}
\caption{Experiment 1, panel A. Quantiles across the $71$ rules of the plug-in $\hat c(\Lambda)$ on real block scores, $b=60$, $B=124$; ``$c$ Gauss'' is the closed form of \S\ref{sec:price}. The median fits $c \approx \Lambda^{0.54}$. Two controls (\texttt{RESULTS.txt}): the plug-in on Gaussian block risks at $B=124$ averages $1.166$, $1.418$, $1.758$, $2.344$ (unbiased); on a rare-regime law with homoskedastic score noise of sd $1/\sqrt{60}$ added, $\hat c(4)$ falls from $3.28$ to $2.11$ --- noise of that kind pushes the estimate \emph{towards} the Gaussian value. Experiment 2 shows the real noise is not of that kind.}
\end{table}

Real daily block \emph{scores} sit on the dear side of \S\ref{sec:price}'s dichotomy: $c^2/\Lambda$ is $0.9$--$1.1$ at every level against $0.68$, $0.51$, $0.40$, $0.30$ for the Gaussian law. The score law is left-skewed and heavy-tailed (per-rule median skewness $-0.53$, excess kurtosis $1.92$), and that alone carries $c$ from $\Lambda^{1/4}$ to $\Lambda^{1/2}$. Buy-and-hold, which involves no design choice of ours, gives $\hat c(4)=2.41$ with a block-bootstrap $90\%$ interval $[1.80, 2.83]$ excluding the Gaussian $1.43$. Panel A is the one result let into the argument (Corollary~\ref{cor:selecttrain}, the affordability table), because it changes which branch the reader should expect to be in and its controls rule out the obvious artefacts. The plug-in is computed on the same block scores the ranking statistic is computed from, so it is the inflation Proposition~\ref{prop:tailprice} defines at this $b$ --- not a measurement of the block-risk law; Experiment 2 separates the two.

The other panels measure, on the same data, quantities the text leaves open rather than claims, and stay in \texttt{RESULTS.txt}: the implied block-signal variance behind Proposition~\ref{prop:eiv} (noise-to-signal $6.6$); the lag-one autocorrelation of consecutive block scores, $-0.01$ against a block-permutation null of $\pm0.08$ where $\rho_{\mathrm{blk}}=\rho$ would give $0.29$, so the implied $\rho_{\mathrm{blk}}^2$ has median $0.0009$; the block scores' own coherence time $\tau_\psi$ of Proposition~\ref{prop:tailprice}, median $1.05$ at $\Lambda=4$ and $1.00$ at $1/12$ against a permutation null of $1.00$ ($95\%$ point $1.21$) --- the first half of the pair Proposition~\ref{prop:scorecoh} declares, the lag-one autocorrelation beside it the sample's lower bound on the second, both inside their nulls at the $b$ in use where the process-level rate of panel J is contradicted; the shuffled-block null of Remark~\ref{rem:lamlower}'s $\hat\Lambda_{\mathrm{split}}$; prediction P1 on this library (a power statement: $0.45$ and $0.52$ against the mean's $0.56$, $t=-1.4$ and $-0.9$); the coherence times of candidate coordinates; and the occupation floor on $\Lambda$.

\textbf{Panel G: the tail on a declared representation} (Proposition~\ref{prop:lattice}): the criterion re-run as the water-filling value over $K$ quantile cells of one of three named coordinates in place of the blocks. Both halves of the proposition's trade measure as stated --- the cell mean's sampling error falls by an order of magnitude ($0.70$--$0.86$ of the median candidate's own edge at $K=2$--$3$ against the block score's $5.5$), the retained dispersion falls with it and at $K=2$ or $3$ is not measurably positive for most candidates on any coordinate --- and run as selectors the lattice criteria sit within the comparison's sampling error of the block $\mathrm{CVaR}_{1/4}$ ($|t|\le 1.3$). Buy-and-hold, the quantity \S\ref{sec:limits-market} turns on, is $-52$ and $-117$ (daily-sd units, $\times10^3$) at $\Lambda=2$ and $4$ on the blocks, $-11$ and $-29$ on ten volatility deciles, $+9$ and $+3$ on ten term-spread deciles: whether the equity premium has a robust edge has no one answer until the representation is fixed.

\textbf{Panel G2: the same criterion on the two-coordinate declaration}, pricing the move from the rejected declaration to the surviving one by expressiveness alone; the inversion and its mechanism --- $+8.8$ and $+2.5$ on the ten-cell term spread against $+4.1$ and $0.0$ on the nine joint cells, the joint grid unbundling the high-volatility days a term-spread tercile averages with their recovery --- are read in Remark~\ref{rem:lamlower}. The repair costs noise, not signal: the pair's retained dispersion is $0.85$ against the volatility coordinate's own $0.87$, while the cell mean's sampling error rises from $0.86$ to $1.49$ of the median edge. No random numbers.

\textbf{Panel I: the declaration as a pair, against the history's own windows} (Corollary~\ref{cor:confidence} (iii)): for disjoint five-year windows on panel G's coordinates at $K=2$, $5$, $10$ cells, the fraction failing $\Lambda$, with a sliding version and Proposition~\ref{prop:occfloor}'s stationary prediction alongside; thirty years hold five windows, so the resolution is a fifth. The verdict is \S\ref{sec:empirical}'s table row: $(4, 0.05)$ survives on the volatility coordinates at that resolution and is contradicted on the term spread, whose $\ell=738$ leaves a five-year window fewer than one effective visit --- no $(\Lambda,\varepsilon)$ with $\Lambda=4$ is declarable on it at that horizon, and what the frequency records is sampling.

\textbf{Panel J: the two clauses of Axiom~\ref{ax:persistence} on the declared coordinates} --- the $\beta$-coefficient of each coordinate's five quantile cells at twenty-two lags, the integrated autocorrelation times of the cells' indicators, and the embargo's predictability bound; the verdicts are the two A4 rows of \S\ref{sec:empirical}'s table. The numbers the body consumes: the rate is a power of the lag over the first year (exponents $0.35$, $0.19$, $0.15$), plateauing at three to eight times the permutation floor, and the exponential instance survives only at $\ell^\ast=608$, $735$, $984$ days; the indicators' times run $14$--$276$ against the declared $2\ell-1=79$ on realised volatility, $29$--$440$ against $197$ on the log VIX, $164$--$628$ against $1475$ on the term spread, the volatility coordinate's rank at $276 = 2\cdot138-1$ --- the clause the budget rests on off by the factor a declaration is refined by, where the rate is off by the shape; and the predictability bound reads $0.32$--$0.80$ at the $112$-day embargo the AR(1) chain computes against a tolerance of $0.10$ (contradicted), $0.086$--$0.126$ at Experiment 5's $523$ days (not contradicted, the halves disagreeing by $0.2$--$0.4$), the first gap at which it falls to $0.10$ being $390$--$550$ days. No random numbers.

\subsection{Experiment 2: taking the measurement apart}
\label{app:exp2}
Experiment 1's plug-in is computed on block \emph{scores}, which Corollary~\ref{cor:blocks} says are the block risk plus estimation noise of order $b^{-1/2}$. Can the bias be removed when the noise variance is known --- it is, block by block, from the within-block data --- and what does removing it cost? Two measurement-error estimators, SIMEX and a Gaussian-mixture deconvolution with heteroskedastic known errors, are first run where the truth is known (panel A: four latent laws, noise-to-signal variance ratios $0.25$, $1$, $6.6$, $B=124$, $200$ replications; panel B the same estimators at $B=1000$, $60$ replications), then on the real rules. Panels A and B are simulation, and stand as Experiment 1's Gaussian and rare-regime controls do: the null calibration of the two estimators panel C runs on the real rules --- what each returns when the latent law is known, at the noise ratios the real series sits at and below --- not a test of their own.

De-biasing works, at a price of $2$--$6$ in blocks, when the noise variance is at most the block-signal variance: the mixture recovers $3.25$--$3.32$ against a true $3.30$ and $2.55$--$2.61$ against $2.66$, SIMEX about half the gap. At the ratio of $6.6$ the real candidates sit at, neither estimator identifies the latent law --- the naive plug-in reads $1.5$ for every law and the mixture wanders ($0.98\pm0.77$ for a Gaussian whose $c(4)$ is $1.43$) --- and $B=1000$ does not change this: the information is lost in the convolution, not the sample size. And the sign of the bias depends on the noise: homoskedastic noise pulls the plug-in towards the Gaussian value, heteroskedastic noise of the real-data pattern (four times the variance on the worst fifth of blocks) pushes it \emph{above} it, $1.72$--$1.90$ for a Gaussian law whose true value is $1.43$.

\begin{table}[htbp]
\centering\small
\begin{tabular}{rrrrrr}
\toprule
$\Lambda$ & $c$ Gauss & naive & studentised & SIMEX & MIX \\
\midrule
2 & 1.17 & 1.34 & 1.12 & 1.36 & 0.75 \\
4 & 1.43 & 1.95 & 1.25 & 1.93 & 0.13 \\
8 & 1.78 & 2.71 & 1.25 & 2.41 & 0.11 \\
20 & 2.47 & 4.69 & 1.75 & 3.47 & 0.15 \\
\bottomrule
\end{tabular}
\caption{Experiment 2, panel C. Medians across Experiment 1's $71$ rules, $b=60$, $B=124$, each block's noise variance taken as the Newey--West long-run variance of its own mean. ``Studentised'' is the plug-in on block scores divided by their own within-block standard deviation. The mixture fits are degenerate, as panel A predicts at this ratio.}
\end{table}

On the real series the diagnostics decide what panel A could only pose. A block's noise variance correlates $-0.26$ with its score and $+0.50$ with the score's magnitude: the crisis blocks that form the lower tail are also the noisiest. The studentised scores have a median excess kurtosis of $0.08$ against $1.92$ raw, and their plug-in sits at or below the Gaussian closed form at every level. The whole excess of Experiment 1's $c(\Lambda)$ over the Gaussian value is, on this series, the volatility clustering of the score noise. This does not change what the affordability table says for the statistic it is about --- the tail of raw block means pays $c^2\approx0.9\Lambda$ at $b=60$ --- but it changes what the number means: a property of the estimator on this data, not of the market's block-risk law, which remains unmeasured.

\subsection{Experiment 3: twenty assets, seven block lengths}
\label{app:exp3}
Twenty daily total-return series from 1996/1999 to 2026 --- SPY, QQQ, the nine SPDR sectors, five Vanguard equity index funds and four bond funds, each from its own endpoint, unspliced --- with the price-only part of Experiment 1's library plus its VIX filters, $47$ rules per asset, the same for every asset. The bond funds' NAVs are matrix-priced, so their daily returns are smoothed; they are kept in panels A--B with that caveat and left out of C.

\textbf{Panel A: every asset at $b=60$.} Raw, the candidate-median $\hat c(4)$ exceeds the Gaussian $1.43$ by more than $0.1$ on nineteen of the twenty assets (medians $1.43$--$2.18$, buy-and-hold $1.48$--$2.74$); studentised by the within-block standard deviation it is within $0.1$ of its reference --- $1.45$, the tail of $t_{59}$, what the studentised plug-in returns on i.i.d.\ Gaussian data --- or below it on every asset, and the pooled excess kurtosis goes from $1.1$--$10.8$ to $-0.57$--$0.30$. Experiment 2's finding is a property of daily data across sectors, regions, styles and bond funds, not of one index.

\begin{table}[htbp]
\centering\small
\begin{tabular}{rrrrrrrrr}
\toprule
$b$ & blocks & $\hat c$ raw & $\hat c$ stud. & $\hat c$ ref. & kurt. raw & kurt. stud. & VR & floor \\
\midrule
5 & 28531 & 2.11 & 1.46 & 1.86 & 7.53 & 2.88 & 0.95 & 0.92 \\
10 & 14263 & 2.08 & 1.39 & 1.59 & 8.10 & 0.35 & 0.93 & 0.92 \\
21 & 6790 & 1.93 & 1.32 & 1.48 & 5.03 & $-$0.12 & 0.89 & 0.92 \\
42 & 3387 & 1.82 & 1.31 & 1.46 & 3.89 & $-$0.22 & 0.88 & 0.92 \\
63 & 2256 & 1.78 & 1.30 & 1.45 & 2.41 & $-$0.32 & 0.91 & 0.92 \\
126 & 1127 & 1.63 & 1.35 & 1.44 & 1.41 & $-$0.32 & 0.87 & 0.92 \\
252 & 561 & 1.50 & 1.38 & 1.43 & 0.52 & $-$0.34 & 0.88 & 0.92 \\
\bottomrule
\end{tabular}
\caption{Experiment 3, panel B: the block-length sweep, all $940$ candidates pooled. ``Raw'' and ``stud.'' are candidate medians of the plug-in $\hat c(4)$ on block means and on studentised block means; ``ref.'' is the studentised plug-in on i.i.d.\ Gaussian data at that $b$ (the tail of $t_{b-1}$); VR is the median of $b\,\hat{\mathrm{Var}}(S_b)$ over the daily variance and ``floor'' the median Newey--West(10) long-run variance over the daily variance, so VR $-$ floor is the block-mean variance beyond ten-day dependence.}
\end{table}

\textbf{Panel B: the sweep.} The raw excess over the Gaussian value decays as $b^{-0.53}$ and is gone at one year; the studentised plug-in sits at or below its reference at every $b$. The variance ratio is below one and flat and the long-run-variance floor matches it at every $b$: on these rules the block-mean variance beyond ten-day dependence is indistinguishable from zero from a week to a year --- the block-signal variance $s^2$ of Proposition~\ref{prop:eiv} is below what thirty years resolve, as Axiom~\ref{ax:snr} predicts for a daily timing rule on one asset. It also corrects Experiment 1's ratio: the within-block Newey--West floor used there is biased down, so $6.6$ is a lower bound.

\textbf{Panel C: P1 on the pooled library.} The $16\times47=752$ equity candidates on the common window from 1999, selected on an expanding window from eight years, re-selected yearly, $19$ out-of-sample years: for the single best candidate, mean selection gives an out-of-sample Sharpe of $0.51$ against $0.12$--$0.39$ for the five robust criteria ($t$ against mean $-0.4$ to $-1.5$); for the equal-weighted top ten, $0.40$ against $0.45$--$0.53$ ($|t|\le 0.5$). Nothing clears $|t|=2$, and nothing of the size P1 predicts could: the standard error of a Sharpe difference over nineteen yearly windows is $0.25$--$0.3$, and sixteen correlated assets times forty-seven correlated rules share one regime path, so by Proposition~\ref{prop:breadth} the library has one asset's worth of regime draws. The cross-sectional library remains the test.

\textbf{Panel D: Experiment 1's invariance test, one asset at a time.} Remark~\ref{rem:lamlower}'s statistic and null, per asset, on three declarations: terciles of the common term spread, terciles of the asset's own realised volatility, and their $3\times3$ product. At the uncorrected level the term spread declared alone is contradicted on fourteen of the sixteen equity series and all four bond funds, the pair removing or lowering the defect at Proposition~\ref{prop:representation}'s price ($\Lambda(\psi)$ $1.3$--$1.5 \to 1.6$--$2.0$); at the corrected thresholds of Proposition~\ref{prop:defectlevel} --- the table that is evidence --- the positives fall to two equity series in sixteen against the $2.4$ false the forty-eight equity tests predict at $5\%$, plus the two matrix-priced bond funds whose NAV staleness panels A--B already price as measurement rather than mechanism (their $\hat r$, $10$--$12$, is the largest in the panel).

\textbf{Panel E: P4 on our own library.} The one library never filtered by publication --- the same $47$ rules on the sixteen equity series --- supplies what blocked Experiment 7: an unselected control. Each criterion (mean, $\mathrm{CVaR}_{1/4}$, $\mathrm{CVaR}_{1/12}$) selects on the first half of the blocks, its selected rule's in-minus-out-of-sample value read \emph{in that criterion's own units}, net of the library's rule-average gap --- the $N=1$ control that absorbs Definition~\ref{def:reflexivity}'s decay --- so the ratio of the tail criterion's excess degradation to the mean's is Corollary~\ref{cor:budgetrobust}'s charge made observable, predicted at $\hat c(\Lambda')$ of the same training blocks. Pooled, the ratio lands on the plug-in at both levels --- $1.97$ on $\hat c(4)=1.82$, $2.92$ on $\hat c(12)=2.82$, moving between the levels by $+0.95$ against a predicted $+1.00$ --- where the same machinery on Gaussian blocks at the libraries' own covariance returns $0.90$ and $1.41$; against the joint null that keeps the one regime path all sixteen share, the measured ratios sit at its $4.5\%$ and $8.0\%$ points, and the cross-library ordering is beyond its power altogether (sixteen libraries on one regime path are not sixteen draws, Proposition~\ref{prop:breadth}).

\subsection{Experiment 4: the effective sample, measured}
\label{app:exp4}
Experiment 3's twenty series and forty-seven rules, nothing fitted: every number is a variance ratio.

\textbf{The mean term.} The Bartlett long-run-variance ratio $\hat\upsilon(W)$ of each rule's PnL --- the empirical counterpart of \eqref{eq:vcontig} --- has a median over assets of $0.91$ at a month, $0.86$ at a quarter and at a year, and $0.76$ at three years (at a year, medians across rules $0.72$--$1.00$ on fifteen of the sixteen equity series, the non-synchronously priced EM fund at $1.38$; $3.3$ on the high-yield fund whose matrix-priced NAV is autocorrelated by construction). The block-level charge $n/\ell^\ast$ with $\ell^\ast=60$ would need $\hat\upsilon=60$.

\textbf{The capacity term.} OLS of the $H$-day forward excess return on four persistent features: the Newey--West over HC0 variance ratio of the coefficients, the effective-sample ratio $n/n_{\mathrm{fit}}$, has medians $0.92,3.7,13.9,35.7$ at $H=1,5,21,63$ --- it tracks $H$ at $0.6$--$0.7$ of it and never approaches $60$.

\textbf{The three terms side by side.} With $\rho^2=0.01$ the median asset has $n=6830$, $n_{\mathrm{eff}}\approx8000$ and a tail-term sample $n_{\mathrm{eff}}/\hat c^2\approx2400$ against $n/60=115$ under the block-level charge: budgets of $68$ nats for a fit, $80$ for a search on the mean and $24$ for a search on the quarter-tail, where the earlier count gave $1.1$ for all three.

\textbf{The box.} Both ends of $\upsilon$ are Axiom~\ref{ax:snr}'s ceiling times a coherence time (Remark~\ref{rem:upsbox}): at the declared $\rho^2=0.01$ the upper end is $2.20$ for every rule, and only the high-yield fund's $3.3$ breaks it; below $H=1$ the one admissible road is the predictability term $\pi_\mu$, and a majority of the forty-seven rules take it on fifteen of the twenty series.

\textbf{Panel A2, what the shortfall is made of.} The first-order autocorrelation of the daily excess return is the same fact with no estimator in between: $-0.082$ on the S\&P 500 at $t=-7.2$, negative on sixteen of the twenty series and at $|t|>2$ on thirteen, median magnitude $0.060$ --- six tenths of the declared ceiling read as a per-period Sharpe ratio by Lemma~\ref{lem:sharpe}; the four positive ones are the three stale-priced bond funds and a non-synchronously traded emerging-market fund. The median rule's $0.86$ asks for $\rho\tau_\mu=0.072$, a predictable component of $0.72$ periods at the declared $\rho$, inside the box a one-period component supplies, $0.80$. Under Axiom~\ref{ax:invariance} as stated (\S\ref{sec:setting}) this is an edge Axiom~\ref{ax:snr} budgets for rather than a measurement against an axiom. A ledger that means to stay conservative should still charge $\max(H,\hat\upsilon)$; Experiment 5 declared $\upsilon=2.19$ rather than measuring it, and its verdict survives the substitution.

\subsection{Experiment 5: the five stages walked once}
\label{app:exp5}
No theorem is under test: this experiment checks that S1--S5 of Theorem~\ref{thm:canonical} can be walked in order on committed data with the constants declared first, and that the budget of \S\ref{sec:budget} comes out as a ledger with a line per stage. What it does test are the declarations S1 makes --- \S\ref{sec:empirical}'s table reads three of them off this ledger. Experiment 1's series and its $71$ rules plus one fitted candidate, in three eras fixed before anything is computed: training (1997 to end-2016, $n=5034$ days), an embargo in which nothing is read, and a held-out era ($1897$ days to August 2026) read once, by the audit of S4. It is not a test of P1.

\textbf{S1.} Declared: $\Lambda=4$, the criterion run at the level $1/12$ of Remark~\ref{rem:noiselevel}, $\ell=60$ (declared of the volatility coordinate, above Experiment 1 panel E's measured $40$), $\rho^2=0.01$, $\rho_{\mathrm{blk}}=\rho$, $H=1$, $\tau_\mu=1$. Then $\upsilon=2.19$ (the measured Bartlett ratio $0.80$ is printed and not used, since using it would be fitting a declared constant; it sits exactly at the lower end $H - 2\rho\tau_\mu = 0.80$ of Remark~\ref{rem:upsbox}'s box), $n_{\mathrm{eff}}=2299$, a fit budget $\rho^2 n/H = 50.3$ nats and a search budget $\rho^2 n_{\mathrm{eff}} = 23.0$. Proposition~\ref{prop:embargo} (i) with $M=3$, $\beta_0=1$, $\kappa_{\mathrm{emb}}=0.1$ asks $g\ge523$ trading days, the embargo taken; (ii) on the AR(1) chain would ask $112$, printed and not used. Both run in the exponential instance Experiment 1's panel J contradicts on this coordinate; what the gap is under (ii)'s general form is a declared predictability $\kappa_{\mathrm{emb}}\rho/\rho_{\mathrm{blk}} = 0.10$, which panel J bounds from below --- not contradicted at $523$, contradicted at $112$. Panel J's coherence time of the same coordinate's rank is $276 = 2\cdot138 - 1$ against the declared $\ell=60$, and the ledger at $\ell=138$ is a row of the sensitivity table below.

\textbf{S2.} Ridge on $P=200$ features (six base features, the rest random Fourier features of them), the penalty solved so that $d_{\mathrm{eff}}=25.2$ at $\varkappa_{\mathrm{fit}}=1/2$; deployed as the average of $m=20$ refits on random halves of the $83$ training blocks (Proposition~\ref{prop:ensemble}), its block scores out-of-fold. It enters the library as candidate $72$ and ranks $27$th; the fit spends $25.2$ of $50.3$ nats, and by label perturbation the twenty-member average has $d_{\mathrm{eff}} = 15.2\pm0.4$ against one member's $15.4\pm0.2$, Proposition~\ref{prop:ensemble}'s equality to one per cent.

\textbf{S3.} $b=60$, $B=83$, the first day of each block purged, the tail mean over the worst six blocks; the pick is a momentum-and-reversal conjunction (training Sharpe $0.63$), and the mean and $\mathrm{CVaR}_{1/4}$ pick the same rule.

\textbf{S4.} $N=72$ charged against the $23.0$ nats:

\begin{center}
\small
\begin{tabular}{lrrrr}
\toprule
price & $c$ & $2c^2\log N$ & left & affordable $N$ \\
\midrule
the mean & $1.00$ & $8.6$ & $+14.4$ & $98{,}000$ \\
Gaussian $c(4)$ & $1.43$ & $17.4$ & $+5.6$ & $284$ \\
Gaussian $c(12)$ & $2.05$ & $36.0$ & $-13.0$ & $15$ \\
plug-in $\hat c(4)$ & $1.87$ & $30.0$ & $-7.0$ & $27$ \\
plug-in $\hat c(12)$ & $3.02$ & $77.8$ & $-54.8$ & $4$ \\
\bottomrule
\end{tabular}
\end{center}

The comparison is affordable on the mean and at $\Lambda=4$ under the Gaussian law, and not at the level the criterion was actually run at: the library affordable there is fifteen candidates at the Gaussian price and four at the plug-in's. One audit of Proposition~\ref{prop:audit} ($2\log 1 = 0$ nats of the held-out window's $8.7$): the search re-run on all $124$ blocks picks a different rule whose held-out Sharpe is $0.81$, the truncated search's pick realises $-0.04$ on the same window (library median $0.57$, buy-and-hold $0.65$, the ridge ensemble $0.60$); $A_{\mathrm{full}} - A_{\mathrm{trunc}} = +0.86$ annual Sharpe, reported and not fed back. \textbf{S5.} The position is $f^\star(\delta\hat\mu - \lambda(\Lambda)\,\mathrm{sd}_{\mathrm{blk}} - c')/\sigma^2$, $\hat\mu$ the training mean of the unit-sd PnL deflated by the max-of-$N$ haircut $\sqrt{2\log N/n_{\mathrm{eff}}}$, $c'$ the turnover at two basis points, $f^\star = 1/(1+\varkappa)$; the dispersion penalty under its three readings (Proposition~\ref{prop:kelly}): (1) the total block-score sd, the committed reading; (2) net of the within-block Newey--West noise floor; (3) the exact block $\mathrm{CVaR}_{1/4}$ in place of $\hat\mu - \lambda\,\mathrm{sd}_{\mathrm{blk}}$. Robust edge in daily-sd units; a position exists only where it is positive.

\begin{center}
\small
\begin{tabular}{lrrrrrr}
\toprule
candidate & $\hat\mu$ & haircut & $\mathrm{sd}_{\mathrm{blk}}$ total / net & (1) total & (2) net & (3) $\mathrm{CVaR}$ \\
\midrule
the pick, $N=72$ & $0.0395$ & $0.0610$ & $0.0962$ / $0$ & $-0.147$ & $-0.025$ & $-0.138$ \\
the pick declared alone & $0.0395$ & $0$ & $0.0962$ / $0$ & $-0.086$ & $+0.036$ & $-0.077$ \\
buy-and-hold, alone & $0.0166$ & $0$ & $0.1168$ / $0.0600$ & $-0.132$ & $-0.060$ & $-0.140$ \\
ridge ensemble, alone & $0.0218$ & $0$ & $0.1009$ / $0$ & $-0.111$ & $+0.017$ & $-0.112$ \\
\bottomrule
\end{tabular}
\end{center}

For the pick ($c'=0.0035$ at a turnover of $0.117$ per day) the haircut alone is the reason under every reading; declared alone it trades only under reading (2), and under (1) and (3) its edge survives to $\Lambda=1.3$ and $1.2$. Buy-and-hold has no position under any reading and an edge only to $\Lambda\approx1.1$; the ridge ensemble is the same case as the pick. Figure~\ref{fig:lambdacurve} is the curve Remark~\ref{rem:lamlower} asks for: every edge crosses zero between $1.1$ and $1.3$ --- below $\hat\Lambda_{\mathrm{split}} = 1.36$ (inside its null's $1.50$), well below Proposition~\ref{prop:occfloor}'s floor $\Lambda_{0.05} = 2.24$ for the held-out window at $p=0.1$, and \emph{at} the floor for the level the curve is drawn at, $p=1/\Lambda$, whose fixed point is $1.17$. Under the committed reading no candidate in the library has a position at $\Lambda=4$.

\begin{figure}[htbp]
\centering
\includegraphics[width=0.68\linewidth]{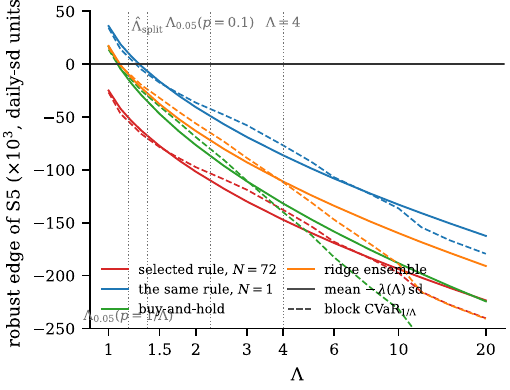}
\caption{Remark~\ref{rem:lamlower}'s curve on Experiment 5's four candidates: the robust edge of S5 in daily-sd units against $\Lambda$, solid under reading (1), dashed under reading (3). Marked: $\hat\Lambda_{\mathrm{split}} = 1.36$, Proposition~\ref{prop:occfloor}'s floor for the held-out window at $p=0.1$ ($2.24$) and at $p=1/\Lambda$ (the fixed point $1.17$), and the declared $\Lambda=4$. Every candidate's edge is gone at the self-consistent floor.}
\label{fig:lambdacurve}
\end{figure}

What the walk shows: the fit is cheap and the search is not; a seventy-two-way comparison at the level the tail criterion wants is over budget before the tail is measured, and the haircut the budget then imposes is larger than the edge of the rule it selects.

At the dependence the library measures (implied $\hat\rho_{\mathrm{blk}}^2$ median $0.0009$, an invariance ratio $\kappa_h \approx 0.3$ against the declared $1$, $\upsilon=1.11$) the search budget is $45.5$ nats and the Gaussian charge at $1/12$ fits with $9.5$ to spare while the plug-in's does not ($-32$); the verdict on the search moves to a question of which $c$, and the position does not move.

Corollary~\ref{cor:nonempty}'s two bounds contain the whole negative result: the haircut alone allows $N<4.4$ proposals at this edge and effective sample, against the $72$ compared, so no block length produces a position --- the same ceiling is $60$ at the measured $\hat\upsilon$, $58.7$ at the lower end of the box the declared $\tau_\mu$ fixes, and $2.4$ at the $\ell=138$ panel J reads off the declared coordinate's rank, so the conservative declaration is worth a factor of fourteen in it and none of the verdict; declared alone the rule would need $b>693$ days and $n\ge\Lambda b$, about eleven years, and $b>1452$ days, seventy years, at the level the selection is run at.

\subsection{Experiment 6: what out-of-sample degradation scales with}
\label{app:exp6}
The first trial of a prediction of \S\ref{sec:empirical}. P2 says the in-sample minus out-of-sample Sharpe gap scales as $\sqrt{2\log N\,\upsilon/n}$ for a search and $\sqrt{d_{\mathrm{eff}} H/n}$ for a fit, with slopes of order one, and that $\ell$ adds no explanatory power beyond $H$ and $\upsilon$ --- the content of Lemma~\ref{lem:effsample}, and the charge an earlier version of this paper got wrong. Its regressors already vary in the committed data ($N$ and $n$ within an asset, $H$ by a factor of sixty-three, $\upsilon$ across asset classes), so it can be tried with nothing new fitted: Experiment 3's twenty series and rules, Experiment 4's four features, standard errors clustered by asset. Each term is read against the closed form the prediction names --- for the search, Theorem~\ref{thm:search} (i)'s \emph{bound} --- and against the constant the theorems actually predict: the expected maximum of $N$ null Gaussians with the library's own training-window correlation, and the null-signal optimism of the same fit on iid Gaussian returns. ``Of order one'' is a statement about the second column; independent null candidates already realise only $0.49$ to $0.80$ of $\sqrt{2\log N}$ at $N=2$ to $47$.

\textbf{Panel A, the search term.} For each asset and each year from the ninth, the best rule by mean on the expanding window is selected from $N\in\{2,5,12,47\}$ and its training Sharpe minus next-year Sharpe recorded in daily units ($6{,}463$ points, plus an $N=1$ control: every rule's own gap, mean $-0.006$). The mean gap rises with $N$: $-0.000$, $0.006$, $0.010$, $0.015$, against the bound's $0.019$, $0.029$, $0.036$, $0.045$ --- a third of the bound at $N=47$ --- and $0.92$, $0.94$, $0.85$, $0.83$ of the expected maximum under the library's correlation net of the control. On the sixteen equity series the slope is $0.54\pm0.08$ on the bound and $0.79\pm0.12$ on the prediction; the competing regressor $\sqrt{2\log N\,\ell_a/n}$, with $\ell_a$ the coherence time of the asset's own realised volatility, adds $\Delta R^2 = 0.001$ ($t=0.8$). With the four bond funds the slope on the prediction falls to $0.23\pm0.24$: their $\hat\upsilon$ of $0.8$--$3.3$ is matrix-pricing smoothness, as present out of sample as in, and inflates no gap. Since the equities all sit near $\hat\upsilon=0.86$, the panel tests the $\sqrt{2\log N/n}$ scaling at one $\upsilon$ and not the $\upsilon$ factor --- P2's minimal-detectable-effect clause, met in this data.

\textbf{Panel B, the fit term.} For each equity asset, $H\in\{1,5,21,63\}$ and $d\in\{1,2,4,8,16,32\}$ (Experiment 4's features, then random cosine features of them), OLS of the $H$-day forward return fitted on the first half of the series, the Sharpe gap of its $H$-day PnL between halves in the label's own units: $384$ points. The gap grows with both $d$ and $H$ ($0.031$, $0.051$, $0.121$, $0.179$ at $d=4$ over the four $H$, against $\sqrt{4H/n}$ of $0.034$, $0.075$, $0.153$, $0.266$) and in daily units is flat in $H$. The regression on $\sqrt{dH/n}$ has slope $0.66\pm0.06$, $R^2=0.42$; on the null-signal constant (the same fit on iid Gaussian returns, forty draws per cell, $0.50$ to $1.58$ times $\sqrt{dH/n}$) the slope is $1.08\pm0.12$; $\ell$ adds $\Delta R^2=0.008$ ($t=1.2$), and the earlier charge $\sqrt{d\max(\ell^\ast,H)/n}$ with $\ell^\ast=60$, flat in $H$ below sixty days, explains $0.20$ against $0.42$. Here $H$ moves the predictor by a factor of eight within an asset and $\ell_a$ not at all, so the two are separable.

What the experiment shows. Both terms scale with what P2 says they scale with and neither with $\ell$, so P2 is not refuted where it can be read and the block-count charge is, on the same table; the $\upsilon$ factor is untested, and Appendix~\ref{app:predictions} says it cannot be tested here. It is one library on twenty correlated series; the cross-sectional test of \S\ref{sec:limits} remains the one that would carry weight.

\subsection{Experiment 7: the cross-sectional library}
\label{app:exp7}
The substrate the previous six did without. Section~\ref{sec:limits} turns on a cross-section it does not have, because by Proposition~\ref{prop:breadth} a library whose candidates share the macro path holds one asset's worth of regime draws however many candidates it has. Two libraries are read over the same window --- monthly, 1965--2024, annual blocks, forty-eight out-of-sample years --- chosen to differ in effective breadth and in as little else as the record allows: the $212$ published predictors of \cite{chen2022open} as monthly long-short portfolio returns, of which $133$ are complete over the window, and $82$ long-only portfolios from \cite{frenchdata} (49 industries, 25 size--book-to-market, 10 momentum deciles) in excess of the bill. A long-short spread holds the market on both sides; a long-only portfolio holds it once.

\textbf{Panel A, breadth.} The average pairwise correlation is $0.052$ across the long-short library and $0.660$ across the long-only one, so $N_a^{\mathrm{eff}}$ is $21.3$ against $1.6$, stable to within $10\%$ over three windows. The ceiling ratio Proposition~\ref{prop:breadth} predicts from it --- $H=1$, $2\ell-1=7$ from Remark~\ref{rem:whichell}'s monthly list, both regime dispersions measured from the block variance ratio, every term in the units of one instrument's variance --- is $16.4$ for the long-short library's equal-weighted average against the $29$ its breadth alone would buy ($11.0$ by the model-free ratio of block-mean variances): averaging cut the regime term tenfold ($0.37$ to $0.037$) where it cut the noise term twenty-onefold, so at the average the two are of one size and the regime term halves what breadth delivers. Inverse-variance weights raise the ceiling ratio to $37$, Corollary~\ref{cor:breadthfloor}'s point about weights, and the corollary's common regime variance --- what no weighting summing to one removes --- is at most $0.003$ in the same units, a twentieth of a single signal's $0.053$. Read as shares, the published cross-section's co-movement in stress shows as a regime share of the average's block-mean variance ($44\%$) above a single signal's ($27\%$), not as a regime term that breadth failed to reduce.

\textbf{Panel B, the plug-in.} At $b=12$ months the long-only library gives median $\hat c(4)=1.88$ against the Gaussian $1.43$, with $c^2/\Lambda$ of $0.94,0.88,0.80,0.53$ at $\Lambda=2,4,8,20$: Experiments 1 and 3's $c^2\approx0.9\Lambda$, reproduced on a different substrate, a different frequency and a block length twelve times longer. The long-short library gives $1.46$ at $\Lambda=4$ --- the Gaussian value --- and $c^2/\Lambda$ of $0.63,0.53,0.42,0.26$. Studentised, both collapse to the Gaussian ($1.39$ and $1.40$ at $\Lambda=4$), as in Experiment 2. So \S\ref{sec:price}'s dear branch belongs to block scores that carry the market, not to daily data, and the two libraries stand in a $\hat c^2$ ratio of $2.1$ at $\Lambda=20$, which is the spread P4 says it needs.

\textbf{Panel C, P1.} Experiment 1 panel D's protocol --- expanding window from twelve years, re-selected yearly; criteria mean, $\mathrm{CVaR}_{1/2}$, $\mathrm{CVaR}_{1/4}$, $\mathrm{CVaR}_{1/8}$, $\mathrm{CVaR}_{1/12}$ and mean $-\lambda(4)$ sd --- with each candidate standardised by its training window's sd, reported on two layers: the selected candidate's out-of-sample Sharpe, and the pooled lower quarter-tail of its yearly block scores. On Sharpe, against mean selection: $-0.09\pm0.15$ at $1/2$, $-0.46$ at $1/4$ ($t=-2.4$), $-0.73$ at $1/8$ ($t=-3.3$) on the long-short library; on the long-only library $-0.01$ at $1/2$ and $-0.19$ to $-0.22$ at the narrower levels, none past $|t|=1.2$ (its industry sub-library, $N=47$, is the one cut the tail favours on Sharpe: $+0.15\pm0.10$ at $1/2$, within $0.08$ elsewhere). On the deployed layer the long-only library's selected block tail moves by less than $0.10$ either way ($+0.13,+0.05,+0.03,+0.07$ at the four levels on the industry cut) and the long-short library's moves by less than $0.08$ either way, its worst quarter of held-out years no worse than $-0.01$ under any criterion. With membership held fixed and the first out-of-sample year moved, the $1/8$ deficit runs $-0.73,-0.55,-0.90,-0.73$ across four starts, and moves to $-0.56$ at $b=6$ months and $-0.19$ at $b=24$: the signature of the block count, read in Appendix~\ref{app:predictions}. Across terciles of $\mathrm{sd}_{\mathrm{blk}}$ the advantage does not increase on any library.

\textbf{Panel D, P4 blocked.} One split, train 1965--2000 and hold out 2001--2024, so that each criterion's own value has blocks on both sides ($24$ out, six in the quarter-tail, two in the twelfth --- the last flagged and not read). The degradation does not track $\hat c$ and runs against it: the long-short library has the smaller $\hat c$ and a mean-row degradation of $0.269$ out of $0.716$ against the long-only library's $0.052$ out of $0.194$. Panel E says why. This is Definition~\ref{def:reflexivity}'s decay on the held-out era, not Theorem~\ref{thm:search}'s optimism, and on a library of published signals the two do not separate.

\textbf{Panel E, the real-time library, and the multiplier.} Every candidate above is in the library because someone published it, and what published it is an in-sample mean-based $t$-statistic, which favours the mean criterion by construction. Restricting the library at each year to the signals already published then gives thirty out-of-sample years from 1995 over a library growing from $21$ candidates to $133$: the deficit survives, $-0.21$ at $1/4$, $-0.33$ at $1/8$ ($t=-2.0$), $-0.35$ at $1/12$ ($t=-2.2$), while on the deployed layer every criterion lands within $0.05$ of the others. The same metadata gives Definition~\ref{def:reflexivity} its first number here: in training-sd units the mean long-short return is $0.194$ inside the original sample, $0.148$ between sample end and publication ($0.76$), and $0.086$ after publication ($0.44$).

\textbf{Panel F, the field's corrections on the same candidates.} The deflated Sharpe ratio \cite{bailey2014deflated}, the probability of backtest overfitting by CSCV at $S=16$ \cite{bailey2017pbo} and the studentised SPA test \cite{hansen2005spa,politis1994stationary}, each run as prescribed, beside Theorem~\ref{thm:search} (i)'s haircut $\sqrt{2\log N/n_{\mathrm{eff}}}$ at the measured block-variance clock. Every verdict agrees in direction --- the long-short library's winner survives the whole stack, the long-only library is marginal on every member of it, the ordering panel C found out of sample (none of this validates the levels, which are published in-sample means gross of costs) --- and the two deflations land within a factor of each other, the haircut at $0.52$ and $1.34$ times DSR's expected maximum on the two libraries. The residual decomposes into exactly what the stack has no line for: the clock (every correction counts months, and the long-short winner, a dividend-season signal, carries $\hat\upsilon=2.14$, its $z$ falling from $7.7$ to $5.2$ when its own clock is corrected), the tail layer panel B says is where the long-only library is dear, and a budget declared before the search rather than computed after it.

\textbf{Panel G, P3's ladder} (added 2026-09-01; the design was fixed before its results were seen, and it draws from its own RNG stream). Three ladders share one feature family --- eight trailing own-return means, then random cosine features to $d = 64$, OLS --- trained on 1965--1994 and read once on 1995--2024: the OSAP long-short cross-section, the French long-only cross-section, and pure timing of the French pooled average. The OSAP arm's out-of-sample Sharpe rises to a plateau it holds to the end of the grid ($0.69$--$0.71$ from $d = 12$, argmax at $48$); the French cross-section peaks at $d = 6$ ($0.39$) and is declining by $d = 32$; the two peaks sit $8\times$ apart against the $13\times$ panel A predicts --- P3's decade of displacement, read to a factor of about two, the curves being flat within $0.02$ over neighbouring grid points. The timing arm never clears $0.19$ against a standard error of $0.18$, so its ceiling is read only as an upper bound --- the small-budget end of the same prediction.

What the experiment shows. Breadth in the cross-section is real and large, and buys much less than its size, because the published cross-section's own average carries a regime term the proposition needed to be zero. The tail is dear where block scores carry the market and Gaussian where they do not. P1 gets its first test with power, and at the one level whose tail can be read it buys nothing; where the criteria do separate, they separate by an amount that moves with the block count and not with the window, which is Proposition~\ref{prop:eiv}'s bias and not a verdict on Theorem~\ref{thm:cvar}. And a library of published signals cannot price a search, because Definition~\ref{def:reflexivity} has already spent the difference. And beside the field's own corrections on the same candidates, the budget's haircut agrees in scale and in verdict; what it adds is a clock, a tail layer and an ex-ante sign (panel F). The complexity displacement P3 predicts is there, at the order predicted, where there is breadth to see it (panel G).

\subsection{The five predictions: power, and what happened}
\label{app:predictions}
Section~\ref{sec:empirical} states each prediction with what would refute it and a one-line verdict, and the sections above carry the evidence panel by panel. What is left, and belongs nowhere else, is the \emph{power}: the effect each test could have resolved, without which a null is not a finding.

\textbf{P1: CVaR selection against mean selection.} The advantage lives in the stress blocks the held-out era contains, so power is set by the number of independent regime draws, not of candidates (Proposition~\ref{prop:breadth}): on single-asset timing libraries the standard error of a Sharpe difference is $0.1$ (Experiment 1) to $0.3$ (Experiment 3) against a predicted margin of $\lambda(\Lambda)\,\mathrm{sd}_{\mathrm{blk}}$, so those nulls are power statements and nothing more. The cross-section supplies the power: at $\Lambda=2$, the one level whose tail is never thin, the standard error is $0.15$ in annual Sharpe against a predicted $\lambda(2)\,\mathrm{sd}_{\mathrm{blk}}=0.93$, and the two criteria are indistinguishable on both libraries --- the first null on this prediction with power behind it rather than in front of it, bounding the advantage at a small fraction of what P1 asks for (Experiment 7, panel C). At the narrower levels the criterion \emph{loses}, and the deficit moves with the block length in either direction and not with the window: the signature of how many blocks the tail averages --- at $b=12$ months the early windows read one block, a minimum and not a tail mean --- which is Remark~\ref{rem:noiselevel}'s narrowed level colliding with Proposition~\ref{prop:occfloor}'s floor, and the side condition that remark now carries. (Moving the \emph{data} start instead, $93$ candidates from $1953$ or $183$ from $1985$, flips the sign of the narrow-level comparison on its own, a fact about the ragged availability of published signals.) P1's second clause fares worse: across terciles of the candidates' block dispersion the advantage does not increase on any library.

\textbf{P2: what out-of-sample degradation scales with.} The two slopes are identified only if $H$ and $\upsilon$ vary across the strategies by more than their measurement error; the square-root form is the weak-signal limit, stated for the regime this paper says markets are in, $\rho^2 n_{\mathrm{eff}}/d$ of order one, and read against the \emph{derived} constant --- the expected maximum of $N$ null candidates under the library's own correlation --- not the bound. Both terms clear that bar in Experiment 6 and are not refuted; $\ell$, the alternative, is. The $\upsilon$ factor does not clear it and cannot: Remark~\ref{rem:upsbox} confines $\upsilon$ to a factor of $1.7$ to $2.8$ on daily labels, against the ten- to sixtyfold spread a slope would need, so no admissible series identifies one and the falsifiable content is the box --- inside which the equities sit at $0.86$, while the bond funds' $2$--$3$ sit above its ceiling on stale-NAV smoothing that inflates no gap.

\textbf{P3: the cross-sectional complexity ceiling.} The prediction is a ratio, so $N_a^{\mathrm{eff}}$ must itself be measured --- from the spectrum of the cross-sectional residual correlation --- and the peaks resolved on a log axis to better than that ratio; a cross-section whose residuals load on a few factors has $N_a^{\mathrm{eff}}$ far below its nominal breadth and a correspondingly smaller, possibly invisible, displacement. Tried (Experiment 7, panels A and G): the input is measured rather than assumed and is smaller than the nominal breadth twice over, and the fitted ladders return panel G's verdict --- $8\times$ against the predicted $13\times$, the timing half read only as a bound. Not refuted where it can be read.

\textbf{P4: the tail's price against the degradation ratio.} At $B\approx100$ blocks the plug-in carries a standard error of about $0.2$ at $\Lambda=4$ (Experiment 2) and the score-noise bias of Proposition~\ref{prop:eiv} (iii), and the held-out era must hold enough blocks for a tail mean, which the single-asset libraries of Experiments 1 and 3 cannot supply; $\hat c^2$ must differ by a factor of two or more across the libraries compared. The cross-section supplies that spread and then blocks the test for another reason: on a library of published signals Definition~\ref{def:reflexivity}'s decay has already spent the difference, and the degradation runs against $\hat c$ (Experiment 7, panels B and D). Read where no publication filter operates --- our own library, each criterion's degradation net of a no-selection control --- the pooled ratio lands on the plug-in at both levels, at the edge of the joint null's power (Experiment 3, panel E): the level tracks, and the cross-library ordering needs libraries with more than one regime path.

\textbf{P5: budget saturation and the revealed Kelly fraction.} $\varkappa$ is read per pipeline from the worksheet of \S\ref{sec:calibration} --- $d_{\mathrm{eff}}$, $n$, $H$ and the forecast's shrinkage factor --- and without the last the prediction is not testable at all; with it, a revealed fraction is informative only to the extent it is pinned down more tightly than the wide band an interior budget utilisation already implies (Remark~\ref{rem:kappa}). Untried, and the prediction \S\ref{sec:calibration} declines to treat as already confirmed.

\clearpage
\section{Notation, and the symbols that are used twice}
\label{app:notation}

Every symbol is defined where it first appears; this table collects them, and its last part flags the letters the paper reuses.

{\footnotesize
\begin{longtable}{>{\raggedright\arraybackslash}p{0.11\linewidth}>{\raggedright\arraybackslash}p{0.53\linewidth}>{\raggedright\arraybackslash}p{0.24\linewidth}}
\toprule
\textbf{symbol} & \textbf{meaning} & \textbf{where} \\
\midrule
\endhead
\multicolumn{3}{l}{\emph{The constants}} \\
$\Lambda \ge 1$ & recurrence bound, $\|\mathrm{d}\bar\pi^{+}/\mathrm{d}\bar\pi_n\|_\infty \le \Lambda$; a premise, not estimable from the past & \ref{ax:recurrence}, Cor.~\ref{cor:notestimable} \\
$\ell \ge 1$ & coherence time of the slowest declared coordinate (A4 (i)); A4 (ii) is rate-free, $\beta(k) \to 0$ on the joint process, the exponential instance $\beta(k) \le \beta_0 e^{-k/\ell^*}$ serving every computation here, with $\ell^* := \max(\ell, H, L_x)$ the contiguity scale & \ref{ax:persistence} \\
$L_x$ & memory of the declared feature map: how far back a feature reads & \ref{ax:persistence} (ii) \\
$\rho \in (0, 1)$ & signal ceiling, $\mathbb{E}[\mu(X)^2] \le \rho^2 \mathrm{Var}(Y)$; equals the maximal per-period Sharpe ratio & \ref{ax:snr}, Lem.~\ref{lem:sharpe} \\
$\rho_{\mathrm{blk}}$ & regime dispersion of a candidate's edge in Sharpe units, $\mathrm{sd}_\nu(r_h)/\sigma$; asserted with the representation & Lem.~\ref{lem:effsample}, Rem.~\ref{rem:whichell} \\
$\chi_\sigma = \bar\sigma/\underline{\sigma}$ & volatility ratio across regimes; estimable from within-block sd & \ref{ax:snr} \\
$\pi_\mu$, $\tau_\mu$ & predictability term of the effective sample and the coherence, in periods, of the component that produces it; $|\pi_\mu| \le \rho \tau_\mu$; every worksheet here charges $\pi_\mu$ at $0$ and declares $\tau_\mu = 1$ & Lem.~\ref{lem:effsample}, Rem.~\ref{rem:upsbox} \\
$\Delta$, $\delta$, $c(C, h)$ & latent leverage (Thm.~\ref{thm:impossible}); edge decay and cost & \ref{def:reflexivity} \\
$b$, $\Lambda(b)$ & evaluation scale, and the recurrence bound declared at it & \ref{ax:recurrence}, Prop.~\ref{prop:scale} \\
$R_j(h)$ & regime risk averaged over historical block $j$ & Thm.~\ref{thm:cvar} \\
$\kappa$ & invariance ratio: declared $\rho_{\mathrm{blk}} = \kappa \rho$ & S1 of Thm.~\ref{thm:canonical}, Prop.~\ref{prop:kelly}, Cor.~\ref{cor:invbudget} \\
$\vartheta$, $\Lambda_1$ & scale-free protection, and the state-scale reading of an assertion & Prop.~\ref{prop:scale} \\
\multicolumn{3}{l}{\emph{Objects}} \\
$X_t, Y_t, Z_t$ & feature, label (forward return over $H$ periods), latent state & \S\ref{sec:setting} \\
$Q(\cdot \mid z)$ & the invariant mechanism & \ref{ax:invariance} \\
$\bar\pi_n$, $\bar\pi^{+}$, $\bar\pi_{n, j}$ & occupation measures of the history ($n$ periods), the deployment window ($m$ periods) and block $j$; random, functions of the path & \eqref{eq:occupation}, Rem.~\ref{rem:occupation} \\
$\nu$ & stationary law of the latent chain ($\mathrm{Var}_\nu$) & Lem.~\ref{lem:effsample}, Prop.~\ref{prop:occfloor} \\
$h$, $L(a, y)$, $M$ & strategy, loss, loss bound & Def.~\ref{def:strategy} \\
$r_h(z)$, $\bar{R}_n(h)$, $R^{+}(h)$ & regime-conditional risk; its history and deployment averages & \eqref{eq:regimerisk} \\
$\psi$, $\varepsilon(\psi)$, $\Lambda(\psi)$, $R_\psi^{+}(h)$ & coarsening of the latent space; its invariance defect, conditioned on the coarsened path; the recurrence constant of the coarsened path, $\le \Lambda$; the deployed risk given the coarsened path, $\mathbb{E}[R^{+}(h) \mid \psi(Z)]$ & Prop.~\ref{prop:representation} \\
$\varepsilon_0$ & the defect declared of the representation itself; charged at $2 M \varepsilon_0$ & \ref{ax:invariance}, Prop.~\ref{prop:representation} \\
$\Psi$, $C_\psi(h)$ & menu of representations declared at S1, and one member's certificate: Thm.~\ref{thm:cvar}'s bound in the coarsened space at $\Lambda(\psi)$ plus $2 M \varepsilon(\psi)$; selection within the menu is charged $\log|\Psi|$ & Prop.~\ref{prop:repsearch} \\
\multicolumn{3}{l}{\emph{Evaluation}} \\
$b$, $B = \lfloor n/b \rfloor$ & block length and count & Cor.~\ref{cor:blocks} \\
$\hat{r}_j(h)$, $r_h^{(b)}$, $\xi_j$ & block score, its law (block-averaged regime risk plus noise), the noise & \eqref{eq:blockscore}, Thm.~\ref{thm:deployed} \\
$v(b, \ell)$ & regime variance a contiguous block keeps, $(2\ell - 1) - 2\rho(1-\rho^b)\ell^2/b$ with $\rho = 1 - 1/\ell$ the chain's self-transition probability & \eqref{eq:vcontig}, Prop.~\ref{prop:shuffle} \\
$\upsilon$, $n_{\mathrm{eff}} = n/\upsilon$ & variance inflation $H + \rho_{\mathrm{blk}}^2(2\ell - 1) + 2\pi_\mu$ of a backtest mean; the effective sample of the search term & \eqref{eq:neff} \\
$n_{\mathrm{fit}} = n/H$ & effective sample of the capacity term & Thm.~\ref{thm:capacity} \\
$g$, $\kappa_{\mathrm{emb}}$ & purge-plus-embargo gap; the share of signal the embargo bias may consume (the subscript keeps the tolerance apart from the invariance ratio $\kappa$) & Prop.~\ref{prop:embargo} \\
$\mathrm{pred}_h(k)$ & predictability of the regime risk at lag $k$: the largest correlation the observable history has with $r_h(Z_k)$; at most $\rho_\ell^k$ on the redraw chain, bounded below by an observed coordinate's autocorrelation & Prop.~\ref{prop:embargo} (ii) \\
$\alpha = 1/\Lambda$, $u$ & tail level; the $(1 - \alpha)$-quantile of the block law (losses; $q$ in the reward orientation) & Prop.~\ref{prop:tailprice}, Rem.~\ref{rem:orientation} \\
$\lambda(\Lambda)$ & invariance penalty $\Lambda \phi(\Phi^{-1}(1 - 1/\Lambda))$ & Thm.~\ref{thm:invariance} \\
$\varsigma_\Lambda^2$, $c(\Lambda) = \varsigma_\Lambda/s$ & asymptotic variance of the tail plug-in; the tail's price, $c(1) = 1$ and $c(\Lambda) \le \Lambda$ ($c \ge 1$ is a property of the block-risk law's shape at the level, not a general fact) & Prop.~\ref{prop:tailprice} \\
$\tau_\psi$ & the block scores' own coherence time: the integrated autocorrelation time across blocks of $\max(r_j, u)$, by which block dependence inflates $\varsigma_\Lambda^2$; one where measured; declared over thresholds in Prop.~\ref{prop:scorecoh} (i) & Prop.~\ref{prop:tailprice}, Prop.~\ref{prop:scorecoh}, Exp.~1 \\
$\alpha_\psi$ & the block scores' declared decoupling coefficient at one block of separation, replacing term (iii)'s $\beta(b)$; always $\le \beta(b)$ & Prop.~\ref{prop:scorecoh} (ii) \\
$\Delta_N$, $\Delta_{\mathrm{CVaR}}$ & uniform selection error; tail-mean separation of the genuine candidate & Thm.~\ref{thm:deployed}, Cor.~\ref{cor:locfam} \\
$\sigma_\xi^2(r)$, $\varrho$ & score-noise variance given the regime risk; the share of score noise common to all candidates & Prop.~\ref{prop:eiv}, Rem.~\ref{rem:noiselevel} \\
$\mathrm{sd}_{\mathrm{within}}$, $E_N$ & within-block noise sd of the score, the floor of $\mathrm{sd}_{\mathrm{blk}}$ at $\mathrm{sd}_{\mathrm{within}}/\sqrt{b}$; the edge net of cost and haircut & Cor.~\ref{cor:nonempty} \\
$\epsilon_n(k)$ & replace-$k$ sensitivity of an estimator & \eqref{eq:replacek} \\
\multicolumn{3}{l}{\emph{Budget and sizing}} \\
$d$, $d_{\mathrm{eff}}$, $N$ & nominal and effective dimension; candidates evaluated over the system's life & Thms.~\ref{thm:capacity}, \ref{thm:search} \\
$T$, $I(T; \phi)$ & the selection rule, and the mutual information (nats) between the choice and the statistics it was chosen from; $\le \log N$ for a fixed list & Thm.~\ref{thm:search} (ii) \\
$\pi$, $\pi_0$, $\mathrm{KL}(\pi \| \pi_0)$ & the researcher's final choice as a law on the class, the prior fixed before the data, and their divergence --- the search term as a theorem & Thm.~\ref{thm:pacbayes} \\
$\omega(\lambda)$, $C_\ell(a)$ & the regime cumulant per period on the redraw chain (root of the renewal equation) and its Bennett coefficient, $C_\ell(0+) = \ell - 1/2$ & Thm.~\ref{thm:pacbayes} (i) \\
$\varkappa = d/(\rho^2 n_{\mathrm{fit}})$ & capacity-budget utilisation; $1$ is saturation, not the optimum & Cor.~\ref{cor:halfkelly}, Rem.~\ref{rem:kappa} \\
$\tau^2$, $s^2$, $f^\star$ & estimation variance and prior variance of the edge, in Sharpe units; the Kelly fraction $s^2/(s^2 + \tau^2) \le 1/(1 + \varkappa)$ & Prop.~\ref{prop:kelly} \\
$N_a$, $N_a^{\mathrm{eff}}$ & instruments, and effectively independent instruments & Prop.~\ref{prop:breadth} \\
$\hat{\Lambda}_{\mathrm{split}}$ & split-sample density ratio: a constraint on $\Lambda$, not an estimate & Rem.~\ref{rem:lamlower} \\
\bottomrule
\end{longtable}
}

\textbf{Symbols used twice.} These letters carry more than one meaning, always in different statements; the context decides, and the reader editing one formula from memory of another should check.

{\footnotesize
\begin{tabularx}{\linewidth}{>{\raggedright\arraybackslash}p{0.08\linewidth}>{\raggedright\arraybackslash}X>{\raggedright\arraybackslash}X}
\toprule
\textbf{symbol} & \textbf{one meaning} & \textbf{the other} \\
\midrule
$\Psi$ & the menu of representations (Prop.~\ref{prop:repsearch}) & local symbols of Appendix A's proofs: $\Psi_n(\lambda)$ in Thm.~\ref{thm:pacbayes}'s, $\Psi_B(\theta, \eta)$ in Prop.~\ref{prop:tradeoff}'s \\
\midrule
$m$ & deployment window length (\S\ref{sec:setting}, Prop.~\ref{prop:occfloor}) & ensemble size (Prop.~\ref{prop:ensemble}); and the range bound on $r_h$, $m = 2\rho\bar\sigma$ for a bounded rule's PnL, inside Theorem~\ref{thm:pacbayes} and its condition $\lambda m \ell \le 1/4$ \\
\midrule
$K$ / $K_z$ & occupied cells of the declared partition (\ref{ax:recurrence}'s consequences, Prop.~\ref{prop:lattice}) & coordinates of the declared state, $Z_t = (Z_t^1,\dots,Z_t^{K_z})$ (\ref{ax:persistence}, Lem.~\ref{lem:effsample} (iii)) --- the subscript distinguishes \\
$c$ & reflexivity cost $c(C, h)$, and $c'$ in stage S5 (\ref{def:reflexivity}) & the tail price $c(\Lambda)$ (Prop.~\ref{prop:tailprice}); both appear in S5 of Theorem~\ref{thm:canonical} \\
$s$ & sd of the block law (Prop.~\ref{prop:tailprice}) & prior sd of the edge (Prop.~\ref{prop:kelly}) \\
$\Delta$ & latent leverage (Thm.~\ref{thm:impossible}) & $\Delta_N$, $\Delta_{\mathrm{CVaR}}$ (Thm.~\ref{thm:deployed}, Cor.~\ref{cor:locfam}) \\
$\rho$ & signal ceiling (\ref{ax:snr}) & self-transition probability $1 - 1/\ell$ inside $v(b, \ell)$ (Props.~\ref{prop:shuffle}, \ref{prop:occfloor}); $\varrho$, a different letter, is the common-noise share \\
$\nu$ & stationary law of the chain; ridge eigenvalues $\nu_i$ (\S\ref{sec:budget}) & degrees of freedom of the Student-$t$ score noise (Prop.~\ref{prop:levelceiling}) \\
$\kappa$ / $\kappa_{\mathrm{emb}}$ / $\varkappa$ & invariance ratio, $\rho_{\mathrm{blk}} = \kappa\rho$ (S1 of Thm.~\ref{thm:canonical}, Prop.~\ref{prop:kelly}) & embargo bias tolerance (Prop.~\ref{prop:embargo}); budget utilisation (Cor.~\ref{cor:halfkelly}) --- three symbols, one word when spoken; Experiment 5's worksheet uses all three at once \\
\bottomrule
\end{tabularx}
}

Orientation: the prose works in losses and prices the upper tail; the experiments work in rewards and the lower tail. Remark~\ref{rem:orientation} shows the two give the same $c(\Lambda)$.

\end{document}